\title{Making RL with Preference-based Feedback Efficient via Randomization}
\author{Runzhe Wu \\
Department of Computer Science\\
Cornell University\\
\texttt{rw646@cornell.edu} \\
\And
Wen Sun\\
Department of Computer Science\\
Cornell University\\
\texttt{ws455@cornell.edu}
}
\def\priorP{\rho_\@P} 
\def\priorR{\rho_\@r} 
\def\postP#1{\rho_{\@P,#1}}
\def\postR#1{\rho_{\@r,#1}}
\def\reg{\@{Regret}_T}
\def\qry{\@{Queries}_T}
\def\breg{\@{BayesRegret}_T}
\def\bqry{\@{BayesQueries}_T}
\def\eluderp{\@{dim}_{p}}
\def\eluderq{\@{dim}_{q}}
\def\eluderone{\@{dim}_{1}}
\def\eludertwo{\@{dim}_{2}}
\def\lowkappa{\kappa}
\def\highkappa{\overline{\kappa}}
\def\TV{d_{\@{TV}}}
\newcommand{\cdf}{\@{F}}
\def\linearalgname{$\textsf{PR-LSVI}$}
\def\tsalgname{$\textsf{PbTS}$}
\newcommand{\phistarr}{\phi^\star_\@r}
\newcommand{\phistarp}[1]{\phi^\star_{\@P,#1}}
\newcommand{\thetar}{\theta_{\@r}}
\newcommand{\thetap}{\theta_{\@P}}
\newcommand{\tildethetar}{\tilde{\theta}_{\@r}}
\newcommand{\tildethetap}{\tilde{\theta}_{\@P}}
\newcommand{\thetastarr}{\theta^\star_{\@r}}
\newcommand{\thetastarp}[1]{\theta^\star_{\@P,#1}}
\newcommand{\hatthetar}[1]{\^{\theta}_{\@r,#1}}
\newcommand{\barthetar}[1]{\overline{\theta}_{\@r,#1}}
\newcommand{\hatthetap}[1]{\^{\theta}_{\@P,#1}}
\newcommand{\barthetap}[1]{\overline{\theta}_{\@P,#1}}
\newcommand{\xir}[1]{\xi_{\@r,#1}}
\newcommand{\xip}[1]{\xi_{\@P,#1}}
\newcommand{\etar}[1]{\eta_{\@r,#1}}
\newcommand{\etap}[1]{\eta_{\@P,#1}}
\newcommand{\sigmar}{\sigma_{\@r}}
\newcommand{\sigmap}{\sigma_{\@P}}
\newcommand{\epsr}[1]{\epsilon_{\@r,#1}}
\newcommand{\epsp}[1]{\epsilon_{\@P,#1}}
\newcommand{\epspprime}[1]{\epsilon'_{\@P,#1}}
\newcommand{\alpharL}{{\alpha_{\@r,\@L}}}
\newcommand{\alpharU}{{\alpha_{\@r,\@U}}}
\newcommand{\alphapL}{{\alpha_{\@P,\@L}}}
\newcommand{\alphapU}{{\alpha_{\@P,\@U}}}
\begin{document}

\maketitle

\begin{abstract}

Reinforcement Learning algorithms that learn from human feedback (RLHF) need to be efficient in terms of \emph{statistical complexity, computational complexity, and query complexity}. In this work, we consider the RLHF setting where the feedback is given in the format of preferences over pairs of trajectories. In the linear MDP model, using randomization in algorithm design, we present an algorithm that is sample efficient (i.e., has near-optimal worst-case regret bounds) and has polynomial running time (i.e., computational complexity is polynomial with respect to relevant parameters). Our algorithm further minimizes the query complexity through a novel randomized active learning procedure. In particular, our algorithm demonstrates a near-optimal tradeoff between the regret bound and the query complexity. To extend the results to more general nonlinear function approximation, we design a model-based randomized algorithm inspired by the idea of Thompson sampling. Our algorithm minimizes Bayesian regret bound and query complexity, again achieving a near-optimal tradeoff between these two quantities. Computation-wise, similar to the prior Thompson sampling algorithms under the regular RL setting, the main computation primitives of our algorithm are Bayesian supervised learning oracles which have been heavily investigated on the empirical side when applying Thompson sampling algorithms to RL benchmark problems.

\end{abstract}

\section{Introduction}

Reinforcement learning from human feedback (RLHF) has been widely used across various domains, including robotics \citep{jain2013learning,jain2015learning} and natural language processing \citep{stiennon2020learning,ouyang2022training}. Unlike standard RL, RLHF requires the agent to learn from feedback in the format of preferences between pairs of trajectories instead of per-step reward since assigning a dense reward function for each state is challenging in many tasks. For instance, in natural language generation, rating each generated token individually is challenging. Hence, it is more realistic to ask humans to compare two pieces of text and indicate their preference. Recent works have shown that, by integrating preference-based feedback into the training process, we can align models with human intention and enable high-quality human-machine interaction.

Despite the existing empirical applications of RLHF, its theoretical foundation remains far from satisfactory. Empirically, researchers first learn reward models from preference-based feedback and then optimize the reward models via policy gradient-based algorithms such as PPO \citep{schulman2017proximal}. Questions such as 
whether or not the learned reward model is accurate, whether PPO is sufficient for deep exploration, and how to strategically collect more feedback on the fly are often ignored. Theoretically, prior works study the regret bound for RL with preference-based feedback \citep{saha2023dueling,chen2022human}. Despite achieving sublinear worst-case regret, these algorithms are computationally intractable even for simplified models such as tabular Markov Decision Processes (MDPs). This means that we  cannot easily leverage the algorithmic ideas in prior work to guide or improve how we perform RLHF in practice. 

In addition to maximizing reward, another important metric in RLHF is the query complexity since human feedback is expensive to collect. To illustrate, we note that InstructGPT's training data comprises a mere 30K instances of human feedback \citep{ouyang2022training}, which is significantly fewer than the internet-scale dataset  for training the GPT-3 base model. This underscores the challenge of scaling up the size of human feedback datasets. \citet{ross2013learning,laskey2016shiv} also pointed out that 
extensively querying for feedback puts too much burden on human experts. Empirically, \cite{lightman2023let} observes that active learning reduces query complexity and improves the learned reward model. In theory, query complexity is mostly studied in the settings of active learning, online learning, and bandits~\citep{cesa2005minimizing,dekel2012selective,agarwal2013selective,hanneke2021toward,zhu2022efficient,sekhari2023contextual,sekhari2023selective}, but overlooked in RL.

In this work, \emph{we aim to design new RL algorithms that can learn from preference-based feedback and can be efficient in statistical complexity (i.e., regret), computational complexity, and query complexity}. In particular, we strike a near-optimal balance between regret minimization and query complexity minimization. To achieve this goal, our key idea is to use \emph{randomization} in algorithm design. We summarize our new algorithmic ideas and key contributions as follows.  \looseness=-1

\begin{enumerate}[leftmargin=*]
	\item For MDPs with linear structure (i.e., linear MDP \citep{jin2020provably}), we propose the first RL algorithm that achieves sublinear worst-case regret and computational efficiency simultaneously with preference-based feedback. Even when reduced to tabular MDPs, it is still the first to achieve a no-regret guarantee and computational efficiency. Moreover, it has an active learning procedure and attains a near-optimal tradeoff between the regret and the query complexity. Our algorithm adds \emph{random Gaussian noises} to the learned state-action-wise reward model and the least-squares value iteration (LSVI) procedure. Using random noise instead of the UCB-style technique \citep{azar2017minimax} preserves the Markovian property in the reward model and allows one to use dynamic programming to achieve computation efficiency. \looseness=-1
	
	\item For function approximation beyond linear, we present a model-based Thompson-sampling (TS) algorithm that forms posterior distributions over the transitions and reward models. Assuming the transition and the reward model class both have small \emph{$\ell_1$-norm eluder dimension} -- a structural condition introduced in \citet{liu2022partially} that is more general than the common $\ell_2$-norm eluder dimension \citep{russo2013eluder},  we show that our algorithm again achieves a near-optimal tradeoff between the Bayesian regret and the Bayesian query complexity. Computation-wise, similar to previous TS algorithms for regular RL (e.g., \citet{osband2013more}), the primary computation primitives are Bayesian supervised learning oracles for transition and reward learning.
	
	\item Our query conditions for both algorithms are based on variance-style uncertainty quantification of the preference induced by the randomness of the reward model. We query for preference feedback only when the uncertainty of the preference on a pair of trajectories is large. Approximately computing the uncertainty can be easily done using i.i.d. random reward models drawn from the reward model distribution, which makes the active query procedure computationally tractable.

\end{enumerate}

Overall, while our main contribution is on the theoretical side, our theoretical investigation provides several new practical insights. For instance, for regret minimization, our algorithms propose to draw a pair of trajectories with one from the latest policy and the other from an older policy instead of drawing two trajectories from the same policy (e.g., \cite{christiano2017deep}), avoiding the situation of drawing two similar trajectories when the policy becomes more and more deterministic.  
Our theory shows that drawing two trajectories from a combination of new and older policies balances exploration and exploitation better. %
Another practical insight is the variance-style uncertainty measure for designing the query condition. Compared to more standard active learning procedure that relies on constructing version space and confidence intervals \citep{dekel2012selective,puchkin2021exponential,zhu2022efficient,sekhari2023contextual,sekhari2023selective}, our new approach comes with strong theoretical guarantees and is more computationally tractable. It is also amenable to existing implementations of Thompson sampling RL algorithms (e.g., using bootstrapping to approximate the posterior sampling \citep{osband2016deep,osband2023approximate}). 
\looseness=-1

\section{Comparison to prior work}

\noindent\textbf{RL with preference-based feedback.} 
Many recent works have obtained statistically efficient algorithms but are computationally inefficient even for tabular MDPs due to intractable policy search and version space construction~\citep{chen2022human,zhan2023provable,zhan2023query,saha2023dueling}. For example, 
\citet{zhan2023query,saha2023dueling} use the idea from optimal design and rely on the computation oracle: $\argmax_{\pi,\pi' \in \Pi} \| \EE_{s,a\sim \pi} \phi(s,a) - \EE_{s,a\sim \pi'} \phi(s,a)  \|_{A}$ with some positive definite matrix $A$. Here $\|x\|^2_A:= x^\top A x $, and $\phi$ is some state-action feature.\footnote{These works typically assume trajectory-wise feature $\phi(\tau)$ for a trajectory $\tau$. However, even when specified to state-action-wise features, these algorithms are still computationally intractable, even in tabular MDPs.} It is unclear how to implement this oracle since standard planning approaches based on dynamic programming cannot be applied here. In addition, these methods also actively maintain a policy space by eliminating potentially sub-optimal policies. The policy class can be exponentially large even in tabular settings, so how to maintain it computationally efficiently is unclear. We provide a more detailed discussion on the challenges in achieving computational efficiency in RLHF in \Cref{sec:challenge}.

While the work mentioned above is intractable even for tabular MDPs, there are some other works that could be computationally efficient but have weaker statistical results. For instance, very recently, \citet{wang2023rlhf} proposed a reduction framework that can be computationally efficient (depending on the base algorithm used in the reduction). However, their algorithms have PAC bounds while we focus on regret minimization. 
Moreover, we achieve a near-optimal balance between regret and query complexity.   
\citet{novoseller2020dueling} proposed a posterior sampling algorithm for tabular MDP but their analysis is asymptotic (i.e., they do not address exploration, exploitation, and query complexity tradeoff).
\citet{xu2020preference} proposed efficient algorithms that do reward-free exploration. However, it is limited to tabular MDPs and PAC bounds.

In contrast to the above works, our algorithms aim to achieve efficiency in statistical, computational, and query complexities simultaneously.
Our algorithms leverage \emph{randomization} to balance exploration, exploitation, and feedback query. Randomization allows us to avoid non-standard computational oracles and only use standard Dynamic Programming (DP) based oracles (e.g., value iteration), which makes our algorithm computationally more tractable. Prior works that simultaneously achieve efficiency in all three aspects are often restricted in the bandit and imitation learning settings where the exploration problem is much easier \citep{sekhari2023contextual}.  \looseness=-1

\noindent\textbf{RL via randomization.} There are two lines of work that study RL via randomization. The first injects random noise into the learning object to encourage exploration. A typical example is the randomized least-squares value iteration (RLSVI)~\citep{osband2016generalization}, which adds Gaussian noise into the least-squares estimation and achieves near-optimal worst-case regret~\citep{zanette2020frequentist,agrawal2021improved} for linear MDPs. The other line of work is Bayesian RL and uses Thompson sampling (TS) ~\citep{osband2013more,osband2014near,osband2014model,gopalan2015thompson,agrawal2017optimistic,efroni2021reinforcement,zhong2022gec,agarwal2022model}. They achieve provable Bayesian regret upper bound by maintaining posterior distributions over models.

\noindent\textbf{Active learning.}
Numerous studies have studied active learning across various settings \citep{cesa2005minimizing,dekel2012selective,agarwal2013selective,hanneke2015minimax,hanneke2021toward,zhu2022efficient,sekhari2023selective,sekhari2023contextual}. However, most of them focus on the bandits and online learning settings, and their active learning procedures are usually computationally intractable due to computing version spaces or upper and lower confidence bounds. In contrast, 
we design a variance-style uncertainty quantification for our query condition, which can be easily estimated by random samples of reward model. This makes our active learning procedure more computationally tractable. %

\section{Preliminary}

\noindent\textbf{Notations.} For two real numbers $a$ and $b$, we denote $[a,b]\coloneqq\{x:a\leq x\leq b\}$. For an integer $N$, we denote $[N]:=\{1,2,\dots,N\}$. For a set $\+S$, we denote $\Delta(\+S)$ as the set of distributions over $\+S$. Let $\TV(\cdot,\cdot)$ denote the total variation distance.

We consider a finite-horizon Markov decision process (MDP), which is a tuple $M(\+S,\+A,r^\star,P^\star,H)$ where $\+S$ is the state space, $\+A$ is the action space, $P^\star:\+S\times\+A\rightarrow\Delta(\+S)$ is the transition kernel, $r^\star:\+S\times\+A\rightarrow[0,1]$ is the reward function, and $H$ is the length of the episode. 
The interaction proceeds for $T$ rounds. At each round $t\in[T]$, we need to select two policies $\pi_t^0$ and $\pi_t^1$ and execute them separately, which generates two trajectories $\tau_t^0$ and $\tau_t^1$ where $\tau_t^i=(s_{t,1}^i,a_{t,1}^i,\dots,s_{t,H}^i,a_{t,H}^i)$ for $i\in\{0,1\}$. For the ease of notation, we assume a fixed initial state $s_1$. Then, we need to decide whether to make a query for the preference between $\tau_t^0$ and $\tau_t^1$. If making a query, we obtain a preference feedback $o_t\in\{0,1\}$ that is sampled from the Bernoulli distribution:
\begin{align*}
	\Pr(o_t=1 \given \tau_t^1, \tau_t^0, r^\star)
	=\Pr(\tau_t^1 \text{ is preferred to } \tau_t^0 \given r^\star)
	=\Phi\big(r^\star(\tau_t^1)-r^\star(\tau_t^0)\big)
\end{align*}
where $r^\star(\tau_t^i)=\sum_{h=1}^H r^\star(s_{t,h}^i,a_{t,h}^i)$ for $i\in\{0,1\}$ is the trajectory reward, and $\Phi:\=R\rightarrow[0,1]$ is a monotonically increasing link function. We note that, by symmetry, 
we have $\Phi(r^\star(\tau_t^0)-r^\star(\tau_t^1)) + \Phi(r^\star(\tau_t^1)-r^\star(\tau_t^0)) = 1$. If not making a query, we receive no feedback.

This feedback model is weaker than the standard RL where the per-step reward signal is revealed. We impose the following assumption on the link function $\Phi$, which has appeared in many existing works of RLHF \citep{saha2023dueling,zhu2023principled,zhan2023provable}. 
\begin{assumption}\label{asm:kappa}
We assume $\Phi$ is differentiable and there exists constants $\lowkappa,\highkappa>0$ such that $\lowkappa^{-1}\leq \Phi'(x) \leq \highkappa^{-1}$ for any $x\in[-H,H]$.
\end{assumption}
The constants $\lowkappa$ and $\highkappa$ characterize the non-linearity of $\Phi$ and determine the difficulty of estimating the reward from preference feedback. 
It is noteworthy that, in the theoretical results of our algorithms, the bounds depend polynomially on $\lowkappa$ but logarithmically on $\highkappa$.
Some typical examples of the link functions are provided below.
\begin{example}[Link functions]\label{ex:link-func}
	It is common to have $\Phi(x)=1/(1+\exp(-x))$, which recovers the Bradley-Terry-Luce (BTL) model~\citep{bradley1952rank}, and we have $\lowkappa = 2+\exp(-H)+\exp(H)$ and $\highkappa = 4$. 
	Additionally, if the trajectory-wise reward is scaled within the interval of $[0,1]$, then the difference in reward will be within the range of $[-1,1]$. In this case, another common choice of the link function is $\Phi(x) = (x+1)/2$, which results in $\lowkappa = \highkappa = 2$.
\end{example}
The goal is to minimize the worst-case regret and the query complexity simultaneously: \looseness=-1%
\begin{align*}
	\reg\coloneqq\sum_{t=1}^T 
	\Big(2V^\star(s_1)- V^{\pi_t^0}(s_1) - V^{\pi_t^1}(s_1)\Big)
	,\quad
	\qry\coloneqq\sum_{t=1}^T Z_t.
\end{align*}
Here $V^\pi(s_1)\coloneqq \E_\pi
[\sum_{h=1}^H r^\star(s_h,a_h) ]$ denotes the state-value function of policy $\pi$, and we define $V^\star(s_1)\coloneqq V^{\pi^\star}(s_1)$ where $\pi^\star$ is the optimal policy that maximizes the state-value function. The variable $Z_t\in\{0,1\}$ indicates whether a query is made at round $t$. Note that the regret looks at the sum of the performance gaps between two pairs of policies: $(\pi^\star, \pi^0_t)$ and $(\pi^\star, \pi^1_t)$. This is standard in dueling bandits~\citep{yue2011beat,yue2012k,dudik2015contextual,bengs2022stochastic,wu2023borda} and RL with preference-based feedback \citep{saha2023dueling,chen2022human}.

\textbf{Bayesian RL.}
We also consider Bayesian RL in this work when learning with general function approximation. In the Bayesian setting, $P^\star$ and $r^\star$ are sampled from some known prior distributions $\priorP$ and $\priorR$. The goal is to minimize the Bayesian regret and the Bayesian query complexity:
\begin{align*}
	\breg\coloneqq\E\left[\sum_{t=1}^T 
	\Big(2V^\star(s_1)- V^{\pi_t^0}(s_1) - V^{\pi_t^1}(s_1)\Big)\right],\,
	\bqry\coloneqq\E\left[\sum_{t=1}^T Z_t\right].
\end{align*}
Here the expectation is taken with respect to the prior distribution over $P^\star$ and $r^\star$. We will use Bayesian supervised learning oracles to compute posteriors over the transition and reward model.

\section{A model-free randomized algorithm for linear MDPs}\label{sec:linmdp}

In this section, we present a model-free algorithm for linear MDPs which is defined as follows.

\begin{assumption}[Linear MDP
\citep{jin2020provably}]
\label{asm:linear-mdp}
	We assume a known feature map $\phi:\+S\times\+A\rightarrow\=R^d$, an unknown (signed) measure $\mu:\+S\rightarrow\=R^d$, and an unknown vector $\thetastarr$ such that for any $(s,a)\in\+S\times\+A$, we have
		$P^\star(s' \given s,a) = \phi^\top(s,a)\cdot\mu(s')$ and
		$r^\star(s,a) = \phi^\top(s,a)\cdot\thetastarr$.
	We assume $\|\phi(s,a)\|_2\leq 1$ for all $(s,a)\in\+S\times\+A$, $\int_\+S \|\mu(s)\|_2 \d s\leq \sqrt{d}$, and $\|\thetastarr\|_2\leq B$ for some $B>0$. For a trajectory $\tau=(s_1,a_1,\dots,s_H,a_H)$, we define $\phi(\tau) = \sum_{h=1}^H \phi(s_h, a_h)$ and assume $\|\phi(\tau)\|_2\leq 1$.
\end{assumption}

Linear MDPs can capture tabular MDPs by setting $d=|\+S||\+A|$ and $\phi(s,a)$ to be the one-hot encoding of $(s,a)$. In this case, we have $\|\phi(\tau)\|_2\leq H$. However, we can scale it down to get $\|\phi(\tau)\|_2\leq 1$ at the expense of scaling $B$ up by $H$.
We define $\Theta_B=\{\theta\in\=R^d:\|\theta\|_2\leq B\}$, which contains $\thetastarr$.

\subsection{Algorithm}

The algorithm, called \linearalgname, is presented in \Cref{alg:linear}. At the beginning of episode $k$, it first computes the maximum likelihood estimate $\hatthetar{t}$ (Line~\ref{line:r-mle}). Computation-wise, while the likelihood objective is not guaranteed to be concave due to the generality of $\Phi$, efficient algorithms exist in certain common scenarios. For example, if $\Phi(x)=1/(1+\exp(-x))$, it recovers the BTL model (\Cref{ex:link-func}). In this case, the MLE objective is concave in $\theta$ and thus can be solved in polynomial running time. Moreover, we emphasize that the reward is learned under trajectory-wise features, which is different from the standard RL setting where it is learned under state-action features.

Given the MLE $\hatthetar{t}$, it next samples $\barthetar{t}$ from a Gaussian distribution centered at $\hatthetar{t}$ (Line~\ref{line:r-sample}).  Note that the covariance matrix $\Sigma_{t-1}^{-1}$ uses trajectory-wise features (Line~\ref{line:covariance_matrix_traj}) which allows the randomized Gaussian vector to capture trajectory-wise uncertainty of the learned reward. The noise aims to encourage exploration.
Then, it computes the least-squares estimate of the state-action value function $\hatthetap{t,h}$ for each $h\in[H]$ and samples $\barthetap{t,h}$ from a Gaussian distribution centered at $\hatthetap{t,h}$ (Lines \ref{line:p-mle}-\ref{line:p-sample}). Similar to the reward model, the noise is added to the state-value function to encourage exploration.
We then define the value function $\-Q_{t,h}$ and $\-V_{t,h}$ as 
\begin{equation}\label{eq:QV-def}
	\overline{Q}_{t,h}(s,a) \coloneqq {\phi(s,a)^\top\barthetar{t}}+\omega_{t,h}(s,a)
	,\qquad
	\overline{V}_{t,h}(s) \coloneqq \max_a \overline{Q}_{t,h}(s,a)
\end{equation}
and the function $\omega:\+S\times\+A\rightarrow\=R$ is defined as
\begin{align*}
	&\omega_{t,h}(s,a) = {\begin{cases}
		\phi(s,a)^\top\barthetap{t,h} 
		& \text{if } \|\phi(s,a)\|_{\Sigma^{-1}_{t-1,h}}\leq\alpha_\@L \\
		\rho(s,a)\Big(\phi(s,a)^\top\barthetap{t,h}\Big)+(1-\rho(s,a))(H-h)
		& \text{if } \alpha_\@L<\|\phi(s,a)\|_{\Sigma^{-1}_{t-1,h}}\leq\alpha_\@U \\
		H-h
		& \text{if } \|\phi(s,a)\|_{\Sigma^{-1}_{t-1,h}}>\alpha_\@U
		\end{cases}} 
\end{align*}
where $\rho(s,a)=(\alpha_\@U-\|\phi(s,a)\|_{\Sigma^{-1}_{t-1,h}})/(\alpha_\@U-\alpha_\@L)$ interpolates between the two regimes to ensure continuity. This truncation trick is from \citet{zanette2020frequentist} and is crucial. It controls the abnormally high value estimates. Specifically, when $\|\phi(s,a)\|_{\Sigma^{-1}_{t-1,h}}$ is large, the uncertainty in the direction of $\phi(s,a)$ is large, which makes the estimate $\phi(s,a)^\top\barthetap{t,h}$ abnormally large. In this case, we have to truncate it to $H-h$.
Moreover, we note that the usual ``value clipping'' trick (i.e., simply constraining the value function within the range of $[0,H-h+1]$ by clipping) cannot easily work here since it introduces bias to the random walk analysis, also pointed out by  \citet{zanette2020frequentist}.  %

Then, the algorithm computes the greedy policy $\pi_t^0$ with respect to $\overline{Q}_{t,h}$. The comparator policy $\pi_t^1$ is simply set to the greedy policy from the previous episode, $\pi_{t-1}^0$ . In other words, we are comparing the two most recent greedy policies. This is different from previous work, which compares the current greedy policy with a fixed comparator~\citep{wang2023rlhf}. %
Analytically, for our algorithm, the cumulative regret incurred by $\pi_t^1$ for all $t\in[T]$ is equivalent to that incurred by $\pi_t^0$ for all $t\in[T]$. Hence, it suffices to compute the regret for one of them and multiply it by two to get the total regret.

Given the trajectories $\tau_t^0$ and $\tau_t^1$ generated by $\pi_t^0$ and $\pi_t^1$, we compute the \textit{expected absolute reward difference} between the trajectories under the same noisy distribution of the reward parameter:
\begin{equation}\label{eq:expected-diff}
	\E_{\theta_{0},\theta_{1}\sim\+N(\hatthetar{t},\sigma_\@r^2\Sigma^{-1}_{t-1})} 
	\Big[\big|(\phi(\tau_t^0)-\phi(\tau_t^1))^\top(\theta_{0} - \theta_{1})\big|\Big].
\end{equation}
This represents the uncertainty of the preference between the two trajectories, and we make a query only when it is larger than a threshold $\epsilon$ (Line \ref{line;q-condition}). Intuitively, we only make a query on two trajectories when we are uncertain about the preference (e.g., the expected disagreement between two randomly sampled reward models is large). 
Computationally, we can estimate this expectation by drawing polynomially many reward models from the distribution $\+N(\hatthetar{t},\sigma_\@r^2\Sigma^{-1}_{t-1})$ and computing the empirical average. The deviation of the empirical average to the true mean can be easily bounded by standard concentration inequalities. We simply use expectation here for analytical simplicity. If the query condition is triggered, we make a query for feedback on $\tau_t^0, \tau_t^1$, and update the trajectory-wise feature covariance matrix accordingly. 
\looseness=-1

\begin{algorithm}[!htb]
	\caption{Preference-based and Randomized Least-Squares Value Iteration (\linearalgname)}
	\label{alg:linear}
	\begin{algorithmic}[1]
	\REQUIRE STD $\sigma_\@r,\sigma_\@P$, threshold $\epsilon$, value cutoff parameters $\alpha_\@L,\alpha_\@U$, and regularization parameter $\lambda$.
		
	\STATE Let $\pi_0^0$ be an arbitrary policy, $\Sigma_{0}\gets\lambda I$, $\Sigma_{0,h}\gets\lambda I\ (\forall h\in[H])$. 

	\FOR{$t=1,\dots,T$}
		
		\STATE $\hatthetar{t}\gets\argmax_{\theta\in \Theta_B}\sum_{s=1}^{t-1}
		Z_s \ln( o_s \Phi((\phi(\tau_s^1) - \phi(\tau_s^0))^\top \theta ) + (1-o_s)\Phi((\phi(\tau_s^0) - \phi(\tau_s^1))^\top \theta ) )$\label{line:r-mle}

		\STATE 
		$\barthetar{t}\sim\+N(\hatthetar{t},\sigma_\@r^2\Sigma^{-1}_{t-1})$\label{line:r-sample}
				
		\STATE $\hatthetap{t,H} \gets 0$, $\barthetap{t,H} \gets 0$   
		\FOR{$h=H-1,\dots,1$}

			\STATE
			$\hatthetap{t,h}\gets\Sigma_{t-1,h}^{-1}(
				\sum_{i=1}^{t-1} \phi(s_{i,h}^0,a_{i,h}^0)\overline{V}_{t,h+1}(s_{i,h+1}^0))$\label{line:p-mle}

			\STATE
			$\barthetap{t,h}\sim\+N(\hatthetap{t,h},\sigma_\@P^2\Sigma_{t-1,h}^{-1})$\label{line:p-sample}

			\STATE Define $\overline{Q}_{t,h}$ and $\overline{V}_{t,h}$ as in \eqref{eq:QV-def}.

		\ENDFOR

		\STATE Set $\pi_t^0 \gets \{\pi_{t,h}^0 \::\: \pi_{t,h}^0(s) = \argmax_a\overline{Q}_{t,h}(s,a) ,\, \forall s\in\+S ,\, h\in[H]\}$ and $\pi_t^1 \gets \pi_{t-1}^0$.\label{line:greedy}

		\STATE Sample $\tau_t^0\sim\pi_t^0$ and $\tau_t^1\sim \pi_t^1$.\label{line:sample-traj}
		
		\STATE $Z_t \gets \indic \{ \E_{\theta_{0},\theta_{1}\sim\+N(\hatthetar{t},\sigma_\@r^2\Sigma^{-1}_{t-1})} 
		[|(\phi(\tau_t^0)-\phi(\tau_t^1))^\top(\theta_{0} - \theta_{1})|] > \epsilon\} $ \label{line;q-condition}
	
		\IF{$Z_t=1$}
		\STATE Query preference feedback $o_t$ on $\{\tau_t^0, \tau_t^1\}$%
		\STATE $\Sigma_{t}\gets\Sigma_{t-1}+(\phi(\tau_t^0)-\phi(\tau_t^1))(\phi(\tau_t^0)-\phi(\tau_t^1))^\top$ \label{line:covariance_matrix_traj}
		\ELSE
		\STATE $\Sigma_t\gets\Sigma_{t-1}$
		\ENDIF

		\STATE $\Sigma_{t,h}\gets\Sigma_{t-1,h}+\phi(s_{t,h}^0 , a_{t,h}^0)\phi^\top(s_{t,h}^0 , a_{t,h}^0) $ $\;(\forall h \in [H])$.

	\ENDFOR
	\end{algorithmic}
\end{algorithm}

\subsection{Analysis}

The theoretical results of \Cref{alg:linear} are stated in \Cref{thm:linear-main}.
The detailed assignment of hyperparameters can be found in \Cref{sample-table}, and the proof is provided in \Cref{sec:pf-thm-linear-main}.

\begin{theorem}\label{thm:linear-main}
	Define $\gamma = \sqrt{\lowkappa + B^2}$, which characterizes the difficulty of estimating the reward model.
	Set $\sigmar=\~\Theta(\gamma\sqrt{d})$, $\sigmap=\~\Theta(H^{3/2} d^2 \gamma)$, $\alpha_\@U=(d^{5/2} H^{3/2} \gamma)^{-1}$, $\alpha_\@L=\alpha_\@U / 2$, and $\lambda = 1$.
	Then, \linearalgname~(\Cref{alg:linear}) guarantees the following with probability at least $1-\delta$: 
	\begin{align*}
		\reg =\~O\left( 
		\epsilon T d^{1/2}
		+ \sqrt{T} \cdot d^3  H^{5/2} \gamma
		+ d^{17/2} H^{11/2} \gamma^{3} \right),\quad
		\qry =\~O\left( d^4\gamma^4 / \epsilon^2 \right).
	\end{align*}
\end{theorem}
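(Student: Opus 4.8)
The plan is to control \reg{} through an optimism-based decomposition and \qry{} through an elliptical potential argument, treating the preference-learned reward model and the randomized-LSVI transition model as two coupled but separately bounded error sources. Because $\pi_t^1=\pi_{t-1}^0$, the cumulative regret of the comparator policies equals that of the greedy policies up to one boundary term, so it suffices to bound $\sum_t (V^\star(s_1)-V^{\pi_t^0}(s_1))$ and double it. For each $t$ I would invoke optimism, $\-V_{t,1}(s_1)\ge V^\star(s_1)$, and then apply the value-difference (simulation) lemma to write $\-V_{t,1}(s_1)-V^{\pi_t^0}(s_1)=\sum_h \E_{\pi_t^0}[\,\-Q_{t,h}(s_h,a_h)-r^\star(s_h,a_h)-(P^\star\-V_{t,h+1})(s_h,a_h)\,]$, splitting each Bellman error into a reward part $\phi(s,a)^\top(\barthetar{t}-\thetastarr)$ and a transition part $\omega_{t,h}(s,a)-(P^\star\-V_{t,h+1})(s,a)$.

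For the reward part, the key structural fact is that $\barthetar{t}$ is sampled with the \emph{trajectory-level} covariance $\sigma_\@r^2\Sigma_{t-1}^{-1}$, so summing the per-step terms $\phi(s_h,a_h)^\top\barthetar{t}$ along a trajectory produces $\phi(\tau)^\top\barthetar{t}$ carrying exactly the trajectory-level uncertainty $\|\phi(\tau)\|_{\Sigma_{t-1}^{-1}}$. I would first prove an MLE concentration bound $\|\hatthetar{t}-\thetastarr\|_{\Sigma_{t-1}}=\~O(\gamma\sqrt d)$ using the strong convexity of the preference log-likelihood guaranteed by \Cref{asm:kappa} (this is where the $\lowkappa$, hence $\gamma$, dependence enters), and then use Gaussian anti-concentration to show that $\sigma_\@r=\~\Theta(\gamma\sqrt d)$ is large enough for the injected noise to dominate this estimation error and deliver reward-optimism with constant probability.

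For the transition part I would adapt the frequentist RLSVI analysis of \citet{zanette2020frequentist}: the truncation in $\omega_{t,h}$ via $\alpha_\@L,\alpha_\@U$ suppresses abnormally large value estimates, optimism again holds only with constant probability, and the resulting low-probability pessimistic rounds are aggregated using the Gaussianity of the noise. Summing the widths $\|\phi(s,a)\|_{\Sigma_{t-1,h}^{-1}}$ over $h$ and $t$ by the elliptical potential lemma gives the leading $\sqrt T\cdot d^3H^{5/2}\gamma$ term, while the truncation thresholds and the associated burn-in (the count of rounds on which a width exceeds $\alpha_\@U$) contribute the lower-order $d^{17/2}H^{11/2}\gamma^3$ term. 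The $\epsilon T d^{1/2}$ term and the query complexity both flow from the query rule \eqref{eq:expected-diff}: since $\theta_0-\theta_1\sim\+N(0,2\sigma_\@r^2\Sigma_{t-1}^{-1})$, the quantity in \eqref{eq:expected-diff} equals a universal constant times $\sigma_\@r\|\phi(\tau_t^0)-\phi(\tau_t^1)\|_{\Sigma_{t-1}^{-1}}$, so \emph{not} querying certifies that this width is $\lesssim\epsilon/\sigma_\@r$; pairing this with the reward-estimate deviation $\~O(\sigma_\@r\sqrt d)$ caps the reward-difference error entering a non-queried round's regret at $\~O(\epsilon\sqrt d)$, summing to $\epsilon T d^{1/2}$, whereas the queried rounds each satisfy $\|\phi(\tau_t^0)-\phi(\tau_t^1)\|_{\Sigma_{t-1}^{-1}}\gtrsim\epsilon/\sigma_\@r$, so the elliptical potential lemma---applied to the subsequence of queried rounds, since $\Sigma_t$ advances only when $Z_t=1$---bounds their number and yields the stated $\qry$ after substituting the prescribed hyperparameters.

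The main obstacle I expect is establishing \emph{simultaneous} optimism of the randomized reward and transition models while preserving the Markovian structure that makes dynamic programming applicable: reward optimism is governed by the trajectory-level geometry of $\Sigma_{t-1}$ and the difference features, whereas transition optimism is governed by the state--action geometry of $\Sigma_{t-1,h}$, yet both must hold inside a single value function whose per-round optimism succeeds only with constant probability. Controlling the low-probability pessimistic rounds without degrading the $\sqrt T$ rate---and, in parallel, arguing that the difference-based query condition genuinely controls the trajectory-level reward error that surfaces in the regret---is the delicate part; I would handle it by extending the anti-concentration and optimism argument of \citet{zanette2020frequentist} to carry the additional reward-model randomness and the selective-querying covariance $\Sigma_{t-1}$ that advances only on queried rounds.
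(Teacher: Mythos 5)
Your query-complexity argument is essentially the paper's (the Gaussian identity reducing the query rule to $\sigmar\|\phi(\tau_t^0)-\phi(\tau_t^1)\|_{\Sigma_{t-1}^{-1}}$, the certification on non-queried rounds, and an elliptical potential over queried rounds), and your transition-side treatment follows \citet{zanette2020frequentist} just as the paper does. The genuine gap is in your regret decomposition. You invoke single-policy optimism $\-V_{t,1}(s_1)\geq V^\star(s_1)$ and the simulation lemma, so the reward contribution to regret surfaces as $\sum_t \E_{\tau\sim\pi_t^0}\big[\phi(\tau)^\top(\barthetar{t}-\thetastarr)\big]$, an error along \emph{single-trajectory} feature directions. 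Preference feedback cannot control this quantity: $\Sigma_t$ grows only in difference directions $\phi(\tau_s^0)-\phi(\tau_s^1)$ (and only on queried rounds), and the MLE only pins down $\thetastarr$ in the $\Sigma_{t-1}$-norm. Along any direction orthogonal to all difference features (e.g., a coordinate of $\phi$ whose trajectory-sum is the same for every trajectory), the reward parameter is unidentifiable up to a shift of order $B$, $\Sigma_{t-1}$ stays at $\lambda$ there forever, $\|\phi(\tau_t^0)\|_{\Sigma_{t-1}^{-1}}$ never shrinks, and the elliptical potential lemma does not apply; your reward term can therefore be $\Omega(T)$. You flag exactly this as ``the delicate part,'' but the fix you propose (extending the anti-concentration argument of \citet{zanette2020frequentist}) cannot repair it, because the problem is the decomposition, not the concentration. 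The paper's resolution is structural: it introduces $\~V_t \coloneqq \E_{\tau\sim\pi_t^1}[\phi(\tau)^\top\barthetar{t}]$ and splits $\reg$ into $\sum_t \big((V^\star-\-V_t)+(\~V_t-V^{\pi_t^1})\big)$, handled by joint constant-probability optimism (\Cref{lem:bound11,lem:boundof1}), plus $\sum_t \big((\-V_t-V^{\pi_t^0})-(\~V_t-V^{\pi_t^1})\big)$ (\Cref{lem:bound22}), so every reward error appears as $(\phi(\tau)-\phi(\~\tau))^\top(\xir{t}+\etar{t})$ --- precisely the difference geometry that $\Sigma_{t-1}$, the query rule, and the MLE bound control. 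This is also where the choice $\pi_t^1=\pi_{t-1}^0$ does real work (history-measurability of $\~\phi_t$ makes the Gaussian optimism argument go through), beyond the regret-doubling observation you make.

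A secondary issue: your MLE concentration $\|\hatthetar{t}-\thetastarr\|_{\Sigma_{t-1}}=\~O(\gamma\sqrt d)$ is claimed via ``strong convexity of the preference log-likelihood guaranteed by \Cref{asm:kappa},'' but that assumption only bounds $\Phi'$; it implies nothing about $\Phi''$, and the paper explicitly notes the likelihood need not even be concave for general $\Phi$. The paper instead proves \Cref{lem:reward-err} through a TV-distance MLE generalization bound (\Cref{lem:mle}) combined with the two-sided Lipschitz property of $\Phi$ (\Cref{lem:link-func}), a covering argument over $\Theta_B$, and a Freedman-type step (\Cref{lem:reward-concentration}) to pass from expected to empirical squared errors --- a route that needs no convexity. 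With the paired decomposition and this MLE bound in place, the rest of your outline (truncation thresholds $\alpha_\@L,\alpha_\@U$, burn-in counting, elliptical potentials, and the hyperparameter substitutions producing the three regret terms and the $\~O(d^4\gamma^4/\epsilon^2)$ query bound) does line up with the paper's proof.
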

To further study the balance between the regret and the query complexity, we let $\epsilon = T^{-\beta}$ for some $\beta\leq 1/2$. Then, the upper bounds in \Cref{thm:linear-main} can be rewritten as $$\reg = \~O(T^{1-\beta}), \quad \qry = \~O( T^{2\beta} )$$
where we only focus on the dependence on $T$ and omit any other factors for simplicity. We see that there is a tradeoff in $T$ between the regret and the query complexity --- the smaller regret we want, the more queries we need to make. For example, when $\beta=0$, the regret is $\~O(T)$, and the query complexity is $\~O(1)$, meaning that we will incur linear regret if we don't make any query. If we increase $\beta$ to $1/2$, the regret decreases to $\~O(\sqrt{T})$ while the query complexity increases to $\~O(T)$, meaning that the regret bound is optimal in $T$ but we make queries every episode.

We emphasize that this tradeoff in $T$ is \textit{optimal}, as evidenced by a lower bound result established by \citet{sekhari2023contextual}. Their lower bound was originally proposed for contextual dueling bandits, 
which is a special case of our setting. Their results are stated below.
\begin{theorem}\label{thm:lower-bound}
	\citep[Theorem 5]{sekhari2023contextual}
	The following two claims hold:
	(1) For any algorithm, there exists an instance that leads to $\reg=\Omega(\sqrt{T})$;
	(2) For any algorithm achieving an expected regret upper bound in the form of $\E[\reg]= O(T^{1-\beta})$ for some $\beta>0$, there exists an instance that results in $\E[\qry]=\Omega(T^{2\beta})$.
\end{theorem}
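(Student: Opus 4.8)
The plan is to prove both claims by a two-point (Le Cam) argument specialized to a context-free dueling bandit, which is a special case of the preference-based setting here, so a lower bound on the subclass is automatically a lower bound for the general problem. First I would fix two arms and construct a symmetric pair of instances: under instance $A$, arm $1$ is preferred to arm $2$ with probability $\Phi(\Delta)$, while under instance $B$ the roles are swapped, where $\Delta>0$ is a gap parameter to be tuned. By \Cref{asm:kappa} the two per-comparison preference probabilities each differ from $\Phi(0)=1/2$ by $\Theta(\Delta)$, so the Bernoulli observation laws of $A$ and $B$ are statistically close yet their optimal arms differ.

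The core inequality I would establish lower-bounds the summed regret by counting suboptimal plays. Writing $P_A,P_B$ for the full interaction laws, $\E_A,\E_B$ for the corresponding expectations, and letting $n$ denote the number of rounds in which the algorithm plays the arm that is suboptimal under $A$, one has $\E_A[\reg]\geq\Delta\,\E_A[n]$ and $\E_B[\reg]\geq\Delta\,(T-\E_B[n])$, so that
\begin{align*}
  \E_A[\reg] + \E_B[\reg] \;\geq\; \Delta\,T\,\bigl(1 - \TV(P_A,P_B)\bigr),
\end{align*}
using $\E_B[n]-\E_A[n]\leq T\cdot\TV(P_A,P_B)$. The decisive step is bounding $\TV(P_A,P_B)$: because the algorithm observes feedback only on queried rounds, a round-by-round chain-rule decomposition of the KL divergence gives $\KL(P_A\|P_B) = \E_A[\qry]\cdot\mathrm{kl}(p_A\|p_B)$, where $p_A,p_B$ are the preference Bernoulli parameters; since they differ by $\Theta(\Delta)$ near $1/2$, the per-query divergence is $\Theta(\Delta^2)$ with constants controlled by $\lowkappa,\highkappa$. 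Pinsker's inequality then yields $\TV(P_A,P_B)\leq \Delta\sqrt{C\,\E_A[\qry]}$ for a suitable constant $C$.

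For claim (1), I would set $\Delta = c/\sqrt{T}$ with $c$ small; since $\qry\leq T$ deterministically, the total-variation term stays bounded away from $1$, and the displayed inequality gives $\E_A[\reg]+\E_B[\reg] = \Omega(\Delta T) = \Omega(\sqrt{T})$, forcing $\Omega(\sqrt{T})$ regret on at least one instance. For claim (2), I would instead take $\Delta = \Theta(T^{-\beta})$ with the constant chosen large enough that the hypothesis $\E[\reg]=O(T^{1-\beta})$ makes the left-hand side at most $\tfrac12\Delta T$; rearranging then forces $\Delta\sqrt{C\,\E_A[\qry]}\geq \tfrac12$, i.e. $\E_A[\qry] = \Omega(\Delta^{-2}) = \Omega(T^{2\beta})$ on one of the two instances.

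The main obstacle I anticipate is the KL decomposition over only the queried rounds under full adaptivity: both the query indicator $Z_t$ and the chosen pair of arms at round $t$ are measurable with respect to the (possibly noisy) history, so the change of measure must be carried out via the chain rule for KL, with the factor $\E_A[\qry]$ emerging only after arguing that unqueried rounds contribute zero divergence while each queried round contributes $\Theta(\Delta^2)$. Handling this adaptivity rigorously — rather than assuming a fixed query budget — together with verifying the $\Theta(\Delta^2)$ per-query divergence from \Cref{asm:kappa}, is the technical crux; the subsequent tuning of $\Delta$ is routine.
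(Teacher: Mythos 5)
The paper does not actually prove this statement: \Cref{thm:lower-bound} is imported verbatim from \citet[Theorem 5]{sekhari2023contextual}, so there is no internal proof to compare your attempt against. Judged on its own, your two-point (Le Cam) argument is correct and is the standard way to obtain both claims: a two-armed, context-free dueling bandit is indeed a special case of the paper's setting (take $H=1$, trivial context), so a lower bound there transfers; the divergence decomposition over the interaction law correctly attributes zero KL to unqueried rounds and to queries on identical pairs, and $\Theta(\Delta^2)$ (with constants governed by \Cref{asm:kappa}, since $|\Phi(\Delta)-\Phi(-\Delta)|\in[2\Delta/\lowkappa,\,2\Delta/\highkappa]$ and both probabilities stay near $1/2$) to queries on the distinct pair; Pinsker then gives $\TV(P_A,P_B)\leq \Delta\sqrt{C\,\E_A[\qry]}$; and the two tunings $\Delta=\Theta(1/\sqrt{T})$ (using $\qry\leq T$ deterministically) and $\Delta=\Theta(T^{-\beta})$ (using the assumed regret bound on \emph{both} instances to force $\TV\geq 1/2$) deliver claims (1) and (2) respectively. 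This is, in essence, the same information-theoretic argument underlying the cited result.

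Two minor points of care, neither a genuine gap. First, in this paper's regret each round plays \emph{two} arms, so your counter $n$ should range over the $2T$ arm-plays (or you should note that at most one slot per round is suboptimal-free); this only changes constants in $\E_A[\reg]+\E_B[\reg]\geq \Delta\,T\,(1-\TV(P_A,P_B))$. Second, your argument lower-bounds \emph{expected} regret, whereas claim (1) is phrased for $\reg$ itself; this is the standard reading (part (2) makes the expectation explicit), but if one insisted on a realized-regret statement an additional concentration or constant-probability step would be needed. The crux you identified — carrying out the chain-rule KL decomposition when both the query indicator $Z_t$ and the compared pair are adaptively chosen — is handled correctly by conditioning on the history, since conditional on no query the observation laws coincide under $A$ and $B$.
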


However, the dependence on other parameters (e.g., $d$ and $H$) can be loose, and further improvement may be possible. We leave further investigation of these factors as future work.

Although injecting random noise is inspired by RLSVI~\citep{zanette2020frequentist}, we highlight five key differences between ours and theirs: 
(1)  Since the feedback is trajectory-wise, we need to design random noise that preserves the state-action-wise format (so that it can be used in DP) but captures the trajectory-wise uncertainty. We do this by maintaining $\Sigma_t$, which uses trajectory-wise feature differences;
(2)  Since the preference feedback is generated from some probabilistic model, we learn the reward model via MLE and use MLE generalization bound \citep{geer2000empirical} to capture the uncertainty in learning. This allows us to use a more general link function $\Phi$;
(3) We design a new regret decomposition technique to accommodate preference-based feedback. Particularly, we decompose regret to characterizes the \textit{reward difference} between $\pi_t^0$ and $\pi_t^1$:
$
	\reg \lesssim 
	\sum_{t=1}^T (  \-V_t - \~V_t  ) - (V^{\pi_t^0} - V^{\pi_t^1}  )
$
where $\overline V_t$ is an estimate of $V^{\pi^0_t}$, and $\~V_t \coloneqq \E_{\tau\sim\pi_t^1}[\sum_{h=1}^H\phi(s_h,a_h)^\top\barthetar{t}]$ is an estimate of $V^{\pi^1_t}$ under the real transition and the learned reward model. This is different from standard RL \citep{zanette2020frequentist}, and is necessary since we cannot guarantee the learned reward model will be accurate in a state-action-wise manner under the preference-based feedback.
(4) Our algorithms have a new randomized active learning procedure for reducing the number of queries, and our analysis achieves a near-optimal tradeoff between regret and query complexity; 
(5) In every round $t$, we propose to draw a pair of trajectories where one is from the current greedy policy $\pi^0_t$ and the other is from the greedy policy of the previous round, $\pi^0_{t-1}$. This ensures $\pi^1_{t}$ is conditionally independent of the Gaussian noises at round $t$, which is the key to optimism (with a constant probability).

\noindent\textbf{Running time.}
To assess the time complexity of \Cref{alg:linear}, assuming finite number of actions\footnote{This is to ensure that $\arg\max_{a} Q(s,a)$ can be computed efficiently.}, all steps can be computed in polynomial running time (i.e., polynomial in $d, H, A$) except the MLE of the reward model (Line~\ref{line:r-mle}), which depends on the link function $\Phi$.  For the popular BTL model where $\Phi (x) = 1/(1+\exp(-x))$, the MLE objective is concave with respect to $\theta$ and $\theta$ belongs to a convex set $\Theta_B$. In this case, we can use any convex programming algorithms for the MLE procedure (e.g., projected gradient ascent). \looseness=-1 %

\section{A model-based Thompson sampling algorithm}

In this section, we aim to extend to nonlinear function approximation. We do so in a model-based framework with Thompson sampling (TS). The motivation is that TS is often considered a computationally more tractable alternative to UCB-style algorithms.

\subsection{Algorithm}

The algorithm, called \tsalgname, is presented in \Cref{alg:ts}.  At the beginning of episode $k$, it computes the reward model posterior $\postR{t}$ and the transition model posterior $\postP{t}$ (Line \ref{line:posterior}). Then, it samples $P_t$ and $r_t$ from the posteriors and computes the optimal policy $\pi_t^0$ assuming the true reward function is $r_t$ and the true model is $P_t$ (Line \ref{line:pi}). Here we denote $V^\pi_{r,P}$ as the state-value function of $\pi$ under reward function $r$ and model $P$. Note that this oracle is a standard planning oracle. The comparator policy $\pi_t^1$ is simply set to be the policy from the previous episode, $\pi_{t-1}^0$, as we did in \Cref{alg:linear}. The two policies then generate respective trajectories $\tau_t^0$ and $\tau_t^1$. To decide whether we should make a query, we compute the uncertainty quantity under the posterior distribution of the reward:
$
	\E_{r,r'\sim \postR{t}}[|r(\tau^0_t)-r(\tau^1_t)-( r'(\tau^0_t)-r'(\tau^1_t) ) |],
$
which is analogous to \eqref{eq:expected-diff} in \Cref{alg:linear}. We make a query only when it is larger than a threshold $\epsilon$. Similar to \Cref{alg:linear}, we can approximate this expectation by sampling polynomial many pairs of $r$ and $r'$ and then compute the empirical average. %

\begin{algorithm}[htb]
	\caption{Preference-based Thompson Sampling (\tsalgname)}
	\label{alg:ts}
	\begin{algorithmic}[1]
	\REQUIRE priors $\priorP$ and $\priorR$, threshold $\epsilon$.
	\STATE Let $\pi_0^0$ be an arbitrary policy.
	\FOR{$t=1,\dots,T$}
	\STATE Compute posteriors: \label{line:posterior}
		\begin{align*}
		\postP{t}(P) \propto\ & \priorP(P) \prod_{i=1}^{t-1}\prod_{h=1}^H P(s_{i,h+1}^0 \given  s_{i,h}^0, a_{i,h}^0), \\
		\postR{t}(r) \propto\ & \priorR(r) \prod_{i = 1}^{t-1} \Big( o_i \Phi\big(r(\tau_i^1) - r(\tau_i^0)\big) + (1-o_i)\Phi\big(r(\tau_i^0) - r(\tau_i^1)\big) \Big)^{Z_i}.
		\end{align*}
	\STATE Sample $P_t\sim\postP{t}$ and $r_t\sim\postR{t}$.
	\STATE Compute $\pi_t^0 \gets \argmax_\pi V^\pi_{r_t,P_t}(s_{1})$ and $\pi_t^1 \gets \pi_{t-1}^0$.\label{line:pi}
	\STATE Sample $\tau_t^0\sim\pi_t^0$ and $\tau_t^1\sim \pi_t^1$.
	\STATE $Z_t \gets \indic\{\E_{r,r'\sim \postR{t}}[|r(\tau^0_t)-r(\tau^1_t)-( r'(\tau^0_t)-r'(\tau^1_t) ) |] > \epsilon\}$
	\IF{$Z_t=1$}
	\STATE Query preference feedback $o_t$ on $\{\tau_t^0, \tau_t^1\}$%
	\ENDIF
	\ENDFOR
	\end{algorithmic}
\end{algorithm}

\subsection{Analysis}

The theoretical results of \Cref{alg:ts} should rely on the complexity of the reward and the transition model. In our analysis, we employ two complexity measures --- eluder dimension and bracketing number. We start by introducing a generic notion of $\ell_p$-eluder dimension~\citep{russo2013eluder}.

\begin{definition}[$\ell_p$-norm $\epsilon$-dependence]
Let $p>0$.
Let $\+X$ and $\+Y$ be two sets and $d(\cdot,\cdot)$ be a distance function on $\+Y$. Let $\+F\subseteq\+X\rightarrow\+Y$ be a function class. We say an element $x\in\+X$ is $\ell_p$-norm $\epsilon$-dependent on $\{x_1,x_2,\dots,x_n\}\subseteq\+X$ with respect to $\+F$ and $d$ if any pair of functions $f,f'\in\+F$ satisfying $\sum_{i=1}^n d^p(f(x_i),f'(x_i))\leq\epsilon^p$ also satisfies $d(f(x),f'(x))\leq\epsilon$. Otherwise, we say $x$ is $\ell_p$-norm $\epsilon$-independent of $\{x_1,x_2,\dots,x_n\}$.
\end{definition}
\begin{definition}[$\ell_p$-norm eluder dimension]\label{def:eluder}
	The $\ell_p$-norm $\epsilon$-eluder dimension of function class $\+F\subseteq\+X\rightarrow\+Y$, denoted by $\eluderp(\+F,\epsilon,d)$, is the length of the longest sequence of elements in $\+X$ satisfying that there exists $\epsilon'\geq\epsilon$ such that every element in the sequence is $\ell_p$-norm $\epsilon'$-independent of its predecessors.
\end{definition} 

The eluder dimension is non-decreasing in $p$, i.e., $\eluderp(\+F,\epsilon,d) \leq \eluderq(\+F,\epsilon,d)$ for any $p \leq q$. In the analysis, we will focus on $\ell_1$- and $\ell_2$-norm eluder dimension, which have been used in nonlinear bandits and RL extensively \citep{wen2013efficient,osband2014model,jain2015learning,wang2020reinforcement,ayoub2020model,foster2020instance,ishfaq2021randomized,chen2022human,liu2022partially,sekhari2023contextual,sekhari2023selective}. Examples where eluder dimension is small include linear functions, generalized linear models, and functions in Reproducing Kernel Hilbert Space (RKHS).

The other complexity measure we use is the bracketing number~\citep{van2000empirical}.
\begin{definition}[Bracketing number]
	Consider a function class $\+F\subseteq\+X\rightarrow\=R$. Given two functions $l,u:\+X\rightarrow\=R$, the bracket $[l, u]$ is defined as the set of functions $f \in \+F$ with $l(x) \leq f(x) \leq u(x)$ for all $x \in \+X$. It is called an $\omega$-bracket if $\|l-u\| \leq \omega$. The bracketing number of $\+F$ w.r.t. the metric $\|\cdot\|$, denoted by $N_{[]}(\omega, \+F,\|\cdot\|)$, is the minimum number of $\omega$-brackets needed to cover $\+F$.
\end{definition}
The logarithm of the bracketing number is small in many common scenarios, which has been extensively examined by previous studies (e.g., \citet{van2000empirical}) for deriving MLE generalization bound~\citep{agarwal2020flambe,uehara2021pessimistic,liu2022sample,liu2023optimistic}. For example, when $\+F$ is finite, the bracketing number is bounded by its size. When $\+F$ is a $d$-dimensional linear function class, the logarithm of the bracketing number is upper bounded by $d$ up to logarithmic factors.

It is worth noting that while we will employ both measures to the model class $\+P$, we can not similarly apply them to the reward class $\+R$. Instead, we have to rely on the complexity of the following function class, which comprises functions mapping pairs of trajectories to reward differences:
\begin{align*}
	\~{\+R}\coloneqq\bigg\{\~r \::\:  \~r(\tau^0,\tau^1) = & \sum_{h=1}^H r(s_h^0,a_h^0)-r(s_h^1,a_h^1)  ,\, \forall \tau^i=\{s_h^i,a_h^i\}_{h}, i\in\{0,1\}, r\in \Rcal \Bigg\}.
	\numberthis\label{eq:reward-diff-class}
\end{align*}
We have to use $\~{\+R}$ instead of $\+R$ because we only receive preference feedback, and thus we cannot guarantee that the learned reward model is accurate state-action-wise.
Now we are ready to state our main results. The proofs are provided in \Cref{sec:pf-thm-ts-main}.

\begin{theorem}\label{thm:ts-main}
	\tsalgname~(\Cref{alg:ts}) guarantees that 
\begin{align*}
	\breg = 
	\~O\bigg(&
	T\epsilon 
	+
	H^2 \cdot \eluderone\Big(\+P, 1/T\Big) \cdot \sqrt{T \cdot \iota_\+P} 
	+ 
	\lowkappa\cdot\eluderone\Big(\~{\+R}, 1/T\Big)\cdot \sqrt{T \cdot \iota_\+R } 
	\bigg),\\
	\bqry = 
	\~O\bigg(&
	\min\left\{\frac{\lowkappa \sqrt{T \cdot \iota_\+R}}{\epsilon} \cdot \eluderone\Big(\~{\+R},\epsilon/2\Big) ,\, \frac{\lowkappa^2 \cdot\iota_\+R}{\epsilon^2}\cdot\eludertwo\Big(\~{\+R},\epsilon/2\Big) \right\}
	\bigg)
\end{align*}
	where we denote $\iota_\+P \coloneqq \log(N_{[]}((HT|\+S|)^{-1},\+P,\|\cdot\|_\infty))$ and $\iota_\+R \coloneqq \log(N_{[]}(\highkappa(2 T)^{-1},\~{\+R},\|\cdot\|_\infty))$.
\end{theorem}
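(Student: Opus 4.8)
The plan is to run the standard Thompson-sampling regret decomposition, adapted to preference feedback through the reward-difference class $\~{\+R}$. The backbone is the \emph{posterior-matching} identity: conditioned on the history $\+H_{t-1}$, the sampled pair $(r_t,P_t)$ is distributed identically to the truth $(r^\star,P^\star)$, and (since the preference likelihood depends only on $r^\star$ and the transition likelihood only on $P^\star$) the two coordinates are independent under every posterior. First I would reduce $\breg$ to a sum over the on-policy gaps of $\pi_t^0$: because $\pi_t^1=\pi_{t-1}^0$, the comparator's contribution telescopes, $\sum_t\E[V^\star(s_1)-V^{\pi_t^1}(s_1)]=\sum_t\E[V^\star(s_1)-V^{\pi_{t-1}^0}(s_1)]$, so up to an $O(H)$ boundary term from the arbitrary $\pi_0^0$ we get $\breg\le 2\sum_t\E[V^\star(s_1)-V^{\pi_t^0}(s_1)]+O(H)$. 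Posterior matching at round $t$ replaces the unobservable optimal value by the sampled one, $\E[V^\star(s_1)\mid\+H_{t-1}]=\E[V^{\pi_t^0}_{r_t,P_t}(s_1)\mid\+H_{t-1}]$ (valid because $\pi_t^0$ is $V_{r_t,P_t}$-greedy), turning each gap into $\E[V^{\pi_t^0}_{r_t,P_t}(s_1)-V^{\pi_t^0}_{r^\star,P^\star}(s_1)]$, the error of evaluating the \emph{same} policy under sampled versus true models. A value-difference lemma then splits this into a transition part $V^{\pi_t^0}_{r_t,P_t}-V^{\pi_t^0}_{r_t,P^\star}$ and a reward part $V^{\pi_t^0}_{r_t,P^\star}-V^{\pi_t^0}_{r^\star,P^\star}$.

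For the transition part I would apply the simulation lemma to write it as $\sum_{h}\E_{(s,a)\sim d^{\pi_t^0}_{P^\star}}[\langle (P_t-P^\star)(\cdot\mid s,a),\,V^{\pi_t^0}_{r_t,P_t,h+1}\rangle]$; since the per-step reward lies in $[0,1]$ the inner value is at most $H$, so this is controlled by the expected $\ell_1$ model error along trajectories actually executed under $P^\star$ (full transitions being observed every round, not only on query rounds). Combining the transition MLE/bracketing concentration (radius $\iota_\+P$) with the $\ell_1$-eluder dimension of $\+P$ through the usual width-sum/potential argument bounds the transition contribution by $\~O(H^2\cdot\eluderone(\+P,1/T)\cdot\sqrt{T\iota_\+P})$.

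The reward part is the crux and is where $\~{\+R}$ enters. The reward part of the gap is the single-trajectory error $R_t:=\E_{\tau\sim\pi_t^0,P^\star}[r_t(\tau)-r^\star(\tau)]$, which is \emph{not} identifiable from preferences since a constant shift of $r$ leaves every preference unchanged. To repair this I add and subtract the comparator's version $R_t^{(1)}:=\E_{\tau\sim\pi_t^1,P^\star}[r_t(\tau)-r^\star(\tau)]$: then $R_t-R_t^{(1)}=\E[\~r_t(\tau_t^0,\tau_t^1)-\~r^\star(\tau_t^0,\tau_t^1)]$ is exactly a difference-of-differences living in $\~{\+R}$, while $R_t^{(1)}$ vanishes in expectation, because $\pi_t^1=\pi_{t-1}^0$ is $\+H_{t-1}$-measurable and $r_t\stackrel{d}{=}r^\star$ is independent of $P^\star$ given $\+H_{t-1}$, so $\E[R_t^{(1)}\mid\+H_{t-1}]=0$. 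It then remains to bound $\sum_t\E[\,|\~r_t(\tau_t^0,\tau_t^1)-\~r^\star(\tau_t^0,\tau_t^1)|\,]$. On rounds with $Z_t=0$ the query condition caps the posterior width of the pair below $\epsilon$, so both $\~r_t$ and $\~r^\star$ (typical posterior draws) sit within $O(\epsilon)$ of the posterior mean, contributing $\~O(T\epsilon)$. On query rounds I would invoke the MLE/bracketing generalization bound for the preference likelihood to show the posterior concentrates, converting preference error into reward-difference error via $|\~r_t-\~r^\star|\le\lowkappa\,|\Phi(\~r_t)-\Phi(\~r^\star)|$ (Assumption~\ref{asm:kappa}; $\highkappa$ enters only through the bracketing scale, hence logarithmically through $\iota_\+R$), and the $\ell_1$-eluder width-sum bound for $\~{\+R}$ gives $\~O(\lowkappa\cdot\eluderone(\~{\+R},1/T)\cdot\sqrt{T\iota_\+R})$.

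Finally, for $\bqry=\E[\sum_t Z_t]$ I would count the rounds on which the posterior width of the queried pair exceeds $\epsilon$. Each such round makes the pair eluder-independent of its predecessors, and the cumulative squared width over queried pairs is capped by the posterior concentration radius $\~O(\lowkappa^2\iota_\+R)$; feeding this into the $\ell_2$-eluder counting lemma yields $\~O(\lowkappa^2\iota_\+R\,\eludertwo(\~{\+R},\epsilon/2)/\epsilon^2)$, while alternatively bounding $\#\{t:Z_t=1\}\le\tfrac1\epsilon\sum_t w_t$ and using the $\ell_1$-eluder width-sum bound gives $\~O(\lowkappa\sqrt{T\iota_\+R}\,\eluderone(\~{\+R},\epsilon/2)/\epsilon)$, and taking the minimum produces the stated bound. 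The main obstacle I anticipate is the reward step: making the Bayesian posterior concentration interact cleanly with the frequentist bracketing and eluder machinery, and carefully tracking that information about $\~r$ is acquired only on the queried subsequence whereas the regret sums over all rounds, so that the $T\epsilon$ term and the eluder term are glued together correctly.
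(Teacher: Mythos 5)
Your proposal follows essentially the same route as the paper's proof: the same posterior-matching reduction to $\pi_t^0$, the same model/reward decomposition (your add-and-subtract of $R_t^{(1)}$ is exactly the paper's replacement of $V^{\pi_t^1}_{r^\star,P^\star}$ by $V^{\pi_t^1}_{r_t,P^\star}$, justified by $\+H_{t-1}$-measurability of $\pi_t^1$ and the shared, factorized posterior of $(r_t,P_t)$ and $(r^\star,P^\star)$), the same simulation-lemma plus MLE-version-space plus $\ell_1$-eluder treatment of the transition term, the same $Z_t$-split giving the $T\epsilon$ contribution on non-query rounds and the $\lowkappa$-scaled $\ell_1$-eluder width-sum on query rounds, and the same $\ell_1$/$\ell_2$-eluder counting argument yielding the minimum in the query bound. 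No meaningful deviation from the paper's argument, in structure or in the key lemmas.
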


Similar to the analysis of \Cref{alg:linear}, we study the balance between the Bayesian regret and the query complexity by setting $\epsilon = T^{-\beta}$ for some $\beta \leq 1/2$. Then, we can simplify the bounds into $\breg = \~O(T^{1-\beta})$ and $$\bqry = \~O\bigg(\min\bigg\{ \underbrace{T^{\beta + \frac{1}{2}} \cdot \eluderone\Big(\~{\+R},\epsilon/2\Big)}_{\rm (i)} ,\; \underbrace{T^{2\beta} \cdot\eludertwo\Big(\~{\+R},\epsilon/2\Big)}_{\rm (ii)} \bigg\}  \bigg)$$
where we have hidden factors except $T$ and the eluder dimension for brevity. %
We see that there is again a tradeoff in $T$ between the Bayesian regret and the query complexity, similar to the one in \Cref{thm:linear-main}. Term (ii) demonstrates that the tradeoff in $T$ is again \textit{optimal}, evidenced by the lower bound (\Cref{thm:lower-bound}). Moreover, term (i) further improves the dependence on the eluder dimension (recalling that $\ell_1$-norm version is smaller than the $\ell_2$-norm version). However, the $T$-dependence is worse. It is desired to derive a query complexity upper bound that scales as $\~O(T^{2\beta} \cdot \eluderone(\~{\+R},\epsilon/2))$, attaining the favorable dependence on both $T$ and the eluder dimension. We leave it as future work.

We emphasize that the Bayesian regret analysis in \Cref{thm:ts-main} is not a simple extension of previous TS works.
We highlight four main differences:
(1) 
The feedback is preference-based, which necessitates a new Bayesian regret decomposition:
\begin{align*}
	\breg = 
		\underbrace{\sum_{t=0}^T \E\left[
		V^{\pi_t^0}_{r_t,P_t}
		- V^{\pi_t^0}_{r_t,P^\star}\right]}_{\!T_\@{model}}  + \underbrace{\sum_{t=0}^T \E\left[
		\Big(V^{\pi_t^0}_{r_t,P^\star} 
		- V^{\pi_t^1}_{r_t,P^\star}\Big)
		- \Big(V^{\pi_t^0}_{r^\star,P^\star}
		- V^{\pi_t^1}_{r^\star,P^\star}\Big)
		\right]}_{\!T_\@{reward}}.
\end{align*}
Here $\!T_\@{model}$ and $\!T_\@{reward}$ are the respective regret incurred due to model and reward misspecification. We highlight that $\!T_\@{reward}$ characterizes the misspecification in terms of the \textit{reward difference} between $\pi_t^0$ and $\pi_t^1$, which is different from the standard Bayesian RL. %
(2) %
Unlike prior works~\citep{russo2014learning}, we do not rely on upper confidence bounds (UCB) or optimism. Instead, we construct version spaces by classic MLE generalization bound. Taking the reward learning as an example, given the preference data $\{\tau_i^0, \tau_i^1, o_i\}_{i=1}^{t-1}$,  we construct the version space at round $t$ as
\begin{align*}
	\+V_t= & \left\{r\in\+R
	\::\:
	\sum_{i=1}^{t-1}
	\TV^2
	\left(
		\Pr(\cdot \given \tau_i^1, \tau_i^0,\^r_t) ,\,
		\Pr(\cdot \given \tau_i^1, \tau_i^0,r) 
	\right)
	\leq
	\beta
	\right\}
\end{align*}
where $\^r_t\coloneqq \argmax_r \log \sum_{i=1}^{t-1} \Pr(o_i \given \tau_i^1, \tau_i^0, r) $ is the MLE from the preference data and $\beta$ is tuned appropriately to ensure $r^\star\in\+V_t$ with high probability.
We then show the posterior probability of $r_t$ and $r^\star$ not belonging to $\Vcal_t$ is small.
(3) %
Our analysis uses the tighter $\ell_1$-norm eluder dimension,
which is strictly better than the $\ell_2$-norm eluder dimension used in prior work. 
(4) %
We also equipped it with a randomized active learning procedure for query complexity minimization.

\noindent\textbf{Computation.}
The computational bottleneck of \Cref{alg:ts} lies in the computation of the posterior distribution (Line~\ref{line:posterior}). Prior TS works have used  Bootstrapping to approximate posterior sampling \citep{osband2016deep,osband2023approximate} and achieved competitive performance in common RL benchmarks.

\noindent\textbf{Non-Markovian reward.}
\Cref{alg:ts} can also be applied to non-Markovian reward (i.e., reward model is trajectory-wise) without any change. Here we consider Markovian reward for the consistency with \Cref{alg:linear} and for the purpose of using a standard planning oracle for computing an optimal policy from a reward and transition model. While non-Markovian reward is more general, it is unclear how to solve the planning problem efficiently even in tabular MDPs. This computational intractability makes non-Markovian rewards not easily applicable in practice.  \looseness=-1

\noindent\textbf{Extension to SEC.}
In \Cref{sec:SEC-version}, we extend the eluder dimension in \Cref{thm:ts-main} to the Sequential Extrapolation Coefficient (SEC)~\citep{xie2022role}, which is more general.

\section{Conclusion}

We use randomization to design algorithms for RL with preference-based feedback. Randomization allows us to minimize regret and query complexity while at the same time maintaining computation efficiency. For linear models, our algorithms achieve a near-optimal balance between the worst-case reward regret and query complexity with computational efficiency. For models beyond linear, using eluder dimension, we present a TS-inspired algorithm that balances Bayesian regret and Bayesian query complexity nearly optimally.

\section*{Acknowledgements}

Both RW and WS acknowledge support from NSF IIS-2154711, NSF CAREER 2339395, and Cornell Infosys Collaboration.

\bibliography{iclr2024_conference_bib}

\begin{thebibliography}{69}
\providecommand{\natexlab}[1]{#1}
\providecommand{\url}[1]{\texttt{#1}}
\expandafter\ifx\csname urlstyle\endcsname\relax
  \providecommand{\doi}[1]{doi: #1}\else
  \providecommand{\doi}{doi: \begingroup \urlstyle{rm}\Url}\fi

\bibitem[Abbasi-Yadkori et~al.(2011)Abbasi-Yadkori, P{\'a}l, and Szepesv{\'a}ri]{abbasi2011improved}
Yasin Abbasi-Yadkori, D{\'a}vid P{\'a}l, and Csaba Szepesv{\'a}ri.
\newblock Improved algorithms for linear stochastic bandits.
\newblock \emph{Advances in neural information processing systems}, 24, 2011.

\bibitem[Abeille \& Lazaric(2017)Abeille and Lazaric]{abeille2017linear}
Marc Abeille and Alessandro Lazaric.
\newblock Linear thompson sampling revisited.
\newblock In \emph{Artificial Intelligence and Statistics}, pp.\  176--184. PMLR, 2017.

\bibitem[Agarwal(2013)]{agarwal2013selective}
Alekh Agarwal.
\newblock Selective sampling algorithms for cost-sensitive multiclass prediction.
\newblock In \emph{International Conference on Machine Learning}, pp.\  1220--1228. PMLR, 2013.

\bibitem[Agarwal \& Zhang(2022)Agarwal and Zhang]{agarwal2022model}
Alekh Agarwal and Tong Zhang.
\newblock Model-based rl with optimistic posterior sampling: Structural conditions and sample complexity.
\newblock \emph{Advances in Neural Information Processing Systems}, 35:\penalty0 35284--35297, 2022.

\bibitem[Agarwal et~al.(2020)Agarwal, Kakade, Krishnamurthy, and Sun]{agarwal2020flambe}
Alekh Agarwal, Sham Kakade, Akshay Krishnamurthy, and Wen Sun.
\newblock Flambe: Structural complexity and representation learning of low rank mdps.
\newblock \emph{Advances in neural information processing systems}, 33:\penalty0 20095--20107, 2020.

\bibitem[Agrawal et~al.(2021)Agrawal, Chen, and Jiang]{agrawal2021improved}
Priyank Agrawal, Jinglin Chen, and Nan Jiang.
\newblock Improved worst-case regret bounds for randomized least-squares value iteration.
\newblock In \emph{Proceedings of the AAAI Conference on Artificial Intelligence}, volume~35, pp.\  6566--6573, 2021.

\bibitem[Agrawal \& Jia(2017)Agrawal and Jia]{agrawal2017optimistic}
Shipra Agrawal and Randy Jia.
\newblock Optimistic posterior sampling for reinforcement learning: worst-case regret bounds.
\newblock \emph{Advances in Neural Information Processing Systems}, 30, 2017.

\bibitem[Ayoub et~al.(2020)Ayoub, Jia, Szepesvari, Wang, and Yang]{ayoub2020model}
Alex Ayoub, Zeyu Jia, Csaba Szepesvari, Mengdi Wang, and Lin Yang.
\newblock Model-based reinforcement learning with value-targeted regression.
\newblock In \emph{International Conference on Machine Learning}, pp.\  463--474. PMLR, 2020.

\bibitem[Azar et~al.(2017)Azar, Osband, and Munos]{azar2017minimax}
Mohammad~Gheshlaghi Azar, Ian Osband, and R{\'e}mi Munos.
\newblock Minimax regret bounds for reinforcement learning.
\newblock In \emph{International Conference on Machine Learning}, pp.\  263--272. PMLR, 2017.

\bibitem[Bengs et~al.(2022)Bengs, Saha, and H{\"u}llermeier]{bengs2022stochastic}
Viktor Bengs, Aadirupa Saha, and Eyke H{\"u}llermeier.
\newblock Stochastic contextual dueling bandits under linear stochastic transitivity models.
\newblock In \emph{International Conference on Machine Learning}, pp.\  1764--1786. PMLR, 2022.

\bibitem[Bradley \& Terry(1952)Bradley and Terry]{bradley1952rank}
Ralph~Allan Bradley and Milton~E Terry.
\newblock Rank analysis of incomplete block designs: I. the method of paired comparisons.
\newblock \emph{Biometrika}, 39\penalty0 (3/4):\penalty0 324--345, 1952.

\bibitem[Cesa-Bianchi et~al.(2005)Cesa-Bianchi, Lugosi, and Stoltz]{cesa2005minimizing}
Nicolo Cesa-Bianchi, G{\'a}bor Lugosi, and Gilles Stoltz.
\newblock Minimizing regret with label efficient prediction.
\newblock \emph{IEEE Transactions on Information Theory}, 51\penalty0 (6):\penalty0 2152--2162, 2005.

\bibitem[Chatterji et~al.(2021)Chatterji, Pacchiano, Bartlett, and Jordan]{chatterji2021theory}
Niladri Chatterji, Aldo Pacchiano, Peter Bartlett, and Michael Jordan.
\newblock On the theory of reinforcement learning with once-per-episode feedback.
\newblock \emph{Advances in Neural Information Processing Systems}, 34:\penalty0 3401--3412, 2021.

\bibitem[Chen et~al.(2022)Chen, Zhong, Yang, Wang, and Wang]{chen2022human}
Xiaoyu Chen, Han Zhong, Zhuoran Yang, Zhaoran Wang, and Liwei Wang.
\newblock Human-in-the-loop: Provably efficient preference-based reinforcement learning with general function approximation.
\newblock In \emph{International Conference on Machine Learning}, pp.\  3773--3793. PMLR, 2022.

\bibitem[Christiano et~al.(2017)Christiano, Leike, Brown, Martic, Legg, and Amodei]{christiano2017deep}
Paul~F Christiano, Jan Leike, Tom Brown, Miljan Martic, Shane Legg, and Dario Amodei.
\newblock Deep reinforcement learning from human preferences.
\newblock \emph{Advances in neural information processing systems}, 30, 2017.

\bibitem[Dekel et~al.(2012)Dekel, Gentile, and Sridharan]{dekel2012selective}
Ofer Dekel, Claudio Gentile, and Karthik Sridharan.
\newblock Selective sampling and active learning from single and multiple {{experts}}.
\newblock \emph{The Journal of Machine Learning Research}, 13\penalty0 (1):\penalty0 2655--2697, 2012.

\bibitem[Du et~al.(2021)Du, Kakade, Lee, Lovett, Mahajan, Sun, and Wang]{du2021bilinear}
Simon Du, Sham Kakade, Jason Lee, Shachar Lovett, Gaurav Mahajan, Wen Sun, and Ruosong Wang.
\newblock Bilinear classes: A structural framework for provable generalization in rl.
\newblock In \emph{International Conference on Machine Learning}, pp.\  2826--2836. PMLR, 2021.

\bibitem[Dud{\'\i}k et~al.(2015)Dud{\'\i}k, Hofmann, Schapire, Slivkins, and Zoghi]{dudik2015contextual}
Miroslav Dud{\'\i}k, Katja Hofmann, Robert~E Schapire, Aleksandrs Slivkins, and Masrour Zoghi.
\newblock Contextual dueling bandits.
\newblock In \emph{Conference on Learning Theory}, pp.\  563--587. PMLR, 2015.

\bibitem[Efroni et~al.(2021)Efroni, Merlis, and Mannor]{efroni2021reinforcement}
Yonathan Efroni, Nadav Merlis, and Shie Mannor.
\newblock Reinforcement learning with trajectory feedback.
\newblock In \emph{Proceedings of the AAAI conference on artificial intelligence}, volume~35, pp.\  7288--7295, 2021.

\bibitem[Foster et~al.(2021)Foster, Rakhlin, Simchi-Levi, and Xu]{foster2020instance}
Dylan Foster, Alexander Rakhlin, David Simchi-Levi, and Yunzong Xu.
\newblock Instance-dependent complexity of contextual bandits and reinforcement learning: A disagreement-based perspective.
\newblock In \emph{Conference on Learning Theory}, pp.\  2059--2059. PMLR, 2021.

\bibitem[Geer(2000)]{geer2000empirical}
Sara~A Geer.
\newblock \emph{Empirical Processes in M-estimation}, volume~6.
\newblock Cambridge university press, 2000.

\bibitem[Gopalan \& Mannor(2015)Gopalan and Mannor]{gopalan2015thompson}
Aditya Gopalan and Shie Mannor.
\newblock Thompson sampling for learning parameterized markov decision processes.
\newblock In \emph{Conference on Learning Theory}, pp.\  861--898. PMLR, 2015.

\bibitem[Hanneke \& Yang(2015)Hanneke and Yang]{hanneke2015minimax}
Steve Hanneke and Liu Yang.
\newblock Minimax analysis of active learning.
\newblock \emph{J. Mach. Learn. Res.}, 16\penalty0 (1):\penalty0 3487--3602, 2015.

\bibitem[Hanneke \& Yang(2021)Hanneke and Yang]{hanneke2021toward}
Steve Hanneke and Liu Yang.
\newblock Toward a general theory of online selective sampling: Trading off mistakes and queries.
\newblock In \emph{International Conference on Artificial Intelligence and Statistics}, pp.\  3997--4005. PMLR, 2021.

\bibitem[Ishfaq et~al.(2021)Ishfaq, Cui, Nguyen, Ayoub, Yang, Wang, Precup, and Yang]{ishfaq2021randomized}
Haque Ishfaq, Qiwen Cui, Viet Nguyen, Alex Ayoub, Zhuoran Yang, Zhaoran Wang, Doina Precup, and Lin Yang.
\newblock Randomized exploration in reinforcement learning with general value function approximation.
\newblock In \emph{International Conference on Machine Learning}, pp.\  4607--4616. PMLR, 2021.

\bibitem[Jain et~al.(2013)Jain, Wojcik, Joachims, and Saxena]{jain2013learning}
Ashesh Jain, Brian Wojcik, Thorsten Joachims, and Ashutosh Saxena.
\newblock Learning trajectory preferences for manipulators via iterative improvement.
\newblock \emph{Advances in neural information processing systems}, 26, 2013.

\bibitem[Jain et~al.(2015)Jain, Sharma, Joachims, and Saxena]{jain2015learning}
Ashesh Jain, Shikhar Sharma, Thorsten Joachims, and Ashutosh Saxena.
\newblock Learning preferences for manipulation tasks from online coactive feedback.
\newblock \emph{The International Journal of Robotics Research}, 34\penalty0 (10):\penalty0 1296--1313, 2015.

\bibitem[Jin et~al.(2020)Jin, Yang, Wang, and Jordan]{jin2020provably}
Chi Jin, Zhuoran Yang, Zhaoran Wang, and Michael~I Jordan.
\newblock Provably efficient reinforcement learning with linear function approximation.
\newblock In \emph{Conference on Learning Theory}, pp.\  2137--2143. PMLR, 2020.

\bibitem[Jin et~al.(2021)Jin, Liu, and Miryoosefi]{jin2021bellman}
Chi Jin, Qinghua Liu, and Sobhan Miryoosefi.
\newblock Bellman eluder dimension: New rich classes of rl problems, and sample-efficient algorithms.
\newblock \emph{Advances in neural information processing systems}, 34:\penalty0 13406--13418, 2021.

\bibitem[Laskey et~al.(2016)Laskey, Staszak, Hsieh, Mahler, Pokorny, Dragan, and Goldberg]{laskey2016shiv}
Michael Laskey, Sam Staszak, Wesley Yu-Shu Hsieh, Jeffrey Mahler, Florian~T Pokorny, Anca~D Dragan, and Ken Goldberg.
\newblock Shiv: Reducing supervisor burden in dagger using support vectors for efficient learning from demonstrations in high dimensional state spaces.
\newblock In \emph{2016 IEEE International Conference on Robotics and Automation (ICRA)}, pp.\  462--469. IEEE, 2016.

\bibitem[Lightman et~al.(2023)Lightman, Kosaraju, Burda, Edwards, Baker, Lee, Leike, Schulman, Sutskever, and Cobbe]{lightman2023let}
Hunter Lightman, Vineet Kosaraju, Yuri Burda, Harrison Edwards, Bowen Baker, Teddy Lee, Jan Leike, John Schulman, Ilya Sutskever, and Karl Cobbe.
\newblock Let's verify step by step.
\newblock In \emph{The Twelfth International Conference on Learning Representations}, 2023.

\bibitem[Liu et~al.(2022{\natexlab{a}})Liu, Chung, Szepesv{\'a}ri, and Jin]{liu2022partially}
Qinghua Liu, Alan Chung, Csaba Szepesv{\'a}ri, and Chi Jin.
\newblock When is partially observable reinforcement learning not scary?
\newblock In \emph{Conference on Learning Theory}, pp.\  5175--5220. PMLR, 2022{\natexlab{a}}.

\bibitem[Liu et~al.(2022{\natexlab{b}})Liu, Szepesv{\'a}ri, and Jin]{liu2022sample}
Qinghua Liu, Csaba Szepesv{\'a}ri, and Chi Jin.
\newblock Sample-efficient reinforcement learning of partially observable markov games.
\newblock \emph{Advances in Neural Information Processing Systems}, 35:\penalty0 18296--18308, 2022{\natexlab{b}}.

\bibitem[Liu et~al.(2023)Liu, Netrapalli, Szepesvari, and Jin]{liu2023optimistic}
Qinghua Liu, Praneeth Netrapalli, Csaba Szepesvari, and Chi Jin.
\newblock Optimistic mle: A generic model-based algorithm for partially observable sequential decision making.
\newblock In \emph{Proceedings of the 55th Annual ACM Symposium on Theory of Computing}, pp.\  363--376, 2023.

\bibitem[Novoseller et~al.(2020)Novoseller, Wei, Sui, Yue, and Burdick]{novoseller2020dueling}
Ellen Novoseller, Yibing Wei, Yanan Sui, Yisong Yue, and Joel Burdick.
\newblock Dueling posterior sampling for preference-based reinforcement learning.
\newblock In \emph{Conference on Uncertainty in Artificial Intelligence}, pp.\  1029--1038. PMLR, 2020.

\bibitem[Osband \& Van~Roy(2014{\natexlab{a}})Osband and Van~Roy]{osband2014model}
Ian Osband and Benjamin Van~Roy.
\newblock Model-based reinforcement learning and the eluder dimension.
\newblock \emph{Advances in Neural Information Processing Systems}, 27, 2014{\natexlab{a}}.

\bibitem[Osband \& Van~Roy(2014{\natexlab{b}})Osband and Van~Roy]{osband2014near}
Ian Osband and Benjamin Van~Roy.
\newblock Near-optimal reinforcement learning in factored mdps.
\newblock \emph{Advances in Neural Information Processing Systems}, 27, 2014{\natexlab{b}}.

\bibitem[Osband et~al.(2013)Osband, Russo, and Van~Roy]{osband2013more}
Ian Osband, Daniel Russo, and Benjamin Van~Roy.
\newblock (more) efficient reinforcement learning via posterior sampling.
\newblock \emph{Advances in Neural Information Processing Systems}, 26, 2013.

\bibitem[Osband et~al.(2016{\natexlab{a}})Osband, Blundell, Pritzel, and Van~Roy]{osband2016deep}
Ian Osband, Charles Blundell, Alexander Pritzel, and Benjamin Van~Roy.
\newblock Deep exploration via bootstrapped dqn.
\newblock \emph{Advances in neural information processing systems}, 29, 2016{\natexlab{a}}.

\bibitem[Osband et~al.(2016{\natexlab{b}})Osband, Van~Roy, and Wen]{osband2016generalization}
Ian Osband, Benjamin Van~Roy, and Zheng Wen.
\newblock Generalization and exploration via randomized value functions.
\newblock In \emph{International Conference on Machine Learning}, pp.\  2377--2386. PMLR, 2016{\natexlab{b}}.

\bibitem[Osband et~al.(2023)Osband, Wen, Asghari, Dwaracherla, Ibrahimi, Lu, and Van~Roy]{osband2023approximate}
Ian Osband, Zheng Wen, Seyed~Mohammad Asghari, Vikranth Dwaracherla, Morteza Ibrahimi, Xiuyuan Lu, and Benjamin Van~Roy.
\newblock Approximate thompson sampling via epistemic neural networks.
\newblock In \emph{Uncertainty in Artificial Intelligence}, pp.\  1586--1595. PMLR, 2023.

\bibitem[Ouyang et~al.(2022)Ouyang, Wu, Jiang, Almeida, Wainwright, Mishkin, Zhang, Agarwal, Slama, Ray, et~al.]{ouyang2022training}
Long Ouyang, Jeffrey Wu, Xu~Jiang, Diogo Almeida, Carroll Wainwright, Pamela Mishkin, Chong Zhang, Sandhini Agarwal, Katarina Slama, Alex Ray, et~al.
\newblock Training language models to follow instructions with human feedback.
\newblock \emph{Advances in Neural Information Processing Systems}, 35:\penalty0 27730--27744, 2022.

\bibitem[Pollard(1990)]{pollard1990empirical}
David Pollard.
\newblock Empirical processes: theory and applications.
\newblock Ims, 1990.

\bibitem[Puchkin \& Zhivotovskiy(2021)Puchkin and Zhivotovskiy]{puchkin2021exponential}
Nikita Puchkin and Nikita Zhivotovskiy.
\newblock Exponential savings in agnostic active learning through abstention.
\newblock In \emph{Conference on Learning Theory}, pp.\  3806--3832. PMLR, 2021.

\bibitem[Ross et~al.(2013)Ross, Melik-Barkhudarov, Shankar, Wendel, Dey, Bagnell, and Hebert]{ross2013learning}
St{\'e}phane Ross, Narek Melik-Barkhudarov, Kumar~Shaurya Shankar, Andreas Wendel, Debadeepta Dey, J~Andrew Bagnell, and Martial Hebert.
\newblock Learning monocular reactive uav control in cluttered natural environments.
\newblock In \emph{2013 IEEE international conference on robotics and automation}, pp.\  1765--1772. IEEE, 2013.

\bibitem[Russo \& Van~Roy(2013)Russo and Van~Roy]{russo2013eluder}
Daniel Russo and Benjamin Van~Roy.
\newblock Eluder dimension and the sample complexity of optimistic exploration.
\newblock \emph{Advances in Neural Information Processing Systems}, 26, 2013.

\bibitem[Russo \& Van~Roy(2014)Russo and Van~Roy]{russo2014learning}
Daniel Russo and Benjamin Van~Roy.
\newblock Learning to optimize via posterior sampling.
\newblock \emph{Mathematics of Operations Research}, 39\penalty0 (4):\penalty0 1221--1243, 2014.

\bibitem[Saha et~al.(2023)Saha, Pacchiano, and Lee]{saha2023dueling}
Aadirupa Saha, Aldo Pacchiano, and Jonathan Lee.
\newblock Dueling rl: Reinforcement learning with trajectory preferences.
\newblock In \emph{International Conference on Artificial Intelligence and Statistics}, pp.\  6263--6289. PMLR, 2023.

\bibitem[Schulman et~al.(2017)Schulman, Wolski, Dhariwal, Radford, and Klimov]{schulman2017proximal}
John Schulman, Filip Wolski, Prafulla Dhariwal, Alec Radford, and Oleg Klimov.
\newblock Proximal policy optimization algorithms.
\newblock \emph{arXiv preprint arXiv:1707.06347}, 2017.

\bibitem[Sekhari et~al.(2023{\natexlab{a}})Sekhari, Sridharan, Sun, and Wu]{sekhari2023contextual}
Ayush Sekhari, Karthik Sridharan, Wen Sun, and Runzhe Wu.
\newblock Contextual bandits and imitation learning with preference-based active queries.
\newblock \emph{Advances in Neural Information Processing Systems}, 36, 2023{\natexlab{a}}.

\bibitem[Sekhari et~al.(2023{\natexlab{b}})Sekhari, Sridharan, Sun, and Wu]{sekhari2023selective}
Ayush Sekhari, Karthik Sridharan, Wen Sun, and Runzhe Wu.
\newblock Selective sampling and imitation learning via online regression.
\newblock \emph{Advances in Neural Information Processing Systems}, 36, 2023{\natexlab{b}}.

\bibitem[Stiennon et~al.(2020)Stiennon, Ouyang, Wu, Ziegler, Lowe, Voss, Radford, Amodei, and Christiano]{stiennon2020learning}
Nisan Stiennon, Long Ouyang, Jeffrey Wu, Daniel Ziegler, Ryan Lowe, Chelsea Voss, Alec Radford, Dario Amodei, and Paul~F Christiano.
\newblock Learning to summarize with human feedback.
\newblock \emph{Advances in Neural Information Processing Systems}, 33:\penalty0 3008--3021, 2020.

\bibitem[Uehara \& Sun(2021)Uehara and Sun]{uehara2021pessimistic}
Masatoshi Uehara and Wen Sun.
\newblock Pessimistic model-based offline reinforcement learning under partial coverage.
\newblock In \emph{International Conference on Learning Representations}, 2021.

\bibitem[Van~de Geer(2000)]{van2000empirical}
Sara Van~de Geer.
\newblock \emph{Empirical Processes in M-estimation}, volume~6.
\newblock Cambridge university press, 2000.

\bibitem[Wang et~al.(2020)Wang, Salakhutdinov, and Yang]{wang2020reinforcement}
Ruosong Wang, Russ~R Salakhutdinov, and Lin Yang.
\newblock Reinforcement learning with general value function approximation: Provably efficient approach via bounded eluder dimension.
\newblock \emph{Advances in Neural Information Processing Systems}, 33:\penalty0 6123--6135, 2020.

\bibitem[Wang et~al.(2023)Wang, Liu, and Jin]{wang2023rlhf}
Yuanhao Wang, Qinghua Liu, and Chi Jin.
\newblock Is rlhf more difficult than standard rl?
\newblock \emph{arXiv preprint arXiv:2306.14111}, 2023.

\bibitem[Wen \& Van~Roy(2013)Wen and Van~Roy]{wen2013efficient}
Zheng Wen and Benjamin Van~Roy.
\newblock Efficient exploration and value function generalization in deterministic systems.
\newblock \emph{Advances in Neural Information Processing Systems}, 26, 2013.

\bibitem[Wu et~al.(2023{\natexlab{a}})Wu, Uehara, and Sun]{wu2023distributional}
Runzhe Wu, Masatoshi Uehara, and Wen Sun.
\newblock Distributional offline policy evaluation with predictive error guarantees.
\newblock In \emph{International Conference on Machine Learning}, pp.\  37685--37712. PMLR, 2023{\natexlab{a}}.

\bibitem[Wu et~al.(2023{\natexlab{b}})Wu, Jin, Lou, Farnoud, and Gu]{wu2023borda}
Yue Wu, Tao Jin, Hao Lou, Farzad Farnoud, and Quanquan Gu.
\newblock Borda regret minimization for generalized linear dueling bandits.
\newblock \emph{arXiv preprint arXiv:2303.08816}, 2023{\natexlab{b}}.

\bibitem[Xie et~al.(2022)Xie, Foster, Bai, Jiang, and Kakade]{xie2022role}
Tengyang Xie, Dylan~J Foster, Yu~Bai, Nan Jiang, and Sham~M Kakade.
\newblock The role of coverage in online reinforcement learning.
\newblock In \emph{The Eleventh International Conference on Learning Representations}, 2022.

\bibitem[Xu et~al.(2020)Xu, Wang, Yang, Singh, and Dubrawski]{xu2020preference}
Yichong Xu, Ruosong Wang, Lin Yang, Aarti Singh, and Artur Dubrawski.
\newblock Preference-based reinforcement learning with finite-time guarantees.
\newblock \emph{Advances in Neural Information Processing Systems}, 33:\penalty0 18784--18794, 2020.

\bibitem[Yue \& Joachims(2011)Yue and Joachims]{yue2011beat}
Yisong Yue and Thorsten Joachims.
\newblock Beat the mean bandit.
\newblock In \emph{Proceedings of the 28th international conference on machine learning (ICML-11)}, pp.\  241--248. Citeseer, 2011.

\bibitem[Yue et~al.(2012)Yue, Broder, Kleinberg, and Joachims]{yue2012k}
Yisong Yue, Josef Broder, Robert Kleinberg, and Thorsten Joachims.
\newblock The k-armed dueling bandits problem.
\newblock \emph{Journal of Computer and System Sciences}, 78\penalty0 (5):\penalty0 1538--1556, 2012.

\bibitem[Zanette et~al.(2020)Zanette, Brandfonbrener, Brunskill, Pirotta, and Lazaric]{zanette2020frequentist}
Andrea Zanette, David Brandfonbrener, Emma Brunskill, Matteo Pirotta, and Alessandro Lazaric.
\newblock Frequentist regret bounds for randomized least-squares value iteration.
\newblock In \emph{International Conference on Artificial Intelligence and Statistics}, pp.\  1954--1964. PMLR, 2020.

\bibitem[Zhan et~al.(2023{\natexlab{a}})Zhan, Uehara, Kallus, Lee, and Sun]{zhan2023provable}
Wenhao Zhan, Masatoshi Uehara, Nathan Kallus, Jason~D Lee, and Wen Sun.
\newblock Provable offline reinforcement learning with human feedback.
\newblock \emph{arXiv preprint arXiv:2305.14816}, 2023{\natexlab{a}}.

\bibitem[Zhan et~al.(2023{\natexlab{b}})Zhan, Uehara, Sun, and Lee]{zhan2023query}
Wenhao Zhan, Masatoshi Uehara, Wen Sun, and Jason~D Lee.
\newblock How to query human feedback efficiently in rl?
\newblock \emph{arXiv preprint arXiv:2305.18505}, 2023{\natexlab{b}}.

\bibitem[Zhong et~al.(2022)Zhong, Xiong, Zheng, Wang, Wang, Yang, and Zhang]{zhong2022gec}
Han Zhong, Wei Xiong, Sirui Zheng, Liwei Wang, Zhaoran Wang, Zhuoran Yang, and Tong Zhang.
\newblock Gec: A unified framework for interactive decision making in mdp, pomdp, and beyond.
\newblock \emph{CoRR}, 2022.

\bibitem[Zhu et~al.(2023)Zhu, Jordan, and Jiao]{zhu2023principled}
Banghua Zhu, Michael Jordan, and Jiantao Jiao.
\newblock Principled reinforcement learning with human feedback from pairwise or k-wise comparisons.
\newblock In \emph{International Conference on Machine Learning}, pp.\  43037--43067. PMLR, 2023.

\bibitem[Zhu \& Nowak(2022)Zhu and Nowak]{zhu2022efficient}
Yinglun Zhu and Robert Nowak.
\newblock Efficient active learning with abstention.
\newblock \emph{Advances in Neural Information Processing Systems}, 35:\penalty0 35379--35391, 2022.

\end{thebibliography}
\bibliographystyle{iclr2024_conference}

\newpage
\appendix

\section{Computational challenges in RL with preference-based feedback}\label{sec:challenge}

For RL with preference-based feedback, there is currently no algorithm that achieves sublinear worst-case regret and computational efficiency simultaneously, even for tabular MDPs. In this section, we discuss the challenges that hindered previous works from being computationally efficient. Specifically, the reasons are twofold:

\textit{(1) Trajectory-wise information.} The feedback is trajectory-wise, meaning that we only receive information about cumulative rewards instead of per-step rewards. In this case, there is no longer a ground truth per-step reward signal. To see this, consider an MDP with two steps and a trajectory with a cumulative reward of 1. Then, we cannot decide the respective per-step reward for the two steps --- it could be that the first step has reward 1 and the second has reward 0, or both have reward 0.5. The preference-based feedback is strictly harder than trajectory-wise reward feedback, and thus the same issue persists. This invalidates all algorithms relying on per-step reward information (e.g., UCBVI~\citep{azar2017minimax}) and necessitates leveraging feedback signal at a trajectory level. However, trajectory-level approaches typically entail maintaining a version space via trajectory constraints~\citep{saha2023dueling,chen2022human,zhan2023provable}, and the computational complexity of searching within this version space is at least exponential in the length of the episode.
Some works circumvent this computational obstacle by additional assumptions (e.g., explorability~\citep{chatterji2021theory}), which are restrictive and do not generally hold even in tabular MDPs.

\textit{(2) Preference-based information.} The feedback relies on a pair of policies. Standard algorithms based on a single policy become computationally intractable when adapting to this setting since optimizing over a pair of policies simultaneously is qualitatively different. For example, 
\citet{zhan2023query,saha2023dueling} use the idea from optimal design and need the computation oracle: $\argmax_{\pi,\pi' \in \Pi} \| \EE_{s,a\sim \pi} \phi(s,a) - \EE_{s,a\sim \pi'} \phi(s,a)  \|_{A}$ for some positive definite matrix $A$. Here $\|x\|^2_A:= x^\top A x $, and $\phi$ is some state-action wise feature.\footnote{These prior work typically assume trajectory wise feature $\phi(\tau)$ for a state-action wise trajectory $\tau$. However, even when specializing to state-action-wise features, these algorithms are still not computationally tractable even in tabular MDPs.} It is unclear how to implement this oracle since standard planning approaches relying on dynamic programming cannot be applied here. Additionally, these methods also require actively maintaining a policy space $\Pi$  by eliminating potentially sub-optimal policies. The policy class can be exponentially large even in tabular settings, so it is unclear how to maintain a valid policy space in a computationally tractable manner.

These challenges motivate us to devise a novel algorithm using a technique distinct from previous approaches. Our solution centers around the concept of \textit{randomization}, which allows us to balance exploration and exploitation
and thus enable standard efficient computation oracles (e.g., DP-style planning oracle like value iteration).

\section{Proof of Theorem~\ref{thm:linear-main}}\label{sec:pf-thm-linear-main}

\subsection{Notations}

We define some symbols and their values in \Cref{sample-table}. We have categorized them into four classes for the ease of reference.
\begin{table}[htbp]
\caption{Symbols and their respective values.}
\label{sample-table}
\begin{center}
	\begin{tabular}{c|c}
	\toprule
	\bf Symbol  & \bf Value \\
	\midrule
	\multicolumn{2}{c}{\it (1) Error components} \\
	\midrule
	$\xir{t}$ & $ \barthetar{t}-\hatthetar{t} $
	\\
	$\xip{t,h}$ & $ \barthetap{t,h}-\hatthetap{t,h} $
	\\
	$\lambda_{t,h}$ & $ \lambda\Sigma^{-1}_{t-1,h}\int_{s'}\mu^\star_h(s')\overline{V}_{t,h+1}(s')\d s' $
	\\
	$\etar{t}$ & $ \hatthetar{t}-\thetastarr $
	\\
	$\etap{t,h}$ & $\Sigma^{-1}_{t-1,h}\left(\sum_{i=1}^{t-1}\phi(s_{i,h},a_{i,h})\left(\overline{V}_{t,h+1}(s_{i,h+1})-\E_{s_{i,h+1}}\left[\overline{V}_{t,h+1}(s_{i,h+1})\middlegiven s_{i,h},a_{i,h} \right]\right)\right) $
	\\
	$\thetastarp{t,h}$ & $ \int_{s'}\mu^\star_h(s')\left[\overline{V}_{t,h+1}(s')\right]\d s' $
	\\
	
	\midrule
	\multicolumn{2}{c}{\it (2) Statistical upper bounds} \\
	\midrule
	$\epsr{\xi}$ &  $\sigmar\sqrt{2d\log(2dT/\delta)}$  
	\\
	$\epsr{\eta}$ &  $\sqrt{
	80 \lowkappa d \log\Big(24BT^2/(\highkappa \delta)\Big)
	+ 168 B^2 d \log(6BT^2/\delta) + 4\lambda B^2}$  
	\\
	$V_{\max} $ & $ H(2+(\epsr{\xi} + \epsr{\eta}) / \sqrt{\lambda})$ 
	\\
	$\epsilon_\lambda $ & $ V_{\max} \sqrt{\lambda d}$ 
	\\
	$\epsp{\xi} $ & $  \sigmap\sqrt{2d\log(2dHT/\delta)} $ 
	\\
	$\iota_\epsilon$ & $ \log\left(12 H T^2 (T + \lambda) V_{\max} \left(B+\frac{2V_{\max} \sqrt{d t} + \epsp{\xi} + \epsr{\xi}}{\sqrt{\lambda}}\right) \right)$ 
	\\
	$\chi$ & (defined in \Cref{lem:bound-alphau})
	\\
	$\epspprime{\eta} $ & $  \frac{6}{\sqrt{\lambda}} + 16 d V_{\max}\sqrt{ \iota_\epsilon - \log\big((\alpha_\@U-\alpha_\@L) \delta \lambda \big)}$
	\\
	$\epsp{\eta} $ & $  \chi\cdot \left(\frac{6}{\sqrt{\lambda}} + 16 d V_{\max}\sqrt{ \iota_\epsilon - \log \big(\delta \lambda\big)  } \right)$ 
	\\
	
	\midrule
	\multicolumn{2}{c}{\it (3) Value cutoff} \\ 
	\midrule
	$\*L_{t,h}(s)$ & $\indic\{\|\phi(s,\pi_{t,h}(s))\|_{\Sigma_{t-1,h}^{-1}} \leq \alpha_\@L\}$
	\\
	$\*L^\complement_{t,h}(s)$ & $ \indic\{\|\phi(s,\pi_{t,h}(s))\|_{\Sigma_{t-1,h}^{-1}} > \alpha_\@L\}$ 
	\\
	$L_{\max}$ & $ \frac{2 d H}{\alpha_L^2} \cdot \log\left(\frac{\lambda+T}{\lambda}\right)$ 
	\\
	
	\midrule
	\multicolumn{2}{c}{\it (4) Hyperparameters} \\
	\midrule
	$\sigmar$ & $\epsr{\eta}$ 
	\\
	$\sigmap$ & $(\epsp{\eta}+\epsilon_\lambda)\sqrt{H}$ 
	\\
	$\alpha_\@U$ & $(\epsp{\xi}+\epsp{\eta}+\epsilon_{\lambda})^{-1}$ 
	\\
	$\alpha_\@L$ & $(\epsp{\xi}+\epsp{\eta}+\epsilon_{\lambda})^{-1} / 2$ 
	\\
	$\lambda$ & $1$ 
	\\
	\bottomrule
	\end{tabular}
	\end{center}
	\end{table}

The concept of covering number is defined below, which will be used to bound the statistical error of our algorithm.

\begin{definition}[Covering number]
	Consider a function class $\+F\subseteq\+X\rightarrow\=R$. The $\omega$-cover of a
	function $\^f\in\+F$ is defined as the set of functions $f\in\+F$ for which $\|f-\^f\|\le\omega$. The covering number of $\+F$ w.r.t. the metric $\|\cdot\|$ denoted by $N(\omega, \+F,\|\cdot\|)$ is the minimum number of $\omega$-covers needed to cover $\+F$.
	\end{definition}

\subsection{Supporting lemmas}

\begin{lemma}[Covering number of Euclidean balls]
	\label{lem:cover-norm-ball}
	\citep{pollard1990empirical}
	Let $\Theta_B\coloneqq \{\theta\in\=R^d : \|\theta\|_2 \leq B\}$. Then we have 
	$$
		N(\omega,\Theta_B,\|\cdot\|_2) \leq (3B / \omega)^d.
	$$ 
\end{lemma}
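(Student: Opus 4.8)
The plan is to use the standard volumetric packing argument, which reduces the covering bound to a comparison of volumes in $\=R^d$. First I would recall the duality between packing and covering. Let $\{\theta_1,\dots,\theta_N\}\subseteq\Theta_B$ be a maximal $\omega$-separated set, i.e., $\|\theta_i-\theta_j\|_2>\omega$ for all $i\ne j$, and no further point of $\Theta_B$ can be adjoined without violating this separation. Maximality immediately implies that this set is an $\omega$-cover: for any $\theta\in\Theta_B$ there must exist some $\theta_i$ with $\|\theta-\theta_i\|_2\le\omega$, for otherwise $\theta$ itself could be added to the packing. Hence $N(\omega,\Theta_B,\|\cdot\|_2)\le N$, and it suffices to upper bound the packing number $N$.

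To bound $N$, I would pass to a volume comparison. Since the centers are pairwise more than $\omega$ apart, the open Euclidean balls $\+B(\theta_i,\omega/2)$ of radius $\omega/2$ are mutually disjoint: if a point lay in two of them, the triangle inequality would force $\|\theta_i-\theta_j\|_2<\omega$, a contradiction. Moreover, because $\|\theta_i\|_2\le B$, each such ball is contained in the enlarged ball $\+B(0,B+\omega/2)$. Writing $\@{Vol}(\+B(0,r))=c_d\,r^d$ for the dimension-dependent volume constant $c_d$, disjointness together with containment yields
\begin{align*}
	N\cdot c_d\,(\omega/2)^d \;\le\; c_d\,(B+\omega/2)^d,
\end{align*}
so that $N\le\big((B+\omega/2)/(\omega/2)\big)^d=(1+2B/\omega)^d$.

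Finally I would clean up the constant to match the stated bound. In the regime $\omega\le B$ we have $1\le B/\omega$, whence $1+2B/\omega\le 3B/\omega$ and therefore $N\le(3B/\omega)^d$, as claimed. (For $\omega>B$ a single point at the origin already $\omega$-covers $\Theta_B$, so the bound is trivial in the only regime we ever invoke.)

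I do not expect any genuine obstacle here; the only point requiring care is the numerical constant. The raw volume comparison produces $(1+2B/\omega)^d$, and collapsing this to the advertised $(3B/\omega)^d$ relies on the assumption $\omega\le B$, which is standard and harmless for our applications (the resolutions $\omega$ used when invoking this lemma are always small) and which simultaneously excludes the degenerate large-$\omega$ case in which the stated bound would otherwise be vacuous.
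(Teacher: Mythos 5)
The paper gives no proof of this lemma---it is quoted directly from \citet{pollard1990empirical}---and your volumetric packing argument is exactly the standard proof of that cited result: the reduction from covering to a maximal $\omega$-separated set, the disjoint-balls volume comparison giving $N\le(1+2B/\omega)^d$, and the simplification to $(3B/\omega)^d$ under $\omega\le B$ are all correct. Your closing caveat is also the right one to flag: as literally stated the bound can only fail when $\omega>3B$ (where the covering number is $1$ but $(3B/\omega)^d<1$), a regime never encountered in the paper's applications, where $\omega$ is always taken polynomially small.
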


\begin{lemma}\label{lem:bound-alphau}
	There exists $\chi = O(\log(\epsp{\xi} + \epsp{\eta} + \epsilon_\lambda))$ such that $\epspprime{\eta} \leq \epsp{\eta}$ (recalling that $\chi$ is in the definition of $\epsp{\eta}$).
\end{lemma}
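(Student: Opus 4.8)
The plan is to reduce the claimed inequality $\epspprime{\eta} \le \epsp{\eta}$ to a single scalar inequality in $\chi$ and then solve it, the only delicate point being that $\epsp{\eta}$ enters implicitly on both sides. First I would substitute the cutoff values from \Cref{sample-table}. Writing $S \coloneqq \epsp{\xi} + \epsp{\eta} + \epsilon_\lambda$, we have $\alpha_\@U = S^{-1}$ and $\alpha_\@L = S^{-1}/2$, so $\alpha_\@U - \alpha_\@L = 1/(2S)$ and hence $-\log\big((\alpha_\@U - \alpha_\@L)\delta\lambda\big) = -\log(\delta\lambda) + \log(2S)$. Setting $A \coloneqq 6/\sqrt{\lambda}$, $B_0 \coloneqq 16 d V_{\max}$, and $C \coloneqq \iota_\epsilon - \log(\delta\lambda)$, this exhibits the two quantities as $\epspprime{\eta} = A + B_0\sqrt{C + \log(2S)}$ and $\epsp{\eta} = \chi\,(A + B_0\sqrt{C})$, so they share the common base $A + B_0\sqrt{C} = \epsp{\eta}/\chi$ and differ only by the extra additive term $\log(2S)$ under the square root.

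Next I would apply subadditivity of the square root, $\sqrt{C + \log(2S)} \le \sqrt{C} + \sqrt{\log(2S)}$, to obtain $\epspprime{\eta} \le (A + B_0\sqrt C) + B_0\sqrt{\log(2S)} = \epsp{\eta}/\chi + B_0\sqrt{\log(2S)}$. The desired bound $\epspprime{\eta}\le\epsp{\eta}$ therefore holds as soon as $B_0\sqrt{\log(2S)} \le \epsp{\eta}(1 - 1/\chi) = (\chi-1)(A + B_0\sqrt{C})$. Because $A + B_0\sqrt C \ge B_0\sqrt C \ge B_0$ whenever $C \ge 1$ (which holds in the relevant parameter regime, since $\iota_\epsilon$ is the logarithm of a large quantity and $-\log(\delta\lambda)\ge 0$), it suffices to require $\chi - 1 \ge \sqrt{\log(2S)}$, i.e. $\chi \ge 1 + \sqrt{\log(2S)}$; and since $\sqrt{\log(2S)} \le \log(2S)$, any such $\chi$ automatically satisfies $\chi = O(\log S)$.

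The one real obstacle is the circular dependence: $S$ contains $\epsp{\eta} = \chi(A + B_0\sqrt C)$, so the constraint $\chi \ge 1 + \sqrt{\log(2S)}$ is self-referential. I would resolve this with a monotone fixed-point argument. Define $g(\chi) \coloneqq 1 + \sqrt{\log\!\big(2(\epsp{\xi} + \epsilon_\lambda + \chi(A + B_0\sqrt C))\big)}$; then $g$ is nondecreasing and grows only like $\sqrt{\log\chi}$, whereas the identity map grows linearly, so there exists $\chi$ with $\chi \ge g(\chi)$, and the least such $\chi$ is of order $\log(\epsp{\xi} + \epsilon_\lambda + A + B_0\sqrt C)$. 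Finally, since the realized value satisfies $\epsp{\eta} = \chi(A + B_0\sqrt C) \ge A + B_0\sqrt C$ together with $S \ge \epsp{\xi}$ and $S \ge \epsilon_\lambda$, we get $\log(\epsp{\xi} + \epsilon_\lambda + A + B_0\sqrt C) \le \log(3S) = O(\log S)$, which yields both $\epspprime{\eta}\le\epsp{\eta}$ and the stated bound $\chi = O(\log(\epsp{\xi} + \epsp{\eta} + \epsilon_\lambda))$. The remaining quantitative checks are verifying $C \ge 1$ for the chosen hyperparameters and confirming that the sublinear growth of $g$ forces its fixed point to be logarithmic, both of which are routine.
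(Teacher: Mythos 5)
Your proposal is correct and follows essentially the same route as the paper's proof: both expand $-\log\big((\alpha_\@U-\alpha_\@L)\delta\lambda\big)$ into $-\log(\delta\lambda)+\log\big(2(\epsp{\xi}+\epsp{\eta}+\epsilon_\lambda)\big)$ using the definitions of $\alpha_\@L,\alpha_\@U$, and then absorb the extra logarithmic term into the multiplicative factor $\chi$. The only difference is that you make explicit what the paper compresses into ``setting $\chi$ large enough'': your subadditivity bound $\sqrt{C+\log(2S)}\leq\sqrt{C}+\sqrt{\log(2S)}$ and the monotone fixed-point argument rigorously resolve the self-reference ($\chi$ appears inside $\epsp{\eta}$, which appears inside the logarithm), which is a sound and slightly more careful rendering of the paper's terse final step.
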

\begin{proof}[Proof of \Cref{lem:bound-alphau}]
	By definition, we have 
	\begin{align*}
		\epspprime{\eta} 
		& = \frac{6}{\sqrt{\lambda}} + 16 d V_{\max}\sqrt{ \iota_\epsilon - \log\big((\alpha_\@U - \alpha_\@L)\delta\lambda \big)} \\
		& = \frac{6}{\sqrt{\lambda}} + 16 d V_{\max}\sqrt{ \iota_\epsilon - \log(\delta\lambda) - \log(\alpha_\@U - \alpha_\@L )} \\
		& = \frac{6}{\sqrt{\lambda}} + 16 d V_{\max}\sqrt{ \iota_\epsilon - \log(\delta\lambda) + \log2(\epsp{\xi}+\epsp{\eta}+\epsilon_{\lambda})} \\
		& \leq \chi\left( \frac{6}{\sqrt{\lambda}} + 16 d V_{\max}\sqrt{ \iota_\epsilon - \log(\delta\lambda)} \  \right) \\
		& = \epsp{\eta}.
	\end{align*}
	Here the third equality is by the definition of $\alpha_\@L$ and $\alpha_\@U$. The inequality holds by setting $\chi = O(\log(\epsp{\xi} + \epsp{\eta} + \epsilon_\lambda))$ to be large enough.
\end{proof}

\begin{lemma}\label{lem:eta-lambda-theta}
	It holds that $\etap{t,h}+\lambda_{t,h} = \hatthetap{t,h} - \thetastarp{t,h}$ for any $t\in[T]$ and $h\in[H]$.
\end{lemma}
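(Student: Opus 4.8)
The plan is to establish this identity by a direct algebraic manipulation, using only the linear-MDP structure (\Cref{asm:linear-mdp}) and the definition of the regularized covariance matrix $\Sigma_{t-1,h} = \lambda I + \sum_{i=1}^{t-1}\phi(s_{i,h}^0,a_{i,h}^0)\phi^\top(s_{i,h}^0,a_{i,h}^0)$. The one genuinely structural input is that, under the true linear transition, the Bellman backup of $\overline{V}_{t,h+1}$ is linear in $\phi$: since $P^\star(s'\given s,a) = \phi^\top(s,a)\mu^\star_h(s')$, we have $\E[\overline{V}_{t,h+1}(s_{i,h+1})\given s_{i,h},a_{i,h}] = \phi^\top(s_{i,h},a_{i,h})\int_{s'}\mu^\star_h(s')\overline{V}_{t,h+1}(s')\d s' = \phi^\top(s_{i,h},a_{i,h})\thetastarp{t,h}$, where the last equality is exactly the definition of $\thetastarp{t,h}$.

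First I would substitute this expression for the conditional expectation into the definition of $\etap{t,h}$, so that $\etap{t,h} = \Sigma_{t-1,h}^{-1}\sum_{i=1}^{t-1}\phi(s_{i,h}^0,a_{i,h}^0)\big(\overline{V}_{t,h+1}(s_{i,h+1}^0) - \phi^\top(s_{i,h}^0,a_{i,h}^0)\thetastarp{t,h}\big)$. Splitting this into two sums, the first term $\Sigma_{t-1,h}^{-1}\sum_i \phi(s_{i,h}^0,a_{i,h}^0)\overline{V}_{t,h+1}(s_{i,h+1}^0)$ is precisely $\hatthetap{t,h}$ by its definition in Line~\ref{line:p-mle}, while the second term is $\Sigma_{t-1,h}^{-1}\big(\sum_i \phi(s_{i,h}^0,a_{i,h}^0)\phi^\top(s_{i,h}^0,a_{i,h}^0)\big)\thetastarp{t,h}$.

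Next I would recognize the inner sum of outer products as $\Sigma_{t-1,h} - \lambda I$, so that $\Sigma_{t-1,h}^{-1}(\Sigma_{t-1,h} - \lambda I) = I - \lambda\Sigma_{t-1,h}^{-1}$, and the second term collapses to $\thetastarp{t,h} - \lambda\Sigma_{t-1,h}^{-1}\thetastarp{t,h}$. Finally, invoking the definition of $\thetastarp{t,h}$ once more, the leftover $\lambda\Sigma_{t-1,h}^{-1}\thetastarp{t,h} = \lambda\Sigma_{t-1,h}^{-1}\int_{s'}\mu^\star_h(s')\overline{V}_{t,h+1}(s')\d s'$ is exactly $\lambda_{t,h}$. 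Collecting everything yields $\etap{t,h} = \hatthetap{t,h} - \thetastarp{t,h} + \lambda_{t,h}$, which rearranges to the claimed identity.

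This lemma is purely computational, so I do not expect a real obstacle; the only points requiring care are the linear-MDP identity for the Bellman backup (which is what lets the conditional expectation be written as $\phi^\top\thetastarp{t,h}$) and the bookkeeping of the $\lambda I$ regularizer inside $\Sigma_{t-1,h}$, which is precisely the origin of the $\lambda_{t,h}$ term. I would also double-check the sign convention attached to $\lambda_{t,h}$ in \Cref{sample-table} so that the regularization contribution is collected with the orientation matching the statement.
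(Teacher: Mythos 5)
Your derivation follows exactly the same route as the paper's own proof: expand $\etap{t,h}$ via the linear-MDP identity $\E[\overline{V}_{t,h+1}(s_{i,h+1})\given s_{i,h},a_{i,h}] = \phi(s_{i,h},a_{i,h})^\top\thetastarp{t,h}$, identify the first sum as $\hatthetap{t,h}$, and use $\sum_{i=1}^{t-1}\phi(s_{i,h},a_{i,h})\phi(s_{i,h},a_{i,h})^\top = \Sigma_{t-1,h}-\lambda I$. Your algebra is correct, and writing $\phi_i \coloneqq \phi(s_{i,h},a_{i,h})$ it gives
\begin{align*}
\etap{t,h} \;=\; \hatthetap{t,h} - \thetastarp{t,h} + \lambda_{t,h},
\qquad\text{equivalently}\qquad
\etap{t,h} - \lambda_{t,h} \;=\; \hatthetap{t,h} - \thetastarp{t,h}.
\end{align*}
But note that this is \emph{not} the stated identity, which has $+\lambda_{t,h}$ on the left; your closing sentence that this ``rearranges to the claimed identity'' is therefore false as written. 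The mismatch is not a flaw in your algebra but in the paper: with the sign conventions of \Cref{sample-table}, the lemma statement carries a sign error on $\lambda_{t,h}$, and the paper's own proof compensates with a sign slip in its second equality, where $-\Sigma_{t-1,h}^{-1}\big(\sum_i \phi_i\phi_i^\top\big)\thetastarp{t,h} + \lambda \Sigma_{t-1,h}^{-1}\thetastarp{t,h}$ is rewritten as $-\Sigma_{t-1,h}^{-1}\big(\sum_i \phi_i\phi_i^\top + \lambda I\big)\thetastarp{t,h}$; the correct parenthetical is $\sum_i \phi_i\phi_i^\top - \lambda I$, which does not collapse to $\Sigma_{t-1,h}$. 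The sign you derived is the one actually consistent with the rest of the paper: \Cref{lem:one-step-decomp} uses the combination $\etap{t,h}-\lambda_{t,h}$, which matches $\hatthetap{t,h}-\thetastarp{t,h} = \etap{t,h}-\lambda_{t,h}$, and the downstream norm bounds are unaffected since they only use $\|\lambda_{t,h}\|_{\Sigma_{t-1,h}}$. So your instinct to double-check the orientation of $\lambda_{t,h}$ was exactly right: the lemma should read $\etap{t,h}-\lambda_{t,h} = \hatthetap{t,h}-\thetastarp{t,h}$ (or $\lambda_{t,h}$ should be defined with an extra minus sign), and your argument is a correct proof of that corrected statement.
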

\begin{proof}[Proof of \Cref{lem:eta-lambda-theta}]
	By definition, we have
\begin{align*}
	& \etap{t,h}+\lambda_{t,h} \\
	&= \Sigma^{-1}_{t-1,h}\left(\sum_{i=1}^{t-1}\phi(s_{i,h},a_{i,h})\left(\-V_{t,h+1}(s_{i,h+1})-\E_{s_{h+1}}\left[\-V_{t,h+1}(s_{i,h+1})\middlegiven s_{i,h},a_{i,h}\right]\right)\right)\\
	&\quad + \lambda\Sigma^{-1}_{t-1,h}\int_{s'}\mu^\star_h(s')\-V_{t,h+1}(s')\d s'\\
	&=\hatthetap{t,h}-\Sigma^{-1}_{t-1,h}\left(\sum_{i=1}^{t-1}\phi(s_{i,h},a_{i,h})\phi(s_{i,h},a_{i,h})+\lambda I\right)^\top\int_{s'}\mu^\star_h(s')\left[\-V_{t,h+1}(s')\right]\d s'\\
	&=\hatthetap{t,h} - \int_{s'}\mu^\star_h(s')\left[\-V_{t,h+1}(s')\right]\d s' \\
	&=\hatthetap{t,h} - \thetastarp{t,h}.
\end{align*}
\end{proof}

The following lemma is adapted from \citet[Lemma 1]{zanette2020frequentist} to our setting.
\begin{lemma}[One-step decomposition]
	\label{lem:one-step-decomp}
	For any $t\in[T],h\in[H],s\in\+S,a\in\+A$, and policy $\pi$, we have
	\begin{align*}
		&\phi(s,a)^\top\left(\barthetar{t}+\barthetap{t,h}\right)-Q^\pi_h(s,a) \\
		&\qquad\qquad =
		\phi(s,a)^\top (\xir{t} + \xip{t,h} + \etar{t} + \etap{t,h} - \lambda_{t,h}) + \E_{s'}\left[ \-V_{t,h+1}(s') -V^\pi_{h+1}(s')\middlegiven s,a\right].
	\end{align*}
\end{lemma}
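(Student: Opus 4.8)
The plan is to prove the identity by direct algebraic substitution: it is a deterministic statement that holds for every fixed realization of the Gaussian noises and of the data, and requires no concentration or probabilistic argument beyond the definitions in \Cref{sample-table} and \Cref{lem:eta-lambda-theta}. Before substituting, I would record two structural consequences of the linear-MDP assumption (\Cref{asm:linear-mdp}). First, the Bellman equation reads $Q^\pi_h(s,a) = \phi(s,a)^\top\thetastarr + \E_{s'}[V^\pi_{h+1}(s')\given s,a]$, using $r^\star(s,a)=\phi(s,a)^\top\thetastarr$. Second, $\phi(s,a)^\top\thetastarp{t,h} = \E_{s'}[\-V_{t,h+1}(s')\given s,a]$, since $\phi(s,a)^\top\thetastarp{t,h} = \int_{s'}\phi(s,a)^\top\mu^\star_h(s')\-V_{t,h+1}(s')\d s'$ and $\phi(s,a)^\top\mu^\star_h(s')$ is the transition density $P^\star(s'\given s,a)$. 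These two facts let me trade the $\thetastarr$- and $\thetastarp{t,h}$-components appearing on the left for the reward term and the $\-V_{t,h+1}$ term appearing on the right.

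Next I would expand each sampled parameter as truth plus estimation error plus injected noise. For the reward parameter the definitions $\xir{t}=\barthetar{t}-\hatthetar{t}$ and $\etar{t}=\hatthetar{t}-\thetastarr$ give $\barthetar{t} = \thetastarr + \etar{t} + \xir{t}$. For the value parameter, $\xip{t,h}=\barthetap{t,h}-\hatthetap{t,h}$ gives $\barthetap{t,h}=\hatthetap{t,h}+\xip{t,h}$, and then I would invoke \Cref{lem:eta-lambda-theta} to rewrite $\hatthetap{t,h}-\thetastarp{t,h}$ as a combination of the martingale noise $\etap{t,h}$ and the regularization bias $\lambda_{t,h}$. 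Substituting both into $\phi(s,a)^\top(\barthetar{t}+\barthetap{t,h})$ and subtracting $Q^\pi_h(s,a)$, the term $\phi(s,a)^\top\thetastarr$ cancels the reward part of $Q^\pi_h$, while $\phi(s,a)^\top\thetastarp{t,h}$ combines with $-\E_{s'}[V^\pi_{h+1}(s')\given s,a]$ to form $\E_{s'}[\-V_{t,h+1}(s')-V^\pi_{h+1}(s')\given s,a]$. What remains is $\phi(s,a)^\top$ applied to the collected error and noise vectors $\xir{t}, \xip{t,h}, \etar{t}, \etap{t,h}$, and $\lambda_{t,h}$, which is exactly the claimed right-hand side.

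I expect the only delicate point to be tracking the sign with which the regularization bias $\lambda_{t,h}$ enters. This sign originates from the normal-equation split $\Sigma_{t-1,h}=\lambda I+\sum_i\phi(s_{i,h},a_{i,h})\phi(s_{i,h},a_{i,h})^\top$ that underlies the least-squares estimate $\hatthetap{t,h}$, and it is precisely this split that \Cref{lem:eta-lambda-theta} encapsulates. I would therefore lean on that lemma for the substitution of $\hatthetap{t,h}-\thetastarp{t,h}$ rather than re-deriving the split here, and then verify that the resulting sign of $\lambda_{t,h}$ is consistent with the $-\lambda_{t,h}$ on the right-hand side of the statement (the regularization biases the estimate toward the origin, so it should appear with a minus sign). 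All remaining manipulations are linearity of the map $v\mapsto\phi(s,a)^\top v$ together with the definition of the conditional expectation under $P^\star$, so no estimates or inequalities are needed and the identity holds exactly.
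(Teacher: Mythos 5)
Your overall strategy---treating the identity as a deterministic algebraic consequence of the Bellman equation, the linear-MDP structure, and the decompositions $\barthetar{t}=\thetastarr+\etar{t}+\xir{t}$, $\barthetap{t,h}=\hatthetap{t,h}+\xip{t,h}$---is exactly the paper's route, and both of your structural observations (the Bellman equation and $\phi(s,a)^\top\thetastarp{t,h}=\E_{s'}[\-V_{t,h+1}(s')\given s,a]$) are correct. The gap is in the one step you chose to outsource. You invoke \Cref{lem:eta-lambda-theta} for the split of $\hatthetap{t,h}-\thetastarp{t,h}$; that lemma as printed reads $\etap{t,h}+\lambda_{t,h}=\hatthetap{t,h}-\thetastarp{t,h}$, so substituting it literally into your expansion produces
\begin{align*}
\phi(s,a)^\top\left(\xir{t}+\xip{t,h}+\etar{t}+\etap{t,h}+\lambda_{t,h}\right)
+\E_{s'}\left[\-V_{t,h+1}(s')-V^\pi_{h+1}(s')\middlegiven s,a\right],
\end{align*}
i.e.\ $+\lambda_{t,h}$ where the target statement has $-\lambda_{t,h}$. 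The sign verification you promise at the end would therefore fail, and your proposal offers no mechanism to resolve that contradiction other than the re-derivation you explicitly declined to carry out.

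The resolution is that the printed statement of \Cref{lem:eta-lambda-theta} is itself inconsistent with the definitions in \Cref{sample-table}: in its proof, the $+\lambda\Sigma^{-1}_{t-1,h}\thetastarp{t,h}$ term is incorrectly absorbed with a flipped sign into the parenthesized factor. Deriving the split directly from the normal-equation structure, using $\sum_{i=1}^{t-1}\phi(s_{i,h},a_{i,h})\phi(s_{i,h},a_{i,h})^\top=\Sigma_{t-1,h}-\lambda I$ together with $\E[\-V_{t,h+1}(s_{i,h+1})\given s_{i,h},a_{i,h}]=\phi(s_{i,h},a_{i,h})^\top\thetastarp{t,h}$, gives
\begin{align*}
\hatthetap{t,h}-\etap{t,h}
=\Sigma^{-1}_{t-1,h}\sum_{i=1}^{t-1}\phi(s_{i,h},a_{i,h})\phi(s_{i,h},a_{i,h})^\top\thetastarp{t,h}
=\Sigma^{-1}_{t-1,h}\big(\Sigma_{t-1,h}-\lambda I\big)\thetastarp{t,h}
=\thetastarp{t,h}-\lambda_{t,h},
\end{align*}
i.e.\ $\hatthetap{t,h}-\thetastarp{t,h}=\etap{t,h}-\lambda_{t,h}$, the opposite sign from the lemma you lean on and the one that matches the target (and your own intuition about regularization shrinking the estimate). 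This is precisely what the paper's proof does: it never invokes \Cref{lem:eta-lambda-theta}, but instead expands $\phi(s,a)^\top\hatthetap{t,h}$ from the least-squares definition, inserts and removes the conditional expectation to surface $\etap{t,h}$, and uses the split above to obtain $\phi(s,a)^\top\hatthetap{t,h}=\E_{s'}[\-V_{t,h+1}(s')\given s,a]-\phi(s,a)^\top\lambda_{t,h}+\phi(s,a)^\top\etap{t,h}$. Once you replace the citation of \Cref{lem:eta-lambda-theta} with this two-line direct computation, your substitution argument closes correctly and is equivalent to the paper's proof.
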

\begin{proof}
We have
	\begin{align*}
		&\phi(s,a)^\top\left(\barthetar{t}+\barthetap{t,h}\right)-Q^\pi_h(s,a) \\
		&= \phi(s,a)^\top\left(\barthetar{t}+\barthetap{t,h}\right)-\phi(s,a)^\top\thetastarr - \E_{s'}[V^\pi_{h+1}(s')\given s,a]\\
		&= \phi(s,a)^\top (\barthetar{t} - \hatthetar{t}) + \phi(s,a)^\top (\barthetap{t,h} - \hatthetap{t,h}) + \phi(s,a)^\top (\hatthetar{t}-\thetastarr) \\
		&\qquad + \phi(s,a)^\top \hatthetap{t,h} - \E_{s'}[V^\pi_{h+1}(s')\given s,a].
	\end{align*}
For the last but one term, we note that
\begin{align*}
	& \phi(s,a)^\top \hatthetap{t,h}  \\
	&= \phi(s,a)^\top \Sigma^{-1}_{t-1,h}\left(\sum_{i=1}^{t-1}\phi(s_{i,h},a_{i,h}) \-V_{t,h+1}(s_{i,h+1})\right) \\
	&= \phi(s,a)^\top \Sigma^{-1}_{t-1,h}\left(\sum_{i=1}^{t-1}\phi(s_{i,h},a_{i,h}) \E_{s_{h+1}}\left[\-V_{t,h+1}(s_{i,h+1}) \middlegiven s_{i,h},a_{i,h} \right]\right)  + \phi(s,a)^\top \etap{t,h} \\
	&= \phi(s,a)^\top \Sigma^{-1}_{t-1,h}\left(\sum_{i=1}^{t-1}\phi(s_{i,h},a_{i,h}) \phi(s_{i,h},a_{i,h})^\top\int_{s'}\mu^\star_h(s')\-V_{t,h+1}(s')\d s'\right)  + \phi(s,a)^\top \etap{t,h} \\
	&= \phi(s,a)^\top \int_{s'}\mu^\star_h(s')\-V_{t,h+1}(s')\d s' - \lambda\phi(s,a)^\top\Sigma^{-1}_{t-1,h}\int_{s'}\mu^\star_h(s')\-V_{t,h+1}(s')\d s' + \phi(s,a)^\top \etap{t,h} \\
	&= \E_{s'}\left[ \-V_{t,h+1}(s') \middlegiven s,a\right] - \phi(s,a)^\top\lambda_{t,h} + \phi(s,a)^\top \etap{t,h}.
\end{align*}
Plugging this back, we get
\begin{align*}
	& \phi(s,a)^\top\left(\barthetar{t}+\barthetap{t,h}\right)-Q^\pi_h(s,a) \\
	&=  \phi(s,a)^\top (\barthetar{t} - \hatthetar{t}) + \phi(s,a)^\top (\barthetap{t,h} - \hatthetap{t,h}) +
		\phi(s,a)^\top (\hatthetar{t}-\thetastarr) \\
	&\quad + \E_{s'}\left[ \-V_{t,h+1}(s') -V^\pi_{h+1}(s') \middlegiven s,a \right] - \phi(s,a)^\top\lambda_{t,h} + \phi(s,a)^\top \etap{t,h} \\
	&=  \phi(s,a)^\top (\xir{t} + \xip{t,h} + \etar{t} + \etap{t,h} - \lambda_{t,h}) + \E_{s'}\left[ \-V_{t,h+1}(s') -V^\pi_{h+1}(s') \middlegiven s,a\right]. 
\end{align*}
This completes the proof.
\end{proof}

\begin{lemma}\label{lem:no-query}
	For any $t\in[T]$, if $Z_t = 0$, then we have 
	\begin{align*}
		\|\phi(\tau_t^0) - \phi(\tau_t^1)\|_{\Sigma_{t-1}^{-1}} \leq \epsilon \sqrt{\pi}  / (2\sigmar).
	\end{align*}
\end{lemma}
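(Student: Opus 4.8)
The plan is to directly unpack the query condition on Line~\ref{line;q-condition} and reduce the claim to a one-dimensional Gaussian computation. Since $Z_t = 0$, the indicator in the query condition did not fire, which means precisely that
\begin{align*}
	\E_{\theta_{0},\theta_{1}\sim\+N(\hatthetar{t},\sigma_\@r^2\Sigma^{-1}_{t-1})}
	\Big[\big|(\phi(\tau_t^0)-\phi(\tau_t^1))^\top(\theta_{0} - \theta_{1})\big|\Big] \leq \epsilon.
\end{align*}
Writing $v \coloneqq \phi(\tau_t^0)-\phi(\tau_t^1)$ for brevity, the remaining work is simply to evaluate the left-hand side exactly in terms of $\|v\|_{\Sigma_{t-1}^{-1}}$.

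To do this I would first identify the law of the scalar random variable $v^\top(\theta_0 - \theta_1)$. Because $\theta_0$ and $\theta_1$ are drawn i.i.d.\ from $\+N(\hatthetar{t},\sigma_\@r^2\Sigma^{-1}_{t-1})$, their difference $\theta_0 - \theta_1$ is a centered Gaussian whose covariance is the sum of the two covariances, namely $2\sigma_\@r^2\Sigma^{-1}_{t-1}$; hence $v^\top(\theta_0-\theta_1)$ is a one-dimensional centered Gaussian with variance $2\sigma_\@r^2\, v^\top \Sigma^{-1}_{t-1} v = 2\sigma_\@r^2\|v\|_{\Sigma_{t-1}^{-1}}^2$. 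Next I would invoke the standard mean-absolute-deviation identity for a centered Gaussian, namely $\E[|X|] = s\sqrt{2/\pi}$ when $X\sim\+N(0,s^2)$, applied with $s = \sqrt{2}\,\sigmar\|v\|_{\Sigma_{t-1}^{-1}}$. This yields
\begin{align*}
	\E_{\theta_0,\theta_1}\Big[\big|v^\top(\theta_0-\theta_1)\big|\Big]
	= \sqrt{2}\,\sigmar\|v\|_{\Sigma_{t-1}^{-1}}\cdot\sqrt{2/\pi}
	= \frac{2}{\sqrt{\pi}}\,\sigmar\|v\|_{\Sigma_{t-1}^{-1}}.
\end{align*}

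Combining the two displays gives $(2/\sqrt{\pi})\,\sigmar\|v\|_{\Sigma_{t-1}^{-1}} \leq \epsilon$, and rearranging produces exactly $\|\phi(\tau_t^0)-\phi(\tau_t^1)\|_{\Sigma_{t-1}^{-1}} \leq \epsilon\sqrt{\pi}/(2\sigmar)$, as claimed. There is no substantial obstacle in this argument; the only two points requiring care are correctly tracking the factor of $2$ in the covariance of $\theta_0-\theta_1$ (both are random, rather than one being held fixed) and correctly applying the Gaussian mean-absolute-deviation constant $\sqrt{2/\pi}$. Everything else is routine bookkeeping, so I would expect the full proof to be only a few lines.
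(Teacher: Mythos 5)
Your proof is correct, and it reaches the paper's bound by a more elementary route than the paper itself uses. The paper's proof also begins by unpacking $Z_t=0$ into $\E_{\theta_0,\theta_1}\big[|(\phi(\tau_t^0)-\phi(\tau_t^1))^\top(\theta_0-\theta_1)|\big]\leq\epsilon$, but it then reparameterizes $\theta_0-\theta_1$ as $\sqrt{2}\,\sigmar\Sigma_{t-1}^{-1/2}u$ with $u\sim\+N(0,I_d)$, rewrites the integrand as the multivariate norm $\sqrt{2}\,\sigmar\|u\|_A$ for the rank-one positive semi-definite matrix $A=\Sigma_{t-1}^{-1/2}(\phi(\tau_t^0)-\phi(\tau_t^1))(\phi(\tau_t^0)-\phi(\tau_t^1))^\top\Sigma_{t-1}^{-1/2}$, and invokes \Cref{lem:gaussian-2-norm}, a general lower bound $\E\|u\|_A\geq\sqrt{2\tr(A)/\pi}$ for arbitrary PSD $A$ proved by a concavity-minimization argument; since $\tr(A)=\|\phi(\tau_t^0)-\phi(\tau_t^1)\|^2_{\Sigma_{t-1}^{-1}}$, this yields the same constant. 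You instead observe that the scalar $v^\top(\theta_0-\theta_1)$ with $v=\phi(\tau_t^0)-\phi(\tau_t^1)$ is a centered one-dimensional Gaussian with variance $2\sigmar^2\|v\|^2_{\Sigma_{t-1}^{-1}}$ and apply the exact mean-absolute-deviation identity, getting the expectation as an equality $(2/\sqrt{\pi})\,\sigmar\|v\|_{\Sigma_{t-1}^{-1}}$ rather than a one-sided bound. What each buys: your argument is self-contained, dispenses with the auxiliary multivariate lemma entirely (the paper uses \Cref{lem:gaussian-2-norm} nowhere else), and shows in passing that the query condition is exactly a threshold on $\|v\|_{\Sigma_{t-1}^{-1}}$; the paper's lemma is stated in a generality not needed here, and its lower bound happens to be tight precisely in the rank-one case, which is why the two arguments produce identical constants.
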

\begin{proof}
By the definition of $Z_t$, we have
\begin{align*}
	\epsilon 
	\geq & 
	\E_{\theta_{0},\theta_{1}\sim\+N(\hatthetar{t},\sigma_\@r^2\Sigma^{-1}_{t-1})} 
		|(\phi(\tau_t^0)-\phi(\tau_t^1))^\top(\theta_{0} - \theta_{1})| \\
	= &\ \E_{u_0,u_1\sim\+N(0,I_d)} 
		|(\phi(\tau_t^0)-\phi(\tau_t^1))^\top \sigmar \Sigma_{t-1}^{-1/2} (u_0-u_1)| \\
	= &\ \E_{u\sim\+N(0,I_d)} 
		|(\phi(\tau_t^0)-\phi(\tau_t^1))^\top \sqrt{2} \sigmar \Sigma_{t-1}^{-1/2} u| \\
	= &\ \sigmar\sqrt{2} \E_{u\sim\+N(0,I_d)} \sqrt{
		u^\top \Sigma_{t-1}^{-1/2} (\phi(\tau_t^0)-\phi(\tau_t^1)) 
		(\phi(\tau_t^0)-\phi(\tau_t^1))^\top \Sigma_{t-1}^{-1/2} u
		}\\
	= &\ \sigmar\sqrt{2} \E_{u\sim\+N(0,I_d)} \| u \|_{\Sigma_{t-1}^{-1/2} (\phi(\tau_t^0)-\phi(\tau_t^1)) (\phi(\tau_t^0)-\phi(\tau_t^1))^\top \Sigma_{t-1}^{-1/2}}.
\end{align*}
We then apply \Cref{lem:gaussian-2-norm} and obtain
\begin{align*}
	\epsilon \geq\ & \sigmar \sqrt{2} \cdot \sqrt{\frac{2 \tr\left(\Sigma_{t-1}^{-1/2} (\phi(\tau_t^0)-\phi(\tau_t^1)) (\phi(\tau_t^0)-\phi(\tau_t^1))^\top \Sigma_{t-1}^{-1/2}\right)}{\pi}} \\
	=\ & 2\sigmar \cdot \sqrt{\frac{(\phi(\tau_t^0)-\phi(\tau_t^1))^\top \Sigma_{t-1}^{-1/2}\Sigma_{t-1}^{-1/2} (\phi(\tau_t^0)-\phi(\tau_t^1))}{\pi}} \\
	=\ &\frac{2\sigmar}{\sqrt{\pi}} \cdot \|\phi(\tau_t^0) - \phi(\tau_t^1)\|_{\Sigma_{t-1}^{-1}}.
\end{align*}
Hence, we complete the proof.
\end{proof}

\subsection{Statistical upper bounds}

\begin{lemma}[Gaussian noise]\label{lem:high-prob-noise}
	Each of the following holds with probability at least $1-\delta$:
	\begin{enumerate}
		\item $\|\xir{t}\|_{\Sigma_{t-1}} \leq  \epsr{\xi}$ for all $t\in[T]$,
		\item $\|\xip{t,h}\|_{\Sigma_{t-1,h}} \leq \epsp{\xi}$ for all $t\in[T]$ and $h\in[H]$.
	\end{enumerate}
\end{lemma}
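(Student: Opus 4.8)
The plan is to reduce each of the two bounds to a tail estimate on the Euclidean norm of a standard Gaussian vector, after an appropriate whitening of the injected noise. Consider the first claim. Conditioned on the history up to the start of round $t$ (in particular on $\hatthetar{t}$ and $\Sigma_{t-1}$, both measurable with respect to that history), Line~\ref{line:r-sample} draws $\barthetar{t}\sim\+N(\hatthetar{t},\sigmar^2\Sigma_{t-1}^{-1})$, so the centered vector $\xir{t}=\barthetar{t}-\hatthetar{t}$ is distributed as $\+N(0,\sigmar^2\Sigma_{t-1}^{-1})$. Setting $u_t\coloneqq\sigmar^{-1}\Sigma_{t-1}^{1/2}\xir{t}$, we get $u_t\sim\+N(0,I_d)$ conditionally, and moreover
\[
\|\xir{t}\|_{\Sigma_{t-1}}^2 = \xir{t}^\top\Sigma_{t-1}\xir{t} = \sigmar^2\,u_t^\top u_t = \sigmar^2\|u_t\|_2^2 .
\]
Hence it suffices to show that $\|u_t\|_2\leq\sqrt{2d\log(2dT/\delta)}$ holds simultaneously for all $t$ with probability at least $1-\delta$.

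For the norm bound I would proceed coordinate-wise rather than through a chi-squared concentration inequality, since the target value $\epsr{\xi}=\sigmar\sqrt{2d\log(2dT/\delta)}$ matches the coordinate-wise estimate exactly. For a single standard Gaussian coordinate $g$, the elementary tail bound $\Pr(|g|>s)\leq 2e^{-s^2/2}$ with $s=\sqrt{2\log(2dT/\delta)}$ gives failure probability at most $\delta/(dT)$. A union bound over the $d$ coordinates of $u_t$ and over all $t\in[T]$ — a total of $dT$ events — ensures that, with probability at least $1-\delta$, every coordinate of every $u_t$ obeys $|u_{t,i}|\leq\sqrt{2\log(2dT/\delta)}$. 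On this event $\|u_t\|_2^2\leq d\cdot 2\log(2dT/\delta)$, and multiplying by $\sigmar$ yields $\|\xir{t}\|_{\Sigma_{t-1}}\leq\epsr{\xi}$, which is the first claim.

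The second claim is entirely analogous. Conditioned on the history, $\xip{t,h}=\barthetap{t,h}-\hatthetap{t,h}\sim\+N(0,\sigmap^2\Sigma_{t-1,h}^{-1})$, so after the same whitening $\|\xip{t,h}\|_{\Sigma_{t-1,h}}=\sigmap\|u_{t,h}\|_2$ with $u_{t,h}\sim\+N(0,I_d)$. The only change is the cardinality of the union bound: we now range over $t\in[T]$, $h\in[H]$, and the $d$ coordinates, i.e.\ $dHT$ events, so I would take $s=\sqrt{2\log(2dHT/\delta)}$ to keep the total failure probability at most $\delta$. This produces $\|\xip{t,h}\|_{\Sigma_{t-1,h}}\leq\sigmap\sqrt{2d\log(2dHT/\delta)}=\epsp{\xi}$, as required.

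The argument is routine; the one point that deserves care is the interaction between the randomness of the injected Gaussian noise and the data-dependent quantities $\hatthetar{t}$, $\Sigma_{t-1}$ (and their transition counterparts $\hatthetap{t,h}$, $\Sigma_{t-1,h}$). These are not independent of the history, but the whitening step resolves this cleanly: because the conditional law of $u_t$ given the history is the fixed standard Gaussian $\+N(0,I_d)$ irrespective of the realized $\Sigma_{t-1}$, the coordinate tail bound holds conditionally, and the union bound over rounds is therefore valid without any additional independence assumptions. I do not anticipate a genuine obstacle beyond tracking the union-bound cardinality so that the logarithmic factor matches $\epsr{\xi}$ and $\epsp{\xi}$ exactly.
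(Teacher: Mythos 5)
Your proof is correct and is essentially the paper's argument: the paper invokes its Gaussian concentration lemma (\Cref{lem:gaussian-concen}), whose proof is exactly your whitening-plus-coordinatewise-tail-bound step, and then union bounds over $t$ (and $h$); you have simply inlined that lemma and merged the union bound over coordinates, rounds, and steps into one count, arriving at the same constants $\epsr{\xi}$ and $\epsp{\xi}$. Your remark about conditioning on the history-dependent $\Sigma_{t-1}$ is the right way to justify the step the paper leaves implicit.
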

\begin{proof}[Proof of \Cref{lem:high-prob-noise}]
	It simply follows from \Cref{lem:gaussian-concen} and the union bound.
\end{proof}

\begin{lemma}\label{lem:reward-concentration}
	Fix $t\in[T]$. It holds with probability at least $1-\delta$ that 
	\begin{align*}
		\sum_{s=1}^{t-1} & \left|
			\big(\phi(\tau_s^0) - \phi(\tau_s^1)\big)^\top 
			\left(\thetastarr - \hatthetar{t}\right)
		\right|^2 \\
		\leq &
		8 \sum_{s=1}^{t-1} \E_{\tau_s^0,\tau_s^1} \left|
			\big(\phi(\tau_s^0) - \phi(\tau_s^1)\big)^\top 
			\left(\thetastarr - \hatthetar{t}\right)
		\right|^2 + 168 B^2 d \log(3TB/\delta).
	\end{align*}
	Here the expectation is taken over the randomness of sampling $\tau_s^0$ and $\tau_s^1$ from $\pi_s^0$ and $\pi_s^1$, respectively.
\end{lemma}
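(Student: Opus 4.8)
The plan is to recognize Lemma~\ref{lem:reward-concentration} as a martingale concentration statement about the empirical process associated with the function $g_t(\tau_s^0,\tau_s^1) \coloneqq |(\phi(\tau_s^0)-\phi(\tau_s^1))^\top(\thetastarr-\hatthetar{t})|^2$. Since $\hatthetar{t}$ depends only on data from rounds $1,\dots,t-1$, and $\tau_s^0,\tau_s^1$ are drawn at round $s<t$, the quantity $\hatthetar{t}$ is \emph{not} predictable with respect to the filtration generated up to round $s$; hence we cannot directly apply a Freedman-type bound with $\hatthetar{t}$ treated as fixed. The standard fix is to first prove the inequality uniformly over \emph{all} fixed $\theta\in\Theta_B$ via a covering argument, and then instantiate at $\theta=\thetastarr-\hatthetar{t}$ (or rather the nearest net point). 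I would therefore state and prove the uniform version: for every $\theta\in\Theta_B$, the sum $\sum_{s=1}^{t-1}|(\phi(\tau_s^0)-\phi(\tau_s^1))^\top\theta|^2$ is controlled by $8$ times its conditional expectation $\sum_{s=1}^{t-1}\E_{\tau_s^0,\tau_s^1}[\cdot]$ plus an additive term.

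The key steps, in order, are as follows. First I would fix a single $\theta\in\Theta_B$ and define the martingale difference $X_s \coloneqq \E_{\tau_s^0,\tau_s^1}[g(\theta)] - g_s(\theta)$ where $g_s(\theta)=|(\phi(\tau_s^0)-\phi(\tau_s^1))^\top\theta|^2$ and the expectation is conditional on the history through round $s-1$ (which determines $\pi_s^0,\pi_s^1$). Because $\|\phi(\tau)\|_2\le 1$ and $\|\theta\|_2\le B$, each $g_s(\theta)\in[0,4B^2]$, so the differences are bounded and their conditional variance is itself bounded by $4B^2$ times the conditional mean. This variance-to-mean relationship is exactly what lets a Bernstein/Freedman inequality convert into a multiplicative ``$8\times$ expectation plus additive'' bound: for bounded nonnegative increments one obtains, with probability $1-\delta'$, that $\sum_s g_s(\theta) \le 2\sum_s \E[g_s(\theta)] + c B^2\log(1/\delta')$ for an absolute constant $c$; here I would aim for the cleaner form $\sum \le 8\sum\E$ to leave slack for the covering step. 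Second, I would take an $\omega$-net over $\Theta_B$ with $\omega$ polynomially small in $T$ and $B$; by Lemma~\ref{lem:cover-norm-ball} this net has size at most $(3B/\omega)^d$, contributing the $d\log(TB)$ factor after a union bound over net points. Third, I would handle the discretization error: replacing $\theta$ by its nearest net point $\theta'$ changes $|(\phi(\tau_s^0)-\phi(\tau_s^1))^\top\theta|$ by at most $\|\phi(\tau_s^0)-\phi(\tau_s^1)\|_2\,\omega\le 2\omega$, so each squared term changes by $O(B\omega)$ (using the elementary bound $|a^2-b^2|\le|a-b|(|a|+|b|)$ with $|a|,|b|\le 2B$), and summing over $t-1\le T$ rounds the total perturbation is $O(TB\omega)$, which is made negligible by choosing $\omega = \mathrm{poly}(1/(TB))$; a matching control applies to the expectation side. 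Finally I would instantiate the uniform bound at the specific (random, history-dependent) vector $\thetastarr-\hatthetar{t}\in\Theta_{2B}$ after rescaling the net to radius $2B$ to accommodate the difference of two radius-$B$ vectors, collecting the constants into $168B^2 d\log(3TB/\delta)$.

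The main obstacle I anticipate is getting the multiplicative constant and the bounded-variance bookkeeping to line up so that the ``$8$'' and the ``$168$'' emerge cleanly rather than as unspecified absolute constants. The delicate point is that Freedman's inequality naturally yields a bound of the shape $\sum g_s \le \sum\E[g_s] + \sqrt{2V\log(1/\delta')} + \tfrac{2}{3}R\log(1/\delta')$ with $V$ the total conditional variance and $R$ the increment range; one must then use $V\le 4B^2\sum\E[g_s]$ together with AM--GM ($\sqrt{2\cdot 4B^2 \bar g\cdot L}\le \bar g + 2B^2 L$ type splitting) to fold the square-root term into a constant multiple of $\sum\E[g_s]$ plus a purely additive $B^2\log(1/\delta')$ term. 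Tracking the factor of $d$ from the $\log$ of the covering number, the two factors of $B$ from the radius-$2B$ net, and absorbing the discretization slack is routine but must be done carefully to land on the stated constants; the probabilistic content is entirely standard once the predictability issue is resolved by the uniform-over-$\Theta_B$ reduction.
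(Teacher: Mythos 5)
Your proposal is correct and follows essentially the same route as the paper's proof: a covering net over the parameter ball (Lemma~\ref{lem:cover-norm-ball}), a union bound combined with the multiplicative Freedman-type inequality (Lemma~\ref{lem:freedman}) at each net point, control of the discretization error, and finally instantiation at the data-dependent $\thetastarr-\hatthetar{t}$. The only differences are cosmetic: the paper handles the net error via $(a+b)^2\leq 2a^2+2b^2$ (which is where its factor $8$ comes from) and sets $\omega=1/t$, whereas you use the additive bound $|a^2-b^2|\leq|a-b|(|a|+|b|)$ with $\omega=\mathrm{poly}(1/(TB))$, which works equally well.
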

\begin{proof}[Proof of \Cref{lem:reward-concentration}]
	Let $\~{\Theta_B}$ denote an $\omega$-covering of ${\Theta_B}$ with respect to $\|\cdot\|_2$ for some $\omega>0$. By \Cref{lem:cover-norm-ball}, we have $|\~{\Theta_B}| \leq (3B/\omega)^d$.
	Moreover, for any $s\in[t-1]$, by Cauchy-Schwarz inequality, it holds that
	\begin{align*}
		\left|
			\big(\phi(\tau_s^0) - \phi(\tau_s^1)\big)^\top 
			\left(\thetastarr - \~\theta\right)
		\right|^2
		\leq \big\|\phi(\tau_s^0) - \phi(\tau_s^1)\big\|_2^2 \cdot \big\|\thetastarr - \~\theta\big\|_2^2 \leq 16 B^2.
	\end{align*}
	Then, by \Cref{lem:freedman} and union bound over $\~{\Theta_B}$, it holds that
	\begin{align*}
		\sum_{s=1}^{t-1} & \left|
			\big(\phi(\tau_s^0) - \phi(\tau_s^1)\big)^\top 
			\left(\thetastarr - \~\theta\right)
		\right|^2 \\
		\leq &
		2 \sum_{s=1}^{t-1} \E_{\tau_s^0,\tau_s^1} \left|
			\big(\phi(\tau_s^0) - \phi(\tau_s^1)\big)^\top 
			\left(\thetastarr - \~\theta\right)
		\right|^2 + 64 B^2 d \log(3B/(\omega\delta))
		\numberthis\label{eq:bound-in-cover}
	\end{align*}
	for all $\~\theta\in\~{\Theta_B}$ with probability at least $1-\delta$. Now we consider an arbitrary $\theta\in{\Theta_B}$ and let $\~\theta\in\~{\Theta_B}$ be the closest to $\theta$. Then, by triangle inequality and Cauchy-Schwarz inequality, we have the following two ineuqalities:
	\begin{align*}
		& \left|
			\big(\phi(\tau_s^0) - \phi(\tau_s^1)\big)^\top 
			\left(\thetastarr - \theta\right)
		\right|^2 \\
		&\quad \leq
		2\left|
			\big(\phi(\tau_s^0) - \phi(\tau_s^1)\big)^\top 
			\left(\thetastarr - \~\theta\right)
		\right|^2 + 
		2\|\phi(\tau_s^0) - \phi(\tau_s^1)\|_2^2 \|\theta - \~\theta\|_2^2 \\
		&\quad \leq
		2\left|
			\big(\phi(\tau_s^0) - \phi(\tau_s^1)\big)^\top 
			\left(\thetastarr - \~\theta\right)
		\right|^2 + 
		8\omega^2,
\intertext{and}
		& \left|
			\big(\phi(\tau_s^0) - \phi(\tau_s^1)\big)^\top 
			\left(\thetastarr - \~\theta\right)
		\right|^2 \\
		&\quad \leq
		2\left|
			\big(\phi(\tau_s^0) - \phi(\tau_s^1)\big)^\top 
			\left(\thetastarr - \theta\right)
		\right|^2 + 
		2\|\phi(\tau_s^0) - \phi(\tau_s^1)\|_2^2 \|\theta - \~\theta\|_2^2 \\
		&\quad \leq
		2\left|
			\big(\phi(\tau_s^0) - \phi(\tau_s^1)\big)^\top 
			\left(\thetastarr - \theta\right)
		\right|^2 + 
		8\omega^2
	\end{align*}
	Applying these inequalities and \eqref{eq:bound-in-cover}, we get
	\begin{align*}
		\sum_{s=1}^{t-1} & \left|
			\big(\phi(\tau_s^0) - \phi(\tau_s^1)\big)^\top 
			\left(\thetastarr - \theta\right)
		\right|^2 \\
		\leq &
		2 \sum_{s=1}^{t-1} \left|
			\big(\phi(\tau_s^0) - \phi(\tau_s^1)\big)^\top 
			\left(\thetastarr - \~\theta\right)
		\right|^2 + 8 \omega^2 t \\
		\leq & 
		4 \sum_{s=1}^{t-1} \E_{\tau_s^0,\tau_s^1} \left|
			\big(\phi(\tau_s^0) - \phi(\tau_s^1)\big)^\top 
			\left(\thetastarr - \~\theta\right)
		\right|^2 + 128 B^2 d \log(3B/(\omega\delta)) + 8 \omega^2 t \\
		\leq & 
		8 \sum_{s=1}^{t-1} \E_{\tau_s^0,\tau_s^1} \left|
			\big(\phi(\tau_s^0) - \phi(\tau_s^1)\big)^\top 
			\left(\thetastarr - \theta\right)
		\right|^2 + 128 B^2 d \log(3B/(\omega\delta)) + 40 \omega^2 t \\
		\leq & 
		8 \sum_{s=1}^{t-1} \E_{\tau_s^0,\tau_s^1} \left|
			\big(\phi(\tau_s^0) - \phi(\tau_s^1)\big)^\top 
			\left(\thetastarr - \theta\right)
		\right|^2 + 168 B^2 d \log(3TB/\delta)
	\end{align*}
	where the last step holds by setting $\omega = 1/t$.
\end{proof}

\begin{lemma}[Reward estimation error]\label{lem:reward-err}
	It holds with probability at least $1-\delta$ that 
	$
		\|\etar{t}\|_{\Sigma_{t-1}}
		\leq
		\epsr{\eta}
	$
	for all $t\in[T]$.
\end{lemma}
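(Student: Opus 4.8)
The plan is to bound the reward estimation error $\|\etar{t}\|_{\Sigma_{t-1}} = \|\hatthetar{t} - \thetastarr\|_{\Sigma_{t-1}}$ by combining three ingredients: an MLE generalization bound that controls the empirical prediction error, the concentration result from \Cref{lem:reward-concentration} that lets us pass from empirical to expected squared differences, and the definition of $\Sigma_{t-1}$ as a sum of outer products of the trajectory-wise feature differences (plus $\lambda I$). The central observation is that
\begin{align*}
	\|\etar{t}\|_{\Sigma_{t-1}}^2
	= \sum_{s=1}^{t-1} Z_s \big|(\phi(\tau_s^0)-\phi(\tau_s^1))^\top(\hatthetar{t}-\thetastarr)\big|^2
	+ \lambda\|\etar{t}\|_2^2,
\end{align*}
so that controlling $\|\etar{t}\|_{\Sigma_{t-1}}$ reduces to bounding the sum of squared reward-difference prediction errors on the queried trajectory pairs, plus a $\lambda B^2$ term that is harmless since $\|\hatthetar{t}-\thetastarr\|_2 \le 2B$.

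First I would invoke the standard MLE generalization guarantee for the likelihood objective in Line~\ref{line:r-mle} (e.g., via \citet{geer2000empirical} together with the covering-number bound \Cref{lem:cover-norm-ball}). Under \Cref{asm:kappa}, the link function $\Phi$ has derivative bounded below by $\lowkappa^{-1}$, which lets me translate a bound on the KL or Hellinger divergence between the Bernoulli preference models $\Pr(\cdot\given\tau_s^1,\tau_s^0,\hatthetar{t})$ and $\Pr(\cdot\given\tau_s^1,\tau_s^0,\thetastarr)$ into a bound on the \emph{expected} squared reward-difference error. Concretely, the lower bound $\Phi'\ge\lowkappa^{-1}$ implies that a small total-variation (or squared-Hellinger) distance between the two preference distributions forces $|(\phi(\tau_s^0)-\phi(\tau_s^1))^\top(\thetastarr-\hatthetar{t})|$ to be small; summing over $s$ yields a bound of order $\lowkappa d\log(BT/(\highkappa\delta))$ on $\sum_s \E[|(\phi(\tau_s^0)-\phi(\tau_s^1))^\top(\thetastarr-\hatthetar{t})|^2]$. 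This is precisely the quantity that appears on the right-hand side of \Cref{lem:reward-concentration}, and it explains the $\lowkappa$-dependence and the $\log(24BT^2/(\highkappa\delta))$ term in the definition of $\epsr{\eta}$.

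Next I would apply \Cref{lem:reward-concentration} to convert the empirical sum $\sum_s |(\phi(\tau_s^0)-\phi(\tau_s^1))^\top(\thetastarr-\hatthetar{t})|^2$ (which is what enters $\|\etar{t}\|_{\Sigma_{t-1}}^2$) into $8$ times the expected sum plus the additive $168B^2 d\log(3TB/\delta)$ term. Substituting the MLE expected-error bound from the previous step into this inequality and adding the $\lambda$-regularization contribution $4\lambda B^2$ gives
\begin{align*}
	\|\etar{t}\|_{\Sigma_{t-1}}^2
	\le 80\lowkappa d\log\!\big(24BT^2/(\highkappa\delta)\big)
	+ 168 B^2 d\log(6BT^2/\delta) + 4\lambda B^2,
\end{align*}
which is exactly $(\epsr{\eta})^2$. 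Taking square roots and applying a union bound over $t\in[T]$ (absorbed into the logarithmic factors) completes the argument.

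The main obstacle I expect is the MLE generalization step and the careful accounting of constants. The subtlety is that the likelihood objective is evaluated only on queried rounds ($Z_s=1$), so the generalization bound must be stated with respect to the filtration of queried pairs, and the concentration in \Cref{lem:reward-concentration} must be applied consistently over that same index set. Getting the $\lowkappa$-versus-$\highkappa$ dependence right — polynomial in $\lowkappa$ but only logarithmic in $\highkappa$ — requires using the lower bound $\Phi'\ge\lowkappa^{-1}$ to lower-bound the curvature of the negative log-likelihood (yielding the strong-convexity-type control that produces the $\lowkappa$ factor) while the $\highkappa$ only enters through the range/covering argument. The remaining steps (the outer-product expansion of $\Sigma_{t-1}$, the Cauchy–Schwarz boundedness $\|\phi(\tau_s^0)-\phi(\tau_s^1)\|_2\le 2$, and the union bound) are routine.
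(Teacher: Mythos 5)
Your proposal is correct and follows essentially the same route as the paper's proof: decompose $\|\etar{t}\|_{\Sigma_{t-1}}^2$ into the empirical sum of squared reward-difference errors on queried pairs plus the $\lambda$-regularization term bounded by $4\lambda B^2$, invoke the MLE generalization bound (with a covering-number argument over $\Theta_B$ that contributes the logarithmic $\highkappa$ dependence), translate the resulting TV-distance bound into expected squared reward-difference error via the lower bound $\Phi'\ge\lowkappa^{-1}$, and then apply \Cref{lem:reward-concentration} to pass from expected to empirical sums before a union bound over $t$. Your identification of the queried-rounds filtration subtlety and of where $\lowkappa$ versus $\highkappa$ enters matches the paper's treatment.
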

\begin{proof}[Proof of \Cref{lem:reward-err}]
	Since $\hatthetar{t}$ is the MLE, by \Cref{lem:mle}, we have
	\begin{align*}
		\sum_{s=1}^{t-1} \E_{\tau_s^0,\tau_s^1} \TV^2 \left(
		\Pr\left(\cdot\middlegiven \tau_s^0, \tau_s^1, \thetastarr \right)
		,
		\Pr\left(\cdot\middlegiven \tau_s^0, \tau_s^1, \hatthetar{t} \right)
		\right)
		\leq 10\log\Big(N_{[]}\big((2T)^{-1},\-{\+R},\|\cdot\|_\infty\big)/\delta\Big)
		\numberthis\label{eq:reward-err-tv}
	\end{align*}
	with probability at least $1-\delta$, where $\-{\+R}$ denotes the set of probability distributions of preference feedback and $\Pr(\cdot)$ denotes the preference feedback generating probability. Now we upper bound the covering number of $\-{\+R}$ by the covering number of $\Theta_B$. We notice the following:
	\begin{align*}
		& \sup_{o,\tau^0,\tau^1}\left| \Pr\left(o\middlegiven \tau^0, \tau^1, \thetastarr \right) - \Pr\left(o \middlegiven \tau^0, \tau^1, \hatthetar{t} \right) \right| \\
		= & \sup_{\tau^0,\tau^1}\left| \Phi\Big( \big(\phi(\tau^0) - \phi(\tau^1)\big)^\top\thetastarr \Big) - \Phi\Big( \big(\phi(\tau^0) - \phi(\tau^1)\big)^\top\hatthetar{t} \Big)  \right| \\
		\leq & \highkappa^{-1} \sup_{\tau^0,\tau^1} \left| \big(\phi(\tau^0) - \phi(\tau^1)\big)^\top\big(\thetastarr - \hatthetar{t}\big) \right| \\
		\leq & \highkappa^{-1} \sup_{\tau^0,\tau^1} \|\phi(\tau^0) - \phi(\tau^1)\|_2 \|\thetastarr - \hatthetar{t}\|_2 \\
		\leq & 2\highkappa^{-1}  \|\thetastarr - \hatthetar{t}\|_2
	\end{align*}
	where the equality holds since the feedback $o$ is binary, the first inequality is \Cref{lem:link-func}, and the last inequality is by the condition that $\|\phi(\tau)\|_2 \leq 1$ for any trajectory $\tau$. This means that an $\omega$-covering of the space of parameter $\theta$ implies a $2\omega\highkappa^{-1}$-covering of the space of the probability distribution of the preference feedbacks. Hence, \eqref{eq:reward-err-tv} can be further upper bounded by
	\begin{align*}
		& \sum_{s=1}^{t-1} \E_{\tau_s^0,\tau_s^1} \TV^2 \left(
		\Pr\left(\cdot\middlegiven \tau_s^0, \tau_s^1, \thetastarr \right)
		,
		\Pr\left(\cdot\middlegiven \tau_s^0, \tau_s^1, \hatthetar{t} \right)
		\right) \\
		& \leq 10\log\Big(N_{[]}\big((2T)^{-1},\-{\+R},\|\cdot\|_\infty\big)/\delta\Big) \\
		& \leq 10 \log\Big(N\big(\highkappa/(4T),\Theta_B,\|\cdot\|_2\big)/\delta\Big) \\
		& \leq 10 d \log\Big(12BT/(\highkappa \delta)\Big)
		\numberthis\label{eq:tv2bound1}
	\end{align*}
	where the last inequality is \Cref{lem:cover-norm-ball}. For the left side, we note that
	\begin{align*}
		& \sum_{s=1}^{t-1} \E_{\tau_s^0,\tau_s^1} \TV^2 \left(
		\Pr\left(\cdot\middlegiven \tau_s^0, \tau_s^1, \thetastarr \right)
		,
		\Pr\left(\cdot\middlegiven \tau_s^0, \tau_s^1, \hatthetar{t} \right)
		\right) \\
		= & \sum_{s=1}^{t-1} \E_{\tau_s^0,\tau_s^1} \left|
			\Phi\left( \big(\phi(\tau_s^0) - \phi(\tau_s^1)\big)^\top \thetastarr \right)
			-
			\Phi\left( \big(\phi(\tau_s^0) - \phi(\tau_s^1)\big)^\top \hatthetar{t} \right) 
		\right|^2 \\
		\geq & \lowkappa^{-1} \sum_{s=1}^{t-1} \E_{\tau_s^0,\tau_s^1} \left|
			\big(\phi(\tau_s^0) - \phi(\tau_s^1)\big)^\top 
			\left(\thetastarr - \hatthetar{t}\right)
		\right|^2 \\
		\geq & \frac{1}{8}\lowkappa^{-1} \sum_{s=1}^{t-1} \left|
			\big(\phi(\tau_s^0) - \phi(\tau_s^1)\big)^\top 
			\left(\thetastarr - \hatthetar{t}\right)
		\right|^2 - 21 \lowkappa^{-1} B^2 d \log(3TB/\delta)
		\numberthis\label{eq:tv2bound2}
	\end{align*}
	with probability at least $1-\delta$, where the first inequality is by \Cref{lem:link-func} and the last inequality is by \Cref{lem:reward-concentration}. Furthermore, we have 
	\begin{align*}
		& \sum_{s=1}^{t-1} \left|
			\big(\phi(\tau_s^0) - \phi(\tau_s^1)\big)^\top 
			\left(\thetastarr - \hatthetar{t}\right)
		\right|^2 \\
		= & \left(\thetastarr - \hatthetar{t}\right)^\top
		\sum_{s=1}^{t-1}
			\big(\phi(\tau_s^0) - \phi(\tau_s^1)\big)
			\big(\phi(\tau_s^0) - \phi(\tau_s^1)\big)^\top 
		\left(\thetastarr - \hatthetar{t}\right) \\
		= & \left(\thetastarr - \hatthetar{t}\right)^\top
		\Sigma_{t-1}
		\left(\thetastarr - \hatthetar{t}\right) - \left(\thetastarr - \hatthetar{t}\right)^\top
		\lambda I
		\left(\thetastarr - \hatthetar{t}\right) \geq \|\etar{t}\|_{\Sigma_{t-1}}^2 - 4\lambda B^2
		\numberthis\label{eq:tv2bound3}
	\end{align*}
	Putting \eqref{eq:tv2bound1}, \eqref{eq:tv2bound2}, and \eqref{eq:tv2bound3} together, we get
	\begin{align*}
		\|\etar{t}\|_{\Sigma_{t-1}}^2
		\leq 80 \lowkappa d \log\Big(12BT/(\highkappa \delta)\Big)
		+ 168 B^2 d \log(3TB/\delta) + 4\lambda B^2
	\end{align*}
	with probability at least $1-2\delta$. Adjusting $\delta$ to $\delta/2$ and taking the union bound over $t\in[T]$ yields the desired result.
\end{proof}

\begin{lemma}[One-step transition estimation error]
\label{lem:transition-err}
	Assume the events defined in \Cref{lem:high-prob-noise} hold. Then, the following claim holds for all $t\in[T]$ and $h\in[H]$ with probability at least $1-\delta$: if $\max_s |\-V_{t,h+1}(s)|\leq V_{\max}$, it holds that
$
	\|\etap{t,h}\|_{\Sigma_{t-1,h}} \leq \epsp{\eta}.
$
\end{lemma}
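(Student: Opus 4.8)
The plan is to recognize $\etap{t,h}$ as a self-normalized martingale sum and to control it through a covering argument over the class of value functions that $\-V_{t,h+1}$ can realize. Multiplying out the definition,
\[
	\Sigma_{t-1,h}\,\etap{t,h}=\sum_{i=1}^{t-1}\phi(s_{i,h},a_{i,h})\Big(\-V_{t,h+1}(s_{i,h+1})-\E_{s_{i,h+1}}\big[\-V_{t,h+1}(s_{i,h+1})\,\big|\,s_{i,h},a_{i,h}\big]\Big),
\]
so for a \emph{fixed} function $\-V_{t,h+1}$ each summand is a conditionally mean-zero increment bounded in magnitude by $2V_{\max}$ under the hypothesis $\max_s|\-V_{t,h+1}(s)|\le V_{\max}$. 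The central obstacle is that $\-V_{t,h+1}$ is \emph{not} fixed: through $\hatthetap{t,h+1}$ and $\Sigma_{t-1,h+1}$ it depends on the very data $\{(s_{i,h},a_{i,h},s_{i,h+1})\}_{i<t}$ that serve as regression targets, as well as on the Gaussian noise injected at round $t$. Hence the increments are not a genuine martingale difference sequence and no direct concentration inequality applies.

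To break this dependence I would pass to a uniform bound over the class $\+V$ of all functions $\-V_{t,h+1}$ can take. By \eqref{eq:QV-def} and the definition of $\omega_{t,h+1}$, every such function has the form
\[
	V(s)=\max_a\Big[\phi(s,a)^\top\theta_{\@r}+\omega\big(s,a;\theta_{\@P},\Sigma\big)\Big],
\]
parameterized by a reward vector $\theta_{\@r}$ and a transition vector $\theta_{\@P}$ (both in Euclidean balls whose radii are controlled by $\epsr{\xi},\epsp{\xi}$ and the problem constants) together with the covariance matrix $\Sigma$, which enters only through the truncation via $\|\phi(s,a)\|_{\Sigma^{-1}}$ and the thresholds $\alpha_\@L,\alpha_\@U$. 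I would bound the $\ell_\infty$-covering number of $\+V$ at resolution $\omega$ by covering these pieces separately: the two vectors contribute $O(d)$ to $\log N$ via \Cref{lem:cover-norm-ball}, while the symmetric matrix $\Sigma$ contributes $O(d^2)$ since it has $O(d^2)$ free entries. The matrix resolution is where the $\log(\alpha_\@U-\alpha_\@L)$ factor surfaces: the interpolation weight $\rho$ is Lipschitz in $\|\phi\|_{\Sigma^{-1}}$ with constant $1/(\alpha_\@U-\alpha_\@L)$, so a finer net is needed to keep the perturbation of $\omega$ under control. Collecting the pieces gives $\log N(\omega,\+V,\|\cdot\|_\infty)=O\big(d^2(\iota_\epsilon-\log((\alpha_\@U-\alpha_\@L)\omega))\big)$, which is precisely what is packaged into $\iota_\epsilon$ and the primed error $\epspprime{\eta}$.

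With the cover in hand I would fix a net point $\tilde V\in\+V$ and apply a self-normalized concentration bound—obtained by combining \Cref{lem:freedman} with the standard elliptical-potential control of $\Sigma_{t-1,h}$—to the now-legitimate martingale $\sum_i\phi(s_{i,h},a_{i,h})(\tilde V(s_{i,h+1})-\E[\tilde V])$, yielding, after a union bound over the net, a bound of order $V_{\max}\sqrt{\log N-\log(\delta\lambda)}$. Substituting $\log N=O\big(d^2(\cdots)\big)$ turns $\sqrt{\log N}$ into the factor $d\sqrt{\iota_\epsilon-\log(\cdots)}$, matching the $16\,d\,V_{\max}\sqrt{\,\cdot\,}$ term. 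The error from replacing $\-V_{t,h+1}$ by its nearest net point, together with the regularization $\lambda$, produces the additive $6/\sqrt{\lambda}$ term once $\omega$ is chosen small. This establishes $\|\etap{t,h}\|_{\Sigma_{t-1,h}}\le\epspprime{\eta}$ with probability at least $1-\delta$ after a union bound over $t\in[T]$ and $h\in[H]$, and invoking \Cref{lem:bound-alphau}, which gives $\epspprime{\eta}\le\epsp{\eta}$, completes the argument. The main difficulty throughout is the covering bookkeeping for the matrix-dependent truncation: it is exactly this $\Sigma$-dependence, absent from the standard fixed-feature LSVI analysis, that forces the $O(d^2)$ covering cost and the careful tracking of $\alpha_\@U-\alpha_\@L$.
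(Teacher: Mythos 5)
Your proposal is correct and follows essentially the same route as the paper's proof: a covering argument over the value-function class parameterized by $(\theta_{\@r},\theta_{\@P},\Sigma)$ (with the $O(d^2)$ cost from the matrix and the $(\alpha_\@U-\alpha_\@L)^{-1}$ Lipschitz bookkeeping for the interpolation weight $\rho$), a self-normalized martingale bound plus union bound over the net, a discretization error yielding the additive $6/\sqrt{\lambda}$ term, and finally \Cref{lem:bound-alphau} to pass from $\epspprime{\eta}$ to $\epsp{\eta}$. The only cosmetic difference is that the paper invokes the self-normalized inequality of \Cref{lem:self-normal} directly rather than assembling it from \Cref{lem:freedman} and an elliptical-potential argument, but this does not change the structure or validity of the argument.
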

\begin{proof}[Proof of \Cref{lem:transition-err}]
By definition, we have
\begin{align*}
	\|\etap{t,h}\|_{\Sigma_{t-1,h}} = 
	\left\|\sum_{i=1}^{t-1}\phi(s_{i,h},a_{i,h})
	\left(\-V_{t,h+1}(s_{i,h+1})-\E_{s_{i,h+1}}\left[\-V_{t,h+1}(s_{i,h+1})\middlegiven s_{i,h},a_{i,h} \right] \right)
	\right\|_{\Sigma_{t-1,h}^{-1}}.
\end{align*}
Towards an upper bound of the above, we will conduct some covering arguments. To that end, we first derive the upper bounds of norms of $\barthetar{t}$ and $\barthetap{t,h}$.

\paragraph{Bounding the $\ell_2$-norms of $\barthetar{t}$ and $\barthetap{t,h}$.}
	For $\barthetar{t}$, applying triangle inequality, we have $\|\barthetar{t}\|_2 \leq \|\xir{t}\|_2 + \|\hatthetar{t}\|_2 \leq \|\xir{t}\|_2 + B$. For $\|\xir{t}\|_2$, we have 
	\begin{align*}
		\|\xir{t}\|_2 
		= \sqrt{\xir{t}^\top \Sigma_{t-1}^{-1/2} \Sigma_{t-1}^{1/2} \xir{t}}
		\leq \sqrt{\|\xir{t}\|_{\Sigma_{t-1}} \|\xir{t}\|_{\Sigma^{-1}_{t-1}}}
		\leq \sqrt{\epsr{\xi} \cdot \|\xir{t}\|_2 / \sqrt{\lambda}}
	\end{align*}
	where in the last step we applied \Cref{lem:high-prob-noise} and the fact that $\|\xir{t}\|_{\Sigma^{-1}_{t-1}} \leq \|\xir{t}\|_2 \|\Sigma_{t-1}^{-1/2}\|_2 \leq \|\xir{t}\|_2 / \sqrt{\lambda}$. This implies that $\|\xir{t}\|_2 \leq \epsr{\xi} / \sqrt{\lambda}$, and thus, $\|\barthetar{t}\|_2 \leq \epsr{\xi} / \sqrt{\lambda} + B$.

	For $\barthetap{t,h}$, we have $\|\barthetap{t,h}\|_2 \leq \|\xip{t,h}\|_2 + \|\hatthetap{t,h}\|_2$. For $\|\xip{t,h}\|_2$, following a similar argument as above, we have
	\begin{align*}
		\|\xip{t,h}\|_2
		\leq \sqrt{\|\xip{t,h}\|_{\Sigma_{t-1,h}}\|\xip{t,h}\|_{\Sigma_{t-1,h}^{-1}}}
		\leq \sqrt{\epsp{\xi} \cdot \|\xip{t,h}\|_2 / \sqrt{\lambda}},
	\end{align*}
	which implies that $\|\xip{t,h}\|_2 \leq \epsp{\xi} / \sqrt{\lambda}$. For $\|\hatthetap{t,h}\|_2$, we have
	\begin{align*}
		\|\hatthetap{t,h}\|_2
		=&\left\|\Sigma_{t-1,h}^{-1/2} \Sigma_{t-1,h}^{-1/2}\left(\sum_{i=1}^{t-1} \phi(s_{i,h},a_{i,h})\overline{V}_{t,h+1}(s_{i,h+1})\right)\right\|_2 \\
		\leq & \|\Sigma_{t-1,h}^{-1/2}\|_2 \cdot \left\|\sum_{i=1}^{t-1} \phi(s_{i,h},a_{i,h})\overline{V}_{t,h+1}(s_{i,h+1})\right\|_{\Sigma_{t-1,h}^{-1}} \\
		\leq & \sqrt{\lambda^{-1}} \cdot \sqrt{t} \cdot \sqrt{\sum_{i=1}^{t-1} \left\|\phi(s_{i,h},a_{i,h})\overline{V}_{t,h+1}(s_{i,h+1})\right\|^2_{\Sigma_{t-1,h}^{-1}}}  \\
		\leq & \sqrt{\lambda^{-1}} \cdot \sqrt{t} \cdot V_{\max} \cdot \sqrt{\sum_{i=1}^{t-1} \left\|\phi(s_{i,h},a_{i,h})\right\|^2_{\Sigma_{t-1,h}^{-1}}} \\
		\leq & \frac{V_{\max} \sqrt{d t}}{\sqrt{\lambda}}
	\end{align*} 
	where the second inequality is by the Jensen's inequality, the third inequality is by the condition that $\max_s |\-V_{t,h}(s)|\leq V_{\max}$, and the last inequality is by \Cref{lem:sum-feature}. Putting these together, we get $\|\barthetap{t,h}\|_2 \leq \epsp{\xi} / \sqrt{\lambda} + V_{\max} \sqrt{d t} / \sqrt{\lambda}$.

\paragraph{Covering construction.}
With these bounds in hand, we can now proceed with a covering construction. 
First define 
\begin{align*}
	&Q_{\thetar,\thetap,\Sigma}(s,a) \\
	& \quad \coloneqq \phi(s,a)^\top \thetar +
	\begin{cases}
		\phi(s,a)^\top\thetap 
		& \text{if } \|\phi(s,a)\|_{\Sigma}\leq\alpha_\@L \\
		\rho(s,a)\Big(\phi(s,a)^\top\thetap\Big)+(1-\rho(s,a))(H-h)
		& \text{if } \alpha_\@L<\|\phi(s,a)\|_{\Sigma}\leq\alpha_\@U \\
		H-h
		& \text{if } \|\phi(s,a)\|_{\Sigma}>\alpha_\@U
	\end{cases},\\
	& V_{\thetar,\thetap,\Sigma}(s) \coloneqq \max_a Q_{\thetar,\thetap,\Sigma}(s,a).
	\numberthis\label{eq:cases-Q}
\end{align*}
where $\rho(s,a):=\frac{\alpha_\@U-\|\phi(s,a)\|_{\Sigma}}{\alpha_\@U-\alpha_\@L}$.
This definition aims to mimic the behavior of \Cref{alg:linear}. Then, we define the space of parameters as follows
\begin{align*}
	\+U \coloneqq
	\Big\{(\thetar,\thetap,\Sigma)
	\::\: &\|\thetar\|_2\leq B+ \epsr{\xi}/\sqrt{\lambda},\,\|\thetap\|_2\leq (V_{\max} \sqrt{d t} + \epsp{\xi})/\sqrt{\lambda},\\
	&\|\Sigma\|_\@F\leq \sqrt{d} / \lambda,\, \Sigma=\Sigma^\top,\,\Sigma\succeq 0,\,{\|V_{\thetar,\thetap,\Sigma}\|_\infty\leq V_{\max} } \Big\}.
\end{align*}
Clearly, $\barthetar{t}$ and $\barthetap{t,h}$ satisfy the constraints of $\thetar$ and $\thetap$. Moreover, $\|\Sigma_{t-1}^{-1}\|_\@F \leq \sqrt{d} \|\Sigma_{t-1}^{-1}\|_2 \leq \sqrt{d}/\lambda$, and thus $\Sigma_{t-1}^{-1}$ satisfies the constraint of $\Sigma$. Hence, we have $(\barthetar{t},\barthetap{t,h},\Sigma_{t-1})\in\+U$.

By \Cref{lem:cover-norm-ball}, the size of an $\omega$-covering of $\+U$ by treating it as a subset of $\=R^{2d + d^2}$ can be bounded by
	\begin{align*}
		N(\omega, \+U, \|\cdot\|_2)
		\leq \left(\frac{3\Big((B+\epsr{\xi}/\sqrt{\lambda})+(V_{\max} \sqrt{d t} + \epsp{\xi})/\sqrt{\lambda} + {\sqrt{d}}/\lambda\Big)}{\omega}\right)^{2d+d^2}
	\end{align*}
We denote the covering set by $\~{\+U}$. We will write $N\coloneqq N(\omega, \+U, \|\cdot\|_2)$ for simplicity when there is no ambiguity. We note that it is possible that not all points in $\~{\+U}$ belong to $\+U$. However, we can project all points to $\+U$ so it becomes a $2\omega$-covering of $\+U$. We will omit this subtlety.

\paragraph{Covering argument.}
For any $(\thetar,\thetap,\Sigma)\in\+U$, we define 
\begin{align*}
	x_{i,\thetar,\thetap,\Sigma}:=V_{\thetar,\thetap,\Sigma}(s_{i,h+1})-\E_{s'}\left[V_{\thetar,\thetap,\Sigma}(s') \middlegiven s_{i,h},a_{i,h}\right].
\end{align*}
By construction, we have $|x_{i,\thetar,\thetap,\Sigma}|\leq 2V_{\max}$, so it is $4V_{\max}$-subgaussian conditioning on $(s_{i,h},a_{i,h})$. Hence, for all $(\thetar,\thetap,\Sigma)\in\~{\+U}$, we have
\begin{align*}
	\left\|\sum_{i=1}^{t-1}\phi(s_{i,h},a_{i,h})x_{i,\thetar,\thetap,\Sigma}\right\|_{\Sigma_{t-1,h}^{-1}}
	\leq \sqrt{32V_{\max}^2 \left(d \log\left(1+\frac{t}{\lambda}\right) +\log(N/\delta) \right)}
	\numberthis\label{eq:covering-bound-self-normal}
\end{align*}
with probability at least $1-\delta$ by \Cref{lem:self-normal} and  the union bound over $\~{\+U}$. Now consider an arbitrary tuple $(\thetar,\thetap,\Sigma)\in \+U$ and the nearest point $(\tildethetar,\tildethetap,\~\Sigma)\in\~{\+U}$. Then, we have
\begin{align*}
&\left\|\sum_{i=1}^{t-1}\phi(s_{i,h},a_{i,h})x_{i,\thetar,\thetap,\Sigma}\right\|_{\Sigma_{t-1,h}^{-1}} \\
	\leq& \left\|\sum_{i=1}^{t-1}\phi(s_{i,h},a_{i,h})x_{i,\tildethetar,\tildethetap,\~\Sigma}\right\|_{\Sigma_{t-1,h}^{-1}} + \left\|\sum_{i=1}^{t-1}\phi(s_{i,h},a_{i,h})(x_{i,\thetar,\thetap,\Sigma}-x_{i,\tildethetar,\tildethetap,\~\Sigma})\right\|_{\Sigma_{t-1,h}^{-1}} \\
	\leq& \left\|\sum_{i=1}^{t-1}\phi(s_{i,h},a_{i,h})x_{i,\tildethetar,\tildethetap,\~\Sigma}\right\|_{\Sigma_{t-1,h}^{-1}} + \sum_{i=1}^{t-1}\left\|\phi(s_{i,h},a_{i,h})\right\|_{\Sigma_{t-1,h}^{-1}} \cdot\max_i |x_{i,\thetar,\thetap,\Sigma}-x_{i,\tildethetar,\tildethetap,\~\Sigma}| \\
	\leq & \left\|\sum_{i=1}^{t-1}\phi(s_{i,h},a_{i,h})x_{i,\tildethetar,\tildethetap,\~\Sigma}\right\|_{\Sigma_{t-1,h}^{-1}} + \sqrt{\lambda^{-1}} \cdot t \cdot  \max_i|x_{i,\thetar,\thetap,\Sigma}-x_{i,\tildethetar,\tildethetap,\~\Sigma}|
\end{align*}
where the last step is by the fact that $\|\phi(s_{i,h},a_{i,h})\|_{\Sigma_{t-1,h}^{-1}} \leq \sqrt{\lambda^{-1}}$.
Observing the derived upper bound above, the first term is already bounded by \eqref{eq:covering-bound-self-normal} since $(\tildethetar,\tildethetap,\~\Sigma)\in\~{\+U}$. To bound the second term, we note that, by the definiton of $x_{i,\thetar,\thetap,\Sigma}$ and $x_{i,\tildethetar,\tildethetap,\~\Sigma}$,
\begin{align*}
	|x_{i,\thetar,\thetap,\Sigma}-x_{i,\tildethetar,\tildethetap,\~\Sigma} |
	\leq & 2\max_s\left|V_{\thetar,\thetap,\Sigma}(s)-V_{\tildethetar,\tildethetap,\~\Sigma}(s)\right|\\
	= & 2\max_s\left|\max_a Q_{\thetar,\thetap,\Sigma}(s,a) - \max_a Q_{\tildethetar,\tildethetap,\~\Sigma}(s,a)\right| \\
	\leq & 2\max_{s,a}\left|Q_{\thetar,\thetap,\Sigma}(s,a) - Q_{\tildethetar,\tildethetap,\~\Sigma}(s,a)\right|
\end{align*}

We assume $\omega \leq (\alpha_\@U-\alpha_\@L)^2$ (and this will be satisfied later when we specify the value of $\omega$). Recall that we have three cases in the definition of $Q_{\thetar,\thetap,\Sigma}(s,a)$ (see \eqref{eq:cases-Q}), and we will refer to them as Case L, Case M, and Case U. There are in total 6 possible combinations of cases for the pair $Q_{\thetar,\thetap,\Sigma}(s,a), Q_{\tildethetar,\tildethetap,\~\Sigma}(s,a)$, and we will discuss them one by one below. For the ease of notation, we denote $Q(s,a)$ and $\~Q(s,a)$ as shorthand for $Q_{\thetar,\thetap,\Sigma}(s,a)$ and $Q_{\tildethetar,\tildethetap,\~\Sigma}(s,a)$.

\paragraph{(1) Case L + Case U.} This is impossible, because, in this case, we have $|\|\phi\|_\Sigma-\|\phi\|_{\~\Sigma}|>\alpha_\@U-\alpha_\@L$, but we also have $|\|\phi\|_\Sigma-\|\phi\|_{\~\Sigma}| = |\sqrt{\phi^\top\Sigma\phi}-\sqrt{\phi^\top\~\Sigma\phi}|\leq \sqrt{|\phi^\top(\Sigma-\~\Sigma)\phi|}\leq \sqrt{\|\phi\|_2\|\Sigma-\~\Sigma\|_\@F\|\phi\|_2}\leq \sqrt\omega< \alpha_\@U-\alpha_\@L$.

\paragraph{(2) Both are Case L.} Then we immediately have
\begin{align*}
	|Q(s,a)-\~Q(s,a)|= & \left|\phi(s,a)^\top (\thetar+\thetap-\tildethetar-\tildethetap)\right| 
	\leq \|\phi(s,a)\|_2  \Big( \|\thetar-\tildethetar\|_2 + \|\thetap-\tildethetap\|_2\Big)
	\leq 2 \omega.
\end{align*}

\paragraph{(3) Both are Case U.} Then we immediately have
\begin{align*}
	|Q(s,a)-\~Q(s,a)|= & \left|\phi(s,a)^\top (\thetar-\tildethetar)\right|
	\leq |\phi(s,a)\|_2 \cdot \|\thetar-\tildethetar\|_2
	\leq \omega.
\end{align*}

\paragraph{(4) Case L + Case M.} Without loss of generality, we assume $Q(s,a)$ is in Case M and $\~Q(s,a)$ is in Case L. Then we have
\begin{align*}
	|Q(s,a)-\~Q(s,a)|
	\leq & \rho(s,a)\left|\phi(s,a)^\top (\thetar+\thetap-\tildethetar-\tildethetap)\right| \\
	& + (1-\rho(s,a))\left|\phi(s,a)^\top (\tildethetar+\tildethetap)-(H-h)\right| \\
	\leq & \rho(s,a) \cdot 2\omega + (1-\rho(s,a))\cdot 2V_{\max}
\end{align*}

Recall that $\rho(s,a):=\frac{\alpha_\@U-\|\phi(s,a)\|_{\Sigma}}{\alpha_\@U-\alpha_\@L}$, so $1-\rho(s,a):=\frac{\|\phi(s,a)\|_{\Sigma}-\alpha_\@L}{\alpha_\@U-\alpha_\@L}$. Futhermore,
\begin{align*}
	1-\rho(s,a) &
	=\frac{\|\phi(s,a)\|_{\Sigma}-\alpha_\@L}{\alpha_\@U-\alpha_\@L}
	= \frac{\|\phi(s,a)\|_{\Sigma}-\|\phi(s,a)\|_{\~\Sigma}+\|\phi(s,a)\|_{\~\Sigma}-\alpha_\@L}{\alpha_\@U-\alpha_\@L} \\
	\leq & \frac{\|\phi(s,a)\|_{\Sigma}-\|\phi(s,a)\|_{\~\Sigma}}{\alpha_\@U-\alpha_\@L}
	= \frac{\sqrt{\phi^\top\Sigma\phi}-\sqrt{\phi^\top\~\Sigma\phi}}{\alpha_\@U-\alpha_\@L} \\
	\leq & \frac{\sqrt{|\phi^\top(\Sigma-\~\Sigma)\phi|}}{\alpha_\@U-\alpha_\@L}
	\leq \frac{\sqrt{\|\phi\|_2\|\Sigma-\~\Sigma\|_2\|\phi\|_2}}{\alpha_\@U-\alpha_\@L}
	\leq \frac{\sqrt{\omega}}{\alpha_\@U-\alpha_\@L}
\end{align*}
where the first inequality is by the condition that $\~Q$ is in Case L. Inserting this back, we get $|Q(s,a)-\~Q(s,a)| \leq 2\omega + \frac{2 V_{\max} \sqrt{\omega}}{\alpha_\@U-\alpha_\@L}$.

\paragraph{(5) Case M + Case U.} Without loss of generality, we assume $Q(s,a)$ is in Case M and $\~Q(s,a)$ is in Case U. Then we have
\begin{align*}
|Q(s,a)-\~Q(s,a)|
\leq
	&\rho(s,a)\left|\phi(s,a)^\top (\thetar+\thetap-(H-h))\right| + (1-\rho(s,a))\left|(H-h)-(H-h)\right| \\
	=&\rho(s,a)\left|\phi(s,a)^\top (\thetar+\thetap-(H-h))\right| \leq \rho(s,a)\cdot 2V_{\max}\leq \frac{2 V_{\max}\sqrt{\omega}}{\alpha_\@U-\alpha_\@L}
\end{align*}
where the last inequality is from
\begin{align*}
	\rho(s,a) &
		=\frac{\alpha_\@U-\|\phi(s,a)\|_{\Sigma}}{\alpha_\@U-\alpha_\@L}
		= \frac{\alpha_\@U-\|\phi(s,a)\|_{\Sigma}-\|\phi(s,a)\|_{\~\Sigma}+\|\phi(s,a)\|_{\~\Sigma}}{\alpha_\@U-\alpha_\@L} \\
		\leq & \frac{\|\phi(s,a)\|_{\~\Sigma}-\|\phi(s,a)\|_{\Sigma}}{\alpha_\@U-\alpha_\@L} 
		= \frac{\sqrt{\phi^\top\~\Sigma\phi}-\sqrt{\phi^\top\Sigma\phi}}{\alpha_\@U-\alpha_\@L} \\
		\leq & \frac{\sqrt{|\phi^\top(\Sigma-\~\Sigma)\phi|}}{\alpha_\@U-\alpha_\@L}
		\leq \frac{\sqrt{\|\phi\|_2\|\Sigma-\~\Sigma\|_2\|\phi\|_2}}{\alpha_\@U-\alpha_\@L}
		\leq \frac{\sqrt{\omega}}{\alpha_\@U-\alpha_\@L}.
	\end{align*}
where the first inequality is by the condition that $\~Q$ is in Case U.
	
\paragraph{(6) Both are Case M.}
Denote $\rho(s,a)=\frac{\alpha_\@U-\|\phi(s,a)\|_{\Sigma}}{\alpha_\@U-\alpha_\@L}$ and $\~\rho(s,a)=\frac{\alpha_\@U-\|\phi(s,a)\|_{\~\Sigma}}{\alpha_\@U-\alpha_\@L}$. Then, we have
\begin{align*}
	|\rho(s,a)-\~\rho(s,a)|=\frac{\Big|\|\phi(s,a)\|_{\~\Sigma}-\|\phi(s,a)\|_{\Sigma}\Big|}{\alpha_\@U-\alpha_\@L} \leq \frac{\sqrt{\|\phi\|_2\|\~\Sigma-\Sigma\|_\@F\|\phi\|_2}}{\alpha_\@U-\alpha_\@L}\leq \frac{\sqrt\omega}{\alpha_\@U-\alpha_\@L},
\end{align*}
which also implies $|(1-\rho(s,a))-(1-\~\rho(s,a))|\leq \frac{\sqrt\omega}{\alpha_\@U-\alpha_\@L}$. Hence, we have
\begin{align*}
	|Q(s,a)-\~Q(s,a)| \leq &\left| \rho(s,a) \phi(s,a)^\top(\thetar+\thetap) - \~\rho(s,a) \phi(s,a)^\top(\tildethetar+\tildethetap) \right| \\
	& + \left| (1-\rho(s,a)) (H-h) - (1-\~\rho(s,a)) (H-h) \right| \\
	\leq & \left| \rho(s,a) \phi(s,a)^\top(\thetar+\thetap) - \rho(s,a) \phi(s,a)^\top(\tildethetar+\tildethetap) \right| \\
	& + \left| \rho(s,a) \phi(s,a)^\top(\tildethetar+\tildethetap) - \~\rho(s,a) \phi(s,a)^\top(\tildethetar+\tildethetap) \right| \\
	& + \left| (1-\rho(s,a)) (H-h) - (1-\~\rho(s,a)) (H-h) \right| \\
	\leq & 2\omega + \frac{V_{\max}\sqrt{\omega} }{\alpha_\@U-\alpha_\@L} + \frac{H \sqrt{\omega} }{\alpha_\@U-\alpha_\@L}
\end{align*}

\paragraph{Putting Everything Together.} Taking the maximum of the six cases, we conclude that
\begin{align*}
	|Q(s,a)-\~Q(s,a)| \leq 2\omega + \frac{2 V_{\max} \sqrt{\omega}}{\alpha_\@U-\alpha_\@L}
\end{align*}
We set $\omega = \left(\frac{\alpha_\@U-\alpha_\@L}{2 t V_{\max}}\right)^2$ and obtain
\begin{align*}
	|Q(s,a)-\~Q(s,a)| \leq 2\left(\frac{\alpha_\@U-\alpha_\@L}{2 t V_{\max}}\right)^2 + \frac{1}{t}
	\leq \frac{3}{t}
\end{align*}
where we leverage the condition that $\alpha_\@L,\alpha_\@U\leq 1$. Thus, we have
\begin{align*}
	|x_{i,\thetar,\thetap,\Sigma}-x_{i,\tildethetar,\tildethetap,\~\Sigma} | 
	\leq 2\max_{s,a}\left|Q(s,a) - \~Q(s,a)\right|
	\leq \frac{6}{t}.
\end{align*}
\paragraph{Final Bound.}
The covering argument is now complete, and we can use it to derive an upper bound for $\|\etap{t,h}\|_{\Sigma_{t-1,h}}$. To that end, we note that, for $\etap{t,h}$, there must exists $(\thetar,\thetap,\Sigma)\in\+U$ and its closest element $(\~\thetar,\~\thetap,\~\Sigma)\in\~{\+U}$ such that
\begin{align*}
	&\|\etap{t,h}\|_{\Sigma_{t-1,h}} \\
	&= \left\|\sum_{i=1}^{t-1}\phi(s_{i,h},a_{i,h})\left(\-V_{t,h+1}(s_{i,h+1})-\E_{s_{i,h+1} }[\-V_{t,h+1}(s_{i,h+1})\given s_{i,h},a_{i,h}]\right)\right\|_{\Sigma^{-1}_{t-1,h}} \\
	&= \left\|\sum_{i=1}^{t-1}\phi(s_{i,h},a_{i,h})x_{i,\thetar,\thetap,\Sigma}\right\|_{\Sigma_{t-1,h}^{-1}}\\
	&= \left\|\sum_{i=1}^{t-1}\phi(s_{i,h},a_{i,h})x_{i,\tildethetar,\tildethetap,\Sigma}\right\|_{\Sigma_{t-1,h}^{-1}} + \left\|\sum_{i=1}^{t-1}\phi(s_{i,h},a_{i,h})\big(x_{i,\thetar,\thetap,\Sigma} - x_{i,\tildethetar,\tildethetap,\Sigma}\big)\right\|_{\Sigma_{t-1,h}^{-1}}
\end{align*}
where the second term is bounded by
\begin{align*}
	& \left\|\sum_{i=1}^{t-1}\phi(s_{i,h},a_{i,h})\big(x_{i,\thetar,\thetap,\Sigma} - x_{i,\tildethetar,\tildethetap,\Sigma}\big)\right\|_{\Sigma_{t-1,h}^{-1}} \\
	\leq & \sum_{i=1}^{t-1} \left\|\phi(s_{i,h},a_{i,h})\big(x_{i,\thetar,\thetap,\Sigma} - x_{i,\tildethetar,\tildethetap,\Sigma}\big)\right\|_{\Sigma_{t-1,h}^{-1}} \\
	\leq & \sqrt{\lambda^{-1}} \cdot t \cdot  \max_i|x_{i,\thetar,\thetap,\Sigma}-x_{i,\tildethetar,\tildethetap,\Sigma}| \\
	\leq & 6/\sqrt{\lambda},
\end{align*}
and, applying \eqref{eq:covering-bound-self-normal}, the first term is bounded by
{\small
\begin{align*}
	& \left\|\sum_{i=1}^{t-1}\phi(s_{i,h},a_{i,h})x_{i,\tildethetar,\tildethetap,\Sigma}\right\|_{\Sigma_{t-1,h}^{-1}} \\
	\leq & \sqrt{32V_{\max}^2 \left(d \log\left(1+\frac{t }{\lambda}\right) +\log(N/\delta) \right)} \\
	\leq & \sqrt{32V_{\max}^2 \left(d \log\left(1+\frac{t }{\lambda}\right) +(2d+d^2)\log\left(\frac{3\Big((B+\epsr{\xi}/\sqrt{\lambda})+(V_{\max} \sqrt{d t} + \epsp{\xi})/\sqrt{\lambda} + \sqrt{d}/\lambda\Big)}{\omega\delta}\right) \right)} \\
	= &
	V_{\max}\sqrt{32d\log\left(1+\frac{t }{\lambda}\right) + 32(2d+d^2)\log\left(\frac{12V_{\max}^2 t^2\Big((B+\epsr{\xi}/\sqrt{\lambda})+(V_{\max} \sqrt{d t} + \epsp{\xi})/\sqrt{\lambda} + \sqrt{d}/\lambda\Big)}{(\alpha_\@U-\alpha_\@L)^2 \delta}\right)}
\end{align*}}where the second inequality is by inserting the upper bound of the covering number $N$, and the equality is by inserting the value of $\omega$. The last line looks complicated and we can further simplify it and get 
\begin{align*}
	& \left\|\sum_{i=1}^{t-1}\phi(s_{i,h},a_{i,h})x_{i,\tildethetar,\tildethetap,\Sigma}\right\|_{\Sigma_{t-1,h}^{-1}} \\
	\leq &
	16 d V_{\max}\sqrt{ \log\left(\frac{12 V_{\max} t (t + \lambda)\Big((B+\epsr{\xi}/\sqrt{\lambda})+(V_{\max} \sqrt{d t} + \epsp{\xi})/\sqrt{\lambda} + \sqrt{d}/\lambda\Big)}{(\alpha_\@U-\alpha_\@L) \delta \lambda} \right)}.
\end{align*}
Plugging these upper bounds back, we obtain
\begin{align*}
	&\|\etap{t,h}\|_{\Sigma_{t-1,h}} \\
	& \leq \frac{6}{\sqrt{\lambda}} + 16 d V_{\max}\sqrt{ \log\left(\frac{12 V_{\max} t (t + \lambda)\Big((B+\epsr{\xi}/\sqrt{\lambda})+(V_{\max} \sqrt{d t} + \epsp{\xi})/\sqrt{\lambda} + \sqrt{d}/\lambda\Big)}{(\alpha_\@U-\alpha_\@L) \delta \lambda} \right)}.
\end{align*}
Applying union bound over all $t\in[T]$ and $h\in[H]$, the upper bound exactly becomes $\epspprime{\eta}$. Further invoking \Cref{lem:bound-alphau} finishes the proof.
\end{proof}

\begin{lemma}[Boundness of value functions and transition estimation Error]\label{lem:value-range}
	Assume the events define in \Cref{lem:high-prob-noise,lem:reward-err} hold.
Then, the following holds with probability at least $1-\delta$:
	\begin{enumerate}

		\item 
		$
		\max_{s,a} |\-Q_{t,h}(s,a)|
		\leq V_{\max}
		$  for all $t\in[T]$ and $h\in[H]$.

		\item $\|\etap{t,h}\|_{\Sigma_{t-1,h}} \leq  \epsp{\eta} $ for all $t\in[T]$ and $h\in[H]$.
		
		\item $\|\lambda_{t,h}\|_{\Sigma_{t-1,h}} \leq \epsilon_\lambda$ for all $t\in[T]$ and $h\in[H]$.
		
	\end{enumerate}
	The first statement implies $\max_s |\-V_{t,h}(s)|\leq V_{\max}$ for all $t\in[T]$ and $h\in[H]$.
\end{lemma}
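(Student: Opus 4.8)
The three conclusions are coupled in a way that rules out proving them separately: the value bound (item 1) at stage $h$ needs the transition-error bound (item 2) and the bias bound (item 3) in order to control $\omega_{t,h}$, yet \Cref{lem:transition-err} and the estimate of $\lambda_{t,h}$ each take $\max_s|\-V_{t,h+1}(s)|\le V_{\max}$ as an input. The plan is to break this circularity with a single \emph{backward induction on $h$}, from $H$ down to $1$, proving all three items at stage $h$ together and feeding the stage-$(h+1)$ value bound into the stage-$h$ error bounds. Throughout I would condition on the events of \Cref{lem:high-prob-noise,lem:reward-err} (which are assumed) and on the probability-$(1-\delta)$ event of \Cref{lem:transition-err}; everything afterward is deterministic, so the advertised ``with probability $1-\delta$'' is exactly this event. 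Crucially, to make the induction close I would carry the \emph{sharpened} invariant
\[
	\max_s|\-V_{t,h}(s)|\le (H-h+1)(c+1),\qquad c\coloneqq 1+(\epsr{\eta}+\epsr{\xi})/\sqrt{\lambda},
\]
and note at the end that $(H-h+1)(c+1)\le H(c+1)=V_{\max}$, which is item 1.

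For the base case $h=H$, the algorithm sets $\hatthetap{t,H}=\barthetap{t,H}=0$ and $H-h=0$, so every branch of $\omega_{t,H}$ vanishes and $\-Q_{t,H}(s,a)=\phi(s,a)^\top\barthetar{t}$; writing $\barthetar{t}=\thetastarr+\etar{t}+\xir{t}$ and using $r^\star(s,a)=\phi^\top\thetastarr\in[0,1]$, $\|\phi(s,a)\|_{\Sigma_{t-1}^{-1}}\le\lambda^{-1/2}$, $\|\etar{t}\|_{\Sigma_{t-1}}\le\epsr{\eta}$ (\Cref{lem:reward-err}) and $\|\xir{t}\|_{\Sigma_{t-1}}\le\epsr{\xi}$ (\Cref{lem:high-prob-noise}) gives $|\phi^\top\barthetar{t}|\le c$, so the invariant holds. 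In the inductive step, assume $\max_s|\-V_{t,h+1}(s)|\le (H-h)(c+1)\le V_{\max}$. Item 3 is then immediate from $\lambda_{t,h}=\lambda\Sigma_{t-1,h}^{-1}\int_{s'}\mu^\star_h(s')\-V_{t,h+1}(s')\d s'$, since $\|\lambda_{t,h}\|_{\Sigma_{t-1,h}}^2\le\lambda\big\|\int_{s'}\mu^\star_h(s')\-V_{t,h+1}(s')\d s'\big\|_2^2\le\lambda d\,V_{\max}^2=\epsilon_\lambda^2$ using the normalization $\int\|\mu^\star_h\|_2\le\sqrt d$; and item 2 is exactly \Cref{lem:transition-err}, whose hypothesis is now satisfied.

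The substance is item 1. Using the identity behind \Cref{lem:one-step-decomp}, I would write $\phi^\top\barthetap{t,h}=\E_{s'}[\-V_{t,h+1}(s')\given s,a]+\phi^\top(\xip{t,h}+\etap{t,h}-\lambda_{t,h})$, so the error term is at most $\|\phi\|_{\Sigma_{t-1,h}^{-1}}(\epsp{\xi}+\epsp{\eta}+\epsilon_\lambda)$ by items 2 and 3 and \Cref{lem:high-prob-noise}; this is $\le\alpha_\@L(\epsp{\xi}+\epsp{\eta}+\epsilon_\lambda)=\tfrac12$ on the branch $\|\phi\|_{\Sigma_{t-1,h}^{-1}}\le\alpha_\@L$ and $\le\alpha_\@U(\epsp{\xi}+\epsp{\eta}+\epsilon_\lambda)=1$ on the interpolation branch, while the truncated branch simply contributes $H-h$. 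I would then verify branch by branch that $|\-Q_{t,h}(s,a)|\le(H-h+1)(c+1)$: on the first two branches from $|\phi^\top\barthetar{t}|+|\E_{s'}[\-V_{t,h+1}]|+1\le c+(H-h)(c+1)+1=(H-h+1)(c+1)$ (the convex combination in the interpolation branch staying inside this envelope), and on the truncated branch from $c+(H-h)\le(H-h+1)(c+1)$.

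\textbf{The main obstacle} is exactly this closure. Substituting the crude bound $\max_s|\-V_{t,h+1}|\le V_{\max}$ would yield $c+V_{\max}+1>V_{\max}$ and fail; it is the per-stage invariant $(H-h+1)(c+1)$, together with the exact calibration $\alpha_\@U=(\epsp{\xi}+\epsp{\eta}+\epsilon_\lambda)^{-1}$, $\alpha_\@L=\alpha_\@U/2$ and $V_{\max}=H(c+1)$, that makes the telescoping land precisely on $V_{\max}$ after $H$ stages. The only genuinely delicate part is the case bookkeeping for $\omega_{t,h}$ (and checking that the invariant is preserved in each case with the exact constants $\tfrac12$ and $1$); the remaining manipulations are routine self-normalized norm inequalities.
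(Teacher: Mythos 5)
Your proof is correct and follows essentially the same route as the paper's: a backward induction on $h$ that proves the three items jointly, conditions once on the uniform probability-$(1-\delta)$ event of \Cref{lem:transition-err}, feeds the stage-$(h+1)$ value bound into \Cref{lem:transition-err} and the $\|\lambda_{t,h}\|_{\Sigma_{t-1,h}}\leq V_{\max}\sqrt{\lambda d}$ estimate, and closes the value bound by the same branch-by-branch analysis of $\omega_{t,h}$ with the calibration $\alpha_\@U=(\epsp{\xi}+\epsp{\eta}+\epsilon_\lambda)^{-1}$, $\alpha_\@L=\alpha_\@U/2$. The only cosmetic difference is the invariant: the paper tracks the gap $|\-Q_{t,h}(s,a)-Q^\star_h(s,a)|\leq V_{\max,h}=(H-h+1)\big(1+(\epsr{\xi}+\epsr{\eta})/\sqrt{\lambda}\big)$ and then adds $|Q^\star_h|\leq H-h+1$, whereas you bound $|\-Q_{t,h}(s,a)|$ directly by $(H-h+1)(c+1)$, which is numerically the identical envelope.
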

\begin{proof}
	We prove the three statements together by induction.

	For the first statement, we define $V_{\max,h} \coloneqq (H-h+1)(1 + (\epsr{\xi} + \epsr{\eta}) / \sqrt{\lambda})$, and we will actually show that $\max_{s,a} |\-Q_{t,h}(s,a) - Q^\star_h(s,a)|
	\leq V_{\max,h}$ in the induction, which immediately leads to $\max_{s,a}|\-Q_{t,h}(s,a)|\leq V_{\max,h} + (H-h+1) \leq V_{\max}$.

	\paragraph{Base.} 
	For $h=H$, we can directly apply \Cref{lem:transition-err} without the transition argument, which leads to the desired upper bound of $\|\etap{t,H}\|_{\Sigma_{t-1,h}}$. Moreover, the upper bound of $\|\lambda_{t,h}\|_{\Sigma_{t-1,h}}$ also trivially holds. For an upper bound on the value function, we immediately have 
	\begin{align*}
		\big| \-Q_{t,H}(s,a)-Q^\star_H(s,a) \big|
		= \big|\phi(s,a)^\top(\barthetar{t}-\thetastarr) \big| 
		\leq \|\phi(s,a)\|_{\Sigma_{t-1}^{-1}}\|\barthetar{t} - \thetastarr \|_{\Sigma_{t-1}}
	\end{align*}
	by Cauchy-Schwarz inequality. We note that $\|\phi(s,a)\|_{\Sigma_{t-1}^{-1}} \leq \sqrt{\lambda^{-1}}$ and $\|\barthetar{t} - \thetastarr \|_{\Sigma_{t-1}} \leq \| \xir{t} + \etar{t} \|_{\Sigma_{t-1}} \leq \| \xir{t}\|_{\Sigma_{t-1}} + \|\etar{t} \|_{\Sigma_{t-1}} \leq \epsr{\xi} + \epsr{\eta}$. Hence, we have
	\begin{align*}
		\big| \-Q_{t,H}(s,a)-Q^\star_H(s,a) \big| \leq (\epsr{\xi} + \epsr{\eta}) / \sqrt{\lambda} \leq V_{\max,H}.
	\end{align*}
	
	\paragraph{Inductive Steps.}
	Now consider $h < H$. By induction hypothesis, we have $\max_{t,s} |\-V_{t,h+1}(s)| \leq V_{\max}$. Thus, we can apply \Cref{lem:transition-err} and get $\|\etap{t,h}\|_{\Sigma_{t-1,h}} \leq \epsp{\eta}$. For the upper bound of $\|\lambda_{t,h}\|_{\Sigma_{t-1,h}} $, by definition, we have
	\begin{align*}
		\|\lambda_{t,h}\|_{\Sigma_{t-1,h}} 
		= &
		\left\|
			\lambda\int_{s'}\mu^\star_h(s')\-V_{t,h+1}(s')\d s'
		\right\|_{\Sigma^{-1}_{t-1,h}}\\
		\leq &
		\sqrt{\lambda}
		\left\|
			\int_{s'}\mu^\star_h(s')\-V_{t,h+1}(s')\d s'
		\right\|_2 \\
		\leq & V_{\max} \sqrt{\lambda d} 
		= \epsilon_\lambda
	\end{align*}
	where the second inequality is by $\|\Sigma^{-1/2}_{t-1,h}\|_2 \leq \sqrt{\lambda}$, and the last inequality is by \Cref{asm:linear-mdp}.

	For the upper bound on the value function, consider the following three cases. 
	\paragraph{Case 1: $\|\phi(s,a)\|_{\Sigma_{t-1,h}^{-1}} \leq \alpha_\@L$.} We apply \Cref{lem:one-step-decomp} and get
		\begin{align*}
		& |\phi(s,a)^\top\left(\barthetar{t}+\barthetap{t,h}\right)-Q^\star_h(s,a)| \\
		= &
		\left| \phi(s,a)^\top (\xir{t} + \xip{t,h} + \etar{t} + \etap{t,h} - \lambda_{t,h}) + \E_{s'}\left[ \-V_{t,h+1}(s') -V^\star_{h+1}(s') \middlegiven s,a\right] \right| \\ 
		\leq &
		\left| \phi(s,a)^\top (\xir{t} + \etar{t} ) \right| + \left| \phi(s,a)^\top (\xip{t,h} + \etap{t,h} - \lambda_{t,h}) \right| + \left| \E_{s'}\left[ \-V_{t,h+1}(s') -V^\star_{h+1}(s') \middlegiven s,a\right] \right| \\ 
		\leq & (\epsr{\xi} + \epsr{\eta})/\sqrt{\lambda} + \|\phi(s,a)\|_{\Sigma_{t-1,h}^{-1}}(\epsp{\xi}+\epsp{\eta}+\epsilon_{\lambda}) + \Big((H-h)(1 + \big(\epsr{\xi} + \epsr{\eta}) / \sqrt{\lambda}\big)\Big)\\
		\leq & (\epsr{\xi} + \epsr{\eta})/\sqrt{\lambda} + 1 + \Big((H-h)(1 + \big(\epsr{\xi} + \epsr{\eta}) / \sqrt{\lambda}\big)\Big) \\
		= &  (H-h+1)(1 + \big(\epsr{\xi} + \epsr{\eta}) / \sqrt{\lambda}\big)  \\
		= & V_{\max,h}
		\end{align*}
		where the second inequality is by Cauchy-Schwarz inequality, \Cref{lem:high-prob-noise,lem:reward-err}, and the induction hypothesis. The third inequality is by the condition that $\|\phi(s,a)\|_{\Sigma_{t-1,h}^{-1}} \leq \alpha_\@L$ and the definition of $\alpha_\@L$. 
		
		\paragraph{Case 2: $\|\phi(s,a)\|_{\Sigma_{t-1,h}^{-1}} > \alpha_\@U$.} We have
		\begin{align*}
			|\-Q_{t,h}(s,a) - Q^\star_h(s,a)\Big|
			\leq&|\phi(s,a)^\top\barthetar{t}+H-h-Q^\star_h(s,a)| \\
			=&|\phi(s,a)^\top(\xir{t}+\etar{t})+H-h+\phi(s,a)^\top\thetastarr-Q^\star_h(s,a)| \\
			\leq&(\epsr{\xi} + \epsr{\eta})/\sqrt{\lambda}+(H-h+1) \\
			\leq&V_{\max,h}
		\end{align*}
		where we used the fact that $0 \leq \phi(s,a)^\top\thetastarr\leq 1$ and $0 \leq Q^\star_h(s,a)\leq H-h+1$, which leads to $-(H-h+1)\leq \phi(s,a)^\top\thetastarr-Q^\star_h(s,a)\leq 1$.
		
		\paragraph{Case 3: $\alpha_\@L < \|\phi(s,a)\|_{\Sigma_{t-1,h}^{-1}} \leq \alpha_\@U$.}
		Denoting $\rho\coloneqq\rho(s,a)$ for simplicity, we have
		\begin{align*}
			& \Big|\-Q_{t,h}(s,a) - Q^\star_h(s,a)\Big| \\
			\leq &
			\rho\Big|\phi(s,a)^\top\left(\barthetar{t}+\barthetap{t,h}\right) - Q^\star_h(s,a)\Big|
			+(1-\rho)\Big|\phi(s,a)^\top\barthetar{t}+H-h - Q^\star_h(s,a)\Big|\\
			\leq & \rho (H-h+1)\Big(1+(\epsr{\xi} + \epsr{\eta}) / \sqrt{\lambda}\Big) + (1-\rho)\Big|\phi(s,a)^\top(\xir{t}+\etar{t})+H-h+\phi(s,a)^\top\thetastarr-Q^\star_h(s,a)\Big|\\
			\leq & \rho (H-h+1)\Big(1+(\epsr{\xi} + \epsr{\eta}) / \sqrt{\lambda}\Big) + (1-\rho)\Big((\epsr{\xi} + \epsr{\eta}) / \sqrt{\lambda}+(H-h+1)\Big) \\
			\leq & V_{\max,h}
		\end{align*}
		where we have used the similar arguments from Case 1 and 2. 
		
		Taking the maximum over the three cases, we conclude that
		\begin{align*}
			|\-Q_{t,h}(s,a) - Q^\star_h(s,a)| \leq V_{\max,h}
		\end{align*}
		which implies $\max_{s,a} |\-Q_{t,h}(s,a)| \leq V_{\max}$.
\end{proof}

\begin{lemma}[High probability bounds -- summary]\label{lem:high-prob-summary}
	All of the following events hold simultaneously with probability at least $1-\delta$:
	\begin{enumerate}

		\item $\|\xir{t}\|_{\Sigma_{t-1}} \leq  \epsr{\xi}$ for all $t\in[T]$
		
		\item $\|\xip{t,h}\|_{\Sigma_{t-1,h}} \leq \epsp{\xi}$ for all $t\in[T]$ and $h\in[H]$.
		
		\item $\|\etar{t}\|_{\Sigma_{t-1}}
		\leq \epsr{\eta}$ for all $t\in[T]$.

		\item $\|\etap{t,h}\|_{\Sigma_{t-1,h}} \leq \epsp{\eta}$ for all $t\in[T]$ and $h\in[H]$.
		
		\item $\|\lambda_{t,h}\|_{\Sigma_{t-1,h}} \leq \epsilon_\lambda$ for all $t\in[T]$ and $h\in[H]$.

		\item 
		$|\-Q_{t,h}(s,a)| \leq  V_{\max}$
		and $|\-V_{t,h}(s)|\leq V_{\max}$
		for all $(s,a)\in\+S\times\+A$, $t\in[T]$, and $h\in[H]$.

	\end{enumerate}
\end{lemma}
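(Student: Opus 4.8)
The plan is to recognize that this lemma is a pure bookkeeping consolidation: each of its six claims has already been established individually, so the only remaining work is to merge them into a single high-probability event by a union bound. Claims~1 and~2 are exactly the two conclusions of \Cref{lem:high-prob-noise}, claim~3 is \Cref{lem:reward-err}, and claims~4--6 are the three conclusions of \Cref{lem:value-range} together with its stated implication. The one structural feature to respect is the \emph{dependency} among these lemmas: \Cref{lem:high-prob-noise} and \Cref{lem:reward-err} hold unconditionally, whereas \Cref{lem:value-range} is stated \emph{conditionally} on the events of the first two lemmas.

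Concretely, I would let $\mathcal{E}_\xi$ denote the conjunction of the two noise bounds of \Cref{lem:high-prob-noise}, $\mathcal{E}_\eta$ the reward-estimation bound of \Cref{lem:reward-err}, and $\mathcal{E}_V$ the conjunction of the three conclusions (claims~4--6) of \Cref{lem:value-range}. Invoking each lemma with its free parameter $\delta$ replaced by $\delta/4$ yields $\Pr(\mathcal{E}_\xi^\complement)\le \delta/2$ (one $\delta/4$ for each of its two parts), $\Pr(\mathcal{E}_\eta^\complement)\le \delta/4$, and, crucially, the conditional guarantee $\Pr(\mathcal{E}_V^\complement \cap \mathcal{E}_\xi\cap\mathcal{E}_\eta)\le \delta/4$ coming from \Cref{lem:value-range} (since $\Pr(\mathcal{E}_V^\complement\cap\mathcal{E}_\xi\cap\mathcal{E}_\eta)\le \Pr(\mathcal{E}_V^\complement\mid \mathcal{E}_\xi\cap\mathcal{E}_\eta)$).

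The only non-routine step is converting these into a bound on the probability that some claim fails, and here the conditional coupling must be handled with care. The key set identity is
\[
\mathcal{E}_\xi^\complement \cup \mathcal{E}_\eta^\complement \cup \mathcal{E}_V^\complement
= \mathcal{E}_\xi^\complement \cup \mathcal{E}_\eta^\complement \cup \bigl(\mathcal{E}_V^\complement \cap \mathcal{E}_\xi \cap \mathcal{E}_\eta\bigr),
\]
which holds because any outcome in $\mathcal{E}_V^\complement$ not already in $\mathcal{E}_\xi^\complement\cup\mathcal{E}_\eta^\complement$ must lie in $\mathcal{E}_\xi\cap\mathcal{E}_\eta$; this rewriting isolates the failure of $\mathcal{E}_V$ exactly in the conditional form that \Cref{lem:value-range} controls. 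A union bound over the right-hand side then gives
\[
\Pr\bigl(\mathcal{E}_\xi^\complement \cup \mathcal{E}_\eta^\complement \cup \mathcal{E}_V^\complement\bigr)
\le \Pr(\mathcal{E}_\xi^\complement) + \Pr(\mathcal{E}_\eta^\complement) + \Pr\bigl(\mathcal{E}_V^\complement \cap \mathcal{E}_\xi \cap \mathcal{E}_\eta\bigr)
\le \tfrac{\delta}{2}+\tfrac{\delta}{4}+\tfrac{\delta}{4}=\delta,
\]
so $\mathcal{E}_\xi\cap\mathcal{E}_\eta\cap\mathcal{E}_V$ holds with probability at least $1-\delta$, which is precisely the six-way simultaneous guarantee. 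I would close by noting that the bound on $\-V_{t,h}$ in claim~6 is the stated consequence of the bound on $\-Q_{t,h}$ via $\-V_{t,h}(s)=\max_a \-Q_{t,h}(s,a)$, so it needs no additional probabilistic argument. I expect no genuine difficulty here; the only thing one must not overlook is the conditional coupling, which the displayed set identity resolves cleanly.
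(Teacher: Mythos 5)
Your proposal is correct and follows essentially the same route as the paper, which simply invokes \Cref{lem:high-prob-noise}, \Cref{lem:reward-err}, and \Cref{lem:value-range} and takes a union bound. Your additional care in splitting $\delta$ across the component lemmas and in isolating the conditional failure event of \Cref{lem:value-range} via the set identity $\mathcal{E}_\xi^\complement \cup \mathcal{E}_\eta^\complement \cup \mathcal{E}_V^\complement = \mathcal{E}_\xi^\complement \cup \mathcal{E}_\eta^\complement \cup (\mathcal{E}_V^\complement \cap \mathcal{E}_\xi \cap \mathcal{E}_\eta)$ makes rigorous exactly what the paper's terse proof leaves implicit.
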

\begin{proof}[Proof of \Cref{lem:high-prob-summary}]
	The first and second statements are by \Cref{lem:high-prob-noise}. The third is by \Cref{lem:reward-err}. The last three are by \Cref{lem:value-range}. 
	Then we take a union bound over all of them.
\end{proof}

\subsection{Bounding regret}

We define
\begin{align*}
	\~V_t = \E_{\tau\sim\pi_t^1}\left[\sum_{h=1}^H\phi(s_h,a_h)^\top\barthetar{t}\right].
\end{align*}

\begin{lemma}\label{lem:bound11}
	Assume all events listed in \Cref{lem:high-prob-summary} hold. Fix $t\in[T]$. Then, we have $(V^\star - \-V_t) + (\~V - V^{\pi_t^1})\leq 0$ with probability at least $\cdf^2(-1)$ where $\cdf$ denotes the CDF of a standard normal distribution (i.e., $\+N(0,1)$).
\end{lemma}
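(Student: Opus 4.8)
The plan is to split the target into $A := V^\star - \-V_t$ and $B := \~V_t - V^{\pi_t^1}$ and to force each to be controlled by a single ``favorable noise'' event, one for the reward noise $\xir{t}$ and one for the transition noise $\{\xip{t,h}\}_h$. First I would rewrite $B$ in closed form. Since $\barthetar{t}=\thetastarr+\etar{t}+\xir{t}$ and $V^{\pi_t^1}=\E_{\tau\sim\pi_t^1}[\phi(\tau)]^\top\thetastarr$, writing $\psi:=\E_{\tau\sim\pi_t^1}[\phi(\tau)]$ gives $B=\psi^\top(\etar{t}+\xir{t})$. The key structural point (difference~(5)) is that $\pi_t^1=\pi_{t-1}^0$ is measurable with respect to the history before round $t$, so both $\psi$ and $\etar{t}$ are independent of the round-$t$ noise $\xir{t}$.

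For $A$, I would telescope the one-step decomposition (\Cref{lem:one-step-decomp}) along the optimal policy $\pi^\star$ under the true transition, using the greedy property $\-V_{t,h}(s)\geq\-Q_{t,h}(s,\pi^\star_h(s))$. In the regime where every visited $(s,a)$ lies in Case~L (so $\-Q_{t,h}=\phi^\top(\barthetar{t}+\barthetap{t,h})$ and the decomposition applies verbatim), this yields, with $\psi^\star_h:=\E_{\pi^\star}[\phi(s_h,a_h)]$ and $\psi^\star:=\sum_h\psi^\star_h$,
\begin{align*}
	\-V_t-V^\star
	\geq
	{\psi^\star}^\top(\etar{t}+\xir{t})
	+\sum_{h=1}^H {\psi^\star_h}^\top\big(\xip{t,h}+\etap{t,h}-\lambda_{t,h}\big).
\end{align*}
Combining with the expression for $B$ gives $A+B\leq R-V$, where the reward part is $R:=(\psi-\psi^\star)^\top(\etar{t}+\xir{t})$ and the value part is $V:=\sum_h{\psi^\star_h}^\top(\xip{t,h}+\etap{t,h}-\lambda_{t,h})$. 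It then suffices to force $R\leq 0$ and $V\geq 0$.

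Next I would invoke anti-concentration separately on the two independent noises. For $R\leq 0$: conditioned on the history, $(\psi-\psi^\star)^\top\xir{t}\sim\+N\big(0,\sigmar^2\|\psi-\psi^\star\|_{\Sigma_{t-1}^{-1}}^2\big)$, while \Cref{lem:high-prob-summary} gives $|(\psi-\psi^\star)^\top\etar{t}|\leq\|\psi-\psi^\star\|_{\Sigma_{t-1}^{-1}}\epsr{\eta}$; since $\sigmar=\epsr{\eta}$, the event that this Gaussian falls below $-1$ standard deviation (probability exactly $\cdf(-1)$) forces $R\leq0$. For $V\geq0$: the terms $\{{\psi^\star_h}^\top\xip{t,h}\}_h$ are independent centered Gaussians, so $\sum_h{\psi^\star_h}^\top\xip{t,h}\sim\+N(0,S^2)$ with $S=\sigmap\big(\sum_h\|\psi^\star_h\|_{\Sigma_{t-1,h}^{-1}}^2\big)^{1/2}$, whereas the bias is at most $\sum_h\|\psi^\star_h\|_{\Sigma_{t-1,h}^{-1}}(\epsp{\eta}+\epsilon_\lambda)$; using $\sigmap=(\epsp{\eta}+\epsilon_\lambda)\sqrt H$ together with Cauchy--Schwarz, $\sqrt{H}\,\big(\sum_h\|\psi^\star_h\|_{\Sigma_{t-1,h}^{-1}}^2\big)^{1/2}\geq\sum_h\|\psi^\star_h\|_{\Sigma_{t-1,h}^{-1}}$, shows $S$ dominates the bias, so the event that the Gaussian exceeds $+1$ standard deviation (again probability $\cdf(-1)$) forces $V\geq0$. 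Because $\xir{t}$ and $\{\xip{t,h}\}$ are drawn independently, the two favorable events are independent and co-occur with probability at least $\cdf^2(-1)$.

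The main obstacle is the truncation in $\omega_{t,h}$: the clean telescoping above assumed Case~L at every step, but along $\pi^\star$ some $(s,a)$ may fall in Case~M or Case~U, where $\-Q_{t,h}$ is an interpolation toward, or equal to, the cap $H-h$ and \Cref{lem:one-step-decomp} no longer applies verbatim. I would handle this following \citet{zanette2020frequentist}: since $H-h\geq\E_{s'}[V^\star_{h+1}(s')\mid s,a]$, the cap contributes a nonnegative buffer, so the recursion can be stopped at the first truncated step without weakening the lower bound on $\-V_t-V^\star$; then $R$ and $V$ are replaced by sums over the (history-determined) untruncated prefix and the same two events still apply. A secondary subtlety is that some assumed quantities (e.g.\ $\etap{t,h}$) depend on the current noise; this is resolved by defining the two favorable events through history-measurable thresholds, namely $-\sigmar\|\psi-\psi^\star\|_{\Sigma_{t-1}^{-1}}$ and $S$, which keeps their probabilities exactly $\cdf(-1)$ and their independence intact, while \Cref{lem:high-prob-summary} supplies the bias bounds needed to conclude $A+B\leq0$.
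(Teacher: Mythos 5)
Your proposal follows essentially the same route as the paper's proof: the same closed form $\~V_t - V^{\pi_t^1}=\psi^\top(\etar{t}+\xir{t})$ exploiting that $\pi_t^1=\pi_{t-1}^0$ is history-measurable, the same telescoping of \Cref{lem:one-step-decomp} along $\pi^\star$ via the greedy property, the same two one-standard-deviation Gaussian anti-concentration events with exactly the paper's parameter choices $\sigmar=\epsr{\eta}$ and $\sigmap=(\epsp{\eta}+\epsilon_\lambda)\sqrt{H}$, and the same independence argument between $\xir{t}$ and $\{\xip{t,h}\}_h$ yielding $\cdf^2(-1)$. In the all-Case-L regime your derivation is correct and matches the paper line by line.

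The one step that would fail as literally written is your truncation bookkeeping. At a Case-U step the cap does \emph{not} let you discard the step: from $\-Q_{t,h}(s,a)=\phi(s,a)^\top\barthetar{t}+(H-h)$ and $H-h\geq\E_{s'}[V^\star_{h+1}(s')\given s,a]$ one gets only $Q^\star_h(s,a)-\-Q_{t,h}(s,a)\leq-\phi(s,a)^\top(\etar{t}+\xir{t})$, i.e.\ the recursion terminates but leaves a reward-noise residual whose sign is not controlled. Your prescription that ``$R$ and $V$ are replaced by sums over the untruncated prefix'' drops exactly this residual, and then $A+B\leq R-V$ is false in general: the favorable event you construct controls $(\psi-\psi^\star)^\top\xir{t}$ for the prefix direction $\psi^\star$, but says nothing about $\phi(s_h,a_h)^\top\xir{t}$ at the truncated step. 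The fix --- which is what the paper does --- is to keep the first truncated step's feature inside the reward direction: the paper's $\phistarr$ weights $\phi(s_h,a_h)$ by (reach step $h$ untruncated, then truncate at $h$) \emph{plus} (stay untruncated through $h$), so the residual is absorbed into the same Gaussian event. A second, related point: Case M does not admit a stopping-time formulation at all, since there the recursion continues with weight $\rho(s,a)$ and terminates with weight $1-\rho(s,a)$; the paper accordingly uses the fractional weights $\*{LM}^\star_{t,h}$ and $\*{UM}^\star_{t,h}$ (products of $\rho$'s and $1-\rho$'s) rather than an indicator of a first truncated step. These weights remain history-measurable, so once $\phistarr$ and $\phistarp{h}$ are defined this way, your anti-concentration and independence arguments go through verbatim and the proof closes.
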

\begin{proof}
Define the indicators
\begin{align*}
	\*L^\star_{t,h}(s) \coloneqq & \indic\left\{
		\|\phi(s,\pi^\star(s))\|_{\Sigma_{t-1,h}^{-1}} \leq \alpha_\@L
	\right\}, \\	
	\*M^\star_{t,h}(s) \coloneqq & \indic\left\{
		\alpha_\@L < \|\phi(s,\pi^\star(s))\|_{\Sigma_{t-1,h}^{-1}} \leq \alpha_\@U
	\right\}, \\
	\*U^\star_{t,h}(s) \coloneqq & \indic\left\{
		\|\phi(s,\pi^\star(s))\|_{\Sigma_{t-1,h}^{-1}} > \alpha_\@U
	\right\}.
\end{align*}
We note that they are independent of the random noises at episode $t$ and only depends on the information up to episode $t-1$. Then, we can decompose $V^\star - \-V_t$:
	\begin{align*}
		V^\star - \-V_t 
		= & \E_{s_1} \left[ V^\star_1(s_1) - \-V_{t,1}(s_1) \right] 
		= \E_{s_1} \left[ Q^\star_1(s_1, \pi^\star(s_1)) - \-Q_{t,1}(s_1,\pi_{t,1}^0(s_1)) \right] \\
		\leq & \E_{s_1} \left[ Q^\star_1(s_1, \pi^\star(s_1)) - \-Q_{t,1}(s_1,\pi^\star(s_1)) \right] \\
		= & \E_{s_1} \left[\*L^\star_{t,1}(s_1) \left( Q^\star_1(s_1, \pi^\star(s_1)) - \-Q_{t,1}(s_1,\pi^\star(s_1)) \right) \right] \\
			& + \E_{s_1} \left[ \*M^{\star}_{t,1}(s_1) \left( Q^\star_1(s_1, \pi^\star(s_1)) - \-Q_{t,1}(s_1,\pi^\star(s_1)) \right) \right] \\
			& + \E_{s_1} \left[ \*U^{\star}_{t,1}(s_1) \left( Q^\star_1(s_1, \pi^\star(s_1)) - \-Q_{t,1}(s_1,\pi^\star(s_1)) \right) \right] \\
			=: & \!T_\@L + \!T_\@M, + \!T_\@U.
\end{align*}
Now we establish upper bounds for each term. For $\!T_\@U$, we have
\begin{align*}
	& \E_{s_1} \left[ \*U^{\star}_{t,1}(s_1) \left( Q^\star_1(s_1, \pi^\star(s_1)) - \-Q_{t,1}(s_1,\pi^\star(s_1)) \right) \right] \\
	= & \E_{s_1} \left[ \*U^{\star}_{t,1}(s_1) \left( Q^\star_1(s_1, \pi^\star(s_1)) - \phi(s_1,\pi^\star(s_1))^\top\barthetar{t}  - (H-1)\right) \right] \\
	= & \E_{s_1} \left[ \*U^{\star}_{t,1}(s_1) \left( Q^\star_1(s_1, \pi^\star(s_1)) - \phi(s_1,\pi^\star(s_1))^\top\thetastarr - \phi(s_1,\pi^\star(s_1))^\top(\etar{t} + \xir{t}) - (H-1)\right) \right] \\
	\leq & \E_{s_1} \left[  \left(-\*U^{\star}_{t,1}(s_1) \phi(s_1,\pi^\star(s_1)) \right) ^\top(\etar{t} + \xir{t})\right]
\end{align*}
where the inequality holds since
\begin{align*}
	& Q^\star_1(s_1, \pi^\star(s_1)) - \phi(s_1,\pi^\star(s_1))^\top\thetastarr - (H-1)  \\
	= & \phi(s_1,\pi^\star(s_1))^\top\thetastarr + \E_{s_2}[V^\star_2(s_2) \given s_1,\pi^\star(s_1)] - \phi(s_1,\pi^\star(s_1))^\top\thetastarr - (H-1)  \leq 0.
\end{align*}
For $\!T_\@L$, we have
\begin{align*}
	& \E_{s_1} \left[\*L^\star_{t,1}(s_1) \left( Q^\star_1(s_1, \pi^\star(s_1)) - \-Q_{t,1}(s_1,\pi^\star(s_1)) \right) \right] \\
	= & \E_{s_1} \left[ \*L^{\star}_{t,1}(s_1) \left( Q^\star_1(s_1, \pi^\star(s_1)) - \phi(s_1,\pi^\star(s_1))^\top(\barthetar{t}+\barthetap{t,h}) \right) \right] \\
	= & \E_{s_1} \left[ \*L^{\star}_{t,1}(s_1) \left(-\phi(s_1,\pi^\star(s_1))^\top (\xir{t}+\xip{t,1}+\etar{t}+\etap{t,1} - \lambda_{t,1}) + \E_{s_2} \left[ V^\star_2(s_2) - \-V_{t,2}(s') \middlegiven s_1,\pi^\star(s_1)\right] \right) \right]
\end{align*}
where the last equality is by \cref{lem:one-step-decomp}.

For $\!T_\@M$, we have
\begin{align*}
	&\E_{s_1} \left[\*M^\star_{t,1}(s_1) \left( Q^\star_1(s_1, \pi^\star(s_1)) - \-Q_{t,1}(s_1,\pi^\star(s_1)) \right) \right]\\
	= & \E_{s_1} \bigg[\*M^\star_{t,1}(s_1) \bigg(\rho(s_1,\pi^\star(s_1)) \Big( Q^\star_1(s_1, \pi^\star(s_1)) - \phi(s_1,\pi^\star(s_1))^\top (\barthetar{t} + \barthetap{t,1}) \Big)\\
	&\qquad\qquad\qquad\qquad + \big(1-\rho(s_1,\pi^\star(s_1))\big)\Big( Q^\star_1(s_1, \pi^\star(s_1)) - \phi(s_1,\pi^\star(s_1))^\top\barthetar{t} - (H-h) \Big)\bigg) \bigg] \\
	\leq & \E_{s_1} \bigg[\*M^\star_{t,1}(s_1) \bigg(\rho(s_1,\pi^\star(s_1))  \bigg(-\phi(s_1,\pi^\star(s_1))^\top (\xir{t}+\xip{t,1}+\etar{t}+\etap{t,1} - \lambda_{t,1}) \\
	& + \E_{s_2} \left[ V^\star_2(s_2) - \-V_{t,2}(s') \middlegiven s_1,\pi^\star(s_1) \right] \bigg)  + \big(1-\rho(s_1,\pi^\star(s_1))\big)\left(-\phi(s_1,\pi^\star(s_1)) \right) ^\top(\etar{t} + \xir{t})\bigg) \bigg] 
\end{align*}
where the inequality is by similar arguments as in the case of $\!T_\@U$ and $\!T_\@L$.

So putting $\!T_\@L,\!T_\@M,\!T_\@U$ together, we have
\begin{align*}
	&V^\star-\-V_t\\
	\leq &\E_{s_1}\bigg[\underbrace{\Big(\*U^{\star}_{t,1}(s_1) + \*M^\star_{t,1}(s_1)(1-\rho(s_1,\pi^\star(s_1))) \Big)}_{=:\*{UM}^\star_{t,1}(s_1)}(-\phi(s_1,\pi^\star(s_1)) )^\top (\etar{t}+\xir{t})\\
	&\quad+ \underbrace{\Big(\*L^{\star}_{t,1}(s_1)+\*M^\star_{t,1}(s_1) \rho(s_1,\pi^\star(s_1))\Big)}_{=:\*{LM}^\star_{t,1}(s_1)} \Big(-\phi(s_1,\pi^\star(s_1))^\top (\xir{t}+\xip{t,1}+\etar{t}+\etap{t,1} - \lambda_{t,1}) \\
	& \qquad\qquad\qquad\qquad\qquad\qquad\qquad\qquad\qquad\qquad\qquad + \E_{s_2} \left[ V^\star_2(s_2) - \-V_{t,2}(s') \middlegiven s_1,\pi^\star(s_1) \right] \Big) 
	\bigg] 
\end{align*}
Keeping expanding the last term, we arrive at
	\begin{align*}
	V^\star - \-V_t 
	\leq & \E_{\tau\sim\pi^\star}\Bigg[ 
	\sum_{h=1}^H \left(\*{UM}^\star_{t,h}(s_h)\prod_{i=1}^{h-1} \*{LM}^\star_{t,i}(s_i) \right)  \left(-\phi(s_h,a_h)^\top (\xir{t}+\etar{t})\right) \\
	&\qquad +
	\sum_{h=1}^H \left(\prod_{i=1}^h \*{LM}^\star_{t,i}(s_i) \right) \left(-\phi(s_h,a_h)^\top (\xir{t}+\xip{t,h}+\etar{t} + \etap{t,h} - \lambda_{t,h})\right)
	\Bigg] \\
	= & \E_{\tau\sim\pi^\star}\Bigg[ 
	\sum_{h=1}^H \left(\*{UM}^\star_{t,h}(s_h)\prod_{i=1}^{h-1} \*{LM}^\star_{t,i}(s_i) + \prod_{i=1}^h \*{LM}^\star_{t,i}(s_i) \right)  \left(-\phi(s_h,a_h)^\top (\xir{t}+\etar{t})\right) \\
	&\qquad +
	\sum_{h=1}^H \left(\prod_{i=1}^h \*{LM}^\star_{t,i}(s_i) \right) \left(-\phi(s_h,a_h)^\top (\xip{t,h} + \etap{t,h} - \lambda_{t,h})\right)
	\Bigg] \\
	=: & (-\phistarr)^\top(\xir{t} + \etar{t}) + \sum_{h=1}^H (-\phistarp{h})^\top (\xip{t,h} + \etap{t,h} - \lambda_{t,h})
	\end{align*}
where in the last step we defined $\phistarr$ and $\phistarp{h}$ as 
\begin{align*}
	\phistarr \coloneqq & \E_{\tau\sim\pi^\star}\left[\sum_{h=1}^H \left(\*{UM}^\star_{t,h}(s_h)\prod_{i=1}^{h-1} \*{LM}^\star_{t,i}(s_i) + \prod_{i=1}^h \*{LM}^\star_{t,i}(s_i) \right) \phi(s_h,a_h)\right], \\
	\phistarp{h} \coloneqq & \E_{\tau\sim\pi^\star}\left[\left(\prod_{i=1}^{h} \*{LM}^\star_{t,i}(s_i) \right) \phi(s_{h},a_{h})\right].
\end{align*}
Now we also decompose $\~V_t - V^{\pi_t^1}$ and get
	\begin{align*}
		\~V_t - V^{\pi_t^1}
		=  &\E_{\tau\sim\pi_t^1}\left[\sum_{h=1}^H\phi(s_h,a_h)\right]^\top \left(\barthetar{t}-\thetastarr\right)\\
		=  &\E_{\tau\sim\pi_t^1}\left[\sum_{h=1}^H\phi(s_h,a_h)\right]^\top \left(\xir{t} + \etar{t}\right)\\
		=: & (\~\phi_t)^\top\left(\xir{t} + \etar{t}\right).
	\end{align*}
Combining the decomposition together, we obtain
	\begin{align*}
& (V^\star - \-V_t) + (\~V - V^{\pi_t^1}) \\
\leq & \left((-\phistarr)^\top (\xir{t}+\etar{t}) + \sum_{h=1}^H (-\phistarp{h})^\top (\xip{t,h}+\etap{t,h}-\lambda_{t,h})\right) + \~\phi_t^\top (\xir{t}+\etar{t}) \\
= & \left(\~\phi_t-\phi^\star\right)^\top \left(\xir{t} + \etar{t}\right) + \sum_{h=1}^H (-\phistarp{h})^\top \left( \xip{t,h} + \etap{t,h} -\lambda_{t,h} \right) \\
\leq &
		\|\~\phi_t-\phi^\star\|_{\Sigma^{-1}_{t-1}} \|\etar{t}\|_{\Sigma_{t-1}} + (\~\phi_t-\phi^\star)^\top \xir{t} \\
		& + \sum_{h=1}^H \|\phistarp{h}\|_{\Sigma^{-1}_{t-1,h}} \left(\|\etap{t,h}\|_{\Sigma_{t-1,h}}+\|\lambda_{t,h}\|_{\Sigma_{t-1,h}}\right) - \sum_{h=1}^H {\phistarp{h}}^\top \xip{t,h} \\
\leq &
		\|\~\phi_t-\phi^\star\|_{\Sigma^{-1}_{t-1}} \epsr{\eta} - (\~\phi_t-\phi^\star)^\top \xir{t}
		+
		\sqrt{\sum_{h=1}^H \|\phistarp{h}\|^2_{\Sigma^{-1}_{t-1,h}}} \cdot \sqrt{H(\epsp{\eta}+\epsilon_\lambda)^2} - \sum_{h=1}^H {\phistarp{h}}^\top \xip{t,h} \\
= &
		\underbrace{\|\~\phi_t-\phi^\star\|_{\Sigma^{-1}_{t-1}} \epsr{\eta} - (\~\phi_t-\phi^\star)^\top \xir{t}}_{(a)} \\
		& + \underbrace{\sqrt{\sum_{h=1}^H \min \left\{ 1, \|\phistarp{h}\|^2_{\Sigma^{-1}_{t-1,h}} \right\} } \cdot \sqrt{H(\epsp{\eta}+\epsilon_\lambda)^2} - \sum_{h=1}^H {\phistarp{h}}^\top \xip{t,h}}_{(b)}
	\end{align*}
where the second inequality is by Cauchy-Schwarz inequality, the third inequality is \Cref{lem:high-prob-summary} and Cauchy-Schwarz inequality again, and the last step is by the fact that 
$
	\|\phistarp{h}\|^2_{\Sigma^{-1}_{t-1,h}}
		\leq 1/\lambda \leq 1.
$

\paragraph{For (a),}we note that $(\~\phi_t-\phi^\star)^\top\xir{t}\sim\+N(0,\sigmar^2\|\~\phi_t-\phi^\star\|^2_{\Sigma^{-1}_{t-1}})$. So by setting $\sigmar\geq\epsr{\eta}$ , we have $(a)\leq0$ with probability at least $\cdf (-1)$.
	
\paragraph{For (b),}similarly, we note that $\sum_{h=1}^H {\phistarp{h}}^\top \xip{t,h} \sim \+N(0,\sigmap^2 \sum_{h=1}^H \|\phistarp{h}\|^2_{\Sigma_{t-1,h}^{-1}})$. So by setting $\sigmap\geq \sqrt{H(\epsp{\eta}+\epsilon_\lambda)^2}$, we have $(b)\leq0$ with probability at least $\cdf (-1)$.
	
\paragraph{Conclusion.}
Finally, we note that (a) and (b) are independent, so the probability that both (a) and (b) hold is at least $\cdf^2(-1)$.
\end{proof}

\begin{lemma}\label{lem:clip-times}
	It holds that
	\begin{align*}
		\sum_{t=1}^T\sum_{h=1}^H \*L^\complement_{t,h}(s_{t,h}) \leq L_{\max}.
	\end{align*}
\end{lemma}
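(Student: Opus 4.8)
The plan is to run the standard elliptical-potential (log-determinant) argument separately for each step $h$ and then sum over $h\in[H]$. Fix $h\in[H]$ and abbreviate $\phi_{t,h}:=\phi(s_{t,h},\pi_{t,h}(s_{t,h}))$, which is exactly the feature appearing both in the indicator $\*L^\complement_{t,h}(s_{t,h})=\indic\{\|\phi_{t,h}\|_{\Sigma_{t-1,h}^{-1}}>\alpha_\@L\}$ and (since $\tau_t^0$ is generated by the greedy policy $\pi_t^0$, so $a_{t,h}^0=\pi_{t,h}(s_{t,h})$) in the covariance update. The key structural fact is that $\Sigma_{t,h}$ is refreshed on \emph{every} round via $\Sigma_{t,h}=\Sigma_{t-1,h}+\phi_{t,h}\phi_{t,h}^\top$, regardless of whether a query is issued, starting from $\Sigma_{0,h}=\lambda I$. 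By the matrix determinant lemma, $\det\Sigma_{t,h}=\det\Sigma_{t-1,h}\,(1+\|\phi_{t,h}\|_{\Sigma_{t-1,h}^{-1}}^2)$, so telescoping gives $\sum_{t=1}^T\log(1+\|\phi_{t,h}\|_{\Sigma_{t-1,h}^{-1}}^2)=\log(\det\Sigma_{T,h}/\det\Sigma_{0,h})$.

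Next I would bound the right-hand side. Since $\|\phi(s,a)\|_2\le1$, the trace obeys $\mathrm{tr}(\Sigma_{T,h})\le\lambda d+T$, and AM--GM on the eigenvalues yields $\det\Sigma_{T,h}\le(\lambda+T/d)^d$; combined with $\det\Sigma_{0,h}=\lambda^d$ this gives $\sum_{t=1}^T\log(1+\|\phi_{t,h}\|_{\Sigma_{t-1,h}^{-1}}^2)\le d\log\frac{\lambda+T}{\lambda}$. Because $\lambda=1$ and $\|\phi_{t,h}\|_2\le1$, we have $\|\phi_{t,h}\|_{\Sigma_{t-1,h}^{-1}}^2\le\|\phi_{t,h}\|_2^2/\lambda\le1$, so the elementary inequality $\log(1+x)\ge x/2$ (valid for $x\in[0,1]$, since $f(x)=\log(1+x)-x/2$ satisfies $f(0)=0$ and $f'(x)\ge0$ there) applies to every summand, upgrading the estimate to $\sum_{t=1}^T\|\phi_{t,h}\|_{\Sigma_{t-1,h}^{-1}}^2\le 2d\log\frac{\lambda+T}{\lambda}$.

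Finally I would convert this into a count. On each round with $\*L^\complement_{t,h}(s_{t,h})=1$ we have, by definition, $\|\phi_{t,h}\|_{\Sigma_{t-1,h}^{-1}}>\alpha_\@L$, hence $\|\phi_{t,h}\|_{\Sigma_{t-1,h}^{-1}}^2>\alpha_\@L^2$. Therefore $\alpha_\@L^2\sum_{t=1}^T\*L^\complement_{t,h}(s_{t,h})\le\sum_{t=1}^T\|\phi_{t,h}\|_{\Sigma_{t-1,h}^{-1}}^2\le 2d\log\frac{\lambda+T}{\lambda}$, so $\sum_{t=1}^T\*L^\complement_{t,h}(s_{t,h})\le\frac{2d}{\alpha_\@L^2}\log\frac{\lambda+T}{\lambda}$. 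Summing this over the $H$ steps yields precisely $L_{\max}=\frac{2dH}{\alpha_\@L^2}\log\frac{\lambda+T}{\lambda}$, completing the argument.

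There is no deep obstacle: this is essentially a textbook potential argument. The only points needing care are (i) confirming that $\Sigma_{t,h}$ accumulates on \emph{every} round, so the telescoping determinant identity holds over the full horizon $t=1,\dots,T$ (rather than only over query rounds, as the trajectory-wise matrix $\Sigma_t$ does), and (ii) verifying the uniform bound $\|\phi_{t,h}\|_{\Sigma_{t-1,h}^{-1}}^2\le1$, which lets me dispense with the usual $\min\{1,\cdot\}$ form of the potential lemma and invoke the clean inequality $\log(1+x)\ge x/2$ on $[0,1]$; this step relies on the normalization $\lambda=1$ together with $\|\phi\|_2\le1$.
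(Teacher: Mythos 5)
Your proof is correct, and it reaches the same bound through a slightly different route than the paper. The paper's proof bounds the indicator by $\min\{\|\phi\|^2_{\Sigma_{t-1,h}^{-1}}/\alpha_\@L^2,\,1\}$, pulls $1/\alpha_\@L^2$ out of the minimum (this step is where the hypothesis $\alpha_\@L<1$ is used), and then invokes the elliptical potential lemma (\Cref{lem:ellip}) as a black box on $\sum_t \min\{1,\|\phi\|^2_{\Sigma_{t-1,h}^{-1}}\}$. You instead dispense with the clipping entirely: you observe that $\lambda=1$ and $\|\phi\|_2\le 1$ force every summand $\|\phi_{t,h}\|^2_{\Sigma_{t-1,h}^{-1}}\le 1$, and then you re-derive the potential bound inline via the determinant telescoping identity together with $\log(1+x)\ge x/2$ on $[0,1]$. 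The trade-off is mildly interesting: your argument needs $\lambda\ge 1$ but places no condition on $\alpha_\@L$, whereas the paper's needs $\alpha_\@L\le 1$ but works for any $\lambda>0$; both hypotheses hold in the paper's parameter regime (\Cref{sample-table}), so both proofs are valid. Your version is self-contained where the paper's defers to \citet{abbasi2011improved}, at the cost of a few extra lines; you also correctly flag the one structural point that actually matters here, namely that $\Sigma_{t,h}$ is updated every round (unlike the trajectory-wise matrix $\Sigma_t$, which only accumulates on query rounds), so the telescoping runs over all $t\in[T]$.
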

\begin{proof}
Denote $a_{t,h}=\pi_t(s_{t,h})$. Then, we have
\begin{align*}
&\sum_{t=1}^T\sum_{h=1}^H\*L^\complement_{t,h}(s_{t,h})
=\sum_{t=1}^T\sum_{h=1}^H\indic\left\{\|\phi(s_{t,h},a_{t,h})\|^2_{\Sigma_{t-1,h}^{-1}} > \alpha^2_L\right\}\\
\leq&\sum_{t=1}^T\sum_{h=1}^H\min\left\{\frac{\|\phi(s_{t,h},a_{t,h})\|^2_{\Sigma_{t-1,h}^{-1}}}{\alpha_L^2},1\right\} \\
\leq&\frac{1}{\alpha_L^2} \sum_{h=1}^H \sum_{t=1}^T \min\left\{\|\phi(s_{t,h},a_{t,h})\|^2_{\Sigma_{t-1,h}^{-1}},1\right\}\\
\leq & \frac{H}{\alpha_L^2} \cdot 2d\log\left(\frac{\lambda+T}{\lambda}\right)
= L_{\max}
\end{align*}
where the second inequality uses the fact that $\alpha_\@L < 1$, and the last inequality is by \Cref{lem:ellip}.
\end{proof}

\begin{lemma}\label{lem:bound22}
Assume all events listed in \Cref{lem:high-prob-summary} hold. Then, the following holds
\begin{align*}
	\left|
		\sum_t \Big((\-V_t - V^{\pi_t^0}) - (\~V_t - V^{\pi_t^1})\Big)
	\right|
	= &\~O\bigg(
		V_{\max} L_{\max} + V_{\max}\sqrt{HT} \\
		&\qquad + (\epsr{\xi} + \epsr{\eta}) \left(\sqrt{dT} + L_{\max}+\sqrt{HT} + \frac{\epsilon T \sqrt{\pi} }{2\sigmar} \right) \\
		&\qquad + (\epsp{\xi} + \epsp{\eta} + \epsilon_\lambda) \left(H\sqrt{dT} + L_{\max}+\sqrt{HT} \right).
	\bigg)
\end{align*}
\end{lemma}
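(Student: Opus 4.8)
The plan is to expand $\-V_t - V^{\pi_t^0}$ through a performance-difference (telescoping) identity along trajectories drawn from the greedy policy $\pi_t^0$. Since $\pi_t^0$ is exactly greedy with respect to $\-Q_{t,h}$, we have $\-V_{t,h}(s)=\-Q_{t,h}(s,\pi_{t,h}^0(s))$, so the telescoping is an exact equality
\[
\-V_t - V^{\pi_t^0} = \E_{\tau\sim\pi_t^0}\Big[\textstyle\sum_{h=1}^H \delta_{t,h}(s_h,a_h)\Big],
\]
where $\delta_{t,h}$ is the Bellman residual of $\-Q_{t,h}$. Applying \Cref{lem:one-step-decomp} together with the three truncation regimes in the definition of $\omega_{t,h}$ (Case L/M/U), I would show that in every case $\delta_{t,h}(s,a)=\phi(s,a)^\top(\xir{t}+\etar{t})+\rho_{t,h}(s,a)\,\phi(s,a)^\top(\xip{t,h}+\etap{t,h}-\lambda_{t,h})+(1-\rho_{t,h}(s,a))\big[(H-h)-\E_{s'}[\-V_{t,h+1}(s')\mid s,a]\big]$, where $\rho_{t,h}\in[0,1]$ equals $1$ in Case L, the interpolation weight in Case M, and $0$ in Case U. The crucial observations are that the reward error $\phi^\top(\xir{t}+\etar{t})$ carries weight one in all three cases, and that the last (truncation-bias) term is supported on the event $\*L^\complement_{t,h}$. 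In parallel, $\~V_t-V^{\pi_t^1}$ decomposes trivially since it contains no transition error: $\~V_t - V^{\pi_t^1}=\E_{\tau\sim\pi_t^1}[\phi(\tau)]^\top(\xir{t}+\etar{t})$.

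Subtracting, the reward contributions combine into $(\E_{\pi_t^0}[\phi(\tau)]-\E_{\pi_t^1}[\phi(\tau)])^\top(\xir{t}+\etar{t})$, and we are left with a reward group, a transition group $\sum_h(\E_{\pi_t^0}[\rho_{t,h}\phi(s_h,a_h)])^\top(\xip{t,h}+\etap{t,h}-\lambda_{t,h})$, and a truncation-bias group. I would then replace every population expectation $\E_{\pi_t^0}[\cdot]$ and $\E_{\pi_t^1}[\cdot]$ by the corresponding sampled trajectory quantity ($\phi(\tau_t^0)$, $\phi(\tau_t^1)$, $\rho_{t,h}\phi(s_{t,h}^0,a_{t,h}^0)$, etc.). Because the Gaussian noise, hence the policies $\pi_t^0,\pi_t^1$ and the vectors $\xir{t}+\etar{t}$ and $\xip{t,h}+\etap{t,h}-\lambda_{t,h}$, are all measurable before the trajectories are drawn, each replacement error is a conditional-mean-zero martingale difference; summing over the $HT$ steps and invoking \Cref{lem:freedman} with per-increment magnitudes controlled by Cauchy--Schwarz and \Cref{lem:high-prob-summary} produces the three $\sqrt{HT}$ terms, scaled respectively by $V_{\max}$, $(\epsr{\xi}+\epsr{\eta})$, and $(\epsp{\xi}+\epsp{\eta}+\epsilon_\lambda)$.

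It then remains to bound the sampled main terms. For the reward term $\sum_t(\phi(\tau_t^0)-\phi(\tau_t^1))^\top(\xir{t}+\etar{t})$, Cauchy--Schwarz and \Cref{lem:high-prob-summary} extract a factor $(\epsr{\xi}+\epsr{\eta})\sum_t\|\phi(\tau_t^0)-\phi(\tau_t^1)\|_{\Sigma_{t-1}^{-1}}$, and I would split according to $Z_t$: on queried rounds $\Sigma_t$ is updated by the rank-one term $(\phi(\tau_t^0)-\phi(\tau_t^1))(\phi(\tau_t^0)-\phi(\tau_t^1))^\top$, so the elliptical-potential lemma (\Cref{lem:ellip}) plus Cauchy--Schwarz yields $\~O(\sqrt{dT})$, while on non-queried rounds \Cref{lem:no-query} bounds each norm by $\epsilon\sqrt{\pi}/(2\sigmar)$, contributing $\epsilon T\sqrt{\pi}/(2\sigmar)$. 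For the transition term, using $\rho_{t,h}\le 1$, Cauchy--Schwarz together with the per-level elliptical-potential bound on $\sum_t\|\phi(s_{t,h}^0,a_{t,h}^0)\|_{\Sigma_{t-1,h}^{-1}}$, summed over the $H$ levels, gives $(\epsp{\xi}+\epsp{\eta}+\epsilon_\lambda)\,H\sqrt{dT}$. Finally, the truncation-bias term is nonzero only when $\*L^\complement_{t,h}=1$ and each contribution is $O(V_{\max})$, so \Cref{lem:clip-times} caps its total by $\~O(V_{\max}L_{\max})$; since $V_{\max}$ itself scales with $(\epsr{\xi}+\epsr{\eta})$, this also accounts for the $(\epsr{\xi}+\epsr{\eta})L_{\max}$ summand. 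Assembling these pieces by the triangle inequality gives the claimed bound.

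I expect the main obstacle to be the careful case analysis of the truncated Bellman residual (verifying the unified formula for $\delta_{t,h}$ across Cases L/M/U) and confirming the martingale structure of the sampling-replacement errors — in particular that the reward covariance $\Sigma_{t-1}$ freezes on non-queried rounds, which is exactly why \Cref{lem:no-query} is indispensable for controlling the non-queried contribution that would otherwise be uncontrolled by the elliptical potential.
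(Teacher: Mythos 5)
Your proposal is correct and follows essentially the same route as the paper's proof: the same split into a reward-error group (Cauchy--Schwarz, then the $Z_t$ case split with the elliptical potential \Cref{lem:ellip} on queried rounds and \Cref{lem:no-query} on non-queried rounds), a transition-error group handled by the per-level elliptical potential summed over $H$, and a truncation group controlled by \Cref{lem:clip-times} plus a Hoeffding/martingale population-to-sample replacement, all under the events of \Cref{lem:high-prob-summary}. The only difference is organizational: your unified three-case formula for the truncated Bellman residual telescopes in a single pass, whereas the paper expands recursively while carrying the indicator $\*L_{t,h}$ and must add back truncated-event corrections (its term $\!T_2$), so your argument is slightly cleaner and simply omits the harmless $(\epsr{\xi} + \epsr{\eta} + \epsp{\xi} + \epsp{\eta} + \epsilon_\lambda) L_{\max}$ cross terms that appear in the paper's bound.
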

\begin{proof}
By definition, we have
\begin{align*}
	& \left|
		\sum_{t=1}^T \Big((\-V_t - V^{\pi_t^0}) - (\~V_t - V^{\pi_t^1})\Big)
	\right| \\
= & \left|
		\sum_{t=1}^T \E_{s_1,a_1\sim\pi_t^0} \left[ \-Q_{t,1}(s_1,a_1) - Q^{\pi_t^0}_1(s_1,a_1) \right]
		- 
		\sum_{t=1}^T \sum_{h=1}^H \E_{s_h,a_h\sim\pi_t^1}  \phi(s_h,a_h) \left(\barthetar{t} - \thetastarr\right)
	\right|
	\intertext{By triangle inequality, we have}
	\leq & \left|
			\sum_{t=1}^T \E_{s_1,a_1\sim\pi_t^0} \left[ \*L_{t,1}(s_{t,1}) \left(\-Q_{t,1}(s_1,a_1) - Q^{\pi_t^0}_1(s_1,a_1)\right) \right]
			- 
			\sum_{t=1}^T \sum_{h=1}^H \E_{s_h,a_h\sim\pi_t^1}  \phi(s_h,a_h) \left(\xir{t} + \etar{t}\right)
		\right| \\
		& + 2 V_{\max} \sum_{t=1}^T  \E_{s_1,a_1\sim\pi_t^0} \left[ \*L_{t,1}^\complement(s_{t,1})\right]
	\intertext{For the first term, conditioning on $\*L_{t,1}(s_{t,1})$, we have}
	= & \Bigg|
			\sum_{t=1}^T \E_{s_1,a_1\sim\pi_t^0} \left[ \*L_{t,1}(s_{t,1}) \left(\phi(s_1,a_1)^\top ( \barthetar{t}+\barthetap{t,1} ) - Q^{\pi_t^0}_1(s_1,a_1) \right)  \right] \\
			& - 
			\sum_{t=1}^T \sum_{h=1}^H \E_{s_h,a_h\sim\pi_t^1}  \phi(s_h,a_h) \left(\xir{t} + \etar{t}\right)
		\Bigg| + 2 V_{\max} \sum_{t=1}^T  \E_{s_1,a_1\sim\pi_t^0} \left[ \*L_{t,1}^\complement(s_{t,1})\right]
	\intertext{Applying triangle inequality again, we get}
	\leq & \left|
			\sum_{t=1}^T \E_{s_1,a_1\sim\pi_t^0} \left[ \left(\phi(s_1,a_1)^\top ( \barthetar{t}+\barthetap{t,1} ) - Q^{\pi_t^0}_1(s_1,a_1) \right)  \right]
			- 
			\sum_{t=1}^T \sum_{h=1}^H \E_{s_h,a_h\sim\pi_t^1}  \phi(s_h,a_h) \left(\xir{t} + \etar{t}\right)
		\right| \\
		& + 2 V_{\max} \sum_{t=1}^T  \E_{s_1,a_1\sim\pi_t^0} \left[ \*L_{t,1}^\complement(s_{t,1})\right] \\
		& + \left| \sum_{t=1}^T \E_{s_1,a_1\sim \pi_t^0} \left[ \*L^\complement_{t,1}(s_{t,1}) \left(\phi(s_1,a_1)^\top ( \barthetar{t}+\barthetap{t,1} ) - Q^{\pi_t^0}_1(s_1,a_1) \right)  \right] \right| 
	\intertext{Applying \Cref{lem:one-step-decomp} and the triangle inequality, we get}
	\leq & \Bigg|
			\sum_{t=1}^T \E_{s_2\sim\pi_t^0} \left[ \-V_{t,2}(s_2) - V^{\pi_t^0}_2(s_2) \right]
			- 
			\sum_{t=1}^T \sum_{h=2}^H \E_{s_h,a_h\sim\pi_t^1}  \phi(s_h,a_h) \left(\xir{t} + \etar{t}\right) \\
		& +  \sum_{t=1}^T \left( \E_{s_1,a_1\sim\pi_t^0} \phi^\top(s_1,a_1)(\xir{t} + \xip{t,1} + \etar{t} + \etap{t,1} - \lambda_{t,1}) - \E_{s_1,a_1\sim\pi_t^1} \phi^\top(s_1,a_1) (\xir{t} + \etar{t}) \right)\Bigg| \\
		& + 2 V_{\max} \sum_{t=1}^T  \E_{s_1,a_1\sim\pi_t^0} \left[ \*L_{t,1}^\complement(s_{t,1})\right] \\
		& + \left| \sum_{t=1}^T \E_{s_1,a_1\sim \pi_t^0} \left[ \*L^\complement_{t,1}(s_{t,1}) \left(\phi(s_1,a_1)^\top ( \barthetar{t}+\barthetap{t,1} ) - Q^{\pi_t^0}_1(s_1,a_1) \right)  \right] \right| 
\end{align*}
Keep expanding the first term, we get
\begin{align*}
	& \left|
		\sum_{t=1}^T \Big((\-V_t - V^{\pi_t^0}) - (\~V_t - V^{\pi_t^1})\Big)
	\right| \\
	& \leq 2 V_{\max} \sum_{h=1}^H \sum_{t=1}^T  \E_{s_h,a_h\sim\pi_t^0} \left[ \*L_{t,h}^\complement(s_{t,h})\right]  \\
	&\quad + \sum_{h=1}^H\left| \sum_{t=1}^T \E_{s_h,a_h\sim \pi_t^0} \left[ \*L^\complement_{t,h}(s_{t,h}) \left(\phi(s_h,a_h)^\top ( \barthetar{t}+\barthetap{t,h} ) - Q^{\pi_t^0}_h(s_h,a_h) \right)  \right] \right| \\
	&\quad + \left| \sum_{t=1}^T \sum_{h=1}^H \left( \E_{s_h,a_h\sim\pi_t^0}\E_{\~s_h,\~a_h\sim\pi_t^1} \Big( \phi(s_h,a_h) - \phi(\~s_h,\~a_h) \Big)^\top (\xir{t} + \etar{t}) \right)\right| \\
	&\quad + \left| \sum_{t=1}^T \sum_{h=1}^H \E_{s_h,a_h\sim\pi_t^0} \phi^\top (s_h,a_h) \Big(\xip{t,h} + \etap{t,h} - \lambda_{t,h}\Big) \right| \\
	& =:  \!T_1 + \!T_2 + \!T_3 + \!T_4.
\end{align*}
We bound each term separately.
\paragraph*{Bounding $\!T_1$.}
By Hoeffding's inequality and \Cref{lem:clip-times}, we have
\begin{align*}
	\!T_1 
	& \leq 2 V_{\max} \sum_{h=1}^H \sum_{t=1}^T  \left[ \*L_{t,h}^\complement(s_{t,h})\right] + 2V_{\max} \sqrt{ \frac{HT}{2} \log (1/\delta)} \\
	& \leq 2 V_{\max} \left( L_{\max} + \sqrt{ \frac{HT}{2} \log (1/\delta)} \right).
\end{align*}
with probability at least $1-\delta$.

\paragraph*{Bounding $\!T_2$.}
By definition and \Cref{lem:eta-lambda-theta}, we have 
\begin{align*}
	\!T_2 
& =  \sum_{h=1}^H\Bigg| \sum_{t=1}^T \E_{s_h,a_h\sim \pi_t^0} \bigg[ \*L^\complement_{t,h}(s_{t,h}) \Big(\phi(s_h,a_h)^\top ( \xir{t} + \etar{t} + \thetastarr + \xip{t,h} +  \etap{t,h} + \lambda_{t,h} + \thetastarp{t,h})  \\ 
	&\qquad - Q^{\pi_t^0}_h(s_h,a_h) \Big)  \bigg] \Bigg| \\
& \leq  \sum_{h=1}^H\Bigg| \sum_{t=1}^T \E_{s_h,a_h\sim \pi_t^0} \bigg[ \*L^\complement_{t,h}(s_{t,h}) \bigg( \phi(s_h,a_h)^\top (\thetastarr + \thetastarp{t,h})  - Q^{\pi_t^0}_h(s_h,a_h) \\
&\qquad + \phi^\top (s_h,a_h) (\xir{t} + \etar{t}) + \phi^\top (s_h,a_h)(\xip{t,h} +  \etap{t,h} + \lambda_{t,h})  \bigg)  \bigg] \Bigg| \\
& \leq  \sum_{h=1}^H\Bigg| \sum_{t=1}^T \E_{s_h,a_h\sim \pi_t^0} \bigg[ \*L^\complement_{t,h}(s_{t,h}) \bigg( \phi(s_h,a_h)^\top (\thetastarr + \thetastarp{t,h})  - Q^{\pi_t^0}_h(s_h,a_h) \\
&\qquad + \|\phi (s_h,a_h)\|_{\Sigma_{t-1}^{-1}} (\|\xir{t}\|_{\Sigma_{t-1}} + \|\etar{t}\|_{\Sigma_{t-1}}) \\
&\qquad + \|\phi (s_h,a_h)\|_{\Sigma_{t-1,h}^{-1}} (\|\xip{t,h}\|_{\Sigma_{t-1,h}}  +  \|\etap{t,h}\|_{\Sigma_{t-1,h}}  + \|\lambda_{t,h}\|_{\Sigma_{t-1,h}} )  \bigg)  \bigg] \Bigg|.
\end{align*}
We note that $|\phi(s_h,a_h)^\top (\thetastarr + \thetastarp{t,h})|\leq V_{\max}$ and $|Q^{\pi_t^0}_h(s_h,a_h)|\leq V_{\max}$. Moreover, we have $\|\phi (s_h,a_h)\|_{\Sigma_{t-1}^{-1}} \leq 1/\sqrt{\lambda}$ and $ \|\phi (s_h,a_h)\|_{\Sigma_{t-1,h}^{-1}} \leq 1/\sqrt{\lambda}$. Apply triangle inequality and inserting these upper bounds back, we obtain
\begin{align*}
	\!T_2 
	& \leq  \sum_{h=1}^H\Bigg| \sum_{t=1}^T \E_{s_h,a_h\sim \pi_t^0} \bigg[ \*L^\complement_{t,h}(s_{t,h}) \bigg( \phi(s_h,a_h)^\top (\thetastarr + \thetastarp{t,h})  - Q^{\pi_t^0}_h(s_h,a_h) \bigg) \bigg] \Bigg| \\
&\qquad + \sum_{h=1}^H \sum_{t=1}^T \E_{s_h,a_h\sim \pi_t^0} \bigg[ \*L^\complement_{t,h}(s_{t,h}) \|\phi (s_h,a_h)\|_{\Sigma_{t-1}^{-1}} (\|\xir{t}\|_{\Sigma_{t-1}} + \|\etar{t}\|_{\Sigma_{t-1}}) \bigg] \\
&\qquad + \sum_{h=1}^H \sum_{t=1}^T \E_{s_h,a_h\sim \pi_t^0} \bigg[ \*L^\complement_{t,h}(s_{t,h}) \|\phi (s_h,a_h)\|_{\Sigma_{t-1,h}^{-1}} (\|\xip{t,h}\|_{\Sigma_{t-1,h}}  +  \|\etap{t,h}\|_{\Sigma_{t-1,h}}  + \|\lambda_{t,h}\|_{\Sigma_{t-1,h}} )  \bigg] \\
& \leq  2 V_{\max }\sum_{h=1}^H \sum_{t=1}^T \E_{s_h,a_h\sim \pi_t^0} \bigg[ \*L^\complement_{t,h}(s_{t,h})  \bigg] \\
&\qquad + \sqrt{\lambda^{-1}} \sum_{h=1}^H \sum_{t=1}^T \E_{s_h,a_h\sim \pi_t^0} \bigg[ \*L^\complement_{t,h}(s_{t,h})  (\epsr{\xi} + \epsr{\eta} + \epsp{\xi} + \epsp{\eta} + \epsilon_\lambda ) \bigg]  \\
& = \underbrace{\sum_{h=1}^H \sum_{t=1}^T \E_{s_h,a_h\sim \pi_t^0} \bigg[ \*L^\complement_{t,h}(s_{t,h})  \bigg]}_{(*)} \left( 2 V_{\max} + \frac{\epsr{\xi} + \epsr{\eta} + \epsp{\xi} + \epsp{\eta} + \epsilon_\lambda}{\sqrt{\lambda}} \right).
\end{align*}
We notice that $(*)$ can be bounded by Hoeffding's inequality and \Cref{lem:clip-times}, as we did similarly for $\!T_1$:
\begin{align*}
	(*) \leq L_{\max} + \sqrt{ \frac{HT}{2} \log (1/\delta)} 
\end{align*}
Hence, we have
\begin{align*}
	\!T_2 \leq \left( L_{\max} + \sqrt{ \frac{HT}{2} \log (1/\delta)}  \right) \left( 2 V_{\max} + \frac{\epsr{\xi} + \epsr{\eta} + \epsp{\xi} + \epsp{\eta} + \epsilon_\lambda}{\sqrt{\lambda}} \right)
\end{align*}
with probability at least $1-\delta$.

\paragraph*{Bounding $\!T_3$.}
By definition, we have
\begin{align*}
	\!T_3 
	= & \left| \sum_{t=1}^T \E_{\tau\sim\pi_t^0}\E_{\~\tau\sim\pi_t^1} \Big( \phi(\tau) - \phi(\~\tau) \Big) (\xir{t} + \etar{t})\right| \\
\intertext{By Cauchy-Schwarz inequality, we have}
	\leq & \sum_{t=1}^T \E_{\tau\sim\pi_t^0}\E_{\~\tau\sim\pi_t^1} \Big\| \phi(\tau) - \phi(\~\tau) \Big\|_{\Sigma_{t-1}^{-1}} \Big(\|\xir{t}\|_{\Sigma_{t-1}} + \|\etar{t}\|_{\Sigma_{t-1}} \Big)  \\
	\leq & \Big(\epsr{\xi} + \epsr{\eta} \Big) \sum_{t=1}^T \E_{\tau\sim\pi_t^0}\E_{\~\tau\sim\pi_t^1} \Big\| \phi(\tau) - \phi(\~\tau) \Big\|_{\Sigma_{t-1}^{-1}}   \\
\intertext{Since $\|\phi(\tau) - \phi(\~\tau)\|_{\Sigma_{t-1}^{-1}}\leq 2/\sqrt{\lambda}$, by Hoeffding's inequality, we have}
	\leq & \Big(\epsr{\xi} + \epsr{\eta} \Big) \left( 
		\sum_{t=1}^T  \Big\| \phi(\tau) - \phi(\~\tau) \Big\|_{\Sigma_{t-1}^{-1}}
		+ \sqrt{\frac{2T \log(1/\delta)}{\lambda}}
	\right) \\
	= & \Big(\epsr{\xi} + \epsr{\eta} \Big) \Bigg( 
		\underbrace{\sum_{t=1}^T Z_t \Big\| \phi(\tau) - \phi(\~\tau) \Big\|_{\Sigma_{t-1}^{-1}}}_{\rm (i)}
		+ \underbrace{\sum_{t=1}^T (1-Z_t) \Big\| \phi(\tau) - \phi(\~\tau) \Big\|_{\Sigma_{t-1}^{-1}}}_{\rm (ii)}
		+ \sqrt{\frac{2T \log(1/\delta)}{\lambda}}
	\Bigg).
\end{align*}
To bound (i), we have 
\begin{align*}
	{\rm (i)} = & 
		\sum_{t=1}^T Z_t \min\left\{ 2/\sqrt{\lambda} ,\, \Big\| \phi(\tau) - \phi(\~\tau) \Big\|_{\Sigma_{t-1}^{-1}} \right\}  \\
	\leq & 
		\sqrt{ T \sum_{t=1}^T Z_t \min\left\{ 4/\lambda ,\, \Big\| \phi(\tau) - \phi(\~\tau) \Big\|^2_{\Sigma_{t-1}^{-1}} \right\} } \\
	\leq & 
		2 \sqrt{ T \sum_{t=1}^T  Z_t \min\left\{ 1 ,\, \Big\| \phi(\tau) - \phi(\~\tau) \Big\|^2_{\Sigma_{t-1}^{-1}} \right\} } \\
	\leq & 2 \sqrt{ T \cdot 2d \log\left(\frac{\lambda + 4T}{\lambda}\right) }
\end{align*}
where the last inequality is \Cref{lem:ellip}. To bound (ii), we have
\begin{align*}
	{\rm (ii)} 
	= & \sum_{t=1}^T (1-Z_t) \Big\| \phi(\tau) - \phi(\~\tau) \Big\|_{\Sigma_{t-1}^{-1}} \\
	\leq & \sum_{t=1}^T (1-Z_t) \cdot \epsilon \sqrt{\pi}  / (2\sigmar) \\
	\leq & \epsilon T \sqrt{\pi}  / (2\sigmar)
\end{align*}
where the first inequality is by \Cref{lem:no-query}. Putting the two upper bounds together, we have
\begin{align*}
	\!T_3 
	\leq & \Big(\epsr{\xi} + \epsr{\eta} \Big) \Bigg( 
		2 \sqrt{ T \cdot 2d \log\left(\frac{\lambda + 4T}{\lambda}\right) }
		+ \epsilon T \sqrt{\pi}  / (2\sigmar)
		+ \sqrt{\frac{2T \log(1/\delta)}{\lambda}}
	\Bigg).
\end{align*}

\paragraph*{Bounding $\!T_4$.}
By Cauchy-Schwarz inequality, we have
\begin{align*}
	\!T_4 
	\leq & \left| \sum_{t=1}^T \sum_{h=1}^H \E_{s_h,a_h\sim\pi_t^0} \|\phi(s_h,a_h)\|_{\Sigma_{t-1,h}^{-1}} \Big(\|\xip{t,h}\|_{\Sigma_{t-1,h}} + \|\etap{t,h}\|_{\Sigma_{t-1,h}} + \|\lambda_{t,h}\|_{\Sigma_{t-1,h}} \Big) \right| \\
	\leq & \Big(\epsp{\xi} + \epsp{\eta} + \epsilon_\lambda \Big) 
	\sum_{h=1}^H \sum_{t=1}^T \E_{s_h,a_h\sim\pi_t^0} \|\phi(s_h,a_h)\|_{\Sigma_{t-1,h}^{-1}} \\
\intertext{Since $\|\phi(s_h,a_h)\|_{\Sigma_{t-1,h}^{-1}}\leq 1/\sqrt{\lambda}$, by Hoeffding's inequality, we have}
	\leq & \Big(\epsp{\xi} + \epsp{\eta} + \epsilon_\lambda \Big) 
	\left(\sum_{h=1}^H \sum_{t=1}^T \|\phi(s_h,a_h)\|_{\Sigma_{t-1,h}^{-1}} 
	+ \sqrt{\frac{TH \log(1/\delta)}{2\lambda}}
	\right) \\
	= & \Big(\epsp{\xi} + \epsp{\eta} + \epsilon_\lambda \Big) 
	\left(\sum_{h=1}^H \sum_{t=1}^T \min\left\{1/\sqrt{\lambda} ,\, \|\phi(s_h,a_h)\|_{\Sigma_{t-1,h}^{-1}} \right\}
	+ \sqrt{\frac{TH \log(1/\delta)}{2\lambda}}
	\right) \\
	\leq & \Big(\epsp{\xi} + \epsp{\eta} + \epsilon_\lambda \Big) 
	\left(\sum_{h=1}^H \sqrt{T \sum_{t=1}^T \min\left\{1/\lambda,\, \|\phi(s_h,a_h)\|^2_{\Sigma_{t-1,h}^{-1}} \right\}}
	+ \sqrt{\frac{TH \log(1/\delta)}{2\lambda}}
	\right) \\
\intertext{Since $\lambda \geq 1$, we have}
	\leq & \Big(\epsp{\xi} + \epsp{\eta} + \epsilon_\lambda \Big) 
	\left(\sum_{h=1}^H \sqrt{T \sum_{t=1}^T \min\left\{1,\, \|\phi(s_h,a_h)\|^2_{\Sigma_{t-1,h}^{-1}} \right\}}
	+ \sqrt{\frac{TH \log(1/\delta)}{2\lambda}}
	\right) \\
	\leq & \Big(\epsp{\xi} + \epsp{\eta} + \epsilon_\lambda \Big) 
	\left(H \sqrt{T \cdot 2  d \log \left( \frac{\lambda + T}{\lambda} \right)}
	+ \sqrt{\frac{TH \log(1/\delta)}{2\lambda}}
	\right) 
\end{align*}
where the last inequality is \Cref{lem:ellip}.

\paragraph{Conclusion.} 
Combining the above bounds, we have
\begin{align*}
	& \left|
		\sum_{t=1}^T \Big((\-V_t - V^{\pi_t^0}) - (\~V_t - V^{\pi_t^1})\Big)
	\right| \\
& \leq
	2 V_{\max} \left( L_{\max} + \sqrt{ \frac{HT}{2} \log (1/\delta)} \right) \\
	& \qquad +
	\left( L_{\max} + \sqrt{ \frac{HT}{2} \log (1/\delta)}  \right) \left( 2 V_{\max} + \frac{\epsr{\xi} + \epsr{\eta} + \epsp{\xi} + \epsp{\eta} + \epsilon_\lambda}{\sqrt{\lambda}} \right)\\
	& \qquad +
	\Big(\epsr{\xi} + \epsr{\eta} \Big) \left( 
			2\sqrt{ T \cdot 2d \log\left(\frac{\lambda + 4T}{\lambda}\right) }
			+ \frac{\epsilon T \sqrt{\pi} } {2\sigmar}
			+ \sqrt{\frac{2T \log(1/\delta)}{\lambda}}
		\right) \\
	& \qquad +
	\Big(\epsp{\xi} + \epsp{\eta} + \epsilon_\lambda \Big) 
	\left(H \sqrt{T \cdot 2  d \log \left( \frac{\lambda + T}{\lambda} \right)}
	+ \sqrt{\frac{TH \log(1/\delta)}{2\lambda}}
	\right) \\
& \leq
	4 V_{\max} \left( L_{\max} + \sqrt{ \frac{HT}{2} \log (1/\delta)} \right) \\
	& \qquad +
	\Big(\epsr{\xi} + \epsr{\eta} \Big) \left( 
			2\sqrt{ T \cdot 2d \log\left(\frac{\lambda + 4T}{\lambda}\right) }
			+ \frac{\epsilon T \sqrt{\pi} } {2\sigmar}
			+ \frac{L_{\max} + 2\sqrt{TH\log(1/\delta)/2} }{\sqrt{\lambda}}
		\right) \\
	& \qquad +
	\Big(\epsp{\xi} + \epsp{\eta} + \epsilon_\lambda \Big) 
	\left(H \sqrt{T \cdot 2  d \log \left( \frac{\lambda + T}{\lambda} \right)}
	+ \frac{L_{\max} + 2\sqrt{TH \log(1/\delta)/2}}{\sqrt{\lambda}}
	\right) \\
& = \~O\bigg(
	V_{\max} L_{\max} + V_{\max}\sqrt{HT} \\
	&\qquad + (\epsr{\xi} + \epsr{\eta}) \left(\sqrt{dT} + L_{\max}+\sqrt{HT} + \frac{\epsilon T \sqrt{\pi} }{2\sigmar} \right) \\
	&\qquad + (\epsp{\xi} + \epsp{\eta} + \epsilon_\lambda) \left(H\sqrt{dT} + L_{\max}+\sqrt{HT} \right) 
\bigg).
\end{align*}
\end{proof}

\begin{lemma}[Regret decomposition]
	\label{lem:boundof1}
	Assume all events listed in \Cref{lem:high-prob-summary} hold. Then, we have
	\begin{align*}
		& \sum_{t=1}^T (V^\star - \-V_t) + (\~V_t - V^{\pi_t^1}) \\
		& \leq 
	\frac{1}{\cdf^2(-1)}
	\sum_{t=1}^T \left(
		(\-V_t - V^{\pi_t^0})-(\~V_t - V^{\pi_t^1}) +  (V^{\pi_t^0}-\-V_t^-) - (V^{\pi_t^1}-\~V_t^-)
	\right) \\
	& \qquad + \frac{2V_{\max}}{\cdf^2(-1)}
		\left(\sqrt{\frac{T\log(1/\delta)}{2}} + 4 \delta \right)
	\end{align*}
	with probability at least $1-\delta$.
	\end{lemma}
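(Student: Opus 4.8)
The plan is to run a \emph{stochastic optimism} argument: \Cref{lem:bound11} only guarantees optimism with a fixed constant probability $\cdf^2(-1)$ per round, not with high probability, so I will convert this constant-probability event into a deterministic-looking regret bound by paying a $1/\cdf^2(-1)$ multiplier. First I would rewrite the per-round summand as $g_t := (V^\star - \-V_t) + (\~V_t - V^{\pi_t^1}) = (V^\star - V^{\pi_t^1}) - (\-V_t - \~V_t)$, noting that $V^\star - V^{\pi_t^1}\ge 0$ and that $\pi_t^1=\pi_{t-1}^0$ makes $V^{\pi_t^1}$ measurable with respect to the history $\+F_{t-1}$. Hence the only randomness in $g_t$ not contained in $\+F_{t-1}$ is the round-$t$ Gaussian noise injected into $\barthetar{t}$ and $\barthetap{t,h}$, and \Cref{lem:bound11} states precisely that, conditioned on $\+F_{t-1}$, the event $\{g_t\le 0\}$ has probability at least $\cdf^2(-1)$.

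The key device is to introduce the independent copies $\-V_t^-,\~V_t^-$ appearing in the statement, obtained by resampling the round-$t$ Gaussian noise while holding $\+F_{t-1}$ (and thus the MLE centers $\hatthetar{t},\hatthetap{t,h}$) fixed. These are conditionally i.i.d.\ with the played estimates $\-V_t,\~V_t$, and crucially the copy's optimism event is, conditioned on $\+F_{t-1}$, \emph{independent} of the played estimates. On the copy-optimism event we have $V^\star - V^{\pi_t^1}\le \-V_t^- - \~V_t^-$, which upper-bounds $g_t$ by the played-versus-copy value gap $(\-V_t^- - \~V_t^-) - (\-V_t - \~V_t)$. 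Combining this with the independence between the copy event and the played estimates, and with the lower bound $\cdf^2(-1)$ on the success probability, produces (after taking conditional expectations) the $1/\cdf^2(-1)$ factor via the elementary anti-concentration identity $p_0\,\E[g_t\mid\+F_{t-1}]\le \E[\,(\text{gap})\indic\{\text{copy optimistic}\}\mid\+F_{t-1}]$. I would then regroup the resulting gap around $V^{\pi_t^0}$ and $V^{\pi_t^1}$, writing it as $(\-V_t - V^{\pi_t^0}) - (\~V_t - V^{\pi_t^1}) + (V^{\pi_t^0} - \-V_t^-) - (V^{\pi_t^1} - \~V_t^-)$, so that the first two terms are exactly the estimation error controlled by \Cref{lem:bound22} and the last two are its copy analogue.

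Finally I would pass from the conditional-expectation bound back to the realized sum. Using the boundedness $|\-V_t|,|\~V_t|,|\-V_t^-|,|\~V_t^-|\le V_{\max}$ guaranteed by \Cref{lem:high-prob-summary}, each increment $g_t - \E[g_t\mid\+F_{t-1}]$ is an $O(V_{\max})$-bounded martingale difference, so Azuma--Hoeffding over the $T$ rounds contributes the $\sqrt{T\log(1/\delta)/2}$ term; the residual $O(\delta)$ from the failure of the optimism/boundedness events contributes the additive $4\delta$, and both are inflated by the $2V_{\max}/\cdf^2(-1)$ prefactor, matching the stated error.

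The main obstacle is the stochastic-optimism conversion itself: correctly exploiting the conditional independence between the resampled (ghost) noise and the played noise so that the constant success probability $\cdf^2(-1)$ turns into a clean $1/\cdf^2(-1)$ multiplier, while simultaneously controlling the rounds on which optimism fails. A secondary subtlety is that the copies must be coupled to the \emph{same} $\+F_{t-1}$ (same history and same MLE estimates) but with \emph{fresh} noise, so that \Cref{lem:bound11} applies verbatim to them and the four-term regrouping feeds directly into \Cref{lem:bound22} and its symmetric counterpart for the copy terms.
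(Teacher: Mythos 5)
Your high-level skeleton does match the paper's: per-round optimism with probability $\cdf^2(-1)$ from \Cref{lem:bound11}, a $1/\cdf^2(-1)$ inflation, a Hoeffding/Azuma step for the $\sqrt{T\log(1/\delta)}$ term, and $O(\delta)$ for failure events. But your reading of $\-V_t^-,\~V_t^-$ is not what the lemma means, and the core mechanism does not survive your reading. In the paper these are \emph{not} independently resampled copies: they are the value functions computed with the \emph{worst-case} noise, i.e.\ the minimizer of $\-V_t[\cdot]-\~V_t[\cdot]$ over the deterministic constraint set $\{\|\xir{t}^-\|_{\Sigma_{t-1}}\leq\epsr{\xi}$ and $\|\xip{t,h}^-\|_{\Sigma_{t-1,h}}\leq\epsp{\xi}\ \forall h\}$, hence deterministic given the history. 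This buys the pointwise sign property $\-V_t^--\~V_t^-\leq \-V_t-\~V_t$ whenever the realized round-$t$ noise lies in its confidence ball (the event $\+E_\@{low}$, of probability at least $1-\delta/T$, \emph{not} a constant). That nonnegativity of $z-\-V_t^-+\~V_t^-$ is exactly what lets the paper drop the optimism indicator at zero cost,
\begin{align*}
\E\big[\indic\{\+E_\@{opt}\}\indic\{\+E_\@{low}\}(z-\-V_t^-+\~V_t^-)\big]
\;\leq\; \E\big[\indic\{\+E_\@{low}\}(z-\-V_t^-+\~V_t^-)\big]
\;\leq\; \E\big[z-\-V_t^-+\~V_t^-\big]+4V_{\max}\delta/T,
\end{align*}
and thereby reach the unconditioned, realized right-hand side of the lemma.

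With your i.i.d.-copy definition this mechanism breaks, and in fact the stated inequality becomes false. Conditioned on the history, $(\-V_t-\~V_t)-(\-V_t^--\~V_t^-)$ would be a mean-zero bounded increment, so the lemma's RHS would concentrate at scale $V_{\max}\sqrt{T}$; the lemma would then assert $\sum_t \big[(V^\star-V^{\pi_t^1})-(\-V_t-\~V_t)\big]=O(\sqrt{T})$ using nothing but per-round optimism with probability $\cdf^2(-1)\approx 0.025$. That hypothesis is perfectly compatible with a constant per-round optimism shortfall in conditional mean (e.g.\ $\-V_t-\~V_t$ conditionally Gaussian with mean $V^\star-V^{\pi_t^1}-1.96\sigma$ and standard deviation $\sigma$), under which the left-hand side grows linearly in $T$. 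Concretely, the step in your sketch that fails is passing from $\E[\indic\{\text{copy optimistic}\}\cdot\mathrm{gap}_t\mid\+F_{t-1}]$ to the realized, unconditioned RHS: removing the indicator costs $\E[\indic\{\text{copy not optimistic}\}\,|\mathrm{gap}_t|]$, which is $\Theta(V_{\max})$ per round because your copy fails to be optimistic with constant probability — a linear-in-$T$ loss, whereas in the paper the analogous discarded event is $\+E_\@{low}^\complement$ of probability $\delta/T$. Two further slips: your regrouping is off by a sign (your gap $(\-V_t^--\~V_t^-)-(\-V_t-\~V_t)$ is the \emph{negative} of the lemma's per-round RHS term, so it swaps the roles of the played values and the copies), and the claimed identity $p_0\,\E[g_t\mid\+F_{t-1}]\leq\E[\mathrm{gap}_t\indic\{\text{copy optimistic}\}\mid\+F_{t-1}]$ needs a separate argument when $\E[g_t\mid\+F_{t-1}]<0$; it is not a consequence of $\Pr(\text{copy opt})\geq p_0$ alone.
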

	\begin{proof}
	We note that, at any round $t\in[T]$, conditioning on all information collected up to round $t-1$, the randomness of $\-V_t$ and $\~V_t$ only comes from the randomness of Gaussian noise variables $\xir{t},\xip{t,1},\dots,\xip{t,H}$. In other words, the values of $\-V_t$ and $\~V_t$ are determined once given these Gaussian noise variables.
	In light of this, we write out the dependence on the Gaussian noise variables explicitly: we treat $\-V_t$ and $\~V_t$ as functions of $\xir{t},\xip{t,1},\cdots,\xip{t,H}$ and define $\-V_t[\xir{t},\xip{t,1},\cdots,\xip{t,H}]$ and $\~V_t[\xir{t}^-,\xip{t,1}^-,\cdots,\xip{t,H}^-]$ as the values of $\-V_t$ and $\~V_t$ obtained at round $t$ with the Gaussian noise variables $\xir{t},\xip{1},\cdots,\xip{H}$. Then, we define a notion of ``worst-case'' Gaussian noise variables as follows:
	\begin{align*}
		\xir{t}^-,\xip{t,1}^-,\cdots,\xip{t,H}^-
		\coloneqq & \argmin_{\xir{t}^-,\xip{t,1}^-,\cdots,\xip{t,H}^-}
		\-V_t[\xir{t}^-,\xip{t,1}^-,\cdots,\xip{t,H}^-] - \~V_t[\xir{t}^-,\xip{t,1}^-,\cdots,\xip{t,H}^-] \\
		&\quad \text{s.t.} \quad  
		\|\xir{t}^-\|_{\Sigma_{t-1}}\leq\epsr{\xi}
		\quad\text{and}\quad \forall h\in[H] :\: 
		\|\xip{t,h}^-\|_{\Sigma_{t-1,h}}\leq\epsp{\xi}.
	\end{align*}
	And we denote $\-V_t^-$ and $\~V_t^-$ as the value functions specified by $\xir{t}^-,\xip{t,1}^-,\cdots,\xip{t,H}^-$:
	\begin{align*}
		\-V_t^- \coloneqq \-V_t[\xir{t}^-,\xip{t,1}^-,\cdots,\xip{t,H}^-]
		, \qquad
		\~V_t^- \coloneqq \~V_t[\xir{t}^-,\xip{t,1}^-,\cdots,\xip{t,H}^-].
	\end{align*}
	In other words, $\-V_t^-$ and $\~V_t^-$ are counterparts of $\-V_t$ and $\~V_t$ that attain the smallest difference, $\-V_t-\~V_t$, while the noise variables still satisfy the high probability bounds. By \Cref{lem:high-prob-summary}, we immediately have 
	\begin{equation}\label{eq:small}
		\Pr(\+E_\@{low})
		\coloneqq
		\Pr \left( \-V_t^- - \~V_t^- \leq \-V_t-\~V_t \right) \geq 1 - \delta / T.
	\end{equation}
	We note that here we have $\delta/T$ instead of $\delta$ because we are considering a fixed $t$ and the results of \Cref{lem:high-prob-summary} are derived by the union bound over all $t\in[T]$ --- thus, the probability of the event $\+E_\@{low}$ is at least $1-\delta/T$ for a single $t$.

	Then, we denote $\+E_\@{opt}$ as the event that $\-V_t-\~V_t\geq V^\star-V^{\pi_t^1}$. Thus, by \Cref{lem:bound11}, we have $\Pr(\+E_\@{opt}) \geq \cdf^2(-1)$. Moreover, we denote $p_{\@{alg}}$ as the randomness of the algorithm. Then, we define the joint distribution $p_\@{opt}$ of $\-V_t$ and $\~V_t$ by restricting $p_{\@{alg}}$ to the event $\+E_\@{opt}$. Specifically, it is defined as
	$$
	p_\@{opt}(\-V_t,\~V_t)\coloneqq
	\begin{cases}
		p_\@{alg}(\-V_t,\~V_t)/\Pr(\+E_\@{opt}) & \text{if }\+E_\@{opt} \\
		0 & \text{otherwise}
	\end{cases}
	$$
	Let $z:=\-V_t-\~V_t$. Then, we have
	\begin{align*}
		(V^\star - \-V_t) + (\~V_t - V^{\pi_t^1})
		\leq & V^\star - \-V_t^- + \~V_t^- - V^{\pi_t^1} \\
		= & \E_{z\sim p_\@{opt}}\left[V^\star - \-V_t^- - z + z + \~V_t^- - V^{\pi_t^1}\right]\\
		\leq & \E_{z\sim p_\@{opt}}\left[z - \-V_t^- + \~V_t^-\right] \\
		= & \int_z \frac{\indic\{\+E_\@{opt}\} \cdot p_\@{alg}(z)}{\Pr(\+E_\@{opt})} \cdot (z - \-V_t^- + \~V_t^-)  \d z \\
		= & \E_{z\sim p_\@{alg}}\left[ \indic\{\+E_\@{opt}\} \left(z - \-V_t^- + \~V_t^-\right)\right] / \Pr(\+E_\@{opt})
	\end{align*}
	where the second inequality is by the definition of $p_\@{opt}$, which rules out the case that $\+E_\@{opt}$ does not hold. Considering the event $\+E_\@{low}$, we have
	\begin{align*}
	& (V^\star - \-V_t) + (\~V_t - V^{\pi_t^1}) \\
	& \leq \left( \E_{z\sim p_\@{alg}}\left[ \indic\{\+E_\@{opt}\}\indic\{\+E_\@{low}\} \left(z - \-V_t^- + \~V_t^-\right)\right] + \E_{z\sim p_\@{alg}}\left[ \indic\{\+E_\@{opt}\}\indic\{\+E_\@{low}^\complement\} \left(z - \-V_t^- + \~V_t^-\right)\right] \right) / \Pr(\+E_\@{opt}) \\
	& \leq \left( \E_{z\sim p_\@{alg}}\left[ \indic\{\+E_\@{opt}\}\indic\{\+E_\@{low}\} \left(z - \-V_t^- + \~V_t^-\right)\right] + \E_{z\sim p_\@{alg}}\left[ \indic\{\+E_\@{opt}\}\indic\{\+E_\@{low}^\complement\} 4V_{\max} \right] \right) / \Pr(\+E_\@{opt}) \\
	& \leq \left( \E_{z\sim p_\@{alg}}\left[ \indic\{\+E_\@{opt}\}\indic\{\+E_\@{low}\} \left(z - \-V_t^- + \~V_t^-\right)\right] + 4V_{\max} \delta/T \right) / \Pr(\+E_\@{opt})
	\end{align*}
where the last inequality is by \eqref{eq:small}. For the first term, we have
	\begin{align*}
		& \E_{z\sim p_\@{alg}}\left[ \indic\{\+E_\@{opt}\} \indic\{\+E_\@{low}\} \left(z - \-V_t^- + \~V_t^-\right)\right] \\
		& \leq \E_{z\sim p_{\@{alg}}} \left[ \indic\{\+E_\@{low}\} \left(z - \-V_t^- + \~V_t^- \right) \right]  \\
		& = \E_{z\sim p_{\@{alg}}} \left[ \left(z - \-V_t^- + \~V_t^- \right) \right] + \E_{z\sim p_{\@{alg}}} \left[ \indic\{\+E_\@{low}^\complement\} \left(z - \-V_t^- + \~V_t^- \right) \right]  \\
		& \leq  \E_{z\sim p_{\@{alg}}} \left[ \left(z - \-V_t^- + \~V_t^- \right) \right] + 4 V_{\max} \delta / T \\
		& = \E \left[\-V_t - \~V_t - \-V_t^- + \~V_t^- \right] + 4 V_{\max} \delta / T \\
		& = \E \left[ (\-V_t - V^{\pi_t^0})-(\~V_t - V^{\pi_t^1})  \right] + \E \left[ (V^{\pi_t^0}-\-V_t^-) - (V^{\pi_t^1}-\~V_t^-) \right] + 4 V_{\max} \delta / T .
	\end{align*}
where the first inequality holds since $z - \-V_t^- + \~V_t^- \geq 0$ conditioning on $\+E_\@{low}$, and the second inequality is by \eqref{eq:small} again. 

Recall that $\Pr(\+E_\@{opt}) \geq \cdf^2(-1)$. Inserting all of these back, we obtain
\begin{align*}
	& \sum_{t=1}^T (V^\star - \-V_t) + (\~V_t - V^{\pi_t^1}) \\
	& \leq \cdf^{-2}(-1) 
	\left(
	8V_{\max} \delta
	+	
	\sum_{t=1}^T \left(
		\E \left[ (\-V_t - V^{\pi_t^0})-(\~V_t - V^{\pi_t^1})  \right] + \E \left[ (V^{\pi_t^0}-\-V_t^-) - (V^{\pi_t^1}-\~V_t^-) \right] 
	\right)
	\right) \\
	& \leq \cdf^{-2}(-1)
	\Bigg(
	8V_{\max} \delta
	+	
	\sum_{t=1}^T \left(
		(\-V_t - V^{\pi_t^0})-(\~V_t - V^{\pi_t^1}) +  (V^{\pi_t^0}-\-V_t^-) - (V^{\pi_t^1}-\~V_t^-)
	\right) \\
	&\qquad +
	2V_{\max}\sqrt{\frac{T\log(1/\delta)}{2} }
	\Bigg).
\end{align*}
where the last inequality is the Hoeffding's inequality.
\end{proof}

Given all these lemmas, we are ready to establish an upper bound of regret. We first note that, since $\pi_t^1=\pi_{t-1}^0$ for all $t$, the regret incurred by $\pi_t^1$ for all $t$ is equivalent to that incurred by $\pi_t^0$ for all $t$. Hence, it suffices to compute the regret incurred by $\pi_{t}^0$ for $t\in[T]$ and multiply it by two to get the total regret.

We start with the following regret decomposition:
\begin{align*}
	\reg \leq 
		\sum_{t=1}^T \bigg( V^\star-V^{\pi_t^0} \bigg)
		= \underbrace{\sum_{t=1}^T 
		\bigg( V^\star - \-V_t + \~V_t - V^{\pi_t^1}\bigg)}_{(*)} + 
		\sum_{t=1}^T \bigg(\-V_t - V^{\pi_t^0} + V^{\pi_t^1} - \~V_t\bigg).
\end{align*}
By \Cref{lem:boundof1}, we can further decompose $(*)$ and obtain
\begin{align*}
	\reg 
	\leq &
		\frac{1}{\cdf^2(-1)} \underbrace{\sum_{t=1}^T 
		\bigg( V^{\pi_t^0} - \-V^-_t + \~V^-_t - V^{\pi_t^1}_t \bigg)}_{\rm(i)}
		+ 
		\left(1+\frac{1}{\cdf^2(-1)}\right) \underbrace{\sum_{t=1}^T 
		\bigg(\-V_t - V^{\pi_t^0} + V^{\pi_t^1} - \~V_t\bigg)}_{\rm (ii)} \\
	& + \frac{2V_{\max}}{\cdf^2(-1)}
		\left(\sqrt{\frac{T\log(1/\delta)}{2}} + 4 \delta \right).
\end{align*}
We note that both (i) and (ii) can be bounded by \Cref{lem:bound22}:
\begin{align*}
	{\rm(i),(ii)} \leq &\  
	\~O\bigg(
		V_{\max} L_{\max} + V_{\max}\sqrt{HT} \\
		&\qquad + (\epsr{\xi} + \epsr{\eta}) \left(\sqrt{dT} + L_{\max}+\sqrt{HT} + \frac{\epsilon T \sqrt{\pi} }{2\sigmar} \right) \\
		&\qquad + (\epsp{\xi} + \epsp{\eta} + \epsilon_\lambda) \left(H\sqrt{dT} + L_{\max}+\sqrt{HT} \right)
	\bigg)
\end{align*}
Inserting this back, we obtain
\begin{align*}
	\reg = &\  
	\~O\bigg(
		V_{\max} L_{\max} + V_{\max}\sqrt{HT} \\
		&\qquad + (\epsr{\xi} + \epsr{\eta}) \left(\sqrt{dT} + L_{\max}+\sqrt{HT} + \frac{\epsilon T \sqrt{\pi} }{2\sigmar} \right) \\
		&\qquad + (\epsp{\xi} + \epsp{\eta} + \epsilon_\lambda) \left(H\sqrt{dT} + L_{\max}+\sqrt{HT} \right)\\
		&\qquad + V_{\max} \sqrt{T}
	\bigg)
\end{align*}
To compute this quantity, we note the following asymptotic rate:
\begin{itemize}
	\item $\epsr{\xi} + \epsr{\eta} = \~O\left( d\sqrt{\lowkappa + B^2} \right)$
	\item $\epsp{\xi} + \epsp{\eta} + \epsilon_\lambda = \~O\left( d^{5/2} H^{3/2} \sqrt{\lowkappa + B^2} \right)$
	\item $L_{\max} = \~O\left( d^6 H^4 (\lowkappa + B^2) \right)$
\end{itemize}
Hence, we have 
\begin{align*}
	\reg = &\ 
	\~O\bigg(
		\epsilon T \sqrt{d}
		+ \sqrt{T} \cdot d^3  H^{5/2} \sqrt{\lowkappa + B^2}
		+ d^{17/2} H^{11/2} (\lowkappa + B^2)^{3/2}
	\bigg).
\end{align*}

\subsection{Bounding number of queries}
Recall that the number of queries are computed via
\begin{align*}
	\qry = & \sum_{t=1}^T Z_t = \sum_{t=1}^T Z_t \indic \left\{ \E_{\theta_{0},\theta_{1}\sim\+N(\hatthetar{t},\sigma_\@r^2\Sigma^{-1}_{t-1})} 
	\left[\Big|(\phi(\tau_t^0)-\phi(\tau_t^1))^\top(\theta_{0} - \theta_{1})\Big|\right] > \epsilon \right\}.
\end{align*}
Notice that Cauchy-Schwarz inequality implies
\begin{align*}
	& 
	\E_{\theta_0,\theta_1}\Big|(\phi(\tau_t^0)-\phi(\tau_t^1))^\top(\theta_{0} - \theta_{1})\Big|
	\leq 
	\|\phi(\tau_t^0)-\phi(\tau_t^1)\|_{\Sigma_{t-1}^{-1}} \E_{\theta_0,\theta_1}\|\theta_{0} - \theta_{1}\|_{\Sigma_{t-1}}  \\
	\leq & \frac{1}{\sqrt{\lambda}} 
	\|\phi(\tau_t^0)-\phi(\tau_t^1)\|_2 \E_{\theta_0,\theta_1}\|\theta_{0} - \theta_{1}\|_{\Sigma_{t-1}} 
	\leq  \frac{2}{\sqrt{\lambda}} 
	\E_{\theta_0,\theta_1}\|\theta_{0} - \theta_{1}\|_{\Sigma_{t-1}} 
	\leq \frac{2\sqrt{ 2\sigmar^2 d}}{\sqrt{\lambda}}
\end{align*}
where the second inequality is by $\|\Sigma_{t-1}^{-1}\|_2 = 1/\lambda$, and the last step is by the fact that $\theta_0 - \theta_1$ follows $\+N(0,2\sigma_\@r^2\Sigma_{t-1}^{-1})$ and \Cref{lem:gaussian-concen}. We denote $\zeta\coloneqq 2\sigmar\sqrt{ 2 d / \lambda}$ for the ease of notation. Then, we have
\begin{align*}
	\qry 
= & \sum_{t=1}^T Z_t \indic \left\{ \min\left\{ \zeta, \E_{\theta_{0},\theta_{1}} 
\left[\Big|(\phi(\tau_t^0)-\phi(\tau_t^1))^\top(\theta_{0} - \theta_{1})\Big|\right] \right\} > \epsilon \right\} \\
\leq & \sum_{t=1}^T Z_t \indic \left\{ \min\left\{ 1, \E_{\theta_{0},\theta_{1}} 
\left[\Big|(\phi(\tau_t^0)-\phi(\tau_t^1))^\top(\theta_{0} - \theta_{1})\Big|\right] \right\} > \epsilon/\zeta \right\}
\end{align*}
Applying Cauchy-Schwarz inequality, we have
\begin{align*}
	\qry
	\leq & \sum_{t=1}^T Z_t \indic \left\{ \min \left\{ 1, \E_{\theta_{0},\theta_{1}} 
	\left[\Big((\phi(\tau_t^0)-\phi(\tau_t^1))^\top(\theta_{0} - \theta_{1})\Big)^2\right] \right\} > \epsilon^2 / \zeta^2 \right\}.
\end{align*}

Let $u_0,u_1$ denotes two independent standard Gaussian variables with zero mean and identity covariance matrix. Then, $\theta_0-\theta_1$ has the same joint distribution as $\sigmar\Sigma_{t-1}^{-1/2}(u_0-u_1)$. Hence, we can rewrite the expectation in the indicator as
\begin{align*}
	\E_{\theta_{0},\theta_{1}} 
	\left[\Big((\phi(\tau_t^0)-\phi(\tau_t^1))^\top(\theta_{0} - \theta_{1})\Big)^2\right]
	=
	\E_{u_0,u_1}\left[\left(
	\left(\phi(\tau_t^0)-\phi(\tau_t^1)\right)^\top\sigmar\Sigma_{t-1}^{-1/2}\left(u_0-u_1\right)
	\right)^2\right]
\end{align*}
Furthermore, we have
\begin{align*}
	&\E_{u_0,u_1}\left[\left(
	\left(\phi(\tau_t^0)-\phi(\tau_t^1)\right)^\top\sigmar\Sigma_{t-1}^{-1/2}\left(u_0-u_1\right)
	\right)^2\right] \\
	=&\E_{u_0,u_1}\left[
	\left(\phi(\tau_t^0)-\phi(\tau_t^1)\right)^\top\sigmar\Sigma_{t-1}^{-1/2}\left(u_0-u_1\right)
	\left(u_0-u_1\right)^\top\sigmar\Sigma_{t-1}^{-1/2}\left(\phi(\tau_t^0)-\phi(\tau_t^1)\right)
	\right] \\
	=&
	\left(\phi(\tau_t^0)-\phi(\tau_t^1)\right)^\top\sigmar\Sigma_{t-1}^{-1/2}
	\E_{u_0,u_1}\left[
	\left(u_0-u_1\right)
	\left(u_0-u_1\right)^\top
	\right]
	\sigmar\Sigma_{t-1}^{-1/2}\left(\phi(\tau_t^0)-\phi(\tau_t^1)\right).
\end{align*}
For the expectation in the middle, we have
\begin{align*}
	\E_{u_0,u_1}\left[
	\left(u_0-u_1\right)
	\left(u_0-u_1\right)^\top
	\right]
	=\E[u_0u_0^{\top}]+\E[u_1u_1^{\top}]=2I
\end{align*}
where we have used the fact that $\E[u_0u_1^\top]=0$ by independence.
Therefore, we have
\begin{align*}
	&\E_{\theta_0,\theta_1}\left[\left(
	\left(\phi(\tau_t^0)-\phi(\tau_t^1)\right)^\top\left(\theta_0-\theta_1\right)
	\right)^2\right]\\
	=&
	\left(\phi(\tau_t^0)-\phi(\tau_t^1)\right)^\top\sigmar\Sigma_{t-1}^{-1/2}
	(2I)\sigmar\Sigma_{t-1}^{-1/2}\left(\phi(\tau_t^0)-\phi(\tau_t^1)\right) \\
	=&2\sigmar^2\left\|\phi(\tau_t^0)-\phi(\tau_t^1)\right\|_{\Sigma^{-1}_{t-1}}^2.
\end{align*}
Inserting this back, we obtain
\begin{align*}
	\qry 
	\leq & \sum_{t=1}^T Z_t \indic \left\{ \min \left\{ 1, 2\sigmar^2\left\|\phi(\tau_t^0)-\phi(\tau_t^1)\right\|_{\Sigma^{-1}_{t-1}}^2 \right\} > \epsilon^2 / \zeta^2 \right\} \\
	\leq & \sum_{t=1}^T Z_t \indic \left\{ \min \left\{ 1, \left\|\phi(\tau_t^0)-\phi(\tau_t^1)\right\|_{\Sigma^{-1}_{t-1}}^2 \right\} > \frac{\epsilon^2}{2 \zeta^2 \sigmar^2} \right\} \\
	\leq & \frac{2 \zeta^2 \sigmar^2}{\epsilon^2} \sum_{t=1}^T Z_t \min \left\{ 1, \left\|\phi(\tau_t^0)-\phi(\tau_t^1)\right\|_{\Sigma^{-1}_{t-1}}^2 \right\} \\
	\leq & \frac{2 \zeta^2 \sigmar^2}{\epsilon^2} \cdot 2d \log\left(\frac{\lambda + 4T}{\lambda}\right)
\end{align*}
where the last step is \Cref{lem:ellip}.
Plugging the value of $\zeta$ and all other variables, we obtain
\begin{align*}
	\qry 
	\leq  \frac{32 \sigmar^4 d^2}{\lambda\epsilon^2} \log\left(\frac{\lambda + 4T}{\lambda}\right)
	= \~O\left( \frac{d^4(\lowkappa + B^2)^2}{\epsilon^2} \right).
\end{align*}

\section{Proof of Theorem~\ref{thm:ts-main}}\label{sec:pf-thm-ts-main}

We first present some supporting results in \Cref{sec:bayes-posterior}. Then, we prove the upper bound of Bayesian regret in \Cref{sec:bayes-reg} and the number of queries in \Cref{sec:bayes-query}.

\subsection{Supporting lemmas}\label{sec:bayes-posterior}

In~\Cref{sec:freq}, we establish some supporting results for a probability estimation problem in the frequentist setting. In~\Cref{sec:bayes-adapt}, we adapt these results to the Bayesian setting. The Bayesian results will be heavily used later in the proof of \Cref{thm:ts-main} in \Cref{sec:bayes-reg,sec:bayes-query}.

\subsubsection{Supporting results from frequentist setting} \label{sec:freq}

Consider a conditional probability estimation problem. Let $\+X$ and $\+Y$ be the instance space and the target space, respectively. Let $\mathcal{F}:(\mathcal{X} \times \mathcal{Y}) \rightarrow \mathbb{R}$ be a function class. We are given a dataset $D:=\left\{\left(x_i, y_i\right)\right\}_{i=1}^n$ where $x_i \sim \mathcal{D}_i$ and $y_i \sim f^\star(x,\cdot)$. We assume $f^\star\in\+F$. Regarding the data generation process, we assume the data distribution $\mathcal{D}_i$ is history-dependent, i.e., $x_i$ can depend on the previous samples: $x_1,y_1,\dots,x_{i-1},y_{i-1}$ for any $i\in[n]$. Our goal is to estimate the true conditional probability $f^\star$ using the dataset $D$.

At a high level, this problem is designed to capture both the reward learning and the model learning problems in the RL setting. Specifically, in the reward learning problem, we will instantiate $\+X=\+S\times\+A$ and $\+Y=\{0,1\}$, where we recall that $\+S$ and $\+A$ are the state space and the action space, respectively, and the preference feedback is binary. In the model learning problem, we can instantiate $\+X=\+S\times\+A$ and $\+Y=\+S$. We abstract these two problems into this conditional probability estimation problem to make the analysis more concise. However, one caveat is that all we derived in this section are \textit{frequentist} results, while we are considering \textit{Bayesian} RL. Thus, the results are not directly applicable. In \Cref{sec:bayes-adapt}, we will adapt these frequentist results to the Bayesian setting so that they can be applied.

Now we establish some important results for this problem. First, we have the following lemma, which is a consequence of \Cref{lem:mle}.

\begin{lemma}[MLE generalization bound]
\label{lem:version-space}
Fix $\delta\in(0,1)$. Follow the setting stated above.
Let $\^f$ be the maximum likelihood estimator:
\begin{equation*}
	\^f=\argmax_{f\in\+F}\sum_{i=1}^n\log f(x_i,y_i).
\end{equation*}
	Define the version space:
	\begin{align*}
		\+V^\+F= & \left\{f\in\+F
		\::\:
		\sum_{i=1}^n\TV^2\left(\^f(x_i,\cdot),f(x_i,\cdot)\right)
		\leq
		\beta_\+F(n)
		\right\}
	\end{align*}
	where $\beta_\+F(n)\coloneqq 98\log(2N_{[]}((n|\+Y|)^{-1},\+F,\|\cdot\|_\infty)/\delta)$. Then, the following holds
	\begin{enumerate}
		\item[(i)]$f^\star\in\+V^\+F$ with probability at least $1-\delta$,
		\item[(ii)]for any $f,f'\in\+V^\+F$, it holds that
		\begin{align*}
			\sum_{i=1}^n\TV^2\Big(f(x_i,\cdot),f'(x_i,\cdot)\Big)\le 
			4\beta_\+F(n).
		\end{align*}
	\end{enumerate}
\end{lemma}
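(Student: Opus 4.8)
The plan is to derive both parts directly from the MLE generalization bound (\Cref{lem:mle}), treating part~(i) as essentially a restatement of that bound and part~(ii) as an elementary triangle-inequality computation. No new probabilistic argument is needed beyond what \Cref{lem:mle} already supplies.

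For part~(i), I would apply \Cref{lem:mle} to the maximum likelihood estimator $\^f$ over the history-dependent dataset $D=\{(x_i,y_i)\}_{i=1}^n$. The MLE bound controls the cumulative squared total-variation discrepancy between $\^f$ and the truth $f^\star$ evaluated at the observed inputs, yielding
\[
\sum_{i=1}^n \TV^2\big(\^f(x_i,\cdot),f^\star(x_i,\cdot)\big)\;\le\;\beta_\+F(n)
\]
with probability at least $1-\delta$, where the constant $98$ and the bracketing scale $(n|\+Y|)^{-1}$ entering $\beta_\+F(n)=98\log(2N_{[]}((n|\+Y|)^{-1},\+F,\|\cdot\|_\infty)/\delta)$ are chosen to match and absorb the constants coming out of \Cref{lem:mle}. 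Since this inequality is exactly the constraint defining membership in $\+V^\+F$ with $f=f^\star$, we conclude $f^\star\in\+V^\+F$ on the same high-probability event.

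For part~(ii), I would fix arbitrary $f,f'\in\+V^\+F$ and route the pairwise discrepancy through $\^f$. At each input $x_i$ the triangle inequality for total variation gives $\TV(f(x_i,\cdot),f'(x_i,\cdot))\le\TV(f(x_i,\cdot),\^f(x_i,\cdot))+\TV(\^f(x_i,\cdot),f'(x_i,\cdot))$; squaring via $(a+b)^2\le 2a^2+2b^2$ and summing over $i$ then yields
\[
\sum_{i=1}^n\TV^2\big(f(x_i,\cdot),f'(x_i,\cdot)\big)\;\le\;2\sum_{i=1}^n\TV^2\big(f,\^f\big)+2\sum_{i=1}^n\TV^2\big(\^f,f'\big)\;\le\;4\beta_\+F(n),
\]
where the last step invokes that both $f$ and $f'$ satisfy the defining constraint of $\+V^\+F$. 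This produces the claimed factor of $4$, and note that part~(ii) is a deterministic statement holding for every pair in $\+V^\+F$, with no additional probabilistic cost.

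The substantive content lives entirely inside \Cref{lem:mle}; once that bound is available both parts are short. The only point that requires genuine care is the divergence/constant bookkeeping in part~(i): if \Cref{lem:mle} is phrased in squared Hellinger rather than squared TV, I would insert $\TV^2\le 2H^2$, and if it is phrased as an expectation over the input distribution $\+D_i$ rather than at the realized $x_i$, I would either pass to the realized inputs through a Freedman-type concentration (as in \Cref{lem:reward-concentration}) or read the version space in the in-expectation sense — in each case enlarging the constant $98$ so that $\beta_\+F(n)$ dominates the resulting right-hand side. Part~(ii) is immune to all of this, since it is a purely metric manipulation that never references how $\^f$ was produced.
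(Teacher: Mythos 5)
Your part (ii) is exactly the paper's argument (triangle inequality through $\^f$ followed by $(a+b)^2\le 2a^2+2b^2$, applied to the defining constraint of $\+V^\+F$ twice), and it is correct and deterministic, as you say. The gap is in part (i). \Cref{lem:mle} does \emph{not} bound the sum $\sum_{i=1}^n\TV^2(\^f(x_i,\cdot),f^\star(x_i,\cdot))$ at the realized inputs; it bounds the in-expectation quantity $\sum_{i=1}^n\E_{x\sim\+D_i}\TV^2(\^f(x,\cdot),f^\star(x,\cdot))$. The version space $\+V^\+F$, however, is defined through the realized inputs $x_i$ (and must be, since the downstream uses in \Cref{lem:bound-via-eluder} need realized-input version spaces). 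So your main line for (i) --- ``this inequality is exactly the constraint defining membership'' --- does not go through as stated.

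You do flag this mismatch in your closing caveat and name the right tool, but the deferred fix glosses over the actual difficulty: a Freedman-type inequality cannot be applied to $\^f$ directly, because $\^f$ is data-dependent (a random element of $\+F$ chosen after seeing the data). The paper's proof of (i) spends most of its length on precisely this point: it first applies \Cref{lem:freedman} uniformly over an $(n|\+Y|)^{-1}$-bracket cover of $\+F$ via a union bound, then transfers from a bracket element $f_{[]}$ to an arbitrary $f\in\+F$ paying approximation terms of order $1/n$, and only then instantiates the resulting uniform bound at $f=\^f$ and combines it with the in-expectation bound of \Cref{lem:mle} on the intersection of the two events. The constant $98$ and the factor $2$ inside the logarithm of $\beta_\+F(n)$ are not generic slack to be ``enlarged''; they arise from this specific chain ($10$ from the MLE bound, a multiplicative $8$ plus additive bracketing terms from the Freedman step, and the adjustment $\delta\mapsto\delta/2$ for the union of two failure events). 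Your citation of \Cref{lem:reward-concentration} shows you know the covering-plus-Freedman pattern, but the proposal neither commits to it nor records that uniformity over the class --- not a single application of Freedman --- is what makes part (i) nontrivial; as written, the primary argument would fail and the fallback is an unexecuted sketch of the proof's main content.
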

\begin{proof}[Proof of \Cref{lem:version-space}]
We first construct an auxiliary version space $\~{\+V}$ as follows
\begin{align*}
	\~{\+V}^\+F= & \left\{f\in\+F
	\::\:
	\sum_{i=1}^n\E_{x\sim\+D_i}\TV^2\left(\^f(x,\cdot),f(x,\cdot)\right)
	\leq
	10\log\left(N_{[]}\Big((n|\+Y|)^{-1},\+F,\|\cdot\|_\infty\Big)/\delta\right)
	\right\}
\end{align*}
By \Cref{lem:mle}, $f^\star\in\~{\+V}^\+F$ with probability at least $1-\delta$. 
To prove (i), we will show that whenever $f^\star\in\~{\+V}^\+F$, we have $f^\star\in{\+V}^\+F$ as well with high probability. Let $\+F_{[]}$ denote an $(n|\+Y|)^{-1}$-bracket of $\+F$ with respect to $\|\cdot\|_\infty$. Then for all $f_{[]}\in\+F_{[]}$, the following holds with probability at least $1-\delta$ by \Cref{lem:freedman} and the union bound on $\+F_{[]}$,
	\begin{align*}
		&\sum_{i=1}^n\TV^2\left(f_{[]}(x_i,\cdot),f^\star(x_i,\cdot)\right)\\
		&\qquad\leq 2\sum_{i=1}^n\E_{x\sim\+D_i}\TV^2\left(f_{[]}(x,\cdot),f^\star(x,\cdot)\right) + 4 \log\left(N_{[]}\Big((n|\+Y|)^{-1},\+F,\|\cdot\|_\infty\Big)/\delta\right).\numberthis\label{eq:free}
	\end{align*}
	Now for any $f\in\+F$, there must exist $f_{[]}\in\+F_{[]}$ such that $\|f-f_{[]}\|_\infty\leq (n|\+Y|)^{-1}$, which impies the following
	\begin{align*}
		\sum_{i=1}^n\TV^2\left(f(x_i,\cdot),f_{[]}(x_i,\cdot)\right)
		\leq &
		\sum_{i=1}^n|\+Y|^2\|f-f_{[]}\|_\infty^2
		\leq \sum_{i=1}^n n^{-2} = 1/n, \numberthis\label{eq:bracket-1}\\ 
		\sum_{i=1}^n\E_{x\sim\+D_i}\TV^2\left(f(x,\cdot),f_{[]}(x,\cdot)\right)
		\leq &
		\sum_{i=1}^n|\+Y|^2\|f-f_{[]}\|_\infty^2
		\leq \sum_{i=1}^n n^{-2} = 1/n\numberthis\label{eq:bracket-2}
	\end{align*}
	Hence, for all $f\in\+F$,  we have
	\begin{align*}
		& \sum_{i=1}^n\TV^2\big(f(x_i,\cdot),f^\star(x_i,\cdot)\big) \\
		\leq & 2\sum_{i=1}^n\TV^2\left(f_{[]}(x_i,\cdot),f^\star(x_i,\cdot)\right) + 2\sum_{i=1}^n\TV^2\left(f(x_i,\cdot),f_{[]}(x_i,\cdot)\right) \\
		\leq &\ 2\sum_{i=1}^n\TV^2\left(f_{[]}(x_i,\cdot),f^\star(x_i,\cdot)\right) + 2/n\\
		\leq &\ 4\sum_{i=1}^n\E_{x\sim\+D_i}\TV^2\left(f_{[]}(x,\cdot),f^\star(x,\cdot)\right) + 8\log\left(N_{[]}\Big((n|\+Y|)^{-1},\+F,\|\cdot\|_\infty\Big)/\delta\right) + 2/n \\
		\leq &\ 8\sum_{i=1}^n\E_{x\sim\+D_i}\TV^2\left(f(x,\cdot),f^\star(x,\cdot)\right) + 8\log\left(N_{[]}\Big((n|\+Y|)^{-1},\+F,\|\cdot\|_\infty\Big)/\delta\right) + 10/n.
	\end{align*}
	with probability at least $1-\delta$. Here the second inequality uses \eqref{eq:bracket-1}, the third uses \eqref{eq:free}, and the last uses \eqref{eq:bracket-2}. Therefore, for any possible value of the estimator $\^f$, conditioning on $f^\star\in\~{\+V}^\+F$, we have 
	\begin{align*}
		& \sum_{i=1}^n\TV^2\big(\^f(x_i,\cdot),f^\star(x_i,\cdot)\big) \\
		\leq & 8\sum_{i=1}^n\E_{x\sim\+D_i}\TV^2\left(\^f(x,\cdot),f^\star(x,\cdot)\right) + 8\log\left(N_{[]}\Big((n|\+Y|)^{-1},\+F,\|\cdot\|_\infty\Big)/\delta\right) + 10/n \\
		\leq & 88\log\left(N_{[]}\Big((n|\+Y|)^{-1},\+F,\|\cdot\|_\infty\Big)/\delta\right) + 10/n \\
		\leq & 98\log\left(N_{[]}\Big((n|\+Y|)^{-1},\+F,\|\cdot\|_\infty\Big)/\delta\right)
	\end{align*}
	with probability at least $1-\delta$. Here the second inequality is by the definition of $\~{\+V}^\+F$. The last inequality holds since it is reasonable to assume that the bracketing number is at least some constant so that $\log(N_{[]}((n|\+Y|)^{-1},\+F,\|\cdot\|_\infty)/\delta) > 1 \geq 1/n$.
	
	The above means whenever $f^\star\in\~{\+V}^\+F$, we have $f^\star\in{\+V}^\+F$ as well with probability at least $1-\delta$. Since $f^\star\in\~{\+V}^\+F$ with probability at least $1-\delta$, we have $f^\star\in{\+V}^\+F$ with probability at least $1-2\delta$. Adjusting $\delta$ completes the proof of (i).
	
For (ii), we have
\begin{align*}
	\sum_{i=1}^n\TV^2\Big(f(x,\cdot),f'(x,\cdot)\Big)
	\leq & \sum_{i=1}^n\left(\TV\Big(f(x,\cdot),\^f(x,\cdot)\Big) + \TV\Big(\^f(x,\cdot),f'(x,\cdot)\Big)\right)^2 \\
	\leq & \sum_{i=1}^n2\TV^2\Big(f(x,\cdot),\^f(x,\cdot)\Big) + \sum_{i=1}^n2\TV^2\Big(\^f(x,\cdot),f'(x,\cdot)\Big) \\
	\leq & 4\beta_\+F(n) 
\end{align*}
where the second inequality is by the fact that $(a+b)^2 \leq 2a^2 + 2b^2$ for any $a$ and $b$, and the third inequality is by the definition of ${\+V}^\+F$.
\end{proof}

Next, we further assume that the function class $\+F$ is parameterized by a function class $\+G$ via a link function $\Phi$. The reason for this assumption is that we want to capture the structure of the reward learning problem, where the feedback generating distribution is parameterized by the reward function via a link function $\Phi$.

\begin{assumption}[Binary label and function parameterization.]\label{asm:g}
Assume $\+Y=\{0,1\}$ is binary, and there is a function class $\+G\subseteq\+X\rightarrow[0,G]$ that parameterizes $\+F$ via a link function $\Phi$. Specifically, we assume
$$
	\+F=\Big\{f(x,0)=\Phi\big(g(x)\big),\, f(x,1)=1-\Phi(g(x)) \;:\; g\in\+G\Big\},
$$
where we further assume $\Phi$ satisfies \Cref{asm:kappa}. For any $f\in\+F$, let $g_f$ denote the function $g$ that parameterizes $f$. We define $\^g\coloneqq g_{\^f}$ and $g^\star\coloneqq g_{f^\star}$.
\end{assumption}

As a preliminary note, the function class $\+G$ will actually correspond to the function class $\~{\+R}$ \eqref{eq:reward-diff-class} later in the proof. 

We should emphasize again that, although the function class $\+G$ aims to capture the structure of the reward learning problem, the function class $\+F$ will be used to capture both the reward learning and the model learning problems. When it is applied to the model learning problem, we can simply ignore the function class $\+G$ and any results that are related to $\+G$. On the other hand, when it is applied to the reward learning problem, such a function class $\+G$ will be helpful to derive some results in the reward function itself.

Before diving into the analysis of the function class $\+G$, we first introduce the following lemma which shows that the bracketing numbers of $\+F$ and $\+G$ are bounded by each other. 

\begin{lemma}\label{lem:bracket}
	Under \Cref{asm:g}, for any $\omega>0$, we have
	\begin{align*}
		N_{[]}\Big(\omega,\+F,\|\cdot\|_\infty\Big)
		\leq
		N_{[]}\Big(\highkappa\omega,\+G,\|\cdot\|_\infty\Big)
		\qquad\text{and}\qquad
		N_{[]}\Big(\lowkappa\omega,\+G,\|\cdot\|_\infty\Big)
		\leq
		N_{[]}\Big(\omega,\+F,\|\cdot\|_\infty\Big).
	\end{align*}
\end{lemma}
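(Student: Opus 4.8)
The plan is to relate the bracketing numbers of $\+F$ and $\+G$ by using the bi-Lipschitz property of the link function $\Phi$ (guaranteed by \Cref{asm:kappa}) to transport brackets from one class to the other. The key observation is that each $f\in\+F$ is determined by a single $g\in\+G$ via $f(x,0)=\Phi(g(x))$ and $f(x,1)=1-\Phi(g(x))$, so the map $g\mapsto f$ is a bijection, and the bracketing structures should transfer cleanly under this map since $\Phi$ distorts distances only by a bounded factor. \Cref{asm:kappa} gives $\lowkappa^{-1}\le\Phi'(x)\le\highkappa^{-1}$ on the relevant domain, so by the mean value theorem $\Phi$ is $\highkappa^{-1}$-Lipschitz and $\Phi^{-1}$ is $\lowkappa$-Lipschitz (equivalently $|\Phi(a)-\Phi(b)|\le\highkappa^{-1}|a-b|$ and $|\Phi(a)-\Phi(b)|\ge\lowkappa^{-1}|a-b|$).

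For the first inequality $N_{[]}(\omega,\+F,\|\cdot\|_\infty)\le N_{[]}(\highkappa\omega,\+G,\|\cdot\|_\infty)$, I would start from a minimal $\highkappa\omega$-bracketing $\{[l_j,u_j]\}_j$ of $\+G$ and push it through $\Phi$ to produce a bracketing of $\+F$. Specifically, for a bracket $[l,u]$ with $\|l-u\|_\infty\le\highkappa\omega$ covering $g\in\+G$, define the induced bracket for the ``$y=0$'' coordinate as $[\Phi\circ l,\Phi\circ u]$ (using monotonicity of $\Phi$ so that $l\le g\le u$ implies $\Phi(l)\le\Phi(g)\le\Phi(u)$), and the corresponding bracket for the ``$y=1$'' coordinate as $[1-\Phi\circ u,1-\Phi\circ l]$. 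The width of the induced bracket is controlled by $\|\Phi\circ u-\Phi\circ l\|_\infty\le\highkappa^{-1}\|u-l\|_\infty\le\highkappa^{-1}\cdot\highkappa\omega=\omega$, which is exactly what is needed. This gives an $\omega$-bracketing of $\+F$ of the same cardinality, yielding the bound. For the second inequality, I would run the symmetric argument in the reverse direction: start from a minimal $\omega$-bracketing of $\+F$, invert through $\Phi^{-1}$ (which exists and is monotone since $\Phi$ is strictly increasing), and use the lower Lipschitz bound $|\Phi(a)-\Phi(b)|\ge\lowkappa^{-1}|a-b|$ to conclude that a width-$\omega$ bracket in $\+F$ corresponds to a width-at-most-$\lowkappa\omega$ bracket in $\+G$.

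\textbf{The main obstacle} I anticipate is bookkeeping the two-coordinate structure of $\+F$ correctly: since $f(x,y)$ has two output values indexed by $y\in\{0,1\}$, the $\|\cdot\|_\infty$ norm on $\+F$ must be interpreted as a supremum over both $x$ and $y$, and I must verify that a single $\+G$-bracket simultaneously controls both the $y=0$ and $y=1$ coordinates. Fortunately, because $f(x,1)=1-f(x,0)$, the two coordinates are perfectly anti-correlated, so a width-$\omega$ control on the $y=0$ coordinate automatically yields the same width on the $y=1$ coordinate; this is the step that makes the factor clean rather than introducing an extra factor of $2$. A secondary subtlety is ensuring that when pushing a bracket $[l,u]$ through $\Phi$, the endpoints $\Phi\circ l$ and $\Phi\circ u$ remain valid bracketing functions (they need not lie in $\+F$, but bracketing functions are allowed to be arbitrary); this is standard and poses no real difficulty. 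The reverse direction requires that the domain of $\Phi^{-1}$ covers the range of the $\+F$ functions, which holds because every $f\in\+F$ arises from some $g\in\+G$ with values in $[0,G]\subseteq[-H,H]$, keeping us inside the region where \Cref{asm:kappa} applies.
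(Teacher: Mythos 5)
Your proposal is correct and follows essentially the same route as the paper's proof: both use the two-sided Lipschitz bounds on $\Phi$ (the paper's \Cref{lem:link-func}) to transfer brackets between $\+G$ and $\+F$, with the observation that the $y=0$ and $y=1$ coordinates of any $f\in\+F$ differ only by the reflection $f(x,1)=1-f(x,0)$ and hence incur identical widths. Your write-up is in fact more explicit than the paper's (which only compares sup-norm distances of pairs and leaves the monotone push-forward of bracket endpoints implicit), and the clipping technicality you flag for the $\Phi^{-1}$ direction is a harmless detail that the paper glosses over as well.
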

\begin{proof}
For any $f,f'\in\+F$, we assume they are parameterized by $g\coloneqq g_f$ and $g'\coloneqq g_{f'}$, respectively. Then, we have
\begin{align*}
	\sup_{x,y}
	\left|f(x,y) - f'(x,y))\right| 
	=\sup_{x}
	\left|\Phi\big( g(x) \big) - \Phi\big( g'(x)\big) \right|
	\leq \highkappa^{-1} \sup_{x} \left| g(x) - g'(x) \right|
\end{align*}
where the inequality is \Cref{lem:link-func}. Hence, if we have a $\highkappa\omega$-bracket of $\+G$ in the infinite norm, then we have an $\omega$-bracket of $\+F$ in the infinite norm. This proves the first claim. For the second claim, by \Cref{lem:link-func}, we have 
\begin{align*}
	\sup_{x} \left| g(x) - g'(x) \right|
	\leq
	\lowkappa\sup_{x}
	\left|\Phi\big( g(x) \big) - \Phi\big( g'(x)\big) \right|
	=
	\kappa\sup_{x,y}
	\left|f(x,y) - f'(x,y)\right|.
\end{align*}
This proves the second claim.
\end{proof}

Then, \Cref{lem:version-space} leads to similar results with a version space constructed on $\+G$.
\begin{corollary}\label{lem:version-space-G}
	Under \Cref{asm:g}, define
	\begin{align*}
		\+V^\+G =& \left\{g\in\+G
		\::\:
		\sum_{i=1}^n\Big(\^g(x_i) - g(x_i)\Big)^2
		\leq \beta_\+G(n)
		\right\}
	\end{align*}
	where $\beta_\+G(n)\coloneqq 98\lowkappa^2\log(2N_{[]}(\highkappa(n|\+Y|)^{-1},\+G,\|\cdot\|_\infty)/\delta)$ and we denote $\^g\coloneqq g_{\^f}$ as the function that parameterizes the maximum likelihood estimator $\^f$. Then, the following holds
	\begin{enumerate}
		\item[(1)]$g^\star\in\+V^\+G$ with probability at least $1-\delta$
		\item[(2)]for any $g,g'\in\+V^\+G$, we have
		\begin{align*}
			\sum_{i=1}^n\Big(g(x_i) - g'(x_i)\Big)^2
			\le 4\beta_\+G(n).
		\end{align*}
	\end{enumerate}
\end{corollary}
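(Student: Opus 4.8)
The plan is to reduce both claims to \Cref{lem:version-space} applied to the class $\+F$, translating between the total-variation geometry on $\+F$ and the Euclidean geometry on $\+G$ via the link function, and converting bracketing numbers through \Cref{lem:bracket}. The key observation is that, since $\+Y=\{0,1\}$ is binary, for any $f,f'\in\+F$ parameterized by $g\coloneqq g_f$ and $g'\coloneqq g_{f'}$ we have $\TV(f(x,\cdot),f'(x,\cdot)) = |\Phi(g(x))-\Phi(g'(x))|$, and by \Cref{lem:link-func} (the consequence of \Cref{asm:kappa}) this is sandwiched as $\lowkappa^{-1}|g(x)-g'(x)| \leq |\Phi(g(x))-\Phi(g'(x))| \leq \highkappa^{-1}|g(x)-g'(x)|$. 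In particular, the lower bound rearranges to $(g(x)-g'(x))^2 \leq \lowkappa^2\,\TV^2(f(x,\cdot),f'(x,\cdot))$, which is exactly the inequality that converts a $g$-distance into an $\+F$-distance and accounts for the factor $\lowkappa^2$ appearing in $\beta_\+G(n)$.

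For claim (1), I would first invoke \Cref{lem:version-space}(i) to obtain $f^\star\in\+V^\+F$ with probability at least $1-\delta$, i.e. $\sum_{i=1}^n \TV^2(\^f(x_i,\cdot),f^\star(x_i,\cdot)) \leq \beta_\+F(n)$. Applying the sandwich inequality termwise with $f=\^f$, $f'=f^\star$ (so $g=\^g$, $g'=g^\star$) yields $\sum_{i=1}^n(\^g(x_i)-g^\star(x_i))^2 \leq \lowkappa^2\beta_\+F(n)$. It then remains to check $\lowkappa^2\beta_\+F(n)\leq\beta_\+G(n)$: unfolding $\beta_\+F(n)=98\log(2N_{[]}((n|\+Y|)^{-1},\+F,\|\cdot\|_\infty)/\delta)$ and applying the first bound of \Cref{lem:bracket}, $N_{[]}((n|\+Y|)^{-1},\+F,\|\cdot\|_\infty)\leq N_{[]}(\highkappa(n|\+Y|)^{-1},\+G,\|\cdot\|_\infty)$, gives precisely the definition of $\beta_\+G(n)$. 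Hence $\sum_{i=1}^n(\^g(x_i)-g^\star(x_i))^2 \leq \beta_\+G(n)$, i.e. $g^\star\in\+V^\+G$.

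For claim (2), I would argue purely geometrically, mirroring the proof of \Cref{lem:version-space}(ii): viewing $(g(x_1),\dots,g(x_n))$ as a vector in $\=R^n$, the triangle inequality in the Euclidean norm gives
\[
	\sqrt{\sum_{i=1}^n(g(x_i)-g'(x_i))^2}
	\leq
	\sqrt{\sum_{i=1}^n(g(x_i)-\^g(x_i))^2}
	+
	\sqrt{\sum_{i=1}^n(\^g(x_i)-g'(x_i))^2}
	\leq 2\sqrt{\beta_\+G(n)},
\]
where both right-hand terms are controlled by the membership $g,g'\in\+V^\+G$; squaring yields the claimed $4\beta_\+G(n)$. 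The main point to be careful about is the \emph{direction} of the link-function inequality: claim (1) crucially needs the lower bound $\lowkappa^{-1}|g-g'|\leq|\Phi(g)-\Phi(g')|$ (small TV must force small $g$-distance), and one should verify that the arguments $g(x),g'(x)\in[0,G]$ remain within the range on which \Cref{asm:kappa} guarantees the derivative bounds so that \Cref{lem:link-func} applies. Beyond this, the argument is a routine transcription of \Cref{lem:version-space}.
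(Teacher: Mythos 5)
Your proof is correct and follows essentially the same route as the paper's: both claims reduce to \Cref{lem:version-space} via the lower bound $|g(x)-g'(x)|\leq\lowkappa\,\TV(f(x,\cdot),f'(x,\cdot))$ from \Cref{lem:link-func} (using that $\+Y$ is binary), the bracketing-number conversion of \Cref{lem:bracket} for claim (1), and a routine triangle-inequality argument around $\^g$ for claim (2) with the same constant $4\beta_\+G(n)$. In fact your write-up is slightly cleaner than the paper's, which proves (1) with a notational slip (writing $\E_{x\sim\+D_i}$ where the empirical sums defining $\+V^\+F$ and $\+V^\+G$ are intended), whereas you work directly with the sums over $x_i$.
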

\begin{proof}[Proof of \Cref{lem:version-space-G}]
To prove (i), we claim that for any $f\in\+V^\+F$ (defined in \Cref{lem:version-space}), we have $g_f\in\+V^\+G$ as well. To see this, we note that
\begin{align*}
	& \sum_{i=1}^n\E_{x\sim\+D_i}\Big(\^g(x) - g_f(x)\Big)^2
	\leq
	\lowkappa^{2}\sum_{i=1}^n\E_{x\sim\+D_i}\left|\Phi\big(\^g(x)\big) - \Phi\big(g_f(x)\big)\right|^2 \\
	= &
	\lowkappa^{2}\sum_{i=1}^n\E_{x\sim\+D_i}\left|\^f(x,0) - f(x,0)\right|^2
	= 
	\lowkappa^{2}\sum_{i=1}^n\E_{x\sim\+D_i}\TV^2\left(\^f(x,\cdot),f(x,\cdot)\right)
\end{align*} 
where the inequality is \Cref{lem:link-func}, and the last equality holds since we assume $\+Y=\{0,1\}$ is binary. Hence, for any $f\in\+V^\+F$, it holds that
\begin{align*}
	\sum_{i=1}^n\E_{x\sim\+D_i}\Big(\^g(x) - g_f(x)\Big)^2 
	\leq & \lowkappa^{2}\sum_{i=1}^n\E_{x\sim\+D_i}\TV^2\left(\^f(x,\cdot),f(x,\cdot)\right)\\
	\leq & \lowkappa^{2}\beta_\+F(n)\\
	= & \lowkappa^2 \cdot 98\log(2N_{[]}((n|\+Y|)^{-1},\+F,\|\cdot\|_\infty)/\delta)
\end{align*}
By \Cref{lem:bracket}, we have
\begin{align*}
	\log(2N_{[]}((n|\+Y|)^{-1},\+F,\|\cdot\|_\infty)/\delta)
	\leq \log(2N_{[]}(\highkappa(n|\+Y|)^{-1},\+G,\|\cdot\|_\infty)/\delta).
\end{align*}
Hence, we have
\begin{align*}
	\sum_{i=1}^n\E_{x\sim\+D_i}\Big(\^g(x) - g_f(x)\Big)^2  \leq \beta_\+G(n)
\end{align*}
which implies $g_f\in \+V^\+G$. Therefore, whenever $f^\star\in\+V^\+F$, we have $g_{f^\star}\in\+V^\+G$ as well. Since $f^\star\in\+V^\+F$ with probability at least $1-\delta$, we have $g_{f^\star}\in\+V^\+G$ with probability at least $1-\delta$. This completes the proof of (i).

Now we prove (2). For any $g,g'\in\+V^\+G$, we have
\begin{align*}
	\sum_{i=1}^n\Big(g(x_i) - g'(x_i)\Big)^2
	\leq  2\sum_{i=1}^n\Big(\^g(x_i) - g(x_i)\Big)^2 + 2\sum_{i=1}^n\Big(\^g(x_i) - g'(x_i)\Big)^2 \leq  4\beta_\+G(n)
\end{align*}
where the first inequality is by the fact that $(a+b)^2 \leq 2a^2 + 2b^2$ for any $a$ and $b$.
\end{proof}

\subsubsection{Adapting results into Bayesian setting}\label{sec:bayes-adapt}

Now we change the setting defined in \Cref{sec:freq} into a Bayesian online setting. The reason for this adaptation is to make these results applicable to the Bayesian RL setting. We formally define the Bayesian online conditional probability estimation problem below.

Let $\+X$ and $\+Y$ be the instance space and the target space, respectively. Let $\mathcal{F}:(\mathcal{X} \times \mathcal{Y}) \rightarrow \mathbb{R}$ be a function class. The interaction proceeds for $T$ rounds. At each round $t\in[T]$, we observe an instance $x_t\in\+X$ and need to outputs a function $f_t\in\mathcal{F}$ as our prediction. Then, the label $y_t\in\+Y$ is revealed. We assume that $x_i \sim \mathcal{D}_i$ for some distribution $\+D_i$ and $y_i \sim f^\star(x,\cdot)$. We assume the true conditional distribution $f^\star$ is sampled from some known prior distribution $\rho \in \Delta(\+F)$ at the beginning. Regarding the data generation process, we assume the data distribution $\mathcal{D}_i$ is history-dependent, i.e., $x_i$ can depend on the previous samples: $x_1,y_1,\dots,x_{i-1},y_{i-1}$ for any $i\in[n]$.

Denote $\+H_t$ as the history up to round $t$, i.e., $\+H_t =\{x_1,f_1,y_1,x_2,f_2,y_2,\dots,x_t,f_t,y_t\}$. Define the maximum likelihood estimator of $f^\star$ on the dataset $\{(x_s,y_s)\}_{s=1}^{t-1}$ as $\^f_t$, i.e., 
\begin{equation*}
	\^f_t = \argmax_{f\in\+F}\sum_{s=1}^{t-1}\log f(x_s,y_s).
\end{equation*}

Similar to the previous section, we will also consider the presence of a function class $\+G$ (\Cref{asm:g}) in the analysis presented in this section. It's worth noting that although \Cref{asm:g} is initially introduced in the frequentist setting, when we mention that \Cref{asm:g} holds in this section, we are actually considering the Bayesian setting.

\begin{lemma}\label{lem:bound-via-eluder}
It holds that
	\begin{equation}\label{eq:eluder-fff}
		\sum_{t=1}^T \E_{\+H_{t-1}} \left[ \E_{f,f'}\Big[\TV\big(f(x_t),f'(x_t)\big) \Biggiven \+H_{t-1} \Big] \right] 
		\leq
		6 \sqrt{T \beta'_\+F(T)} \cdot \eluderone\Big(\+F,1/T\Big)\cdot \log(T)
	\end{equation}
	and
	\begin{equation}\label{eq:eluder-fff2}
		\sum_{t=1}^T \E_{\+H_{t-1}} \left[ \E_{f,f'}\Big[\TV^2\big(f(x_t),f'(x_t)\big) \Biggiven \+H_{t-1} \Big] \right] 
		\leq
		4 \beta'_\+F(T) + 2
	\end{equation}
	where $\beta'_\+F(t) \coloneqq 98\log(2 T N_{[]}((t|\+Y|)^{-1},\+F,\|\cdot\|_\infty))$, and $f,f'$ are sampled from the posterior of $f^\star$ conditioning on $\+H_{t-1}$ in the inner conditional expectation. Moreover, if $\+F$ is parameterized by $\+G$ (i.e., if \Cref{asm:g} holds), we have
	\begin{equation}\label{eq:eluder-ggg}
		\sum_{t=1}^T \E_{\+H_{t-1}} \left[ \E_{g,g'}\Big[\big|g(x_t)-g'(x_t)\big| \Biggiven \+H_{t-1} \Big] \right] 
		\leq
		6 \sqrt{T \beta'_\+G(T)} \cdot \eluderone\Big(\+G,1/T\Big)\cdot \log(GT)
	\end{equation}
	and
	\begin{equation}\label{eq:eluder-ggg2}
		\sum_{t=1}^T \E_{\+H_{t-1}} \left[ \E_{g,g'}\Big[\big(g(x_t)-g'(x_t)\big)^2 \Biggiven \+H_{t-1} \Big] \right] 
		\leq
		4 \beta'_\+G(T) + 2
	\end{equation}
	where $\beta'_\+G(t) \coloneqq 98\lowkappa^2\log(2 GT N_{[]}(\highkappa(t|\+Y|)^{-1},\+G,\|\cdot\|_\infty))$, and $g,g'$ are sampled from the posterior of $g^\star$ conditioning on $\+H_{t-1}$ in the inner conditional expectation.
\end{lemma}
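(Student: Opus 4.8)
The plan is to prove all four inequalities by a common two-stage template: a \emph{Bayesian decoupling} step that transfers the frequentist version-space guarantees to the posterior samples, followed by either an online MLE risk bound (for the squared inequalities \eqref{eq:eluder-fff2} and \eqref{eq:eluder-ggg2}) or an $\ell_1$-eluder pigeonhole argument (for \eqref{eq:eluder-fff} and \eqref{eq:eluder-ggg}). I would carry out the argument for $\+F$ in full; the $\+G$ statements then follow verbatim after replacing $\TV(f(\cdot),f'(\cdot))$ by $|g(\cdot)-g'(\cdot)|$, \Cref{lem:version-space} by \Cref{lem:version-space-G}, and tracking the extra $\lowkappa$ and the range $[0,G]$ of $\+G$, which produce the $\log(GT)$ factor and the $\lowkappa$ inside $\beta'_\+G$. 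Concretely, at each round $t$ let $\^f_t$ be the running MLE on $\{(x_s,y_s)\}_{s<t}$ and let $\+V_t$ be the version space of \Cref{lem:version-space} built on the first $t-1$ samples, with confidence level $\delta=1/T$ and a union bound over $t\in[T]$; this simultaneous guarantee is exactly what upgrades $\beta_\+F$ to $\beta'_\+F$ (the additional $T$ inside the logarithm). The key probabilistic fact is that, conditioned on $\+H_{t-1}$, the fresh posterior samples $f,f'$ and the truth $f^\star$ are i.i.d.\ draws from the posterior $p_t$ and are independent of $x_t$, whose law given $\+H_{t-1}$ is induced by the \emph{algorithm's} own sample rather than by $f^\star$. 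Hence \Cref{lem:version-space}(i) transfers to $f$ and $f'$, so that $f,f',f^\star\in\+V_t$ with high probability, and the complementary event contributes at most its probability $\le 1/T$ times the range $\TV\le 1$, which is the source of the additive constants (the ``$+2$'').

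For the squared inequality \eqref{eq:eluder-fff2} I would first apply $\TV^2(f(x_t),f'(x_t))\le 2\TV^2(f(x_t),\^f_t(x_t))+2\TV^2(\^f_t(x_t),f'(x_t))$ and use the symmetry of $f,f'$ to reduce the left-hand side to $4\sum_t\E_{\+H_{t-1}}\E_f[\TV^2(f(x_t),\^f_t(x_t))]$. Since $\^f_t$ is $\+H_{t-1}$-measurable and the conditional law of $x_t$ given $\+H_{t-1}$ does not involve $f^\star$, the posterior-sampling identity $\E_{f\sim p_t}[\phi(f)]=\E[\phi(f^\star)\mid\+H_{t-1}]$ lets me swap the fresh sample $f$ for $f^\star$, giving $4\sum_t\E[\TV^2(f^\star(x_t),\^f_t(x_t))]$. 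This final quantity is precisely the cumulative out-of-sample error of the running MLE, which the sequential form of \Cref{lem:version-space} (ultimately \Cref{lem:mle}) bounds by $\beta'_\+F(T)$ up to the discretization term, yielding \eqref{eq:eluder-fff2}; \eqref{eq:eluder-ggg2} is identical via \Cref{lem:version-space-G}.

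For the $\ell_1$ inequality \eqref{eq:eluder-fff} I would instead keep both $f$ and $f'$ and exploit that, on the high-probability event $f,f'\in\+V_t$, \Cref{lem:version-space}(ii) gives $\sum_{s<t}\TV^2(f(x_s),f'(x_s))\le 4\beta'_\+F(T)$, and hence by Cauchy--Schwarz the $\ell_1$ control $\sum_{s<t}\TV(f(x_s),f'(x_s))\lesssim\sqrt{t\,\beta'_\+F(T)}$. Feeding this past-deviation bound into the $\ell_1$-eluder counting lemma associated with \Cref{def:eluder}, the number of rounds on which the current width $\TV(f(x_t),f'(x_t))$ exceeds a given scale $\ge 1/T$ is controlled by $\eluderone(\+F,1/T)$; summing over a dyadic grid of scales supplies the $\log T$ factor, and a single Cauchy--Schwarz step across the $T$ rounds supplies the $\sqrt{T}$ factor, giving $6\sqrt{T\,\beta'_\+F(T)}\cdot\eluderone(\+F,1/T)\cdot\log T$. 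The bound \eqref{eq:eluder-ggg} is the same argument run on $\+G$ through \Cref{lem:version-space-G}, with the range $[0,G]$ contributing $\log(GT)$.

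The hard part will be making Step~1 rigorous: verifying that, given $\+H_{t-1}$, the samples $f,f'$ and the truth $f^\star$ really are exchangeable posterior draws \emph{and} independent of the current instance $x_t$, so that (a) frequentist version-space membership transfers to the random samples and (b) one sample may be replaced by $f^\star$ inside the expectation. I also expect care to be needed in accounting for the low-probability event on which the version space fails --- bounding its contribution using $\TV\le 1$ (resp.\ $|g-g'|\le G$) and the $1/T$ failure probability --- and in matching the exact constants of the $\ell_1$-eluder counting lemma so that the stated factors ($6$, $\log T$, $\log(GT)$) come out correctly.
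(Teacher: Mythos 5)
Your plan for \eqref{eq:eluder-fff} and \eqref{eq:eluder-ggg} is essentially the paper's own proof: build the version space $\+V_t^{\+F}$ around the running MLE with confidence $\delta=1/T$, transfer membership to the two fresh posterior draws via the conditional-i.i.d.\ identity (this is the paper's computation $\!T_2\le 2T\delta$, and the source of the additive constant), then replace the on-event expectation by a supremum over the version space and feed it into the eluder counting machinery integrated over scales (\Cref{lem:eluder-f} and \Cref{lem:sum-of-width}). That half is sound and matches the paper.

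The squared bounds \eqref{eq:eluder-fff2} and \eqref{eq:eluder-ggg2} are where your proposal genuinely fails. You bound the pairwise width by $4\,\E_f\big[\TV^2\big(f(x_t),\^f_t(x_t)\big)\big]$ (triangle inequality through the MLE), swap $f$ for $f^\star$, and then assert that $\sum_{t}\E\big[\TV^2\big(f^\star(x_t),\^f_t(x_t)\big)\big]$ --- the error of each running MLE at its \emph{own round's fresh distribution} $\+D_t$ --- is bounded by $\beta'_{\+F}(T)$ via ``the sequential form of \Cref{lem:version-space} (ultimately \Cref{lem:mle}).'' No such bound exists in the paper, and the claim is false: \Cref{lem:mle} controls $\sum_{s<t}\E_{x\sim\+D_s}\TV^2\big(\^f_t(x),f^\star(x)\big)$, i.e.\ the error of $\^f_t$ summed over the \emph{past} distributions it was trained on, and says nothing about $\+D_t$. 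Concretely, take $\+Y=\{0,1,2\}$, let $\+D_t$ be the point mass on $x_t=t$, and let $\+F=\{g_0,g_1,\dots,g_T\}$ where $g_0(x)=\mathrm{Unif}\{0,1\}$ for every $x$ and $g_j$ agrees with $g_0$ except that $g_j(x_j)=\mathrm{Unif}\{0,1,2\}$. Then $\beta'_{\+F}(T)=O(\log T)$, yet when $f^\star=g_0$ the functions $g_0,g_t,g_{t+1},\dots,g_T$ are exactly tied for maximum likelihood at every round $t$ (each $g_j$ with $j<t$ is strictly worse, assigning probability $1/3<1/2$ to every observed label), so $\^f_t=g_t$ is a legitimate maximizer; this gives $\TV^2\big(\^f_t(x_t),f^\star(x_t)\big)=1/9$ at every round, hence a sum of $T/9$, and averaging over a uniform prior does not rescue it. Meanwhile the lemma's actual left-hand side is small in this example: the posterior puts mass only $O(1/T)$ on the single distinguishing function $g_t$, so $\E_{f,f'}\big[\TV^2\big(f(x_t),f'(x_t)\big)\big]=O(1/T)$ per round. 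This isolates exactly what your reduction destroys: the averaging over \emph{both} posterior draws. Anchoring one side at the single point $\^f_t$ is a valid inequality but hopelessly lossy, because an argmax may sit exactly on the outlier that the posterior almost never samples. The paper's proof of \eqref{eq:eluder-fff2} accordingly never compares to the MLE as a function in distribution space: it keeps both posterior draws, splits on the version-space event (its $\!T_3+\!T_4$ decomposition), charges the off-event $2T\delta$, and bounds the on-event term from the version-space constraint (that step is itself stated tersely in the paper, but it does not route through the MLE). Since \eqref{eq:eluder-ggg2} is precisely the inequality that drives the query-complexity bound of \Cref{thm:ts-main}, this gap is central, and the squared bounds need a genuinely different argument from the one you propose.
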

\begin{proof}[Proof of \Cref{lem:bound-via-eluder}]
We will only prove \eqref{eq:eluder-fff} and \eqref{eq:eluder-fff2} since the proof of \eqref{eq:eluder-ggg} and \eqref{eq:eluder-ggg2} is almost identical. 

\textbf{Proof of \eqref{eq:eluder-fff}.}
We define the version space at round $t$ for $t\in[T]$ as follows
\begin{align*}
\+V_t^\+F= & \left\{f\in\+F
\::\:
\sum_{s=1}^{t-1}\TV^2\left(\^f_t(x_s,\cdot),f(x_s,\cdot)\right)
\leq
\beta_\+F(t-1)
\right\}
\end{align*}
where $\^f_t$ is the maximum likelihood estimator on the dataset $\{(x_s,y_s)\}_{s=1}^{t-1}$. This construction aims to mimic the version space $\+V^\+F$ defined in \Cref{lem:version-space}, which enable us to apply \Cref{lem:version-space}. 

We first split the left side of \eqref{eq:eluder-fff} into two cases based on whether $f$ and $f'$ belong to $\+V^\+F_t$:
\begin{align*}
& \sum_{t=1}^T \E_{\+H_{t-1}} \left[ \E_{f,f'}\Big[\TV\big(f(x_t),f'(x_t)\big) \Biggiven \+H_{t-1} \Big] \right] \\
= & \sum_{t=1}^T \E_{\+H_{t-1}} \Bigg[ 
\E_{f,f'}
\Big[\indic\big\{f\in\+V^\+F_t\text{ and }f'\in\+V^\+F_t\big\}\TV\big(f(x_t),f'(x_t)\big) \Biggiven \+H_{t-1} \Big] \\
& \qquad\qquad+ 
\E_{f,f'}
\Big[\indic\big\{f\not\in\+V^\+F_t\text{ or }f'\not\in\+V^\+F_t\big\}\TV\big(f(x_t),f'(x_t)\big) \Biggiven \+H_{t-1} \Big] 
\Bigg] \\
=: &  \!T_1 + \!T_2.
\end{align*}
To bound $\!T_2$, since the total variation distance is upper bounded by 1, we have	
\begin{align*}
\!T_2 
\leq & \sum_{t=1}^T \E_{\+H_{t-1}} \Bigg[ 
\E_{f,f'}
\Big[\indic\big\{f\not\in\+V^\+F_t\text{ or }f'\not\in\+V^\+F_t\big\} \Biggiven \+H_{t-1} \Big] \Bigg] \\
= & \sum_{t=1}^T\E_{\+H_{t-1}} 
\left[\E_{f,f'}\Big[\indic\big\{f\not\in\+V^\+F_t\text{ or }f'\not\in\+V^\+F_t\big\} \Biggiven \+H_{t-1} \Big]\right] \\
\leq & \sum_{t=1}^T\E_{\+H_{t-1}} 
\left[2\E_{f^\star}\Big[\indic\big\{f^\star\not\in\+V^\+F_t\big\} \Biggiven \+H_{t-1} \Big]\right] \\
= & 2 \sum_{t=1}^T \E_{\+H_{t-1},f^\star} 
\left[\indic\big\{f^\star\not\in\+V^\+F_t\big\} \right] \\
= & 2 \sum_{t=1}^T\E_{f^\star} 
\left[\E_{\+H_{t-1}}\Big[\indic\big\{f^\star\not\in\+V^\+F_t\big\} \Biggiven f^\star \Big]\right] \\
\leq & 2T\delta
\end{align*}
where the second inequality holds by the union bound and the condition that $f$ and $f'$ have the same posterior distributions as $f^\star$ conditioning on $\+H_{t-1}$. The last two equalities holds by the law of probability. The last inequality is \Cref{lem:version-space}.  

Now let's bound $\!T_1$. We first notice that we can replace the expectation with the supremum:
\begin{align*}
\!T_1 \leq & \sum_{t=1}^T \E_{\+H_{t-1}} \left[ 
\sup_{f,f'\in\+V^\+F_t}
\TV\big(f(x_t),f'(x_t)\big)\right].
\end{align*}
Applying \Cref{lem:sum-of-width}, we have
\begin{align*}
\!T_1 \leq 4 \sqrt{T \beta_\+F(t-1)} \cdot \eluderone\Big(\+F,1/T\Big)\cdot \log T
\end{align*}
Combining the upper bounds of $\!T_1$ and $\!T_2$ and setting $\delta=1/T$, we complete the proof of \eqref{eq:eluder-fff}.

\textbf{Proof of \eqref{eq:eluder-fff2}.}
We split the left side of \eqref{eq:eluder-fff2} into two cases based on whether $f$ and $f'$ belong to $\+V^\+F_t$:
\begin{align*}
& \sum_{t=1}^T \E_{\+H_{t-1}} \left[ \E_{f,f'}\Big[\TV^2\big(f(x_t),f'(x_t)\big) \Biggiven \+H_{t-1} \Big] \right]  \\
& \leq
\sum_{t=1}^T \E_{\+H_{t-1}} \left[ \E_{f,f'}\Big[\indic\big\{f\in\+V^\+F_t\text{ and }f'\in\+V^\+F_t\big\}\TV^2\big(f(x_t),f'(x_t)\big) \Biggiven \+H_{t-1} \Big] \right] \\
& \qquad + \sum_{t=1}^T \E_{\+H_{t-1}} \left[ \E_{f,f'}\Big[\indic\big\{f\not\in\+V^\+F_t\text{ or }f'\not\in\+V^\+F_t\big\}\TV^2\big(f(x_t),f'(x_t)\big) \Biggiven \+H_{t-1} \Big] \right] \\
& =: \!T_3 + \!T_4.
\end{align*}
Following a similar argument as in the proof of \eqref{eq:eluder-fff} above, we have $\!T_4 \leq 2 T \delta$. For $\!T_3$, by the definition o $\+V^\+F_t$, we directly have $\!T_3 \leq 4 \beta_\+F (T)$. Setting $\delta=1/T$, we complete the proof of \eqref{eq:eluder-fff2}.

\textbf{Proof of \eqref{eq:eluder-ggg} and \eqref{eq:eluder-ggg2}.}
An identical argument can be applied to prove \eqref{eq:eluder-ggg} and \eqref{eq:eluder-ggg2} leveraging \Cref{lem:version-space-G}.
\end{proof}

\begin{lemma}\label{lem:bound-indicator-via-eluder}
	Under \Cref{asm:g}, it holds that
	\begin{align*}
		&\sum_{t=1}^T \E_{\+H_{t-1}} \left[\indic\left\{\E_{g,g'}\Big[\big| g(x_t)-g'(x_t)\big| \Biggiven \+H_{t-1} \Big] > \epsilon\right\} \right]  \\
		&\qquad\leq
		\min\left\{\frac{9 \sqrt{T \beta'_\+G(T)}}{\epsilon} \cdot \eluderone\Big(\+G,\epsilon/2\Big) ,\, \frac{21\beta'_\+G(T)}{\epsilon^2}\cdot\eludertwo(\+G,\epsilon/2) \right\}.
	\end{align*}
	where $\beta'_\+G(t)\coloneqq 98\lowkappa^2\log(2 G T N_{[]}(\highkappa(t|\+Y|)^{-1},\+G,\|\cdot\|_\infty))$. In the inner expectation, $g,g'$ are sampled from the posterior of $g^\star$ conditioning on $\+H_{t-1}$.
\end{lemma}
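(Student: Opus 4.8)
The plan is to reduce the posterior-uncertainty indicator to a deterministic width inside a data-dependent version space, and then invoke the standard eluder-dimension counting arguments at scale $\epsilon/2$. Throughout I set $\delta=1/(GT)$ in \Cref{lem:version-space-G} so that its threshold $\beta_\+G$ coincides with $\beta'_\+G$, and I write $\+V_t^\+G$ for the version space built from the MLE $\^g_t$ on $\{(x_s,y_s)\}_{s<t}$, together with its width $\tilde w_t\coloneqq\sup_{g,g'\in\+V_t^\+G}|g(x_t)-g'(x_t)|$.

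First I would carry out the Bayesian-to-version-space reduction, mirroring the handling of $\!T_2$ in the proof of \Cref{lem:bound-via-eluder}. Splitting the inner posterior expectation according to whether both samples lie in $\+V_t^\+G$, and using $g,g'\in[0,G]$ on the complementary event, gives pathwise
\begin{align*}
\E_{g,g'}\big[|g(x_t)-g'(x_t)| \biggiven \+H_{t-1}\big] \le \tilde w_t + 2G\cdot\Pr\big(g^\star\notin\+V_t^\+G \biggiven \+H_{t-1}\big),
\end{align*}
where I used that $g$ has the same posterior law as $g^\star$ and that $\+V_t^\+G$ is $\+H_{t-1}$-measurable. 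Hence $\indic\{\E_{g,g'}[\cdots]>\epsilon\}\le \indic\{\tilde w_t>\epsilon/2\}+\indic\{2G\Pr(g^\star\notin\+V_t^\+G\mid\+H_{t-1})>\epsilon/2\}$. Summing the second indicator, Markov's inequality together with \Cref{lem:version-space-G}(1) (after taking expectation over the prior) bounds its total contribution by $\tfrac{4G}{\epsilon}\sum_t\Pr(g^\star\notin\+V_t^\+G)\le\tfrac{4G}{\epsilon}T\delta=\tfrac{4}{\epsilon}$, which is absorbed into the constants of either final term.

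It then remains to bound $\sum_t\E_{\+H_{t-1}}[\indic\{\tilde w_t>\epsilon/2\}]$. The key point is that \Cref{lem:version-space-G}(2) is purely deterministic: on every sample path, for all $g,g'\in\+V_t^\+G$ we have $\sum_{s<t}(g(x_s)-g'(x_s))^2\le 4\beta_\+G(t-1)\le 4\beta'_\+G(T)$, and by Cauchy–Schwarz $\sum_{s<t}|g(x_s)-g'(x_s)|\le 2\sqrt{T\beta'_\+G(T)}$. Feeding the first (cumulative $\ell_2^2$) bound into the classical Russo–Van Roy eluder counting argument at scale $\epsilon/2$ yields $\sum_t\indic\{\tilde w_t>\epsilon/2\}\le(16\beta'_\+G(T)/\epsilon^2+1)\eludertwo(\+G,\epsilon/2)$; feeding the second (cumulative $\ell_1$) bound into its $\ell_1$-eluder analogue (following \citet{liu2022partially}) yields $\sum_t\indic\{\tilde w_t>\epsilon/2\}\le(4\sqrt{T\beta'_\+G(T)}/\epsilon+1)\eluderone(\+G,\epsilon/2)$. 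Both inequalities hold pathwise, so they survive taking $\E_{\+H_{t-1}}$ and summing; combining with the $4/\epsilon$ slack and taking the smaller of the two resulting bounds gives the claimed $\min$.

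The main obstacle is the two eluder counting lemmas in the third step — in particular the $\ell_1$ version, which is where the improvement from $\eludertwo$ to $\eluderone$ originates and which requires the $\ell_1$-eluder machinery rather than the textbook $\ell_2$ argument. The reduction in the first two steps is essentially bookkeeping, and the only care needed is to confirm that the cumulative width bound consumed by the counting arguments is the deterministic statement \Cref{lem:version-space-G}(2) evaluated on the realized path, so that no additional high-probability event is required beyond the one already absorbed through the choice $\delta=1/(GT)$.
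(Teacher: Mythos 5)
Your proposal is correct and follows essentially the same route as the paper's proof: the same MLE-based version space $\+V_t^\+G$ with $\delta=1/(GT)$, the same split of the posterior expectation into in/out-of-version-space parts (handling the latter via the fact that $g,g'$ share the posterior law of $g^\star$, Markov's inequality, and integrating \Cref{lem:version-space-G}(1) over the prior), and the same eluder counting at scale $\epsilon/2$ — your inlined $\ell_1$/$\ell_2$ counting arguments from the pairwise constraint are precisely what the paper packages as \Cref{lem:eluder-f}, which its proof of this lemma invokes directly.
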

\begin{proof}[Proof of \Cref{lem:bound-indicator-via-eluder}]
We define the version space at round $t$ for $t\in[T]$ as follows
\begin{align*}
	\+V_t^\+G= & \left\{g\in\+G
	\::\:
	\sum_{s=1}^{t-1}\Big(\^g_t(x_s) - g(x_s)\Big)^2
	\leq
	\beta_\+G(t-1)
	\right\}
\end{align*}
where recall that $\^g_t$ is defined as the function that parameterizes the maximum likelihood estimator on the dataset $\{(x_s,y_s)\}_{s=1}^{t-1}$ (i.e., $\^g_t = g_{\^f_t}$). This construction aims to mimic the version space $\+V^\+G$ defined in \Cref{lem:version-space-G}, which enable us to apply \Cref{lem:version-space-G}.

We first split the left side into two cases based on whether both $g$ and $g'$ belong to $\+V^\+G_t$:
	\begin{align*}
		& \sum_{t=1}^T \E_{\+H_{t-1}} \left[ \indic\left\{\E_{g,g'}\Big[\big|g(x_t)-g'(x_t)\big| \Biggiven \+H_{t-1} \Big] > \epsilon \right\} \right] \\
		= & \sum_{t=1}^T \E_{\+H_{t-1}} \Bigg[ \indic\bigg\{
		\E_{g,g'}
		\Big[\indic\big\{g\in\+V^\+G_t\text{ and }g'\in\+V^\+G_t\big\}\big|g(x_t)-g'(x_t)\big| \Biggiven \+H_{t-1} \Big] \\
		& \qquad\qquad+ 
		\E_{g,g'}
		\Big[\indic\big\{g\not\in\+V^\+G_t\text{ or }g'\not\in\+V^\+G_t\big\}\big|g(x_t)-g'(x_t)\big| \Biggiven \+H_{t-1} \Big] 
		> \epsilon \bigg\} \Bigg] \\
		\leq &  \sum_{t=1}^T\E_{\+H_{t-1}} \Bigg[\indic\bigg\{
		\E_{g,g'}
		\Big[\indic\big\{g\in\+V^\+G_t\text{ and }g'\in\+V^\+G_t\big\}\big|g(x_t)-g'(x_t)\big| \Biggiven \+H_{t-1} \Big]> \epsilon /2\bigg\}\Bigg] \\
		& \qquad+ 
		\sum_{t=1}^T\E_{\+H_{t-1}} \Bigg[\indic\bigg\{
		\E_{g,g'}
		\Big[\indic\big\{g\not\in\+V^\+G_t\text{ or }g'\not\in\+V^\+G_t\big\}\big|g(x_t)-g'(x_t)\big| \Biggiven \+H_{t-1} \Big] 
		> \epsilon /2\bigg\} \Bigg] \\
		=: &  \!T_1 + \!T_2.
	\end{align*}
	Here the inequality is due to the fact that $\indic\{a+b>c\}\leq\indic\{a>c/2\}+\indic\{b>c/2\}$ for any $a,b,c$. To bound $\!T_2$, recalling that $g(\cdot)\in[0,G]$, then we have	\begin{align*}
		\!T_2 
		\leq & \sum_{t=1}^T \E_{\+H_{t-1}} \Bigg[\indic\bigg\{
		\E_{g,g'}
		\Big[\indic\big\{g\not\in\+V^\+G_t\text{ or }g'\not\in\+V^\+G_t\big\}\cdot G \Biggiven \+H_{t-1} \Big] 
		> \epsilon /2\bigg\} \Bigg] \\
		\leq & \frac{2 G}{\epsilon } \sum_{t=1}^T\E_{\+H_{t-1}} 
		\left[\E_{g,g'}\Big[\indic\big\{g\not\in\+V^\+G_t\text{ or }g'\not\in\+V^\+G_t\big\} \Biggiven \+H_{t-1} \Big]\right] \\
		\leq & \frac{2 G}{\epsilon } \sum_{t=1}^T\E_{\+H_{t-1}} 
		\left[2\E_{g^\star}\Big[\indic\big\{g^\star\not\in\+V^\+G_t\big\} \Biggiven \+H_{t-1} \Big]\right] \\
		= & \frac{4 G}{\epsilon } \sum_{t=1}^T\E_{g^\star, \+H_{t-1}} 
		\left[\indic\big\{g^\star\not\in\+V^\+G_t\big\} \right] \\
		= & \frac{4 G}{\epsilon } \sum_{t=1}^T\E_{g^\star} 
		\left[\E_{\+H_{t-1}}\Big[\indic\big\{g^\star\not\in\+V^\+G_t\big\} \Biggiven g^\star \Big]\right] \\
		\leq & \frac{4 G T\delta}{\epsilon }.
	\end{align*}
	where the second inequality is by the fact that $\indic\{a>b\}\leq a/b$ for any $a\geq0$ and $b>0$, the third inequality holds since $g$ and $g'$ are identically distributed as $g^\star$ conditioning on $\+H_{t-1}$, the two equalities holds by the law of probability, and the last inequality is \Cref{lem:version-space-G}.
	
	Now let us bound $\!T_1$. We first notice that we can replace the expectation with the supremum:
	\begin{align*}
		\!T_1 \leq &  \sum_{t=1}^T \E_{\+H_{t-1}} \left[\indic\left\{
		\sup_{g,g'\in\+V^\+G_t}
		\big|g(x_t)-g'(x_t)\big| > \epsilon /2\right\}\right]
	\end{align*}
	Recall that, for any $g\in\+V^{\+G}_t$, we have
	\begin{align*}
		\sum_{s=1}^{t-1} \big(g(x_s)-\^g(x_s)\big)^2
		\leq \beta_\+G(t-1).
	\end{align*}
	Hence, applying \Cref{lem:eluder-f}, we get
	\begin{align*}
		\!T_1 
		\leq & \min\left\{\left(\frac{2 \sqrt{T \beta_\+G(t-1)}}{\epsilon / 2} + 1\right) \cdot \eluderone\Big(\+G,\epsilon/2\Big)
		,\,
		\left(\frac{4\beta_\+G(t-1)}{\epsilon^2/4}+1\right)\eludertwo(\+G,\epsilon/2)
		\right\} \\
		\leq & \min\left\{\frac{5 \sqrt{T \beta_\+G(t-1)}}{\epsilon} \cdot \eluderone\Big(\+G,\epsilon/2\Big) ,\, \frac{17\beta_\+G(t-1)}{\epsilon^2}\cdot\eludertwo(\+G,\epsilon/2) \right\}
	\end{align*}
	Combining the upper bounds of $\!T_1$ and $\!T_2$, we obtain
	\begin{align*}
		& \sum_{t=1}^T \E \left[ \indic\left\{\E_{g,g'}\Big[\big|g(x_t)-g'(x_t)\big| \Biggiven \+H_{t-1} \Big] > \epsilon \right\} \right] \\
		& \leq \frac{4GT\delta}{\epsilon} + \min\left\{\frac{5 \sqrt{T \beta_\+G(t-1)}}{\epsilon} \cdot \eluderone\Big(\+G,\epsilon/2\Big) ,\, \frac{17\beta_\+G(t-1)}{\epsilon^2}\cdot\eludertwo(\+G,\epsilon/2) \right\}.
	\end{align*}
	Setting $\delta=1/(TG)$, the parameter $\beta_\+G(t-1)$ becomes $\beta_\+G'(t-1)$, which is upper bounded by $\beta_\+G'(T)$. Then, we finish the proof.
\end{proof}

\subsection{Bounding Bayesian regret}\label{sec:bayes-reg}

Now we are ready to prove \Cref{thm:ts-main}. We will prove the upper bound on the Bayesian regret in this section and prove the upper bound on the number of queries in the next section.

For the ease of notation, we will omit the dependence of the state-value function on the initial state $s_1$ and simply write $V\coloneqq V_1(s_1)$ for any state-value function $V$ throughout the proof. 

We start by simplifying the Bayesian regret. Since $\pi_t^1 = \pi_{t-1}^0$, we have
\begin{align*}
	\breg
	= & \E_{r^\star,P^\star}\left[\sum_{t=1}^T 
	\Big(2V^\star- V^{\pi_t^0} - V^{\pi_t^1}\Big)\right]\\
	= & \E_{r^\star,P^\star}\left[\sum_{t=1}^T 
	\Big(2V^\star- V^{\pi_t^0} - V^{\pi_{t-1}^0}\Big)\right]\\
	= & \E_{r^\star,P^\star}\left[\sum_{t=1}^T 
	\Big(V^\star- V^{\pi_t^0}\Big)\right] + \E_{r^\star,P^\star}\left[\sum_{t=0}^{T-1} 
	\Big(V^\star- V^{\pi_t^0}\Big)\right]\\
	\leq & 2\E_{r^\star,P^\star}\left[\sum_{t=0}^T 
	\Big(V^\star- V^{\pi_t^0}\Big)\right].
\end{align*}
Hence, we only need to consider the regret incurred by $\pi_t^0$. We defined $V^\pi_{r,P}$ as the state-value function of policy $\pi$ with reward function $r$ and model $P$. Given that, we can express the Bayesian regret as
	\begin{align*}
		\breg
		\leq 2\E_{r^\star,P^\star}\left[\sum_{t=0}^T 
		\Big(V^{\pi^\star}_{r^\star,P^\star}- V^{\pi_t^0}_{r^\star,P^\star}\Big)\right]
	\end{align*}
	We reformulate the expectation by first taking the expectation of the historical data up to round $t-1$ and then taking the conditional expectation of $P^\star$ and $r^\star$. Concretely, we denote $\+H_{t-1}=\{\tau_1^0,\tau_1^1,(o_1),\dots,\tau_{t-1}^0,\tau_{t-1}^1,(o_{t-1})\}$ as the history up to round $t-1$, and then we have
	\begin{align*}
		\breg
		\leq & 2\sum_{t=0}^T \E_{\+H_{t-1}}\E_{r^\star,P^\star}\left[
		V^{\pi^\star}_{r^\star,P^\star}- V^{\pi_t^0}_{r^\star,P^\star}\middlegiven\+H_{t-1}\right] \\
		= & 2\sum_{t=0}^T \E_{\+H_{t-1}}\E_{r^\star,P^\star}\left[
		V^{\pi_t^0}_{r_t,P_t}- V^{\pi_t^0}_{r^\star,P^\star}\middlegiven\+H_{t-1}\right].
	\end{align*}
	For the equality above, we note that $(r_t,P_t)$ and $(r^\star,P^\star)$ are identically distributed given $\+H_{t-1}$ and $\pi_t^0$ and $\pi^\star$ are the respective optimal policies of $(r_t,P_t)$ and $(r^\star,P^\star)$. Hence, $V^{\pi_t^0}_{r_t,P_t}$ and $V^{\pi^\star}_{r^\star,P^\star}$ are identically distributed given $\+H_{t-1}$, which is the reason that we can do the replacement in the conditional expectation. Next, we proceed by decomposing the regret:
	\begin{align*}
		\breg 
		= & 2\sum_{t=0}^T \E_{\+H_{t-1}}\E_{r^\star,P^\star}\left[
		V^{\pi_t^0}_{r_t,P_t}
		- V^{\pi_t^0}_{r_t,P^\star} 
		+ V^{\pi_t^0}_{r_t,P^\star} 
		- V^{\pi_t^1}_{r^\star,P^\star}
		+ V^{\pi_t^1}_{r^\star,P^\star}
		- V^{\pi_t^0}_{r^\star,P^\star}
		\middlegiven\+H_{t-1}\right] \\
		= & 2\sum_{t=0}^T \E_{\+H_{t-1}}\E_{r^\star,P^\star}\left[
		V^{\pi_t^0}_{r_t,P_t} 
		- V^{\pi_t^0}_{r_t,P^\star} 
		+ V^{\pi_t^0}_{r_t,P^\star} 
		- V^{\pi_t^1}_{r_t,P^\star}
		+ V^{\pi_t^1}_{r^\star,P^\star}
		- V^{\pi_t^0}_{r^\star,P^\star}
		\middlegiven\+H_{t-1}\right] \\
		= & 2\sum_{t=0}^T \E\left[
		V^{\pi_t^0}_{r_t,P_t} 
		- V^{\pi_t^0}_{r_t,P^\star} 
		+ V^{\pi_t^0}_{r_t,P^\star} 
		- V^{\pi_t^1}_{r_t,P^\star}
		+ V^{\pi_t^1}_{r^\star,P^\star}
		- V^{\pi_t^0}_{r^\star,P^\star}
		\right] \\
		= & 2\Bigg(
		\underbrace{\sum_{t=0}^T \E\left[
		V^{\pi_t^0}_{r_t,P_t}
		- V^{\pi_t^0}_{r_t,P^\star}\right]}_{=:\!T_\@{model}}  + \underbrace{\sum_{t=0}^T \E\left[
		V^{\pi_t^0}_{r_t,P^\star} 
		- V^{\pi_t^1}_{r_t,P^\star}
		+ V^{\pi_t^1}_{r^\star,P^\star}
		- V^{\pi_t^0}_{r^\star,P^\star}
		\right]}_{=:\!T_\@{reward}}
		\Bigg)
		\numberthis\label{eq:reg-decomp}
	\end{align*}
	Here we added and substracted $V^{\pi_t^0}_{r_t,P^\star}$ and $V^{\pi_t^1}_{r^\star,P^\star}$ in the first equality. For the second equality, we note that $\pi_t^1=\pi_{t-1}^0$ is measurable with respect to $\+H_{t-1}$ (i.e., $\E[\pi_t^1\given \+H_{t-1}]=\pi_t^1$).\footnote{Strictly speaking, $\pi_t^1$ is measurable with respect to the $\sigma$-algebra generated by $\+H_t$.} Hence, $V^{\pi_t^1}_{r^\star,P^\star}$ and $V^{\pi_t^1}_{r_t,P^\star}$ are identically distributed conditioning on $\+H_{t-1}$. 
	
	Next, we will show that the two terms, $\!T_\@{model}$ and $\!T_\@{reward}$, arise from the estimation errors in the model and reward, respectively.

\paragraph{Bounding $\!T_\@{model}$.}
By simulation lemma (\Cref{lem:simu-lem}) and the tower rule, we have
\begin{align*}
	\!T_\@{model} 
	\leq & H\E\left[\sum_{t=0}^T 
		\sum_{h=1}^H\E_{(s_h,a_h)\sim d^{\pi_t^0}_h} \TV\Big(P_t(s_h,a_h),P^\star(s_h,a_h)\Big)\right] \\
	= & H\E\left[\sum_{t=0}^T 
		\sum_{h=1}^H\TV\Big(P_t(s_h,a_h),P^\star(s_h,a_h)\Big)\right] \\
	= & H\E\left[\sum_{t=0}^T 
		\sum_{h=1}^H\E_{P_t,P^\star}\left[ \TV\Big(P_t(s_h,a_h),P^\star(s_h,a_h)\Big)\middlegiven \+H_{t-1}\right]\right] \\
	= & H\sum_{h=1}^H\E\left[\sum_{t=0}^T 
		\E_{P_t,P^\star}\left[ \TV\Big(P_t(s_h,a_h),P^\star(s_h,a_h)\Big)\middlegiven \+H_{t-1}\right]\right] \\
	\leq & O\left(
		H^2 \cdot \eluderone\Big(\+P, 1/T\Big) \cdot \log(T) \cdot \sqrt{T \log(T N_{[]}((HT|\+S|)^{-1},\+P,\|\cdot\|_\infty))} 
	\right)
\end{align*}
where the last step is \Cref{lem:bound-via-eluder}.

\paragraph{Bounding $\!T_\@{reward}$.}
By the definition of state-value function, we have 
\begin{align*}
	\!T_\@{reward}
	= & 
	\sum_{t=0}^T \E\bigg[
		r_t(\tau_t^0)
		- r_t(\tau_t^1)
		+ r^\star(\tau_t^1)
		- r^\star(\tau_t^0)
		\bigg] \\
	= & \sum_{t=0}^T \E\bigg[(1-Z_t)\bigg(
		r_t(\tau_t^0)
		- r_t(\tau_t^1)
		+ r^\star(\tau_t^1)
		- r^\star(\tau_t^0)
		\bigg)\bigg] \\
		& + \sum_{t=0}^T \E\bigg[Z_t\bigg(
		r_t(\tau_t^0)
		- r_t(\tau_t^1)
		+ r^\star(\tau_t^1)
		- r^\star(\tau_t^0)
		\bigg)\bigg].
\end{align*}
For the first term, by the query condition, it holds that
\begin{align*}
	& \sum_{t=0}^T \E\bigg[(1-Z_t)\bigg(
		r_t(\tau_t^0)
		- r_t(\tau_t^1)
		+ r^\star(\tau_t^1)
		- r^\star(\tau_t^0)
		\bigg)\bigg] \\
	= & \sum_{t=0}^T \E\Bigg[
		\indic\left\{\E\left[|r(\tau^0_t)-r(\tau^1_t)-( r'(\tau^0_t)-r'(\tau^1_t) ) | \middlegiven \+H_{t-1},\tau_t^0,\tau_t^1\right] \leq \epsilon\right\} \\
	& \qquad\qquad\qquad\qquad\qquad\qquad\qquad \bigg(
		r_t(\tau_t^0)
		- r_t(\tau_t^1)
		+ r^\star(\tau_t^1)
		- r^\star(\tau_t^0)
		\bigg)\Bigg] \\
	= & \sum_{t=0}^T \E_{\+H_{t-1},\tau_t^0,\tau_t^1} \E \Bigg[
		\indic\left\{\E\left[|r(\tau^0_t)-r(\tau^1_t)-( r'(\tau^0_t)-r'(\tau^1_t) ) | \middlegiven \+H_{t-1},\tau_t^0,\tau_t^1\right] \leq \epsilon\right\} \\
	& \qquad\qquad\qquad\qquad\qquad\qquad \bigg(
		r_t(\tau_t^0)
		- r_t(\tau_t^1)
		+ r^\star(\tau_t^1)
		- r^\star(\tau_t^0)
		\bigg) \Bigggiven \+H_{t-1},\tau_t^0,\tau_t^1 \Bigg] \\
	= & \sum_{t=0}^T \E_{\+H_{t-1},\tau_t^0,\tau_t^1} \Bigg[ 
		\indic\left\{\E\left[|r(\tau^0_t)-r(\tau^1_t)-( r'(\tau^0_t)-r'(\tau^1_t) ) | \middlegiven \+H_{t-1},\tau_t^0,\tau_t^1\right] \leq \epsilon\right\} \\
	& \qquad\qquad\qquad\qquad\qquad\qquad \E \Big[ \Big(
		r_t(\tau_t^0)
		- r_t(\tau_t^1)
		+ r^\star(\tau_t^1)
		- r^\star(\tau_t^0)
		\Big) \Biggiven \+H_{t-1},\tau_t^0,\tau_t^1 \Big] 
		\Bigg]\\
	\leq &\ T \epsilon.
\end{align*}
Here the first equality is by definition. The second equality is by the law of probability. The third equality holds since the indicator is measurable with respect to $\+H_{t-1}$, $\tau_t^0$, and $\tau_t^1$. The last inequality is by the indicator and the fact that $r_t$ and $r^\star$ have the same posterior conditioning on $\+H_{t-1}$. Plugging this upper bound back, we have
\begin{align*}
	\!T_\@{reward}
	\leq & T\epsilon + 
	\sum_{t=0}^T \E\bigg[Z_t\bigg(
		r_t(\tau_t^0)
		- r_t(\tau_t^1)
		+ r^\star(\tau_t^1)
		- r^\star(\tau_t^0)
		\bigg)\bigg] \\
	= & T\epsilon + 
	\sum_{t=0}^T \E_{\+H_{t-1}} \left[Z_t \E_{r_t,r^\star} \bigg[
		\Big|
		r_t(\tau_t^0)
		- r_t(\tau_t^1)
		+ r^\star(\tau_t^1)
		- r^\star(\tau_t^0)
		\Big|
		\bigggiven \+H_{t-1}
		\bigg] \right].
\end{align*}
In the inner expectation of the second term above, we notice that both $r_t$ and $r^\star$ are sampled from the posterior of $r^\star$ conditioning on $\+H_{t-1}$. Thus, we can invoke the second statement in \Cref{lem:bound-via-eluder} where the function $g$ corresponds to $r_t$ and the function $g'$ corresponds to $r^\star$. This gives
\begin{align*}
	\!T_\@{reward}
	\leq O\left(
		T\epsilon +
	\sqrt{T \lowkappa^2 \log\left(H T N_{[]}(\highkappa(2T)^{-1},\~{\+R},\|\cdot\|_\infty) \right) } \cdot \eluderone\Big(\~{\+R}, 1/T\Big) \cdot \log(HT)
	\right).
\end{align*}

\paragraph{Conclusion.}
Given the upper bounds on $\!T_\@{model}$ and $\!T_\@{reward}$, ignoring logarithmic factors on $T$ and $H$, we conclude that
\begin{align*}
	\breg 
	=
	\~O\Bigg(&
	H^2 \cdot \eluderone\Big(\+P, 1/T\Big) \cdot \sqrt{T \log(N_{[]}((HT|\+S|)^{-1},\+P,\|\cdot\|_\infty))}  \\
		& +
		T\epsilon +
		\lowkappa\cdot\eluderone\Big(\~{\+R}, 1/T\Big)\cdot \sqrt{T  \log\left(N_{[]}(\highkappa(2T)^{-1},\~{\+R},\|\cdot\|_\infty) \right) } 
	\Bigg).
\end{align*}

\subsection{Bounding number of queries}\label{sec:bayes-query}

It holds that
\begin{align*}
	\bqry 
	= & \E \left[ \sum_{t=0}^T Z_t \right] \\
	= & \sum_{t=0}^T \E_{\+H_{t-1}} \left[ Z_t \indic\left\{\E_{r,r'\sim \postR{t}}\Big[\big((r(\tau^0_t)-r(\tau^1_t)\big)-\big(r'(\tau^0_t)-r'(\tau^1_t))\big)\Big] > \epsilon\right\} \right].
\end{align*}
We notice that, in the inner expectaion, $r$ and $r'$ are sampled from the posterior of $r^\star$ conditioning on $\+H_{t-1}$ (recalling the definition of $\postR{t}$). Thus, we can invoke \Cref{lem:bound-indicator-via-eluder} where the function $g$ corresponds to $r$ and the function $g'$ corresponds to $r'$. This gives
\begin{align*}
	\bqry  \leq & \min\left\{\frac{9 \sqrt{T \beta_\+R}}{\epsilon} \cdot \eluderone\Big(\~{\+R},\epsilon/2\Big) ,\, \frac{21\beta_\+R}{\epsilon^2}\cdot\eludertwo(\~{\+R},\epsilon/2) \right\}.
\end{align*}
where the inequality is \Cref{lem:bound-indicator-via-eluder} and $\beta_\+R\coloneqq 98 \lowkappa^2 \log(2 T H N_{[]}(\highkappa(2T)^{-1},\~{\+R},\|\cdot\|_\infty))$.

\def\SEC{\mathsf{SEC}}
\def\BSECP{\mathsf{BayesSEC_P}}
\def\BSECR{\mathsf{BayesSEC_R}}

\subsection{Generalizing Theorem~\ref{thm:ts-main} through SEC}\label{sec:SEC-version}

In this section, we show that the dependence on the eluder dimension in Theorem~\ref{thm:ts-main} can be generalized to the \textit{Sequential Extrapolation Coefficient (SEC)}~\citep{xie2022role}. It have been shown by \citet{xie2022role} that the SEC can be upper bounded by the eluder dimenion, the Bellman-eluder dimension \citep{jin2021bellman}, and the bilinear rank~\citep{du2021bilinear}. Thus, SEC is a more general measure of complexity.

We start by introducing the Bayesian Sequential Extrapolation Coefficient (Bayesian SEC) in the preference-based feedback, which is a variant of the original SEC to accommodate the Bayesian and preference-based learning setting. We define the Bayesian SEC for a model class $\+P$ as
{
\small
\begin{align*}
\BSECP(\+P)
\coloneqq &
\E_{r^\star,P^\star}
\left[
	\sup_{\substack{P_1,\dots,P_T\in\+P \\ (\pi_1^0,\pi_1^1),\dots,(\pi_T^0,\pi_T^1)}}
	\sum_{t=1}^{T} 
	\frac{\left(\sum_{h=1}^H\E_{s,a\sim d^{\pi_t}_h} \left[\TV \left( P_t(\cdot\given s,a) , P^\star(\cdot\given s,a) \right)\right]\right)^2}
	{1 \lor \sum_{i=1}^{t-1}\sum_{h=1}^H \E_{s,a\sim d^{\pi^i}_h} \left[\TV^2 \left( P_t(\cdot\given s,a) , P^\star(\cdot\given s,a) \right)\right] }
\right]
\end{align*}
}and the Bayesian SEC for a reward function class $\+R$ as
{
\small
\begin{align*}
\BSECR(\+R)
\coloneqq &
\E_{r^\star,P^\star}
\left[
	\sup_{\substack{r_1,\dots,r_T\in\+R \\ (\pi_1^0,\pi_1^1),\dots,(\pi_T^0,\pi_T^1)}}
	\sum_{t=1}^T 
	\frac
	{\E_{\tau_t^0\sim\pi_t^0,\tau_t^1\sim\pi_t^1}\Big[\left(r_t(\tau_t^0) - r_t(\tau_t^1) + r^\star(\tau_t^1) - r^\star(\tau_t^0)\right)^2\Big]}
	{1 \lor \sum_{i=1}^{t-1}\E_{\tau_i^0\sim\pi_i^0,\tau_i^1\sim\pi_i^1}\Big[\big(r_t(\tau_i^0) - r_t(\tau_i^1) + r^\star(\tau_i^1) - r^\star(\tau_i^0)\big)^2\Big]}
\right]
\end{align*}
}

We note that the Bayesian SEC can be easily reduced to the frequentist SEC by specifying the prior on $r^\star$ and $P^\star$ to be the Dirac delta function. We note that our definition of the reward function SEC involves squaring within the expectation in the numerator rather than externally, potentially making it larger.

Now, we are ready to state the generalization of Theorem~\ref{thm:ts-main}.

\begin{theorem}\label{thm:ts-main-SEC}
\tsalgname~(\Cref{alg:ts}) guarantees that 
\begin{align*}
\breg = 
\~O\bigg(&
T\epsilon 
+ H \sqrt{\BSECP(\+P)\cdot TH \cdot \iota_\+P} 
+ \sqrt{\BSECR(\+R)\cdot T\cdot\lowkappa^2\cdot\iota_\+R}
\bigg),\\
\bqry = 
\~O\bigg( &
	\frac{\lowkappa^2\cdot\BSECR(\+R)\cdot\iota_\+R}{\epsilon^2}
	\bigg).
\end{align*}
where we denote $\iota_\+P \coloneqq \log(N_{[]}((HT|\+S|)^{-1},\+P,\|\cdot\|_\infty))$ and $\iota_\+R \coloneqq \log(N_{[]}(\highkappa(2 T)^{-1},\~{\+R},\|\cdot\|_\infty))$.
\end{theorem}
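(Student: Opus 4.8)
The plan is to run exactly the same Bayesian regret decomposition \eqref{eq:reg-decomp} used for \Cref{thm:ts-main}, namely $\breg \le 2(\!T_\@{model} + \!T_\@{reward})$, and to reuse the splitting of $\!T_\@{reward}$ into queried and non-queried rounds together with the query-complexity reduction $\bqry \le \epsilon^{-2}\sum_t \E[(\text{uncertainty})^2]$. The only change is that everywhere the original proof invokes the eluder-dimension summation lemmas (\Cref{lem:bound-via-eluder} and \Cref{lem:bound-indicator-via-eluder}) I instead appeal directly to the Bayesian SEC. The observation that makes this possible is that the realized (random) sequence of models, policies, and trajectory pairs produced by \tsalgname{} is one admissible element of the supremum defining $\BSECP(\+P)$ and $\BSECR(\+R)$; hence, after taking the prior expectation, $\E[\sum_t \text{num}_t/L_t] \le \BSECP(\+P)$ automatically (and likewise for the reward), where $\text{num}_t$ denotes the SEC numerator at round $t$ and $L_t = 1\lor(\text{cumulative in-sample error up to }t{-}1)$ is the SEC denominator.

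The key extraction step is then $\sum_t \text{num}_t \le (\max_t L_t)\cdot\BSECP(\+P)$, which reduces everything to controlling $\max_t L_t$. This is where the version-space machinery re-enters. Using the Bayesian adaptation of \Cref{lem:version-space} and \Cref{lem:version-space-G} from \Cref{sec:bayes-adapt}, both the posterior sample $P_t$ and the truth $P^\star$ lie in the round-$t$ version space with high posterior probability, so the \emph{empirical} in-sample error $\sum_{i<t}\sum_h \TV^2(P_t,P^\star)$ at the realized transitions is $O(\iota_\+P)$, and the low-probability escape event contributes $O(T\delta)$, killed by $\delta=1/T$ exactly as in the proofs of \Cref{lem:bound-via-eluder,lem:bound-indicator-via-eluder}. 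The SEC denominator, however, is an \emph{expected} in-sample error $\sum_{i<t}\sum_h\E_{s,a\sim d^{\pi^i}_h}[\TV^2]$, so I bridge empirical to expected by a per-episode Freedman martingale (\Cref{lem:freedman}); because $\sum_h \TV^2\in[0,H]$ the increments are $O(H)$, and this is precisely where the extra $H$ inside the model term's square root ($TH\,\iota_\+P$ rather than $T\,\iota_\+P$) originates, giving $\max_t L_t = O(H\,\iota_\+P)$. For the reward class the in-sample quantity is a single trajectory-wise term per episode and the link-function conversion in \Cref{lem:version-space-G} supplies the factor $\lowkappa^2$, yielding $\max_t L_t = O(\lowkappa^2\,\iota_\+R)$.

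With $\max_t L_t$ in hand the three bounds follow by Cauchy--Schwarz over $t$. For the model, the simulation lemma (\Cref{lem:simu-lem}) gives $\!T_\@{model}\le H\sum_t\sum_h \E_{d^{\pi_t^0}_h}[\TV(P_t,P^\star)] = H\sum_t\sqrt{\text{num}_t}$, since the SEC numerator equals $(\sum_h \E[\TV])^2$; then $\sum_t\sqrt{\text{num}_t}\le \sqrt{T\sum_t\text{num}_t}\le\sqrt{\BSECP(\+P)\cdot TH\,\iota_\+P}$, producing the claimed $H\sqrt{\BSECP(\+P)\cdot TH\,\iota_\+P}$. For $\!T_\@{reward}$, the non-queried rounds contribute at most $T\epsilon$ through the query condition (identical to \Cref{thm:ts-main}), and the queried rounds give $\sum_t\E_{r_t,r^\star}[|\,\cdot\,|]\le\sum_t\sqrt{\text{num}^R_t}\le\sqrt{\BSECR(\+R)\cdot T\lowkappa^2\iota_\+R}$ by the same Jensen/Cauchy--Schwarz step. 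For the query bound I use $\indic\{x>\epsilon\}\le x^2/\epsilon^2$, note that by Jensen and the exchangeability of the posterior samples $r,r'$ the squared uncertainty is at most $\text{num}^R_t$, and conclude $\bqry\le\epsilon^{-2}\sum_t\E[\text{num}^R_t]\le \epsilon^{-2}\,O(\lowkappa^2\iota_\+R)\,\BSECR(\+R)$.

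I expect the main obstacle to be the empirical-to-expected bridge for the SEC denominator: the SEC is defined with population errors under the visitation distributions $d^{\pi^i}_h$, whereas the MLE generalization guarantee naturally controls errors at the single realized transitions, and reconciling the two requires a self-normalized/Freedman argument that must respect the history-dependent data generation and track the horizon blow-up correctly (it is exactly this step that forces the extra $H$ in the model term). A secondary subtlety, inherited from the original Bayesian analysis, is making rigorous that the realized trajectory sequence is a legitimate element of the SEC supremum once the prior expectation has been taken, so that the posterior-sampling identity---that $(r_t,P_t)$ and $(r^\star,P^\star)$ are exchangeable given $\+H_{t-1}$---can be applied both to pass from $V^{\pi_t^0}_{r_t,P_t}$ to $V^{\pi^\star}_{r^\star,P^\star}$ and to identify the query uncertainty with the reward SEC numerator.
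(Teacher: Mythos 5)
Your proposal follows the same overall architecture as the paper's proof: the identical Bayesian regret decomposition \eqref{eq:reg-decomp}, the same split of $\!T_\@{reward}$ into queried and non-queried rounds (non-queried part bounded by $T\epsilon$ via posterior exchangeability), the same reduction $\bqry\le\epsilon^{-2}\sum_t\E[(\cdot)^2]$, and the same observation that the realized sequence of models, policies, and trajectory pairs is admissible in the supremum defining the Bayesian SEC. Where you genuinely diverge is in how the SEC denominators $L_t$ are handled. The paper multiplies and divides by $\sqrt{1\lor L_t}$ and applies Cauchy--Schwarz \emph{inside the expectation}, so the quantity it must control is $\sum_t\E[L_t]$; this is then bounded by invoking the dimension-free squared-error bounds \eqref{eq:eluder-fff2} and \eqref{eq:eluder-ggg2} of \Cref{lem:bound-via-eluder} as a black box, and crucially every denominator quantity stays a \emph{population} (expected) error throughout, since the MLE guarantee (\Cref{lem:mle}) is natively stated in expectation. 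You instead establish a uniform high-probability bound on $\max_t L_t$ (version-space membership of the pair of posterior samples, then a Freedman bridge from realized to expected in-sample error) and pull it out of the expectation before doing Cauchy--Schwarz on the numerators alone. Both orderings yield bounds of the claimed shape, and your treatment of the escape events (bounded numerators, $\delta=1/T$) mirrors the lemma proofs correctly.

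The one place your route is quantitatively lossier is the reward term, and it traces exactly to the empirical-to-expected bridge you flag as the main obstacle. For the reward class the per-episode increments in that bridge are $\E_{\tau_i^0,\tau_i^1}[(\~r(\tau_i^0,\tau_i^1)-\~r'(\tau_i^0,\tau_i^1))^2]$ minus their realized counterparts, and since trajectory rewards lie in $[0,H]$ these increments are of size $O(H^2)$, not $O(\lowkappa^2)$. Freedman with a bracketing union bound therefore gives $\max_t L_t^{\~{\+R}}=O((\lowkappa^2+H^2)\,\iota_\+R)$, which produces $\sqrt{\BSECR(\+R)\cdot T(\lowkappa^2+H^2)\iota_\+R}$ in the regret and an analogous inflation of $\bqry$; this matches the theorem only when $\lowkappa\gtrsim H$ (true for the BTL link, false for, e.g., the linear link). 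The paper's route never incurs this because it never passes through realized in-sample reward errors: the expected-error version space (the object $\~{\+V}^\+F$ appearing inside the proof of \Cref{lem:version-space}, and the expected sums actually manipulated in the proof of \Cref{lem:version-space-G}) is what feeds \eqref{eq:eluder-ggg2}, so no bridge and no $H^2$ enter the denominator. (For the model term your bridge costs a single factor of $H$, which happens to coincide with the $H$ in the claimed $TH\cdot\iota_\+P$, so there you match.) This is a repairable inefficiency rather than a fatal gap---replace the realized-error version space by the expected-error one and your extraction argument goes through with the stated constants---but as written your accounting does not recover the $\lowkappa^2$-only dependence of the theorem.
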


The proofs are provided in the following sections (\Cref{sec:bayes-reg-SEC,sec:bayes-query-SEC}).

\subsubsection{Bounding Bayesian regret via SEC}\label{sec:bayes-reg-SEC}

We first prove the upper bound on the Bayesian regret.
Following \eqref{eq:reg-decomp}, it suffices to separately bound $\!T_\@{model}$ and $\!T_\@{reward}$ via SEC. We start with $\!T_\@{model}$.

\paragraph{Bounding $\!T_\@{model}$.}
By simulation lemma (\Cref{lem:simu-lem}), we have
\begin{align*}
\!T_\@{model} \leq H\E\left[\sum_{t=0}^T 
	\sum_{h=1}^H\E_{(s_h,a_h)\sim d^{\pi_t^0}_h} \TV\Big(P_t(s_h,a_h),P^\star(s_h,a_h)\Big)\right]
\end{align*}
where $d^\pi_{h}(s,a)$ denotes the probability of $\pi$ reaching $(s, a)$ at time step $h$. By multiplying and dividing by the same term, we obtain the following.
\begin{align*}
\!T_\@{model} 
\leq & H \E\left[\sum_{t=0}^T \frac{
	\sum_{h=1}^H\E_{(s_h,a_h)\sim d^{\pi_t^0}_h} \TV\Big(P_t(s_h,a_h),P^\star(s_h,a_h)\Big)}{\sqrt{1\lor \sum_{i=0}^{t-1}\sum_{h=1}^H\E_{(s_h,a_h)\sim d^{\pi_i^0}_h} \TV^2\Big(P_t(s_h,a_h),P^\star(s_h,a_h)\Big)}}\right.\\
	&\qquad\qquad\qquad\qquad\left.\times\sqrt{1\lor\sum_{i=0}^{t-1}\sum_{h=1}^H\E_{(s_h,a_h)\sim d^{\pi_i^0}_h} \TV^2\Big(P_t(s_h,a_h),P^\star(s_h,a_h)\Big)}\right] \\
\leq & H \underbrace{\sqrt{ \E\left[\sum_{t=0}^T\frac{
	\left(\sum_{h=1}^H\E_{(s_h,a_h)\sim d^{\pi_t^0}_h} \TV\Big(P_t(s_h,a_h),P^\star(s_h,a_h)\Big)\right)^2}{1\lor\sum_{i=0}^{t-1}\sum_{h=1}^H\E_{(s_h,a_h)\sim d^{\pi_i^0}_h} \TV^2\Big(P_t(s_h,a_h),P^\star(s_h,a_h)\Big)}\right]}}_\text{\normalsize(i)}\\
	&\qquad\qquad\qquad\times\underbrace{\sqrt{\sum_{t=0}^T\E\left[1\lor \sum_{i=0}^{t-1}\sum_{h=1}^H\E_{(s_h,a_h)\sim d^{\pi_i^0}_h} \TV^2\Big(P_t(s_h,a_h),P^\star(s_h,a_h)\Big)\right]}}_\text{\normalsize(ii)}
\end{align*}
Here we note that $a\lor b\coloneqq\max(a,b)\leq a+b$ for $a,b\geq 0$. The last step above is the Cauchy-Schwarz inequality. We observe that term (i) is exactly bounded by the Bayesian SEC of models. For term (ii), we can invoke \eqref{eq:eluder-fff2} in \Cref{lem:bound-via-eluder} since $P_t$ has an identical posterior distribution as $P^\star$ given the historical data up to round $t-1$. This gives $
\text{(ii)} 
\leq \~O (\sqrt{T H \cdot \iota_\+P})
$.
Plugging these back, we obtain
\begin{align*}
\!T_\@{model} 
\leq 
\~O \left(
H \sqrt{\BSECP(\+P)\cdot TH \cdot \iota_\+P}
\right)
\end{align*}
\paragraph{Bounding $\!T_\@{reward}$.}
The reward part can be bounded similarly, except that we need to additionally consider the query conditions. To begin with, by the definition of state-value function, we have
\begin{align*}
&\!T_\@{reward} = 
\sum_{t=0}^T \E\bigg[
	r_t(\tau_t^0)
	- r_t(\tau_t^1)
	+ r^\star(\tau_t^1)
	- r^\star(\tau_t^0)
	\bigg] \\
&= \underbrace{\sum_{t=0}^T \E\bigg[(1-Z_t)\bigg(
	r_t(\tau_t^0)
	- r_t(\tau_t^1)
	+ r^\star(\tau_t^1)
	- r^\star(\tau_t^0)
	\bigg)\bigg]}_{\text{\normalsize(i)}}
	+
	\underbrace{\sum_{t=0}^T \E\bigg[Z_t\bigg(
	r_t(\tau_t^0)
	- r_t(\tau_t^1)
	+ r^\star(\tau_t^1)
	- r^\star(\tau_t^0)
	\bigg)\bigg]}_{\text{\normalsize(ii)}}.
\end{align*}
Following the same argument as in the proof of \Cref{sec:bayes-reg}, we can bound (i) as
$
\text{(i)} 
\leq T\epsilon
$.
Now we proceed to bound (ii), which is the regret incurred when making queries. By multiplying and dividing by the same term, we obtain the following.
\begin{align*}
\text{(ii)} 
= & \sum_{t=0}^T \E_{\pi_t^0,\pi_t^1,r_t,Z_t}
\left[
\frac{\E\left[Z_t\left(
	r_t(\tau_t^0)
	- r_t(\tau_t^1)
	+ r^\star(\tau_t^1)
	- r^\star(\tau_t^0)
	\right)\middlegiven\pi_t^0,\pi_t^1,r_t,Z_t\right]}
	{\sqrt{1\lor\sum_{i=0}^{t-1}\E\left[Z_t\left(
	r_t(\tau_i^0)
	- r_t(\tau_i^1)
	+ r^\star(\tau_i^1)
	- r^\star(\tau_i^0)
	\right)^2\middlegiven\pi_t^0,\pi_t^1,r_t,Z_t\right]}} \right. \\
& \qquad\qquad\qquad\left.\times \sqrt{1\lor\sum_{i=0}^{t-1}\E_{\tau_t^0,\tau_t^1}\left[Z_t\bigg(
	r_t(\tau_i^0)
	- r_t(\tau_i^1)
	+ r^\star(\tau_i^1)
	- r^\star(\tau_i^0)
	\bigg)^2\middlegiven\pi_t^0,\pi_t^1,r_t,Z_t\right]}
	\;\right] \\
\leq & \underbrace{\sqrt{\sum_{t=0}^T \E_{\pi_t^0,\pi_t^1,r_t,Z_t}
\left[
\frac{\Big(\E\left[Z_t\left(
	r_t(\tau_t^0)
	- r_t(\tau_t^1)
	+ r^\star(\tau_t^1)
	- r^\star(\tau_t^0)
	\right)\middlegiven\pi_t^0,\pi_t^1,r_t,Z_t\right]\Big)^2}
	{1\lor\sum_{i=0}^{t-1}\E\left[Z_t\left(
	r_t(\tau_i^0)
	- r_t(\tau_i^1)
	+ r^\star(\tau_i^1)
	- r^\star(\tau_i^0)
	\right)^2\middlegiven\pi_t^0,\pi_t^1,r_t,Z_t\right]} \right]}}_{\text{\normalsize(iii)}} \\
& \qquad\qquad\qquad\times \underbrace{\sqrt{\sum_{t=0}^T\left(1\lor \sum_{i=0}^{t-1}\E\left[Z_t\bigg(
	r_t(\tau_i^0)
	- r_t(\tau_i^1)
	+ r^\star(\tau_i^1)
	- r^\star(\tau_i^0)
	\bigg)^2\right]\right)}}_{\text{\normalsize(iv)}}
\end{align*}
where the last inequality is Cauchy-Schwarz inequality. We observe that term (iii) is exactly bounded by the Bayesian SEC of reward functions via Jensen's inequality. For term (iv), we invoke \eqref{eq:eluder-ggg2} in \Cref{lem:bound-via-eluder} and get
$
\text{(iv)}
\leq 
\~O(\sqrt{\lowkappa^2 T \cdot\iota_\+R})
$.
Now plugging these upper bounds back, we get
\begin{align*}
\!T_\@{reward} \leq T\epsilon + 
\~O\left(
\sqrt{\BSECR(\+R)\cdot\lowkappa^2 T\cdot\iota_\+R}
\right).
\end{align*}
\paragraph{Conclusion.}
Given the upper bounds on $\!T_\@{model}$ and $\!T_\@{reward}$, we conclude that
\begin{align*}
\breg 
\leq & \~O \Bigg( 
	H \sqrt{\BSECP(\+P)\cdot T H \cdot \iota_\+P} 
	+ T\epsilon + \sqrt{\BSECR(\+R)\cdot\lowkappa^2 T\cdot\iota_\+R}
\Bigg).
\end{align*}

\subsubsection{Bounding number of queries via SEC}\label{sec:bayes-query-SEC}

By the definition of $Z_t$, we have 
\begin{align*}
\bqry 
= & \E \left[ \sum_{t=0}^T Z_t \right] = \E \left[ \sum_{t=0}^T Z_t^2 \right] \\
= & \E \left[ \sum_{t=0}^T Z_t \indic\left\{\E_{r,r'\sim \postR{t}}\Big[\big|(r(\tau^0_t)-r(\tau^1_t)\big)-\big(r'(\tau^0_t)-r'(\tau^1_t))\big|\Big] > \epsilon\right\} \right] \\
= & \E \left[ \sum_{t=0}^T Z_t \indic\left\{\E_{r,r'\sim \postR{t}}\Big[\big|(r(\tau^0_t)-r(\tau^1_t)\big)-\big(r'(\tau^0_t)-r'(\tau^1_t))\big|^2\Big]  > \epsilon^2 \right\} \right] \\
\leq & \frac{1}{\epsilon^2} \E \left[ \sum_{t=0}^T Z_t \E_{r,r'\sim \postR{t}}\Big[\big|(r(\tau^0_t)-r(\tau^1_t)\big)-\big(r'(\tau^0_t)-r'(\tau^1_t))\big|^2\Big]\right] \\
= & \frac{1}{\epsilon^2} \E \left[ \sum_{t=0}^T Z_t \E_{r,r'}\Big[\big|(r_t(\tau^0_t)-r_t(\tau^1_t)\big)-\big(r^\star(\tau^0_t)-r^\star(\tau^1_t))\big|^2\Biggiven \+H_{t-1}\Big] \right].
\end{align*}
Here the last step is by the definition of $\postR{t}$. Now we swap the order of expectation and get
\begin{align*}
& \bqry  \\
& \leq  \frac{1}{\epsilon^2} \E\left[ \sum_{t=0}^T  \E_{\tau_t^0,\tau_t^1}\left[ Z_t\big((r_t(\tau^0_t)-r_t(\tau^1_t)\big)-\big(r^\star(\tau^0_t)-r^\star(\tau^1_t))\big)^2 \middlegiven  \pi_t^0,\pi_t^1,r^\star,r_t,Z_t \right] \right]\\
&=  \frac{1}{\epsilon^2}
\E\left[ \sum_{t=0}^T 
\frac{\E_{\tau_t^0,\tau_t^1}\left[ Z_t\big((r_t(\tau^0_t)-r_t(\tau^1_t)\big)-\big(r^\star(\tau^0_t)-r^\star(\tau^1_t))\big)^2\middlegiven  \pi_t^0,\pi_t^1,r^\star,r_t,Z_t \right]}{1\lor \sum_{i=0}^{t-1} \E_{\tau_i^0,\tau_i^1}\left[ Z_t\big((r_t(\tau^0_i)-r_t(\tau^1_i)\big)-\big(r^\star(\tau^0_i)-r^\star(\tau^1_i))\big)^2 \middlegiven  \pi_i^0,\pi_i^1,r^\star,r_t,Z_t \right]}  \right. \\
&\qquad\qquad\times\left. \left( 1\lor \sum_{i=0}^{t-1} \E_{\tau_i^0,\tau_i^1}\left[ Z_t\big((r_t(\tau^0_i)-r_t(\tau^1_i)\big)-\big(r^\star(\tau^0_i)-r^\star(\tau^1_i))\big)^2 \middlegiven  \pi_i^0,\pi_i^1,r^\star,r_t,Z_t \right]\right)
\right]\\
& \leq  \frac{1}{\epsilon^2}
\E\left[ \sum_{t=0}^T 
\frac{\E_{\tau_t^0,\tau_t^1}\left[ Z_t\big((r_t(\tau^0_t)-r_t(\tau^1_t)\big)-\big(r^\star(\tau^0_t)-r^\star(\tau^1_t))\big)^2 \middlegiven  \pi_t^0,\pi_t^1,r^\star,r_t,Z_t \right]}{1\lor \sum_{i=0}^{t-1} \E_{\tau_i^0,\tau_i^1}\left[ Z_t\big((r_t(\tau^0_i)-r_t(\tau^1_i)\big)-\big(r^\star(\tau^0_i)-r^\star(\tau^1_i))\big)^2 \middlegiven  \pi_i^0,\pi_i^1,r^\star,r_t,Z_t \right] } \right] \\
&\qquad\qquad\times \~O\left(\lowkappa^2 \cdot \iota_\+R \right)
\end{align*}
where in the last step we applied Jensen's inequality to the first multiplicative term and applied \eqref{eq:eluder-ggg2} in \Cref{lem:bound-via-eluder} to the second term. We observe that the first term can be bounded by SEC, and thus we obtain
\begin{align*}
\bqry\leq \~O\left( 
	\frac{\lowkappa^2\cdot\BSECR(\+R)\cdot\iota_\+R}{\epsilon^2}
	\right).
\end{align*}

\section{Technical lemmas}

\begin{lemma}\label{lem:link-func}
	Under \Cref{asm:kappa}, we have $\highkappa(\Phi(a) - \Phi(b))\leq a - b \leq \lowkappa(\Phi(a) - \Phi(b))$ for any $a,b\in[0,H]$.
\end{lemma}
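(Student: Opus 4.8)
The plan is to reduce the claim to a single application of the mean value theorem combined with the derivative bounds in \Cref{asm:kappa}. Since $\Phi$ is differentiable on $[-H,H]$ and $a,b\in[0,H]\subseteq[-H,H]$, there exists a point $c$ lying between $a$ and $b$ (hence $c\in[0,H]$) with $\Phi(a)-\Phi(b)=\Phi'(c)\,(a-b)$. By \Cref{asm:kappa} we have $\lowkappa^{-1}\leq\Phi'(c)\leq\highkappa^{-1}$, and in particular $\Phi'(c)>0$, so $\Phi(a)-\Phi(b)$ and $a-b$ always carry the same sign.

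First I would treat the case $a\geq b$, so that $a-b\geq 0$. Multiplying the two-sided bound $\lowkappa^{-1}\leq\Phi'(c)\leq\highkappa^{-1}$ by the nonnegative quantity $a-b$ gives $\lowkappa^{-1}(a-b)\leq\Phi(a)-\Phi(b)\leq\highkappa^{-1}(a-b)$. Since $\lowkappa,\highkappa>0$, rearranging the left inequality yields $a-b\leq\lowkappa\,(\Phi(a)-\Phi(b))$, and rearranging the right inequality yields $\highkappa\,(\Phi(a)-\Phi(b))\leq a-b$; together these are exactly the asserted chain.

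The one point that needs care is the bookkeeping of inequality directions: the hypothesis $\lowkappa^{-1}\leq\Phi'(c)\leq\highkappa^{-1}$ forces $\highkappa\leq\lowkappa$, so it is the \emph{lower} derivative bound $\lowkappa^{-1}$ that produces the \emph{upper} factor $\lowkappa$ on $a-b$, and vice versa; swapping them would give a false statement. For $a<b$ one applies the above to the pair $(b,a)$ and negates, which delivers the two-sided bound with the roles of $\lowkappa$ and $\highkappa$ interchanged. This is harmless everywhere the lemma is used (e.g.\ \Cref{lem:reward-err} and \Cref{lem:bracket}), since there it is invoked through the symmetric absolute-value form $\highkappa\,|\Phi(a)-\Phi(b)|\leq|a-b|\leq\lowkappa\,|\Phi(a)-\Phi(b)|$, which follows immediately from the $a\geq b$ case by taking absolute values. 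I expect this sign/direction accounting to be the only genuine obstacle; the remainder is a direct consequence of the mean value theorem.
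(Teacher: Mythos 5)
Your proof is correct and follows essentially the same route as the paper: where the paper writes $\Phi(a)-\Phi(b)=\int_b^a\Phi'(x)\,\mathrm{d}x$ and traps the integral between $\lowkappa^{-1}(a-b)$ and $\highkappa^{-1}(a-b)$, you invoke the mean value theorem to get $\Phi(a)-\Phi(b)=\Phi'(c)(a-b)$ and apply the same derivative bounds, after which the multiplicative rearrangement is identical. If anything, you are more careful than the paper: its displayed chain implicitly assumes $a\geq b$ (for $a<b$ the inequalities reverse, with $\lowkappa$ and $\highkappa$ swapping roles), and your observation that the lemma is only ever invoked in the symmetric absolute-value form correctly explains why this is harmless.
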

\begin{proof}[Proof of \Cref{lem:link-func}]
	We note that $\Phi(a)-\Phi(b)=\int_b^a \Phi'(x) \d x$ and
	\begin{align*}
		\lowkappa^{-1} (a-b)
		\leq
		(a-b) \inf_{x\in[a,b]}\Phi'(x)
		\leq
		\int_b^a \Phi'(x) \d x
		\leq
		(a-b) \sup_{x\in[a,b]}\Phi'(x)
		\leq 
		\highkappa^{-1} (a-b).
	\end{align*}
\end{proof}

\begin{lemma}[Simulation lemma]\label{lem:simu-lem}
	For any models $P,\^P$ and any policy $\pi$, we have
	\begin{align*}
		\left|V_{P,1}^\pi(s_1) - V_{\^P,1}^\pi(s_1)\right|
		\leq & \sum_{h=1}^H \E_{(s_h,a_h)\sim d^\pi_{P,h}}
		\left|\E_{s'\sim P(s_h,a_h)} V^\pi_{\^P,h+1}(s') - \E_{s'\sim \^P(s_h,a_h)} V^\pi_{\^P,h+1}(s')\right| \\
		\leq & H \sum_{h=1}^H \E_{(s_h,a_h)\sim d^\pi_{P,h}}
		\TV\Big(P(\cdot\given s_h,a_h), \^P(\cdot\given s_h,a_h)\Big)
	\end{align*}
	where $d^\pi_{P,h}(s,a)$ is the probability of $\pi$ reaching $(s, a)$ at time step $h$ given model $P$.
\end{lemma}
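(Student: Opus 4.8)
The plan is to prove this by a standard telescoping (``performance difference'') argument, peeling off the horizon one step at a time and charging each step to the one-step transition discrepancy under the \emph{true} model $P$. The first inequality is an exact identity up to the triangle inequality, while the second follows by bounding each one-step term with Hölder's inequality together with a centering trick that yields the factor $H$ rather than $2H$.

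First I would establish the recursion. Fix $h$ and a state $s$, and use the Bellman equations for the two models, which share the same reward $r$ and policy $\pi$, so that the reward terms cancel:
\begin{align*}
V^\pi_{P,h}(s) - V^\pi_{\^P,h}(s)
= \E_{a\sim\pi}\Big[\E_{s'\sim P(s,a)} V^\pi_{P,h+1}(s') - \E_{s'\sim \^P(s,a)} V^\pi_{\^P,h+1}(s')\Big].
\end{align*}
Adding and subtracting $\E_{s'\sim P(s,a)} V^\pi_{\^P,h+1}(s')$ splits the right-hand side into (i) the residual $\E_{a\sim\pi}\E_{s'\sim P(s,a)}[V^\pi_{P,h+1}(s') - V^\pi_{\^P,h+1}(s')]$, which continues the recursion at step $h+1$ along transitions of the true model $P$, and (ii) the one-step error $\E_{a\sim\pi}[\E_{s'\sim P(s,a)} V^\pi_{\^P,h+1}(s') - \E_{s'\sim \^P(s,a)} V^\pi_{\^P,h+1}(s')]$. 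Unrolling from $h=1$ to $H$, with $V^\pi_{P,H+1}=V^\pi_{\^P,H+1}=0$, replaces the nested expectations over $P$-transitions by the occupancy measure $d^\pi_{P,h}$, giving the exact identity
\begin{align*}
V^\pi_{P,1}(s_1) - V^\pi_{\^P,1}(s_1)
= \sum_{h=1}^H \E_{(s_h,a_h)\sim d^\pi_{P,h}}\Big[\E_{s'\sim P(s_h,a_h)} V^\pi_{\^P,h+1}(s') - \E_{s'\sim \^P(s_h,a_h)} V^\pi_{\^P,h+1}(s')\Big].
\end{align*}
Taking absolute values, applying the triangle inequality over $h$, and using Jensen's inequality to move $|\cdot|$ inside $\E_{(s_h,a_h)}$ gives the first claimed inequality.

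For the second inequality I would bound each one-step term. Since $\E_{s'\sim P(s,a)}$ and $\E_{s'\sim \^P(s,a)}$ are expectations against probability measures, subtracting a constant $c$ from $V^\pi_{\^P,h+1}$ leaves their difference unchanged; choosing $c$ to center the value and using $V^\pi_{\^P,h+1}\in[0,H]$ so that $\|V^\pi_{\^P,h+1}-c\|_\infty\le H/2$, Hölder's inequality gives
\begin{align*}
\Big|\E_{s'\sim P(s,a)} V^\pi_{\^P,h+1}(s') - \E_{s'\sim \^P(s,a)} V^\pi_{\^P,h+1}(s')\Big|
\le \frac{H}{2}\int_{\+S}\big|P(s'\given s,a)-\^P(s'\given s,a)\big|\d s'
= H\cdot\TV\big(P(\cdot\given s,a),\^P(\cdot\given s,a)\big),
\end{align*}
using $\int|P-\^P| = 2\TV$. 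Summing over $h$ completes the proof.

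The main obstacle is the telescoping bookkeeping: ensuring that the occupancy measure in the bound is under the \emph{true} model $P$ (forced by always expanding the residual term along $P$-transitions) and correctly pairing the estimated-model value $V^\pi_{\^P,h+1}$ with each one-step error. The only other quantitative subtlety is the centering trick, which is what turns the naive $2H$ factor into the stated $H$.
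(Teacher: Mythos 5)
Your proof is correct. The paper states this simulation lemma in its technical-lemmas appendix without any proof (it is a standard result), so there is no paper argument to compare against; your telescoping derivation is the canonical one, and both steps check out: the exact identity obtained by unrolling the Bellman recursions along $P$-transitions (which is what places the occupancy measure $d^\pi_{P,h}$ under the true model and pairs each one-step error with $V^\pi_{\^P,h+1}$), and the centering trick $\|V^\pi_{\^P,h+1}-c\|_\infty\leq H/2$ combined with $\int|P-\^P| = 2\,\TV$, which is exactly what yields the stated factor $H$ rather than the naive $2H$ under the standard half-$\ell_1$ convention for total variation.
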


\begin{lemma}[Gaussian concentration]\label{lem:gaussian-concen}
	Let $\epsilon\sim\+N(0,c\Sigma^{-1})$ for $c\in\=R^+$ and $\Sigma$ a positive definite matrix. Then, for any $\delta>0$, we have
	\begin{enumerate}
		\item 
		$
			\Pr\Big( \|\epsilon\|_\Sigma > \sqrt{2cd\log (2d/\delta)} \Big) \leq \delta.
		$

		\item $\E\left[ \|\epsilon\|_\Sigma \right] \leq \sqrt{cd}$.
	\end{enumerate}
\end{lemma}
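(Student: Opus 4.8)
The plan is to reduce both statements to a standard chi-square computation by \emph{whitening} the Gaussian. Write $\epsilon = \sqrt{c}\,\Sigma^{-1/2} u$ where $u\sim\+N(0,I_d)$; this produces the correct law since $\E[\epsilon\epsilon^\top] = c\,\Sigma^{-1/2}\Sigma^{-1/2} = c\Sigma^{-1}$. The point of this substitution is that the $\Sigma$-norm collapses the covariance:
\begin{align*}
	\|\epsilon\|_\Sigma^2 = \epsilon^\top\Sigma\epsilon = c\, u^\top \Sigma^{-1/2}\Sigma\,\Sigma^{-1/2} u = c\,\|u\|_2^2 = c\sum_{i=1}^d u_i^2,
\end{align*}
so that $\|\epsilon\|_\Sigma^2/c$ is exactly a $\chi^2_d$ random variable. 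Everything then follows from properties of $\|u\|_2^2 = \sum_{i=1}^d u_i^2$.

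For the second claim (the expectation bound), I would simply apply Jensen's inequality using the concavity of the square root: $\E[\|\epsilon\|_\Sigma] = \sqrt{c}\,\E[\|u\|_2] \leq \sqrt{c}\sqrt{\E[\|u\|_2^2]} = \sqrt{c}\sqrt{d} = \sqrt{cd}$, since $\E[\|u\|_2^2] = d$. This step is immediate once the whitening is in place.

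For the first claim (the tail bound), it suffices to show $\Pr\big(\sum_{i=1}^d u_i^2 > 2d\log(2d/\delta)\big) \leq \delta$, because the event $\|\epsilon\|_\Sigma > \sqrt{2cd\log(2d/\delta)}$ is equivalent to $\sum_i u_i^2 > 2d\log(2d/\delta)$. I would avoid a full chi-square concentration inequality in favor of a clean pigeonhole-plus-union-bound argument: if $\sum_i u_i^2 > 2d\log(2d/\delta)$, then at least one coordinate must satisfy $u_i^2 > 2\log(2d/\delta)$, for otherwise the sum is bounded by $2d\log(2d/\delta)$. Applying the standard sub-Gaussian tail bound $\Pr(|u_i|>s)\leq 2e^{-s^2/2}$ with $s=\sqrt{2\log(2d/\delta)}$ gives $\Pr(u_i^2 > 2\log(2d/\delta)) \leq 2e^{-\log(2d/\delta)} = \delta/d$ for each fixed $i$, and a union bound over the $d$ coordinates yields the claimed $\delta$.

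The proof is essentially routine, so there is no serious obstacle; the only delicate point is arranging the union bound so that the constants land exactly on the stated threshold $\sqrt{2cd\log(2d/\delta)}$. Choosing the per-coordinate cutoff to be the total budget divided by $d$ is what makes the Gaussian tail $2e^{-s^2/2}$ combine with the factor $d$ from the union bound to give precisely $\delta$, rather than a looser constant that a generic $\chi^2_d$ bound (e.g.\ Laurent--Massart) would produce.
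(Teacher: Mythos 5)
Your proof is correct and follows essentially the same route as the paper's: whiten via $\epsilon = \sqrt{c}\,\Sigma^{-1/2}u$ so that $\|\epsilon\|_\Sigma = \sqrt{c}\,\|u\|_2$, bound the tail by the pigeonhole observation that $\|u\|_2^2 > 2d\log(2d/\delta)$ forces some coordinate to exceed $2\log(2d/\delta)$, apply the one-dimensional Gaussian tail plus a union bound over the $d$ coordinates, and use Jensen for the expectation. No gaps.
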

\begin{proof}[Proof of \Cref{lem:gaussian-concen}]
	The proof of the first inequality is provided in \citet[Appendix A]{abeille2017linear}. We provide a proof here for completeness.
	Let $\eta\sim\+N(0,I_d)$ where $I_d$ denotes the $d$-dimensional identity matrix. Fix an arbitrary $\alpha$. We have
	\begin{align*}
		\Pr\Big(\|\eta\|_2 > \alpha\sqrt{d}\Big)
		\leq
		\Pr\Big(\exists i:|\eta_i|>\alpha\Big)
		\leq
		d\Pr\Big(|\eta_1|>\alpha\Big)
		\leq
		d\cdot 2e^{-\alpha^2/2}
	\end{align*}
	where the second inequality is the union bound, and the last inequality is the standard concentration inequality for one-dimensional Gaussian random variable. We choose $\alpha=\sqrt{2\log(2d/\delta)}$ and get 
	\begin{align*}
		\Pr\left(\|\eta\|_2 > \sqrt{2d\log(2d/\delta)}\right)\leq \delta.
	\end{align*}
	Let $B$ denote the square root of $\Sigma$, i.e., $BB=\Sigma$. Then, we have
	\begin{align*}
		\|\epsilon\|_\Sigma
		=\sqrt{\epsilon^\top\Sigma\epsilon}
		=\sqrt{\left(\sqrt{c}B^{-1}\eta\right)^\top BB\left(\sqrt{c}B^{-1}\eta\right)}
		=\sqrt{c}\|\eta\|_2\numberthis\label{eq:equal-gaussian}
	\end{align*}
	Hence, it holds that
	\begin{align*}
		\Pr\Big( \|\epsilon\|_\Sigma > \sqrt{2cd\log (2d/\delta)} \Big) 
		= \Pr\Big( \|\eta\|_2 > \sqrt{2d\log (2d/\delta)} \Big) 
		\leq \delta.
	\end{align*}
	This proves the first inequality. To prove the second inequality, we note that, by \eqref{eq:equal-gaussian},
	\begin{align*}
		\E\left[ \|\epsilon\|_\Sigma \right]
		= \sqrt{c}\E\left[ \|\eta\|_2 \right]
		\leq \sqrt{c} \sqrt{\E\left[ \eta^\top \eta \right]}
		= \sqrt{cd}.
	\end{align*}
\end{proof}

The following inequalities are well-known, and we use the version in \citet{zhu2022efficient}.
\begin{lemma}
\label{lem:freedman} 
\citep{zhu2022efficient}
Let \(\crl{X_t}_{t \leq T}\) be a sequence of positive valued random variables adapted to a filtration \(\#F_t\), and let \(\En_t \brk*{\cdot} \coloneqq \En \brk*{\cdot \mid \#F_{t-1}}\). If \(X_t \leq B\) almost surely, then with probability at least \(1 - \delta\), the following holds:
\begin{align*}
\sum_{t=1}^T & X_t \leq \frac{3}{2} \sum_{t=1}^T \En_t \brk*{X_t} + 4B \log(1/\delta),  \\
\sum_{t=1}^T & \En_t \brk*{X_t} \leq 2 \sum_{t=1}^T X_t + 8B \log(1/\delta). 
\end{align*}
\end{lemma}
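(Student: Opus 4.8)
The plan is to derive both inequalities from Freedman's inequality for martingale difference sequences, using the boundedness $0\le X_t\le B$ twice: once to verify the one-sided boundedness needed to invoke Freedman, and once (the ``self-bounding'' step) to replace the conditional variances appearing in Freedman's bound by the conditional means $\En_t[X_t]$ that appear in the statement.

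First I would set $D_t \coloneqq X_t - \En_t[X_t]$, which is adapted to $\#F_t$ with $\En_t[D_t]=0$. Since $0\le X_t\le B$ almost surely, we have both $D_t\le B$ and $-D_t = \En_t[X_t]-X_t \le \En_t[X_t]\le B$, so Freedman's inequality is applicable to $\sum_t D_t$ and, symmetrically, to $\sum_t(-D_t)$. The crucial observation is the variance bound $\En_t[D_t^2]\le \En_t[X_t^2]\le B\,\En_t[X_t]$, which again uses only $X_t\in[0,B]$; this is what converts the variance proxy in Freedman's bound into the conditional-mean sum in the target statement.

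Next I would invoke the exponential (Chernoff) form of Freedman's inequality: for any $\lambda\in(0,1/B]$, with probability at least $1-\delta$, $\sum_t D_t \le (e-2)\lambda\sum_t\En_t[D_t^2] + \lambda^{-1}\log(1/\delta)$, where the constant $e-2$ comes from the bound $e^x-1-x\le (e-2)x^2$ on $[0,1]$ applied at $x=\lambda B$. Substituting the variance bound gives $\sum_t X_t - \sum_t\En_t[X_t] \le (e-2)\lambda B\sum_t\En_t[X_t] + \lambda^{-1}\log(1/\delta)$. Choosing $\lambda$ so that $(e-2)\lambda B = 1/2$ (which satisfies $\lambda\le 1/B$ since $1/(2(e-2))<1$) produces the multiplicative factor $3/2$ and a residual $\lambda^{-1}\log(1/\delta) = 2(e-2)B\log(1/\delta)\le 4B\log(1/\delta)$, yielding the first inequality. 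The second inequality follows from the symmetric bound applied to $\sum_t(-D_t)$: the same $\lambda$ gives $(1-(e-2)\lambda B)\sum_t\En_t[X_t]\le \sum_t X_t + \lambda^{-1}\log(1/\delta)$, i.e. $\tfrac12\sum_t\En_t[X_t]\le \sum_t X_t + \lambda^{-1}\log(1/\delta)$; multiplying through by $2$ gives the factor $2$ and additive term $2\lambda^{-1}\log(1/\delta)\le 8B\log(1/\delta)$.

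There is no serious obstacle here, as this is a standard martingale concentration result; the only point requiring care is the self-bounding variance step together with keeping $\lambda$ inside the admissible range $(0,1/B]$ while tuning it to reproduce exactly the constants $3/2,\,4B$ and $2,\,8B$. Since the lemma is quoted verbatim from \citet{zhu2022efficient}, in the final writeup I would simply cite that reference, but the argument above is the self-contained derivation.
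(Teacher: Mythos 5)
Your derivation is correct. Note that the paper itself offers no proof of this lemma at all: it is imported verbatim by citation from \citet{zhu2022efficient}, so there is no ``paper proof'' to match, and your self-contained argument is a legitimate reconstruction. The key steps all check out: the self-bounding variance step $\mathbb{E}_t[D_t^2] \le \mathbb{E}_t[X_t^2] \le B\,\mathbb{E}_t[X_t]$ is exactly what turns Freedman's variance proxy into the conditional-mean sums in the statement; both one-sided bounds $D_t \le B$ and $-D_t \le \mathbb{E}_t[X_t] \le B$ hold, so the exponential form of Freedman's inequality applies in each direction; and your choice $\lambda = \tfrac{1}{2(e-2)B}$ is admissible since $2(e-2) > 1$, giving residuals $2(e-2)B\log(1/\delta) \le 4B\log(1/\delta)$ and $4(e-2)B\log(1/\delta) \le 8B\log(1/\delta)$, so the stated constants $3/2,\,4B$ and $2,\,8B$ are recovered with room to spare. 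One small caveat: your argument establishes each inequality separately with probability $1-\delta$; if one insists on reading the lemma as both inequalities holding simultaneously on a single event of probability $1-\delta$, you would need a union bound with $\delta/2$, and the resulting $\log(2/\delta)$ terms are still absorbed by the slack in the constants for any $\delta$ bounded away from $1$ (and in the paper's applications $\delta$ is always small), so nothing breaks downstream.
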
  

\begin{lemma}[Self-normalized process]
	\label{lem:self-normal}
	\citep{abbasi2011improved}
	Let $\left\{x_i\right\}_{i=1}^{\infty}$ be a real valued stochastic process sequence over the filtration $\left\{\mathcal{F}_i\right\}_{i=1}^{\infty}$. Let $x_i$ be conditionally B-subgaussian given $\mathcal{F}_{i-1}$. Let $\left\{\phi_i\right\}_{i=1}^{\infty}$ with $\phi_i \in \mathcal{F}_{i-1}$ be a stochastic process in $\mathbb{R}^d$ with each $\left\|\phi_i\right\| \leqslant L_\phi$. Define $\Sigma_i=\lambda I+\sum_{j=1}^{i-1} \phi_i \phi_i^{\top}$. Then for any $\delta>0$ and all $i \geqslant 0$, with probability at least $1-\delta$
	$$
	\left\|\sum_{i=1}^{k-1} \phi_i x_i\right\|_{\Sigma_k^{-1}}^2 \leqslant 2 B^2 \log \left(\frac{\operatorname{det}\left(\Sigma_i\right)^{1 / 2} \operatorname{det}(\lambda I)^{-1 / 2}}{\delta}\right) \leqslant 2 B^2\left(d \log \left(\frac{\lambda+k L_\phi^2}{\lambda}\right)+\log (1 / \delta)\right)
	$$
\end{lemma}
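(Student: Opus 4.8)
The plan is to prove this standard bound by the \emph{method of mixtures} (pseudo-maximization) of \citet{abbasi2011improved}, which turns a family of one-dimensional exponential supermartingales into a self-normalized tail bound that holds uniformly in time. Throughout I set $S_t=\sum_{i=1}^{t}\phi_i x_i$ and $V_t=\lambda I+\sum_{i=1}^{t}\phi_i\phi_i^\top$; the stated lemma is then the instance $t=k-1$ (so that $V_{k-1}=\Sigma_k$), and it suffices to control $\|S_t\|_{V_t^{-1}}^2$ uniformly over $t$.

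First I would fix a deterministic $\eta\in\mathbb{R}^d$ and define
$$M_t(\eta)=\exp\!\Big(\sum_{i=1}^{t}\big[\eta^\top\phi_i x_i-\tfrac{B^2}{2}(\eta^\top\phi_i)^2\big]\Big),\qquad M_0(\eta)=1.$$
Because $\phi_i\in\mathcal{F}_{i-1}$ and $x_i$ is conditionally $B$-subgaussian, applying the subgaussian moment-generating-function bound with the $\mathcal{F}_{i-1}$-measurable scalar $s=\eta^\top\phi_i$ yields $\mathbb{E}[\exp(\eta^\top\phi_i x_i)\mid\mathcal{F}_{i-1}]\le\exp(\tfrac{B^2}{2}(\eta^\top\phi_i)^2)$. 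Hence $\mathbb{E}[M_t(\eta)\mid\mathcal{F}_{t-1}]\le M_{t-1}(\eta)$, so each $M_t(\eta)$ is a nonnegative supermartingale of mean at most $1$.

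Next I would mix over $\eta$ against a Gaussian prior $h=\mathcal{N}(0,(\lambda B^2)^{-1}I)$, defining $\bar M_t=\int M_t(\eta)\,dh(\eta)$; by Tonelli's theorem $\bar M_t$ is again a nonnegative supermartingale with $\bar M_0=1$. The crux is the closed-form Gaussian integral: completing the square in $\eta$ against the quadratic form $\tfrac{B^2}{2}\eta^\top V_t\eta$ gives
$$\bar M_t=\Big(\frac{\det(\lambda I)}{\det(V_t)}\Big)^{1/2}\exp\!\Big(\frac{1}{2B^2}\|S_t\|_{V_t^{-1}}^2\Big).$$
Applying Ville's maximal inequality to $\bar M_t$ produces $\Pr(\exists t:\bar M_t\ge 1/\delta)\le\delta$; on the complementary event, taking logarithms and rearranging immediately gives the first bound $\|S_t\|_{V_t^{-1}}^2\le 2B^2\log\!\big(\det(V_t)^{1/2}\det(\lambda I)^{-1/2}/\delta\big)$.

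Finally I would convert the determinant ratio into the explicit second bound. Using $\mathrm{tr}(V_t)=d\lambda+\sum_{i=1}^{t}\|\phi_i\|^2\le d\lambda+kL_\phi^2$ for $t\le k$, together with the AM--GM inequality applied to the eigenvalues of $V_t$, I get $\det(V_t)\le(\lambda+kL_\phi^2/d)^d$, so that $\log(\det(V_t)/\det(\lambda I))\le d\log((\lambda+kL_\phi^2)/\lambda)$, which delivers the stated form. The main obstacle is making the uniform-in-time statement rigorous: the plain maximal inequality bounds a fixed supermartingale, so to control $\bar M_t$ simultaneously over all $t$ (or at a random stopping time) I would follow \citet{abbasi2011improved} and pass through a stopped supermartingale before invoking Ville's inequality, while tracking the $B^2$ and $\lambda$ factors through the Gaussian integral so that all constants land exactly as claimed.
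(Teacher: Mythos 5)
Your proposal is correct, but note that the paper does not prove this lemma at all --- it imports it directly from \citet{abbasi2011improved}, and your argument is a faithful reconstruction of that source's original method-of-mixtures proof: the exponential supermartingale $M_t(\eta)$, the Gaussian mixture with prior $\mathcal{N}(0,(\lambda B^2)^{-1}I)$ yielding $\bar M_t=\bigl(\det(\lambda I)/\det(V_t)\bigr)^{1/2}\exp\bigl(\tfrac{1}{2B^2}\|S_t\|_{V_t^{-1}}^2\bigr)$, the stopping-time/Ville step for uniformity in $t$, and the determinant--trace (AM--GM) bound for the explicit second inequality are all exactly as in the cited reference, with the constants landing correctly.
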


\begin{lemma}[Elliptical potential lemma]
	\label{lem:ellip}
	\citep{abbasi2011improved}
	Following the setting of \Cref{lem:self-normal}, we have
$$
\sum_{i=1}^k \min \left\{1,\left\|\phi_i\right\|_{\Sigma_i^{-1}}^2\right\} \leqslant 2 d \log \left(\frac{\lambda+k L_\phi^2}{\lambda}\right)
$$
\end{lemma}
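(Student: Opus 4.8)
The plan is to reduce the truncated sum to a telescoping log-determinant ratio, the standard device for controlling cumulative self-normalized feature norms. The argument rests on two ingredients: the matrix determinant lemma, which turns each summand into a multiplicative determinant increment, and an elementary scalar inequality that lets me pass from the raw quantity $\|\phi_i\|_{\Sigma_i^{-1}}^2$ to $\log(1+\|\phi_i\|_{\Sigma_i^{-1}}^2)$. First I would record the determinant recursion. In the setting of \Cref{lem:self-normal} we have $\Sigma_1=\lambda I$ and $\Sigma_{i+1}=\Sigma_i+\phi_i\phi_i^\top$, so the matrix determinant lemma gives $\det\Sigma_{i+1}=\det\Sigma_i\cdot(1+\phi_i^\top\Sigma_i^{-1}\phi_i)=\det\Sigma_i\,(1+\|\phi_i\|_{\Sigma_i^{-1}}^2)$. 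Taking the product over $i=1,\dots,k$ telescopes to $\prod_{i=1}^k(1+\|\phi_i\|_{\Sigma_i^{-1}}^2)=\det\Sigma_{k+1}/\det\Sigma_1=\det\Sigma_{k+1}/\lambda^d$.

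Next I would invoke the scalar bound $\min\{1,x\}\le 2\log(1+x)$, valid for every $x\ge 0$: for $x\ge 1$ it holds since $2\log 2\ge 1$, and for $x\in[0,1]$ it follows because $2\log(1+x)-x$ vanishes at $0$ and has nonnegative derivative $(1-x)/(1+x)$ there. Applying this term by term with $x=\|\phi_i\|_{\Sigma_i^{-1}}^2$ and summing gives $\sum_{i=1}^k\min\{1,\|\phi_i\|_{\Sigma_i^{-1}}^2\}\le 2\sum_{i=1}^k\log(1+\|\phi_i\|_{\Sigma_i^{-1}}^2)=2\log(\det\Sigma_{k+1}/\lambda^d)$ by the telescoping identity. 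It then remains to bound the determinant by a trace. Since $\mathrm{tr}(\Sigma_{k+1})=\lambda d+\sum_{i=1}^k\|\phi_i\|_2^2\le \lambda d+kL_\phi^2$ using $\|\phi_i\|_2\le L_\phi$, the AM--GM inequality on the nonnegative eigenvalues yields $\det\Sigma_{k+1}\le(\mathrm{tr}(\Sigma_{k+1})/d)^d\le(\lambda+kL_\phi^2/d)^d$. Hence $\det\Sigma_{k+1}/\lambda^d\le(1+kL_\phi^2/(\lambda d))^d\le(1+kL_\phi^2/\lambda)^d$, and taking logarithms delivers the claimed bound $2d\log((\lambda+kL_\phi^2)/\lambda)$.

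The computation is essentially routine, so I do not expect a genuine obstacle; the only point demanding care is the truncation at $1$. Because $\|\phi_i\|_{\Sigma_i^{-1}}^2$ can be as large as $L_\phi^2/\lambda$, one cannot directly compare the untruncated quantity with $\log(1+\cdot)$ using a clean universal constant, and it is precisely the $\min\{1,\cdot\}$ on the left-hand side that makes the scalar inequality applicable uniformly. This is also the exact form in which the lemma is consumed downstream, where the summands are always clipped at $1$ before the elliptical potential bound is applied.
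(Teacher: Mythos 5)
Your proof is correct and is essentially the standard argument behind this lemma, which the paper simply cites from \citet{abbasi2011improved} rather than proving: the matrix determinant lemma applied to the recursion $\Sigma_{i+1}=\Sigma_i+\phi_i\phi_i^\top$, the scalar bound $\min\{1,x\}\leq 2\log(1+x)$, and the trace/AM--GM control of $\det\Sigma_{k+1}$. All steps check out, including the careful handling of the truncation at $1$, so there is nothing to correct.
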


\begin{lemma}[Sum of features]
	\label{lem:sum-feature}
	\citep{jin2020provably}
	Following the setting of \Cref{lem:self-normal}, we have
$$
\sum_{i=1}^{k-1}\left\|\phi_i\right\|_{\Sigma_k^{-1}}^2 \leqslant d
$$
\end{lemma}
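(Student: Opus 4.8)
The plan is to collapse the entire sum into a single trace and then exploit the defining relation of $\Sigma_k$. First I would rewrite each summand using the cyclic property of the trace: for every $i$,
\begin{equation*}
	\|\phi_i\|_{\Sigma_k^{-1}}^2 = \phi_i^\top \Sigma_k^{-1} \phi_i = \tr\!\big(\Sigma_k^{-1} \phi_i \phi_i^\top\big).
\end{equation*}
Summing over $i$ and pulling the trace outside by linearity gives
\begin{equation*}
	\sum_{i=1}^{k-1} \|\phi_i\|_{\Sigma_k^{-1}}^2 = \tr\!\bigg( \Sigma_k^{-1} \sum_{i=1}^{k-1} \phi_i \phi_i^\top \bigg).
\end{equation*}

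The crucial structural point—and the only place one must be careful—is that the weighting matrix is the \emph{fixed} matrix $\Sigma_k^{-1}$ for every term in the sum, rather than the running matrix $\Sigma_i^{-1}$ (the latter would instead yield a log-determinant bound, as in \Cref{lem:ellip}). Because $\Sigma_k = \lambda I + \sum_{i=1}^{k-1} \phi_i \phi_i^\top$ by definition, the inner sum equals $\Sigma_k - \lambda I$ exactly, so
\begin{equation*}
	\tr\!\big( \Sigma_k^{-1} (\Sigma_k - \lambda I) \big) = \tr(I_d) - \lambda \tr\!\big(\Sigma_k^{-1}\big) = d - \lambda \tr\!\big(\Sigma_k^{-1}\big).
\end{equation*}

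Finally, since $\Sigma_k \succeq \lambda I \succ 0$, the inverse $\Sigma_k^{-1}$ is positive definite, hence $\tr(\Sigma_k^{-1}) \geq 0$, which yields $\sum_{i=1}^{k-1} \|\phi_i\|_{\Sigma_k^{-1}}^2 \leq d$ as claimed. There is essentially no analytic obstacle here: the statement is pure linear algebra, and the proof is a three-line computation once the trace rewriting is in place. The single thing worth flagging is the distinction between the fixed $\Sigma_k$ used throughout the sum and the incremental $\Sigma_i$ appearing in the self-normalized and elliptical-potential lemmas; it is precisely the fixedness that makes the identity $\sum_{i<k}\phi_i\phi_i^\top = \Sigma_k - \lambda I$ exact and the bound clean.
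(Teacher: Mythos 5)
Your proof is correct and is essentially the standard argument: the paper states this lemma by citation to \citet{jin2020provably} without reproducing a proof, and the proof in that reference (their Lemma D.1) is exactly your trace computation, $\sum_{i=1}^{k-1}\phi_i^\top\Sigma_k^{-1}\phi_i = \tr\big(\Sigma_k^{-1}(\Sigma_k-\lambda I)\big) = d - \lambda\tr(\Sigma_k^{-1}) \leq d$. Your remark distinguishing the fixed $\Sigma_k^{-1}$ here from the running $\Sigma_i^{-1}$ in \Cref{lem:ellip} is also apt and correctly identifies why this bound is clean rather than logarithmic.
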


The following lemma converts the TV distance between Gaussian distributions into the distance between their means.
\begin{lemma}\label{lem:gaussian-mean-conversion}
	Assume $p_1 = \+N(\mu_1,\sigma^2 I)$ and $p_2 = \+N(\mu_2,\sigma^2 I)$ are two Gaussian distributions over $\=R^d$. Assume $\|\mu_1-\mu_2\|_2\leq m$. Then, we have 
	\begin{align*}
		\|\mu_1-\mu_2\|_2\cdot
		\underbrace{\min\left\{\sqrt{\frac{1}{2e\pi\sigma^2}},\frac{1}{2m}\right\}}_{=:\ C_1}
		\leq 
		\TV(p_1,p_2)
		\leq 
		\|\mu_1-\mu_2\|_2\cdot
		\underbrace{\sqrt{\frac{1}{2\pi\sigma^2}}}_{=:\ C_2}
	\end{align*}
\end{lemma}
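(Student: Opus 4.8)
The plan is to reduce the $d$-dimensional total-variation computation to a one-dimensional problem and then bound the resulting scalar function of $\|\mu_1-\mu_2\|_2$ from both sides.

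First I would exploit that both Gaussians share the isotropic covariance $\sigma^2 I$. Writing $v=\mu_1-\mu_2$ and $\delta=\|v\|_2$, I would apply an orthogonal change of coordinates sending $v/\delta$ to the first basis vector; since $\TV$ is invariant under invertible affine maps and $\sigma^2 I$ is rotation-invariant, the two measures become products whose marginals in coordinates $2,\dots,d$ are identical and which differ only in the first coordinate, where they are $\mathcal{N}(0,\sigma^2)$ and $\mathcal{N}(\delta,\sigma^2)$. Using $\TV(P_1\otimes Q,\,P_2\otimes Q)=\TV(P_1,P_2)$ (the common factor $Q$ integrates out), this collapses the problem to the scalar identity
\[
\TV(p_1,p_2) \;=\; 2\cdf\!\left(\frac{\delta}{2\sigma}\right)-1 \;=:\; g(\delta),
\]
which holds because the two one-dimensional densities cross exactly at their midpoint $\delta/2$.

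Next I would analyze $g$ on $[0,\infty)$. Differentiating gives $g'(\delta)=\frac{1}{\sigma\sqrt{2\pi}}\,e^{-\delta^2/(8\sigma^2)}$, so $g(0)=0$, $g$ is increasing, and $g''\le 0$, i.e. $g$ is concave. For the upper bound, $g'$ is maximized at $\delta=0$, hence $g(\delta)=\int_0^\delta g'(t)\,dt\le \delta\,g'(0)=\delta/(\sigma\sqrt{2\pi})=\delta\,C_2$, which is exactly the claimed upper estimate. For the lower bound, concavity together with $g(0)=0$ (the graph lies below every tangent line) yields $g(\delta)\ge \delta\,g'(\delta)$.

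Finally I would establish the lower bound by a case split in $\delta$, using the key observation that the first entry of $C_1$ equals $g'(2\sigma)$: indeed $g'(2\sigma)=\frac{1}{\sigma\sqrt{2\pi}}e^{-1/2}=\sqrt{1/(2e\pi\sigma^2)}$. If $\delta\le 2\sigma$, then since $g'$ is decreasing, $g(\delta)/\delta\ge g'(\delta)\ge g'(2\sigma)=\sqrt{1/(2e\pi\sigma^2)}\ge C_1$. If $\delta>2\sigma$, then $\cdf(\delta/(2\sigma))>\cdf(1)>3/4$, so $g(\delta)>1/2$ and therefore $g(\delta)/\delta>1/(2\delta)\ge 1/(2m)\ge C_1$, using $\delta\le m$. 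In both regimes $\TV(p_1,p_2)=g(\delta)\ge \delta\,C_1$, which closes the argument. The \emph{main obstacle}, and the one place needing care, is the lower bound: one must pick the correct threshold $\delta=2\sigma$ so that the two regimes line up with the two entries of the minimum defining $C_1$, and check the elementary fact $\cdf(1)\ge 3/4$; the concavity inequality $g(\delta)\ge \delta\,g'(\delta)$ is precisely what makes the small-$\delta$ regime go through cleanly.
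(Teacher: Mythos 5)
Your proposal is correct and follows essentially the same route as the paper's proof: reduce to one dimension by rotational invariance, use the midpoint-crossing identity $\TV(p_1,p_2)=2\cdf(\delta/(2\sigma))-1$, bound above by the peak density, and split the lower bound at the threshold $\delta=2\sigma$ exactly as the paper splits at $v=\alpha/(2\sigma)=1$. Your concavity inequality $g(\delta)\ge \delta\,g'(\delta)$ is just a calculus rephrasing of the paper's ``integral is at least interval length times the minimum of the density'' step (and your write-up is in fact cleaner, avoiding the paper's min/max and $\int_{-\sigma}^{\sigma}$ typos), so no substantive difference remains.
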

\begin{proof}
Without loss of generality, we can rotate and translate the $\=R^d$ space and assume that $p_1=\+N(0,\sigma^2 I)$ and $p_2=\+N(\alpha\cdot e_1,\sigma^2 I)$ where $\alpha = \|\mu_1-\mu_2\|_2$ and $e_1=[1,0,\dots,0]\in\=R^d$. It is clear that the total variation distance is preserved. Then, by definition,
\begin{align*}
	\TV(p_1,p_2)=\frac{1}{2}\int_{x\in\=R^d} |p_1(x)-p_2(x)| \d x 
	= \int_{x\cdot e_1\leq \alpha/2} p_1(x)-p_2(x) \d x
\end{align*}
where the last step holds since $p_1(x)-p_2(x)\geq 0$ if and only if $x\cdot e_1\leq \alpha/2$. Note that the above formulation is independent of any dimension except for first dimension, so we can marginize other dimensions and reduce into a one-dimensional problem:
\begin{align*}
	\TV(p_1,p_2)
	= & \int_{x\leq\alpha/2} \+N(x \given 0,\sigma^2)-\+N(x \given \alpha,\sigma^2) \d x \\
	= & \Big(F(\alpha/(2\sigma))-F(-\alpha/(2\sigma))\Big) \\
	= & \int_{-\alpha/(2\sigma)}^{\alpha/(2\sigma)} \+N(x \given 0, 1) \d x
\end{align*}
where we denote $\cdf$ as the CDF of standard Gaussian $\+N(0,1)$ in the middle step. Below we show that the above integral is upper bounded by $C_2\alpha$ and lower bounded by $C_1\alpha$.
\paragraph{Upper bound.}
For an upper bound, we note that for any $v>0$, we have
\begin{align*}
	\int_{-v}^v \+N(x \given 0, 1)\d x
	\leq 
	2v\cdot \max_x\+N(x \given 0,1)
	\leq
	v\sqrt{\frac{2}{\pi}}.
\end{align*}
Inserting $v = \alpha/(2\sigma)$, we obtain the desired upper bound.
\paragraph{Lower bound.}
For a lower bound, we note that for any $0<v\leq m/(2\sigma)$, if $v < 1$, we have
\begin{align*}
	\int_{-v}^v \+N(x\given 0, 1)\d x 
	\geq 2v \max_{x\in[-v,v]}\+N(x\given 0, 1)
	\geq 2v \+N(1\given 0, 1)	
	= 2v \cdot \frac{e^{-1/2}}{\sqrt{2\pi}}
	= v\cdot \sqrt{\frac{2}{e\pi}}.
\end{align*}
When $v\geq 1$, we have
\begin{align*}
	\int_{-v}^v \+N(x\given 0, 1)\d x
	\geq \int_{-\sigma}^\sigma \+N(x\given 0, 1)\d x
	\approx 0.6827
	\geq v \cdot \frac{1}{2v}
	\geq v \cdot \frac{\sigma}{m}.
\end{align*}
Combining the two cases together, we have
\begin{align*}
	\int_{-v}^v \+N(x\given 0, 1)\d x \geq v \cdot \min\left\{\sqrt{\frac{2}{e\pi}},\,\frac{\sigma}{m}\right\}.
\end{align*}
Inserting $v = \alpha/(2\sigma)$, we obtain the desired lower bound.
\end{proof}

\begin{lemma}\label{lem:gaussian-2-norm}
	Let $x \sim \+N(0, I_d)$ and $A$ be a positive semi-definite matrix. Then, we have
	\begin{align*}
		\E_{x} \| x \|_A 
		\geq 
		\sqrt{\frac{2\tr( A) }{\pi}}.
	\end{align*}
\end{lemma}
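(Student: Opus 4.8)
The plan is to reduce the claim to the elementary fact that a standard one-dimensional Gaussian $y$ satisfies $\E|y| = \sqrt{2/\pi}$, by combining a spectral decomposition of $A$ with a single application of the Cauchy--Schwarz inequality. First I would diagonalize the positive semi-definite matrix as $A = \sum_{i=1}^d \lambda_i u_i u_i^\top$ with eigenvalues $\lambda_i \geq 0$ and orthonormal eigenvectors $u_i$, so that $\tr(A) = \sum_i \lambda_i$. Setting $y_i := u_i^\top x$, the rotational invariance of the isotropic Gaussian $x \sim \+N(0, I_d)$ makes $y_1, \dots, y_d$ i.i.d.\ standard normal, and one has the identity $\|x\|_A^2 = x^\top A x = \sum_{i=1}^d \lambda_i y_i^2$.

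The key step is to obtain a \emph{pointwise} lower bound on $\|x\|_A = \sqrt{\sum_i \lambda_i y_i^2}$ that is linear in the $|y_i|$, so that taking expectations produces first absolute moments. Applying Cauchy--Schwarz to the vectors $(\sqrt{\lambda_i}\,|y_i|)_i$ and $(\sqrt{\lambda_i})_i$ gives $\sum_i \lambda_i |y_i| \leq \sqrt{\sum_i \lambda_i y_i^2}\cdot \sqrt{\sum_i \lambda_i}$, and hence $\|x\|_A \geq (\sum_i \lambda_i |y_i|)/\sqrt{\sum_i \lambda_i}$ whenever $\tr(A) > 0$. Taking expectations and substituting $\E|y_i| = \sqrt{2/\pi}$ then yields $\E\|x\|_A \geq \sqrt{2/\pi}\,(\sum_i \lambda_i)/\sqrt{\sum_i \lambda_i} = \sqrt{2\tr(A)/\pi}$, which is exactly the claimed bound.

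The only point requiring care --- the ``obstacle'', though a mild one --- is the direction of the inequality: since $t \mapsto \sqrt{t}$ is concave, a naive application of Jensen's inequality to $\E\sqrt{\sum_i \lambda_i y_i^2}$ yields only the \emph{upper} bound $\sqrt{\tr(A)}$ and is useless here. The trick is precisely to avoid Jensen and instead invoke Cauchy--Schwarz in the orientation above, which linearizes the square root in the $|y_i|$ and thereby exposes the factor $\sqrt{2/\pi}$ coming from $\E|y_i|$. Finally, the degenerate case $\tr(A) = 0$ (equivalently $A = 0$) makes both sides zero and holds trivially, so no separate treatment is needed.
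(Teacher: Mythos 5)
Your proof is correct, and it takes a genuinely different route from the paper's. The paper also diagonalizes $A$ and reduces the claim to bounding $\E_y\sqrt{y^\top \Lambda y}$ with $y\sim\+N(0,I_d)$, but it then sets up an optimization problem: minimize $M\mapsto \E_y\sqrt{y^\top M y}$ over diagonal positive semi-definite matrices with $\tr(M)=\tr(\Lambda)$, argue by concavity of this functional that the minimum is attained at an extreme point of the feasible set, and identify (via symmetry) the extreme points as matrices carrying all of $\tr(\Lambda)$ in a single entry, which reduces everything to $\E|y_1|=\sqrt{2/\pi}$. You replace that concavity/extreme-point argument with a single pointwise Cauchy--Schwarz inequality, $\sum_i \lambda_i |y_i| \le \sqrt{\sum_i \lambda_i y_i^2}\,\sqrt{\sum_i \lambda_i}$, and then take expectations; the degenerate case $\tr(A)=0$ is trivial, as you note. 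Your version is shorter, entirely elementary, and sidesteps the step where the paper's proof is a bit informal (it asserts the minimum is attained ``on the boundary'' and appeals to symmetry, whereas the precise facts needed are that a concave function on a compact convex set is minimized at an extreme point, and that the extreme points of the set of trace-normalized diagonal PSD matrices are the single-entry ones). What the paper's formulation makes visible is the extremal configuration: the worst case, where the bound is tight, is rank-one $A$ --- but your proof exposes the same fact, since your Cauchy--Schwarz step holds with equality pointwise when $A$ has a single nonzero eigenvalue. Your remark about Jensen is also on target: concavity of the square root only yields the useless upper bound $\E\|x\|_A\le\sqrt{\tr(A)}$, so some device that linearizes the square root in the $|y_i|$ is indeed necessary.
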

\begin{proof}
	Let $P$ denote the orthogonal matrix diagonalizing $A$, i.e., $A=P \Lambda P^\top$ where $\Lambda=\diag(\lambda_1,\dots,\lambda_d)$. Then, we have $\E_x\|x\|_A = \E_x\sqrt{x^\top A x}  = \E_y\sqrt{y^\top \Lambda y}$ where $y = P^\top x \sim \+N(0, I_d)$. Now we consider the following optimization problem:
	\begin{align*}
		\min_{M\in\+M} f(M) \coloneqq \E_y\sqrt{y^\top M y}
		\quad\text{where}\quad
		\+M \coloneqq\{ M \succeq 0 : M \text{ is diagonal and } \tr(M) = \tr(\Lambda) \}.
	\end{align*}
	Then, we have $\Lambda\in\+M$ and $\E_x\|x\|_A = f(\Lambda)$. Thus, the solution to the above optimization problem is a lower bound of $\E_x\|x\|_A$. Since $f(M)$ is concave in $M$, it must attain the minimum on the bounary of $\+M$. By symmetry of entries in the diagonal, we know that the minimizer must have $\tr(\Lambda)$ at one entry and 0 at all other entries. Hence, the minimum value is
	\begin{align*}
		\E_y\sqrt{\tr(\Lambda) y_1^2}
		= \sqrt{\tr(\Lambda)} \E_y |y_1|
		= \sqrt{\frac{2\tr(\Lambda) }{\pi}}.
	\end{align*}
	We finish the proof by noticing that $\tr(\Lambda) = \tr(A)$.
\end{proof}

The following lemma was originally established by \citet{agarwal2020flambe} and later adapted by \citet{wu2023distributional} to accommodate infinitely large function classes.
\begin{lemma}[Maximum likelihood estimation]
\label{lem:mle}
\citep[Lemma C.3]{wu2023distributional}
Consider a sequential conditional probability estimation problem. Let $\+X$ and $\+Y$ be the instance space and the target space, respectively. Let $\mathcal{F}:(\mathcal{X} \times \mathcal{Y}) \rightarrow \mathbb{R}$ denote the function class. We are given a dataset $D:=\left\{\left(x_i, y_i\right)\right\}_{i=1}^n$ where $x_i \sim \mathcal{D}_i$ and $y_i \sim f^\star(x,\cdot)$. We assume $f^\star\in\+F$. For the data generating process, we assume the data distribution $\mathcal{D}_i$ is history-dependent, i.e., $x_i$ can depend on the previous samples: $x_1,y_1,\dots,x_{i-1},y_{i-1}$ for any $i\in[n]$. We fix $\delta\in(0,1)$. Let $\^f$ denote the maximum likelihood estimator,
	\begin{equation*}
		\^f=\argmax_{f\in\+F}\sum_{i=1}^n\log f(x_i,y_i).
	\end{equation*}
	Then, we have 
	\begin{equation}\label{eq:lem-mle-error-infinite}
		\sum_{i=1}^n\E_{x\sim\+D_i}\TV^2\left(\^f(x,\cdot),f^\star(x,\cdot)\right)\le 10\log\left(N_{[]}\Big((n|\+Y|)^{-1},\+F,\|\cdot\|_\infty\Big)/\delta\right)
	\end{equation} 
	with probability at least $1-\delta$. Here $|\+Y|$ denotes the total measure of space $\+Y$, i.e., $|\+Y|\coloneqq \int_\+Y 1 \d y$.
\end{lemma}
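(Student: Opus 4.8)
The plan is to establish this bound via the classical likelihood-ratio supermartingale argument (in the style of \citet{agarwal2020flambe,wu2023distributional}), which handles the history-dependent (martingale) data automatically and, crucially, yields a bound on the \emph{expected} per-round divergence rather than its empirical counterpart. Throughout I would work with the squared Hellinger affinity and convert to $\TV^2$ only at the very end using $\TV^2 \le 2 H^2$, where $H^2(p,q) = 1 - \int \sqrt{p\,q}$.

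First I would fix a single $f\in\+F$ and define the likelihood-ratio process $M_n(f) = \prod_{i=1}^n \sqrt{f(x_i,y_i)/f^\star(x_i,y_i)}$. Taking the conditional expectation over $x_i\sim\+D_i$ and $y_i\sim f^\star(x_i,\cdot)$ and using $\E_{y\sim f^\star(x)}\sqrt{f(x,y)/f^\star(x,y)} = 1 - H^2\big(f(x,\cdot),f^\star(x,\cdot)\big)$ gives
\begin{align*}
\E\!\left[\frac{M_i(f)}{M_{i-1}(f)}\middlegiven \+H_{i-1}\right]
= 1 - \E_{x\sim\+D_i} H^2\big(f(x,\cdot),f^\star(x,\cdot)\big)
\le \exp\!\left(-\E_{x\sim\+D_i} H^2\big(f(x,\cdot),f^\star(x,\cdot)\big)\right).
\end{align*}
Hence the compensated process $\widetilde M_n(f) = M_n(f)\exp\!\big(\sum_{i=1}^n \E_{x\sim\+D_i} H^2(f(x,\cdot),f^\star(x,\cdot))\big)$ is a nonnegative supermartingale with $\widetilde M_0 = 1$, and Ville's maximal inequality yields, with probability at least $1-\delta$,
\begin{align*}
\sum_{i=1}^n \E_{x\sim\+D_i} H^2\big(f(x,\cdot),f^\star(x,\cdot)\big)
\le \log(1/\delta) - \sum_{i=1}^n \tfrac12 \log\frac{f(x_i,y_i)}{f^\star(x_i,y_i)}.
\end{align*}

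Second, to make this uniform over the (possibly infinite) class $\+F$, I would pass to an $(n|\+Y|)^{-1}$-bracketing cover and union-bound the exponential inequality over the $N_{[]}((n|\+Y|)^{-1},\+F,\|\cdot\|_\infty)$ upper-bracket functions. Each $f\in\+F$ is sandwiched by a bracket of width at most $(n|\+Y|)^{-1}$, so substituting $f$ by its upper bracket inflates the likelihood ratios and distorts the Hellinger terms by an amount that, summed over the $n$ rounds, is $O(1)$ by the choice of bracket scale. Combining this uniform statement with the optimality of the MLE --- namely $\sum_i \log \widehat f(x_i,y_i) \ge \sum_i \log f^\star(x_i,y_i)$, which forces the stray log-likelihood-ratio term to be nonnegative and thus droppable --- gives $\sum_i \E_{x\sim\+D_i} H^2(\widehat f(x,\cdot),f^\star(x,\cdot)) \le O(\log(N_{[]}((n|\+Y|)^{-1},\+F,\|\cdot\|_\infty)/\delta))$. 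Converting Hellinger to total variation via $\TV^2 \le 2H^2$ and tracking constants delivers the claimed factor of $10$.

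The main obstacle is the bracketing step rather than the martingale step: the upper-bracket functions are not themselves normalized conditional densities, so one must carefully control the error incurred when replacing a genuine $f$ by its bracket inside \emph{both} the likelihood ratio and the Hellinger compensator. The reason the resolution $(n|\+Y|)^{-1}$ is forced is precisely that the per-round substitution error scales with $|\+Y|$ times the bracket width, and the total over $n$ rounds must remain a lower-order $O(1)$ additive term. By contrast, the dependence of $x_i$ on the history is handled transparently by the supermartingale construction (no independence is ever invoked), which is exactly why Ville's inequality, rather than a Chernoff- or Freedman-type bound, is the natural tool here.
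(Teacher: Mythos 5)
The paper never proves this lemma itself: it is imported verbatim as Lemma C.3 of \citet{wu2023distributional}, whose proof is the Zhang/FLAMBE-style tempered-likelihood argument. Your skeleton is exactly that argument --- the likelihood-ratio process $M_n(f)=\prod_i\sqrt{f(x_i,y_i)/f^\star(x_i,y_i)}$, the conditional-expectation computation $\E[\,\cdot\,|\,\+H_{i-1}]=1-\E_{x\sim\+D_i}H^2(f(x,\cdot),f^\star(x,\cdot))$, Ville/Markov, a union bound over the $(n|\+Y|)^{-1}$-bracketing cover, MLE optimality to discard the empirical log-likelihood-ratio term, and the final $\TV^2\le 2H^2$ conversion. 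The martingale half of your plan is sound, including the point that history-dependence of $\+D_i$ is handled for free and that the bound comes out on the \emph{expected} divergence.

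The gap is in the bracketing step --- exactly the step you single out as the main obstacle --- and your stated resolution of it is quantitatively wrong. You claim that substituting $f$ by its upper bracket $\overline{f}$ ``distorts the Hellinger terms by an amount that, summed over the $n$ rounds, is $O(1)$,'' because ``the per-round substitution error scales with $|\+Y|$ times the bracket width.'' That linear scaling is true in $L_1$: $\int_y|\overline{f}-\widehat{f}|\le\omega|\+Y|=1/n$. It is \emph{false} for the Hellinger affinity: the per-round gap is $\int_y(\sqrt{\overline{f}}-\sqrt{\widehat{f}\,})\sqrt{f^\star}\le\sqrt{\smash{\int}(\overline{f}-\widehat{f})}\le\sqrt{\omega|\+Y|}=n^{-1/2}$, and this square-root rate is tight (take $\widehat{f}(x,\cdot)$ and $f^\star(x,\cdot)$ with disjoint supports: the affinity of $\widehat{f}$ is $0$ while that of $\overline{f}=\widehat{f}+\omega$ is of order $\sqrt{\omega|\+Y|}$). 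Worse, the inequality runs against you: $\overline{f}\ge\widehat{f}$ makes the bracket's affinity deficit \emph{smaller} than $\widehat{f}$'s, so your uniform bound over brackets only yields
\begin{align*}
\sum_{i=1}^n\E_{x\sim\+D_i}H^2\big(\widehat{f}(x,\cdot),f^\star(x,\cdot)\big)
\le \log(N_{[]}/\delta)+n\sqrt{\omega|\+Y|}=\log(N_{[]}/\delta)+\sqrt{n},
\end{align*}
which is vacuous for the claimed result. The correct proof never transfers the bound back to a Hellinger statement about $\widehat{f}$. Instead it converts the affinity deficit of the (unnormalized) bracket $\overline{f}$ directly into an $L_1$ statement, via the unnormalized Hellinger--TV inequality
\begin{align*}
\Big(\tfrac12\textstyle\int_y|\overline{f}-f^\star|\Big)^2
\le \Big(1+\tfrac{1}{2n}\Big)\Big(2\big(1-\textstyle\int_y\sqrt{\overline{f}f^\star}\big)+\tfrac1n\Big),
\end{align*}
and only then passes from $\overline{f}$ to $\widehat{f}$ by the triangle inequality in $L_1$, where the bracket error really is linear, $\tfrac12\int_y|\overline{f}-\widehat{f}|\le 1/(2n)$. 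With that reroute (and constant bookkeeping, which is where the factor $10$ comes from), your argument closes; as written, it does not.
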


\subsection{Eluder dimension}

The following lemmas are adapted from \citet{russo2013eluder} and \citet{liu2022partially}.
\begin{lemma}
	\label{lem:eluder-f}
	Following the notation of \Cref{def:eluder}, define
	\begin{align*}
		\+F_t = \left\{ f \in \+F \::\: \sum_{s=1}^{t-1} d^2 \big( f(x_s), \^f_t(x_s) \big) \leq \beta \right\}
	\end{align*}
	where $\^f_t \in \+F$ is an arbitrary function. Then, we have
	\begin{align*}
		\sum_{t=1}^T \indic\left\{
		\sup_{f,f'\in\+F_t}
		d\left(f(x_t),f'(x_t)\right)\geq\omega
		\right\}
		\leq& \left(\frac{2 \sqrt{T\beta}  }{\omega} + 1\right)\eluderone(\+F,\omega),\numberthis\label{eq:eluder1-bound}\\
		\sum_{t=1}^T \indic\left\{
		\sup_{f,f'\in\+F_t}
		d\left(f(x_t),f'(x_t)\right)\geq\omega
		\right\}
		\leq& \left(\frac{4 \beta  }{\omega^2} + 1\right)\eludertwo(\+F,\omega)\numberthis\label{eq:eluder2-bound}
	\end{align*}
	for any constant $\omega > 0$.
	\end{lemma}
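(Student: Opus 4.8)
The plan is to run the classical potential-style eluder argument of \citet{russo2013eluder} in both its $\ell_2$ and $\ell_1$ forms, organized into a per-round \emph{dependence count} followed by a global \emph{bucketing} argument. Write $w_t\coloneqq\sup_{f,f'\in\+F_t} d(f(x_t),f'(x_t))$ for the width of the confidence set at $x_t$, so the quantity to bound is $\sum_{t=1}^T \indic\{w_t\ge\omega\}$. First I would reduce the confidence-set constraint to a constraint on a single witnessing pair: for a round with $w_t\ge\omega$ pick $f_t,f'_t\in\+F_t$ with $d(f_t(x_t),f'_t(x_t))\ge\omega$, and since both lie in $\+F_t$, the triangle inequality together with $(a+b)^2\le 2a^2+2b^2$ gives $\sum_{s<t} d^2(f_t(x_s),f'_t(x_s))\le 4\beta$. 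For the $\ell_1$ statement \eqref{eq:eluder1-bound} I would then apply Cauchy--Schwarz to obtain $\sum_{s<t} d(f_t(x_s),f'_t(x_s))\le\sqrt{(t-1)\cdot 4\beta}\le 2\sqrt{T\beta}$.

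Step 1 (dependence count). Fix a round $t$ with $w_t\ge\omega$ and its witnessing pair $(f_t,f'_t)$. I claim $x_t$ is $\ell_2$-norm $\omega$-dependent (resp.\ $\ell_1$-norm $\omega$-dependent) on at most $\lfloor 4\beta/\omega^2\rfloor$ (resp.\ $\lfloor 2\sqrt{T\beta}/\omega\rfloor$) \emph{disjoint} subsequences of $\{x_1,\dots,x_{t-1}\}$. This follows from the contrapositive of \Cref{def:eluder}: if $x_t$ is $\omega$-dependent on a subsequence $B$, then because $d(f_t(x_t),f'_t(x_t))\ge\omega$ we must have $\sum_{x_s\in B} d^2(f_t(x_s),f'_t(x_s))>\omega^2$ in the $\ell_2$ case (otherwise dependence would force $d(f_t(x_t),f'_t(x_t))\le\omega$), and likewise $\sum_{x_s\in B} d(f_t(x_s),f'_t(x_s))>\omega$ in the $\ell_1$ case. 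Summing these lower bounds over $K$ disjoint subsequences and comparing against the totals $4\beta$ and $2\sqrt{T\beta}$ yields $K\omega^2<4\beta$ and $K\omega<2\sqrt{T\beta}$, respectively.

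Step 2 (bucketing). I would process the large-width rounds in increasing time order and greedily assign each $x_t$ to one of a growing collection of buckets, maintaining two invariants: (i) within every bucket, each element is $\omega$-independent of the elements added before it; and (ii) a fresh bucket is opened for $x_t$ only when $x_t$ is $\omega$-dependent on the current contents of every existing bucket. Invariant (i) makes each bucket, read in temporal order, a sequence in which every element is $\omega$-independent of its predecessors, so by \Cref{def:eluder} (taking $\epsilon'=\omega$) no bucket exceeds $\eludertwo(\+F,\omega)$ (resp.\ $\eluderone(\+F,\omega)$) elements. Invariant (ii), together with the disjointness of the buckets and Step 1, caps the number of buckets: when a new bucket is created the point is $\omega$-dependent on all existing (disjoint, predecessor) buckets, so their number is strictly below $4\beta/\omega^2$ (resp.\ $2\sqrt{T\beta}/\omega$), giving at most $4\beta/\omega^2+1$ (resp.\ $2\sqrt{T\beta}/\omega+1$) buckets in total. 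Multiplying the per-bucket size bound by the bucket count gives \eqref{eq:eluder2-bound} and \eqref{eq:eluder1-bound}.

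The main obstacle is the interface between the two steps: Step 1 must be stated for \emph{arbitrary} disjoint subsequences (not a fixed partition) so that it can be applied to the dynamically growing buckets of Step 2, and one must verify that the greedy rule preserves invariants (i) and (ii) simultaneously. A secondary, routine technicality is that the supremum defining $w_t$ need not be attained; this is handled in the standard way by selecting witnessing pairs that realize the width up to arbitrarily small slack, and it is immaterial wherever the lemma is later invoked, since there the width is always compared against a strict threshold.
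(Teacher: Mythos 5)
Your proof is correct and takes essentially the same route as the paper's: the identical per-round dependence count (a witnessing pair in $\+F_t$, triangle inequality plus Cauchy--Schwarz yielding the totals $4\beta$ and $2\sqrt{T\beta}$, hence at most $4\beta/\omega^2$, resp.\ $2\sqrt{T\beta}/\omega$, disjoint $\omega$-dependent subsequences), followed by the classical Russo--Van Roy grouping argument in which each group's length is capped by $\eluderone(\+F,\omega)$ or $\eludertwo(\+F,\omega)$. The only difference is organizational: the paper fixes the number of groups in advance and argues by contradiction that some element must be $\omega$-dependent on at least $\tau/\eluderone(\+F,\omega)-1$ disjoint predecessor subsequences, whereas you grow the buckets greedily and count them directly; both yield the same bound, and both share the same benign $\geq$-versus-$>$ threshold subtlety that you correctly flag as immaterial.
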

	\begin{proof}[Proof of \Cref{lem:eluder-f}]
		We first prove \eqref{eq:eluder1-bound} and then \eqref{eq:eluder2-bound}.
	\paragraph*{Proof of \eqref{eq:eluder1-bound}.} 
	We begin by showing that if 
	\begin{align*}
		\sup_{f,f'\in\+F_t}
		d\left(f(x_t),f'(x_t)\right)\geq\omega
	\end{align*}
	for some $t\in[T]$, then $x_t$ is $\omega$-dependent on at most $2 \beta /\omega$ disjoint subsequence of its predecessors. To see this, we note that, if $x_t$ is $\omega$-dependent on a subsequence $(x_{i_1},x_{i_2},\dots,x_{i_n})$ of its predecessors, we must have
	\begin{align*}
	\sum_{s=1}^{n} d\big(f(x_{i_s})-f'(x_{i_s})\big) > \omega.
	\end{align*}
	Hence, if $x_t$ is $\omega$-dependent on $l$ disjoint subsequences, we have 
	\begin{align}\label{eq:eluder1}
	\sum_{s=1}^{t-1} d\big(f(x_s),f'(x_s)\big)
	>
	l\omega.
	\end{align}
	For the left-hand side, we also have
	\begin{align*}
	\sum_{s=1}^{t-1} d\big(f(x_s),f'(x_s)\big)
	\leq&
	\sum_{s=1}^{t-1} d\big(f(x_s),\^f_t(x_s)\big)
	+
	\sum_{s=1}^{t-1} d\big(\^f_t(x_s),f'(x_s)\big) \\
	\leq&
	\sqrt{T}\sqrt{\sum_{s=1}^{t-1} d^2\big(f(x_s),\^f_t(x_s)\big)}
	+
	\sqrt{T}\sqrt{\sum_{s=1}^{t-1} d^2\big(\^f_t(x_s),f'(x_s)\big)} \\
	\leq& 2 \sqrt{T\beta}
	\numberthis\label{eq:eluder2}
	\end{align*}
	where the first inequality is the triangle inequality, and the second inequality holds by the definition of $\+F_t$. Combining \eqref{eq:eluder1} and \eqref{eq:eluder2}, we get that $l\leq 2 \sqrt{T\beta} /\omega$.
	
	Next, we show that for any sequence $(x'_1,\dots,x'_\tau)$, there is at least one element that is $\omega$-dependent on at least $\tau/d-1$ disjoint subsequence of its predecessors, where $d:=\eluderone(\+F,\omega)$. To show this, let $m$ be the integer satisfying $md+1\leq \tau\leq md+d$. We will construct $m$ disjoint subsequences, $B_1,\dots,B_m$. At the beginning, let $B_i=(x'_i)$ for $i\in[m]$. If $x'_{m+1}$ is $\omega$-dependent on each subsequence $B_1,\dots,B_m$, then we are done. Otherwise, we select a subsequence $B_i$ which $x'_{m+1}$ is $\omega$-independent of and append $x'_{m+1}$ to $B_i$. We repeat this process for all elements with indices $j>m+1$ until either $x'_j$ is $\omega$-dependent on each subsequence or $j=\tau$. For the latter, we have $\sum_{i=1}^m |B_i|\geq md$, and since each element of a subsequence $B_i$ is $\omega$-independent of its predecesors, we must have $|B_i|=d$ for all $i$. Then, $x_\tau$ must be $\omega$-dependent on each subsequence by the definition of eluder dimension.
	
	Finally, let's set the sequence $(x'_1,\dots x'_\tau)$ to be the subsequence of $(x_1,\dots,x_T)$ consisting of elements $x_t$ for which $d\left(f(x_t),f'(x_t)\right)>\omega$. As we have established, we have
	\begin{enumerate}
	\item each $x'_i$ is $\ell_1$-norm $\omega$-dependent on at most $2 \sqrt{T\beta} /\omega$ disjoint subsequences, and
	\item some $x'_i$ is $\ell_1$-norm $\omega$-dependent on at least $\tau/d-1$ disjoint subsequences.
	\end{enumerate}
	Therefore, we must have $\tau/d-1\leq 2 \sqrt{T\beta} /\omega$, implying that $\tau\leq(2 \sqrt{T\beta} /\omega + 1)d$.

	\paragraph{Proof of \eqref{eq:eluder2-bound}.} The proof is quite similar to the proof of \eqref{eq:eluder1-bound}.  For $\ell_2$-norm, following the same argument, we can show the following:
	\begin{enumerate}
	\item each $x'_i$ is $\ell_2$-norm $\omega$-dependent on at most $4 \beta  /\omega^2$ disjoint subsequences, and
	\item some $x'_i$ is $\ell_2$-norm $\omega$-dependent on at least $\tau/d-1$ disjoint subsequences.
	\end{enumerate}
	Therefore, we must have $\tau/d-1\leq 4 \beta  /\omega^2$, implying that $\tau\leq (4 \beta /\omega^2 + 1)d$.
\end{proof}

\begin{lemma}\label{lem:sum-of-width}
	Following the setting of \Cref{lem:eluder-f}, assume $d(\cdot,\cdot)$ is upper bounded by $D \leq \beta$. Then, we have
	\begin{align*}
		\sum_{t=1}^T \sup_{f,f'\in\+F_t} d\big(f(x_t),f'(x_t)\big)
		\leq  4 \sqrt{T \beta} \cdot \eluderone(\+F,1/T) \cdot \log(D T)
	\end{align*}
\end{lemma}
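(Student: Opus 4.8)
The plan is to convert the ``number of large widths'' bound from \Cref{lem:eluder-f} into a bound on the cumulative width through a layer-cake (integration over thresholds) argument. Write $w_t \coloneqq \sup_{f,f'\in\+F_t} d(f(x_t),f'(x_t))$ and $N(\omega)\coloneqq \sum_{t=1}^T \indic\{w_t \geq \omega\}$, so the quantity of interest is $\sum_{t=1}^T w_t$. Since each $w_t\leq D$, we may restrict attention to $\omega\in(0,D]$, and the organizing identity is $\sum_{t=1}^T w_t = \int_0^D N(\omega)\,d\omega$, which simply rewrites each $w_t$ as $\int_0^{w_t} d\omega$ and swaps sum and integral.

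First I would truncate at the small scale $\omega_0=1/T$. Splitting the integral at $\omega_0$, the low-threshold part $\int_0^{1/T} N(\omega)\,d\omega$ is at most $T\cdot(1/T)=1$ via the trivial bound $N(\omega)\leq T$. For the range $\omega\in[1/T,D]$ I would plug in $N(\omega)\leq (2\sqrt{T\beta}/\omega + 1)\,\eluderone(\+F,\omega)$ from \eqref{eq:eluder1-bound}. The crucial structural fact is that $\eluderone(\+F,\omega)$ is non-increasing in the scale $\omega$, so for every $\omega\geq 1/T$ we have $\eluderone(\+F,\omega)\leq \eluderone(\+F,1/T)$; this lets me pull a single dimension factor out of the integral rather than carrying the $\omega$-dependent dimension through.

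After this replacement the integral reduces to $\eluderone(\+F,1/T)\int_{1/T}^D (2\sqrt{T\beta}/\omega + 1)\,d\omega$, whose dominant piece is $2\sqrt{T\beta}\,[\ln\omega]_{1/T}^D = 2\sqrt{T\beta}\,\ln(DT)$, giving exactly the advertised $\sqrt{T\beta}\cdot \eluderone(\+F,1/T)\cdot \log(DT)$ shape. The leftover term $\int_{1/T}^D 1\,d\omega\leq D\leq\beta$, together with the earlier additive constant, is lower order and gets absorbed into the leading constant $4$. The main obstacle I anticipate is purely bookkeeping: choosing the truncation scale $1/T$ so that the trivial tail contributes only $O(1)$ while the remaining logarithm comes out as $\log(DT)$, and verifying that the additive $O(D)$ and $O(1)$ terms are genuinely dominated by the main term under the hypothesis $D\leq\beta$. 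No new probabilistic input is needed beyond \Cref{lem:eluder-f}; conditioned on that lemma the argument is deterministic.
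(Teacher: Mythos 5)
Your proposal is correct and follows essentially the same route as the paper's proof: the layer-cake identity $\sum_t w_t = \int_0^D \sum_t \indic\{w_t\geq\omega\}\,\d\omega$, truncation at the scale $1/T$ (contributing an additive $1$), application of \eqref{eq:eluder1-bound} on the remaining range, and monotonicity of $\eluderone(\+F,\cdot)$ to pull out $\eluderone(\+F,1/T)$, with the integral of $1/\omega$ yielding the $\log(DT)$ factor. The only cosmetic difference is bookkeeping order — the paper absorbs the ``$+1$'' into $3\sqrt{T\beta}/\omega$ pointwise before integrating, whereas you integrate it separately and absorb the resulting $O(D)$ term using $D\leq\beta\leq\sqrt{T\beta}$ — which changes nothing of substance.
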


\begin{proof}[Proof of \Cref{lem:sum-of-width}]
	For notational simplicity, we denote 
\begin{align*}
	w_{t}\coloneqq \sup_{f,f'\in\+F_t} d\big(f(x_t),f'(x_t)\big)
\end{align*}
Now we fix a constant $\Delta>0$, and then we have
\begin{align*}
	\sum_{t=1}^T w_t
		= & \sum_{t=1}^T \int_0^D \indic\{w_t\geq\delta\} \d\delta \\
		= & \int_0^D \sum_{t=1}^T \indic\{w_t\geq\delta\} \d\delta \\
		\leq & \Delta T + \int_\Delta^D \sum_{t=1}^T \indic\{w_t\geq\delta\} \d\delta \\
		\leq & \Delta T + \int_\Delta^D \left(\frac{2 \sqrt{T \beta}  }{\delta} + 1\right)\eluderone(\+F,\delta) \d\delta \\
		\leq & \Delta T + \int_\Delta^D \frac{3 \sqrt{T \beta}  }{\delta}\cdot\eluderone(\+F,\delta) \d\delta \\
		\leq & \Delta T + 3 \sqrt{T \beta} \cdot \eluderone(\+F,\Delta) \cdot \log(D/\Delta)\\
\end{align*}
where the second inequality is \Cref{lem:eluder-f}, and the last inequality uses the fact that $\eluderone(\+F,\delta)$ is non-increasing in $\delta$. We can conclude the proof by setting $\Delta = 1/T$.
\end{proof}

\end{document}